\def\eqref#1{equation~\ref{#1}}
\def\1{\bm{1}}
\def\vzero{{\bm{0}}}
\def\vmu{{\bm{\mu}}}
\def\vepsilon{{\bm{\epsilon}}}
\def\vs{{\bm{s}}}
\def\vv{{\bm{v}}}
\def\vw{{\bm{w}}}
\def\vx{{\bm{x}}}
\def\vz{{\bm{z}}}
\def\mI{{\bm{I}}}
\def\mSigma{{\bm{\Sigma}}}
\DeclareMathAlphabet{\mathsfit}{\encodingdefault}{\sfdefault}{m}{sl}
\SetMathAlphabet{\mathsfit}{bold}{\encodingdefault}{\sfdefault}{bx}{n}
\def\gF{{\mathcal{F}}}
\def\gG{{\mathcal{G}}}
\def\gN{{\mathcal{N}}}
\def\sR{{\mathbb{R}}}
\newcommand{\E}{\mathbb{E}}
\newcommand{\KL}{D_{\mathrm{KL}}}
\newcommand{\Cov}{\mathrm{Cov}}
\newcommand{\lvb}{L_{\mathrm{vb}}}
\DeclareMathOperator*{\argmin}{arg\,min}
\DeclareMathOperator{\tr}{tr}
\newtheorem{lem}{Lemma}
\newtheorem{pro}{Proposition}
\newtheorem{corollary}{Corollary}
\newtheorem*{remark}{Remark}
\title{Analytic-DPM: an Analytic Estimate of the Optimal Reverse Variance in Diffusion Probabilistic Models}
\author{Fan Bao$^1$~\thanks{Work done during an internship at Huawei Noah’s Ark Lab.\quad $^\dagger$Correspondence to: C. Li and J. Zhu.}~~, Chongxuan Li$^{2~3~\dagger}$, Jun Zhu$^{1~\dagger}$, Bo Zhang$^1$ \\
$^{1}$Dept. of Comp. Sci. \& Tech., Institute for AI, Tsinghua-Huawei Joint Center for AI\\
BNRist Center, State Key Lab for Intell. Tech. \& Sys., Tsinghua University, Beijing, China \\
$^{2}$Gaoling School of Artificial Intelligence, Renmin University of China, Beijing, China \\
$^{3}$Beijing Key Laboratory of Big Data Management and Analysis Methods , Beijing, China \\
\texttt{bf19@mails.tsinghua.edu.cn,chongxuanli1991@gmail.com,} \\
\texttt{\{dcszj,~dcszb\}@tsinghua.edu.cn} \\
}
\begin{document}

\maketitle

%\junz{change the title a bit: in statistics, estimator is a function of sample to estimate the parameters, and estimate is the value of estimator; estimation is the process to estimate...; also, remove "towards" as we provide one "complete" solution.}
\begin{abstract}
Diffusion probabilistic models (DPMs) represent a class of powerful generative models. Despite their success, the inference of DPMs is expensive since it generally needs to iterate over thousands of timesteps. A key problem in the inference is to estimate the variance in each timestep of the reverse process.
In this work, we present a surprising result that both the optimal reverse variance and the corresponding optimal KL divergence of a DPM have analytic forms w.r.t. its score function. 
Building upon it, we propose \textit{Analytic-DPM}, a training-free inference framework that estimates the analytic forms of the variance and KL divergence using the Monte Carlo method and a pretrained score-based model.
Further, to correct the potential bias caused by the score-based model, we derive both lower and upper bounds of the optimal variance and clip the estimate for a better result. Empirically, our analytic-DPM improves the log-likelihood of various DPMs, produces high-quality samples, and meanwhile enjoys a $20\times$ to $80\times$ speed up. 
\end{abstract}

\section{Introduction}
\label{sec:intro}

A diffusion process gradually adds noise to a data distribution over a series of timesteps. By learning to reverse it, diffusion probabilistic models (DPMs)~\citep{sohl2015deep,ho2020denoising,song2020score} define a data generative process. 
Recently, it is shown that DPMs are able to produce high-quality samples~\citep{ho2020denoising,nichol2021improved,song2020score,dhariwal2021diffusion}, which are comparable or even superior to the current state-of-the-art GAN models~\citep{goodfellow2014generative,brock2018large,wu2019logan,karras2020analyzing}.

Despite their success, the inference of DPMs (e.g., sampling and density evaluation) often requires to iterate over thousands of timesteps, which is two or three orders of magnitude slower~\citep{song2020denoising} than other generative models such as GANs.
A key problem in the inference is to estimate the variance in each timestep of the reverse process. Most of the prior works use a handcrafted value for all timesteps, which usually run a long chain to obtain a reasonable sample and density value~\citep{nichol2021improved}. \cite{nichol2021improved} attempt to improve the efficiency of sampling by learning a variance network in the reverse process. However, it still needs a relatively long trajectory to get a reasonable log-likelihood (see Appendix E in~\cite{nichol2021improved}).

In this work, we present a surprising result that both the optimal reverse variance and the corresponding optimal KL divergence of a DPM have analytic forms w.r.t. its score function (i.e., the gradient of a log density). Building upon it, we propose \textit{Analytic-DPM}, a training-free inference framework to improve the efficiency of a pretrained DPM while achieving comparable or even superior performance.
Analytic-DPM estimates the analytic forms of the variance and KL divergence using the Monte Carlo method and the  score-based model in the pretrained DPM. The corresponding trajectory is calculated via a dynamic programming algorithm~\citep{watson2021learning}. Further, to correct the potential bias caused by the score-based model, we derive both lower and upper bounds of the optimal variance and clip its estimate for a better result. Finally, we reveal an interesting relationship between the score function and the data covariance matrix.

Analytic-DPM is applicable to a variety of DPMs~\citep{ho2020denoising,song2020denoising,nichol2021improved} in a plug-and-play manner.
Empirically, Analytic-DPM consistently improves the log-likelihood of these DPMs and meanwhile enjoys a $20 \times$ to $40 \times$ speed up. Besides, Analytic-DPM also consistently improves the sample quality of DDIMs~\citep{song2020denoising} and requires up to 50 timesteps (which is a $20\times$ to $80\times$ speed up compared to the full timesteps) to achieve a comparable FID to the corresponding baseline.

\section{Background}
\label{sec:ddm}

Diffusion probabilistic models (DPMs) firstly construct a forward process $q(\vx_{1:N}|\vx_0)$ that injects noise to a data distribution $q(\vx_0)$, and then reverse the forward process to recover it. Given a forward noise schedule $\beta_n \in (0, 1), n=1,\cdots,N$, denoising diffusion probabilistic models (DDPMs)~\citep{ho2020denoising} consider a Markov forward process:
\begin{align}
\label{eq:ddpm}
    q_{\mathrm{M}}(\vx_{1:N}|\vx_0) = \prod\limits_{n=1}^N q_{\mathrm{M}}(\vx_n|\vx_{n-1}),\quad  q_{\mathrm{M}}(\vx_n|\vx_{n-1}) = \gN(\vx_n|\sqrt{\alpha_n}\vx_{n-1}, \beta_n \mI),
\end{align}
where $\mI$ is the identity matrix, $\alpha_n$ and $\beta_n$ are scalars and  $\alpha_n \coloneqq 1 - \beta_n$.  \cite{song2020denoising} introduce a more general non-Markov process indexed by a non-negative vector $\lambda = (\lambda_1, \cdots, \lambda_{N}) \in \sR_{\geq 0}^N$:
\begin{align}
\label{eq:ddim}
    & q_\lambda(\vx_{1:N}|\vx_0) = q_\lambda(\vx_N|\vx_0) \prod\limits_{n=2}^N q_\lambda(\vx_{n-1}|\vx_n, \vx_0), \\
    & q_\lambda(\vx_N|\vx_0) = \gN(\vx_N|\sqrt{\overline{\alpha}_N}\vx_0, \overline{\beta}_N \mI), \nonumber \\
    & q_\lambda(\vx_{n-1}|\vx_n, \vx_0) = \gN(\vx_{n-1}|\tilde{\vmu}_n(\vx_n, \vx_0), \lambda_n^2 \mI), \nonumber \\
    & \tilde{\vmu}_n(\vx_n, \vx_0) = \sqrt{\overline{\alpha}_{n-1}} \vx_0 + \sqrt{\overline{\beta}_{n-1} - \lambda_n^2} \cdot \frac{\vx_n - \sqrt{\overline{\alpha}_n}\vx_0}{\sqrt{\overline{\beta}_n}}. \nonumber
\end{align}

Here $\overline{\alpha}_n \coloneqq \prod_{i=1}^n \alpha_i$ and $\overline{\beta}_n \coloneqq 1 - \overline{\alpha}_n$.
Indeed, Eq.~{(\ref{eq:ddim})} includes the DDPM forward process as a special case when $\lambda_n^2 = \tilde{\beta}_n$, where $\tilde{\beta}_n \coloneqq \frac{\overline{\beta}_{n-1}}{\overline{\beta}_n} \beta_n$. Another special case of Eq.~{(\ref{eq:ddim})} is the denoising diffusion implicit model (DDIM) forward process, where $\lambda_n^2 = 0$. Besides, we can further derive $q_\lambda (\vx_n|\vx_0) = \gN(\vx_n | \sqrt{\overline{\alpha}_n} \vx_0, \overline{\beta}_n \mI)$, which is independent of $\lambda$.
In the rest of the paper, we will focus on the forward process in Eq.~{(\ref{eq:ddim})} since it is more general, and we will omit the index $\lambda$ and denote it as $q(\vx_{1:N}|\vx_0)$ for simplicity.

The reverse process for Eq.~{(\ref{eq:ddim})} is defined as a Markov process aimed to approximate $q(\vx_0)$ by gradually denoising from the standard Gaussian distribution $p(\vx_N) = \gN(\vx_N|\vzero, \mI)$:
\begin{align*}
% \label{eq:reverse}
    p(\vx_{0:N}) = p(\vx_N) \prod\limits_{n=1}^N p(\vx_{n-1}|\vx_n), \quad p(\vx_{n-1}|\vx_n) = \gN(\vx_{n-1}|\vmu_n(\vx_n), \sigma_n^2 \mI),
\end{align*}
where $\vmu_n(\vx_n)$ is generally parameterized
\footnote{\cite{ho2020denoising,song2020denoising} parameterize $\vmu_n(\vx_n)$ with $\tilde{\vmu}_n(\vx_n, \frac{1}{\sqrt{\overline{\alpha}_n}} (\vx_n - \sqrt{\overline{\beta}_n} \vepsilon_n(\vx_n)))$, which is equivalent to Eq.~{(\ref{eq:mean_param})} by letting $\vs_n(\vx_n) = -\frac{1}{\sqrt{\overline{\beta}_n}} \vepsilon_n(\vx_n)$.}
by a time-dependent score-based model $\vs_n(\vx_n)$~\citep{song2019generative,song2020score}:
\begin{align}
\label{eq:mean_param}
\vmu_n(\vx_n) = \tilde{\vmu}_n \left(\vx_n, \frac{1}{\sqrt{\overline{\alpha}_n}} (\vx_n + \overline{\beta}_n \vs_n (\vx_n)) \right).
\end{align}

The reverse process can be learned by optimizing a variational bound $\lvb$ on negative log-likelihood:
\begin{align*}
    \lvb \!=\! \E_q \! \left[ \! \!-\! \log p(\vx_0|\vx_1)  \!+\! \sum\limits_{n=2}^{N} \! \KL(q(\vx_{n-1}|\vx_0, \vx_n)||p(\vx_{n-1}|\vx_n)) \!+\! \KL(q(\vx_N|\vx_0)||p(\vx_N)) \! \right],
\end{align*}
which is equivalent to optimizing the KL divergence between the forward and the reverse process:
\begin{align}
\label{eq:elbo}
    & \min\limits_{\{\vmu_n, \sigma_n^2\}_{n=1}^N} \lvb \Leftrightarrow \min\limits_{\{\vmu_n, \sigma_n^2\}_{n=1}^N} \KL(q(\vx_{0:N})||p(\vx_{0:N})).
\end{align}

To improve the sample quality in practice, instead of directly optimizing $\lvb$, \cite{ho2020denoising} consider a reweighted variant of $\lvb$ to learn $\vs_n(\vx_n)$:
\begin{align}
\label{eq:re_elbo}
    \min\limits_{\{\vs_n\}_{n=1}^N} \E_n \overline{\beta}_n \E_{q_n(\vx_n)} ||\vs_n(\vx_n) - \nabla_{\vx_n} \log q_n(\vx_n)||^2 = \E_{n, \vx_0, \vepsilon} ||\vepsilon + \sqrt{\overline{\beta}_n} \vs_n(\vx_n)||^2 + c,
\end{align}
where $n$ is uniform between $1$ and $N$, $q_n(\vx_n)$ is the marginal distribution of the forward process at timestep $n$, $\vepsilon$ is a standard Gaussian noise, $\vx_n$ on the right-hand side is reparameterized by $\vx_n = \sqrt{\overline{\alpha}_n} \vx_0 + \sqrt{\overline{\beta}_n} \vepsilon$ and $c$ is a constant only related to $q$. Indeed, Eq.~{(\ref{eq:re_elbo})} is exactly a weighted sum of score matching objectives~\citep{song2019generative}, which admits an optimal solution $\vs_n^*(\vx_n) = \nabla_{\vx_n} \log q_n(\vx_n)$ for all $n \in \{1, 2 \cdots, N\}$. 

Note that Eq.~{(\ref{eq:re_elbo})} provides no learning signal for the variance $\sigma_n^2$. Indeed, $\sigma_n^2$ is generally handcrafted in most of prior works. In DDPMs~\citep{ho2020denoising}, two commonly used settings are $\sigma_n^2 = \beta_n$ and $\sigma_n^2 = \tilde{\beta}_n$. In DDIMs, \cite{song2020denoising} consistently use $\sigma_n^2 = \lambda_n^2$. 
We argue that these handcrafted values are not the true optimal solution of Eq.~{(\ref{eq:elbo})} in general, leading to a suboptimal performance.

\section{Analytic Estimate of the Optimal Reverse Variance}
\label{sec:opt_var}

For a DPM, we first show that both the optimal mean $\vmu_n^*(\vx_n)$ and the optimal variance $\sigma_n^{*2}$ to Eq.~{(\ref{eq:elbo})} have analytic forms w.r.t. the score function, which is summarized in the following Theorem~\ref{thm:opt_mm}.
\begin{restatable}{thm}{optmm}
\label{thm:opt_mm}
(Score representation of the optimal solution to Eq.~{(\ref{eq:elbo})}, proof in Appendix~\ref{sec:proof_opt})

The optimal solution $\vmu_n^*(\vx_n)$ and $\sigma_n^{*2}$ to Eq.~{(\ref{eq:elbo})} are
\begin{align}
    & \vmu_n^*(\vx_n) = % \E_{q(\vx_{n-1}|\vx_n)} [\vx_{n-1}] =
    \tilde{\vmu}_n \left(\vx_n, \frac{1}{\sqrt{\overline{\alpha}_n}} (\vx_n + \overline{\beta}_n \nabla_{\vx_n} \log q_n(\vx_n)) \right), \label{eq:opt_mu} \\
    & \sigma_n^{*2} = 
    % \E_{q_n(\vx_n)} \frac{\tr(\Cov_{q(\vx_{n-1}|\vx_n)}[\vx_{n-1}])}{d} = 
    \lambda_n^2 + \left( \sqrt{\frac{\overline{\beta}_n }{\alpha_n}} - \sqrt{\overline{\beta}_{n-1} - \lambda_n^2} \right)^2
    \left(1 - \overline{\beta}_n \E_{q_n(\vx_n)} \frac{||\nabla_{\vx_n} \log q_n(\vx_n)||^2}{d} \right), \label{eq:opt_sigma}
\end{align}
where $q_n(\vx_n)$ is the marginal distribution of the forward process at the timestep $n$ and $d$ is the dimension of the data.
\end{restatable}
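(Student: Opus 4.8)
The plan is to exploit the fact that the objective in Eq.~(\ref{eq:elbo}) decomposes across timesteps so that each pair $(\vmu_n,\sigma_n^2)$ can be optimized independently, and then to reduce each per-timestep subproblem to moment matching against the true forward conditional $q(\vx_{n-1}|\vx_n)$. Concretely, the bound $\lvb$ expands into $\E_q[-\log p(\vx_0|\vx_1)] + \sum_{n=2}^N \E_q \KL(q(\vx_{n-1}|\vx_0,\vx_n)\,\|\,p(\vx_{n-1}|\vx_n))$ plus the parameter-free term $\KL(q(\vx_N|\vx_0)\|p(\vx_N))$. For each $n\geq 2$, adding back the $p$-independent entropy of $q(\vx_{n-1}|\vx_0,\vx_n)$ shows that the term involving $(\vmu_n,\sigma_n^2)$ equals, up to a constant, the cross entropy $\E_{q(\vx_{n-1},\vx_n)}[-\log p(\vx_{n-1}|\vx_n)]$; the $n=1$ term is already of this form. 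Since $p(\vx_{n-1}|\vx_n)=\gN(\vx_{n-1}|\vmu_n(\vx_n),\sigma_n^2\mI)$, a standard Gaussian cross-entropy computation then gives the minimizers $\vmu_n^*(\vx_n)=\E_q[\vx_{n-1}|\vx_n]$ and $\sigma_n^{*2}=\frac{1}{d}\E_{q(\vx_n)}\tr\Cov_q[\vx_{n-1}|\vx_n]$.

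It remains to evaluate these two forward-process moments in closed form. For the mean, I would use the mixture representation $q(\vx_{n-1}|\vx_n)=\E_{q(\vx_0|\vx_n)}[q(\vx_{n-1}|\vx_0,\vx_n)]$ with $q(\vx_{n-1}|\vx_0,\vx_n)=\gN(\tilde{\vmu}_n(\vx_n,\vx_0),\lambda_n^2\mI)$, together with the fact that $\tilde{\vmu}_n(\vx_n,\vx_0)$ is affine in $\vx_0$; hence $\E_q[\vx_{n-1}|\vx_n]=\tilde{\vmu}_n(\vx_n,\E[\vx_0|\vx_n])$. The key analytic input is Tweedie's formula for the Gaussian channel $q(\vx_n|\vx_0)=\gN(\sqrt{\overline{\alpha}_n}\vx_0,\overline{\beta}_n\mI)$, which yields $\E[\vx_0|\vx_n]=\frac{1}{\sqrt{\overline{\alpha}_n}}(\vx_n+\overline{\beta}_n\nabla_{\vx_n}\log q_n(\vx_n))$; substituting produces Eq.~(\ref{eq:opt_mu}).

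For the variance, the law of total variance applied to the same mixture gives $\Cov_q[\vx_{n-1}|\vx_n]=\lambda_n^2\mI + c_n^2\,\Cov[\vx_0|\vx_n]$, where $c_n$ is the ($\vx_0$)-coefficient of $\tilde{\vmu}_n$. A short simplification using $\overline{\alpha}_{n-1}=\overline{\alpha}_n/\alpha_n$ recasts $c_n^2$ as $\frac{\overline{\alpha}_n}{\overline{\beta}_n}\big(\sqrt{\overline{\beta}_n/\alpha_n}-\sqrt{\overline{\beta}_{n-1}-\lambda_n^2}\big)^2$, and writing $\vx_0$ in terms of the injected noise $\vepsilon$ turns $\Cov[\vx_0|\vx_n]$ into $\frac{\overline{\beta}_n}{\overline{\alpha}_n}\Cov[\vepsilon|\vx_n]$, cancelling the stray factor $\overline{\alpha}_n/\overline{\beta}_n$. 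The final step is the identity $\E_{q(\vx_n)}\tr\Cov[\vepsilon|\vx_n]=d-\overline{\beta}_n\E_{q_n(\vx_n)}\|\nabla_{\vx_n}\log q_n(\vx_n)\|^2$, which I would prove by combining the law of total variance with (i) $\E\|\vepsilon\|^2=d$, since $\vepsilon$ is marginally standard Gaussian, and (ii) the noise-score relation $\E[\vepsilon|\vx_n]=-\sqrt{\overline{\beta}_n}\nabla_{\vx_n}\log q_n(\vx_n)$, itself a restatement of Tweedie's formula. Assembling these pieces gives Eq.~(\ref{eq:opt_sigma}).

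I expect the main obstacle to be this last identity: collapsing the expected conditional noise covariance into the expected squared score norm is exactly where the surprising analytic form originates, and it hinges on correctly tracking the two uses of Tweedie's formula and on the marginal Gaussianity of $\vepsilon$. The moment-matching reduction and the affinity/law-of-total-variance steps are routine by comparison. A minor point worth verifying is that the $n=1$ boundary case ($\overline{\beta}_0=0$, forcing $\lambda_1=0$) is consistent with the general formula, which it is once one notes $\tilde{\vmu}_1(\vx_1,\vx_0)=\vx_0$.
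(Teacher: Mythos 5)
Your proposal is correct and takes essentially the same route as the paper's proof: reduce Eq.~(\ref{eq:elbo}) to per-timestep Gaussian moment matching against $q(\vx_{n-1}|\vx_n)$ (the paper's Lemmas~\ref{lem:kl_mc} and~\ref{lem:opt_gauss_markov}), convert those moments to moments of $q(\vx_0|\vx_n)$ via the affinity of $\tilde{\vmu}_n$ and the law of total variance (Lemma~\ref{lem:opt_mm_cov}), and represent the conditional mean and covariance by the score (Lemma~\ref{lem:cov}, i.e.\ Tweedie's formula). The only differences are cosmetic: you bookkeep with cross-entropies of $\lvb$ rather than the paper's joint-KL decomposition, and you carry the covariance identity through the standard noise $\vepsilon$ instead of the paper's scaled noise, which is the same algebra.
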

The proof of Theorem~\ref{thm:opt_mm} consists of three key steps:
\begin{itemize}
    \item The first step (see Lemma~\ref{lem:opt_gauss_markov}) is known as the \textit{moment matching}~\citep{minka2013expectation}, which states that approximating arbitrary density by a Gaussian density under the KL divergence is equivalent to setting the first two moments of the two densities as the same. To our knowledge, the connection of moment matching and DPMs has not been revealed before.
    \item In the second step (see Lemma~\ref{lem:opt_mm_cov}), we carefully use the \textit{law of total variance} conditioned on $\vx_0$ and convert the second moment of $q(\vx_{n-1}|\vx_n)$ to that of $q(\vx_0|\vx_n)$.
    \item In the third step (see Lemma~\ref{lem:cov}), we surprisingly find that the second moment of $q(\vx_0|\vx_n)$ can be represented by the score function, and we plug the score representation into the second moment of $q(\vx_{n-1}|\vx_n)$ to get the final results in Theorem~\ref{thm:opt_mm}.
\end{itemize}

The results in Theorem~\ref{thm:opt_mm} (and other results to appear later) can be further simplified for the DDPM forward process (i.e., $\lambda_n^2 = \tilde{\beta}_n$, see Appendix~\ref{sec:ddpm} for details). Besides, we can also extend Theorem~\ref{thm:opt_mm} to DPMs with continuous timesteps~\citep{song2020score,kingma2021variational}, where their corresponding optimal mean and variance are also determined by the score function in an analytic form (see Appendix~\ref{sec:cdp_opt_v} for the extension).

% Such a result depends on the decomposability property of the KL divergence between two Markov chains:
% \begin{align*}
%     \KL(q(\vx_{0:N})||p(\vx_{0:N})) = \KL(q(\vx_N) || p(\vx_N)) + \sum\limits_{n=1}^N \E_q \KL(q(\vx_{n-1}|\vx_n) || p(\vx_{n-1}|\vx_n)),
% \end{align*}
% where $\E_q \KL(q(\vx_{n-1}|\vx_n) || p(\vx_{n-1}|\vx_n))$ is only related to $\vmu_n(\cdot)$ and $\sigma_n^2$. Thereby, Eq.~{(\ref{eq:elbo})} is decomposed into $n$ independent optimization sub-problems:
% \begin{align*}
%     \min\limits_{\vmu_n, \sigma_n^2} \E_q \KL(q(\vx_{n-1}|\vx_n) || p(\vx_{n-1}|\vx_n)), \quad 1 \leq n \leq N,
% \end{align*}
% and each sub-problem admits an optimal solution as shown in Eq.~{(\ref{eq:opt_mu})} and Eq.~{(\ref{eq:opt_sigma})}. 

Note that our analytic form of the optimal mean $\vmu_n^*(\vx_n)$ in Eq.~{(\ref{eq:opt_mu})} and the previous parameterization of $\vmu_n(\vx_n)$~\citep{ho2020denoising} in Eq.~{(\ref{eq:mean_param})} coincide. The only difference is that Eq.~{(\ref{eq:mean_param})} replaces the score function $\nabla_{\vx_n} \log q_n(\vx_n)$ in Eq.~{(\ref{eq:opt_mu})} with the score-based model $\vs_n(\vx_n)$. 
This result explicitly shows that Eq.~{(\ref{eq:re_elbo})} essentially shares the same optimal mean solution to the $\lvb$ objective, providing a simple and alternative perspective to prior works.

In contrast to the handcrafted strategies used in~\citep{ho2020denoising,song2020denoising}, Theorem~\ref{thm:opt_mm} shows that the optimal reverse variance $\sigma_n^{*2}$ can also be estimated without any extra training process given a pretrained score-based model $\vs_n(\vx_n)$. In fact, we first estimate the expected mean squared norm of $\nabla_{\vx_n} \log q_n(\vx_n)$ by $\Gamma = (\Gamma_1, ..., \Gamma_N)$, where
\begin{align}
\label{eq:ms_score}
    \Gamma_n = \frac{1}{M} \sum\limits_{m=1}^M \frac{||\vs_n(\vx_{n, m})||^2}{d}, \quad \vx_{n, m} \stackrel{iid}{\sim} q_n (\vx_n).
\end{align}
$M$ is the number of Monte Carlo samples. We only need to calculate $\Gamma$ once for a pretrained model and reuse it in downstream computations (see Appendix~\ref{sec:extra_cost} for a detailed discussion of the computation cost of $\Gamma$). Then, according to Eq.~{(\ref{eq:opt_sigma})}, we estimate $\sigma_n^{*2}$ as follows:
\begin{align}
\label{eq:opt_var_est}
    \hat{\sigma}_n^2 = \lambda_n^2 + \left( \sqrt{\frac{\overline{\beta}_n }{\alpha_n}} - \sqrt{\overline{\beta}_{n-1} - \lambda_n^2} \right)^2
    \left(1 - \overline{\beta}_n \Gamma_n \right).
\end{align}
We empirically validate Theorem~\ref{thm:opt_mm}.
In Figure~\ref{fig:validate_thm1} (a), we plot our analytic estimate $\hat{\sigma}_n^2$ of a DDPM trained on CIFAR10, as well as the baselines $\beta_n$ and $\tilde{\beta}_n$ used by~\cite{ho2020denoising}. At small timesteps, these strategies behave differently. Figure~\ref{fig:validate_thm1} (b) shows that our $\hat{\sigma}_n^2$ outperforms the baselines for each term of $\lvb$, especially at the small timesteps. We also obtain similar results on other datasets (see Appendix~\ref{sec:vis}). Besides, we show that only a small number of Monte Carlo (MC) samples (e.g., $M$=10, 100) is required to achieve a sufficiently small variance caused by MC and get a similar performance to that with a large $M$ (see Appendix~\ref{sec:ablation_mc}). We also discuss the stochasticity of $\lvb$ after plugging $\hat{\sigma}_n^2$ in Appendix~\ref{sec:sto_lvb}.

\begin{figure}[t]
\begin{center}
\subfloat[Variance]{\includegraphics[height=0.28\columnwidth]{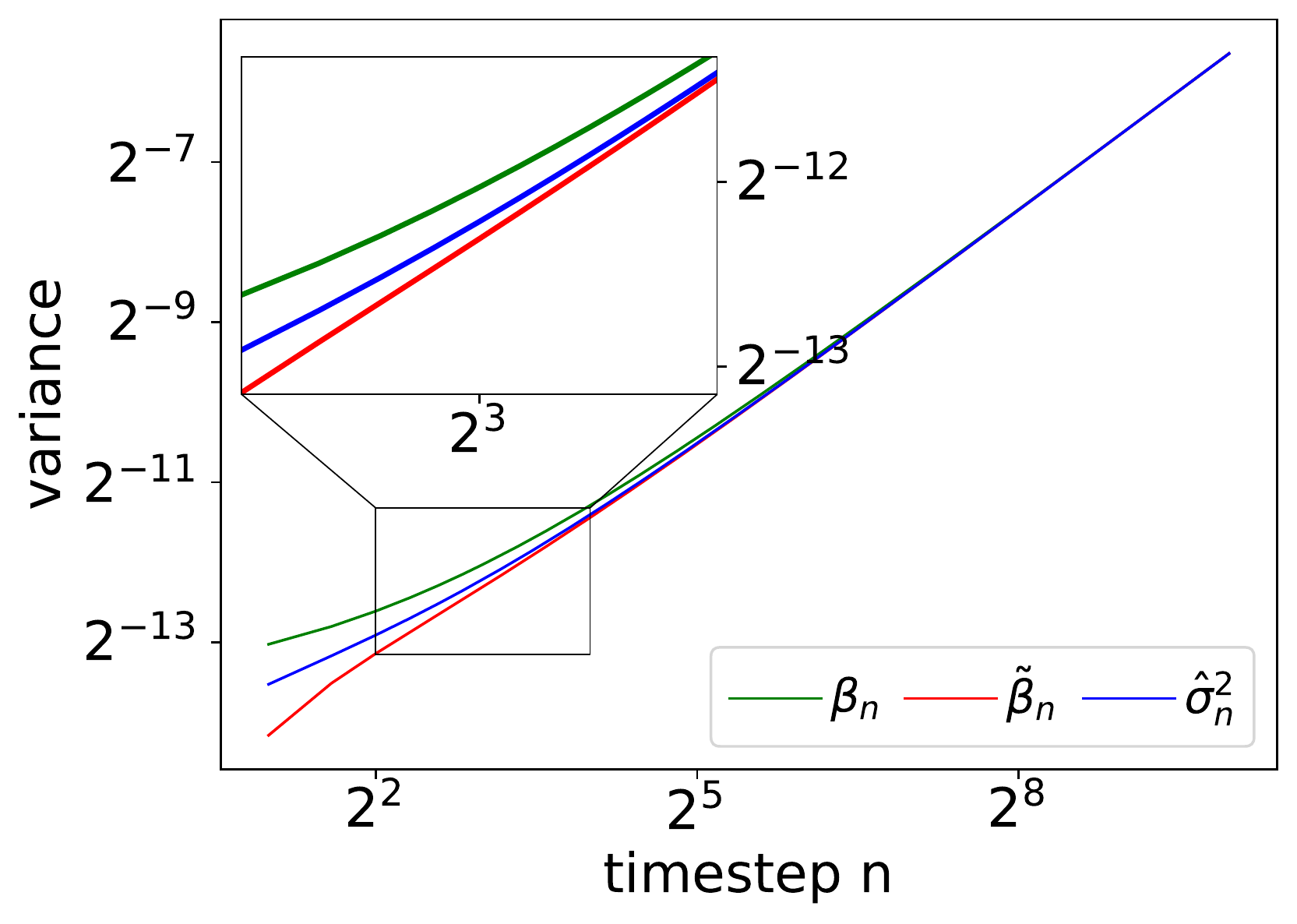}}
\hspace{.6cm}
\subfloat[Terms in $\lvb$]{\includegraphics[height=0.28\columnwidth]{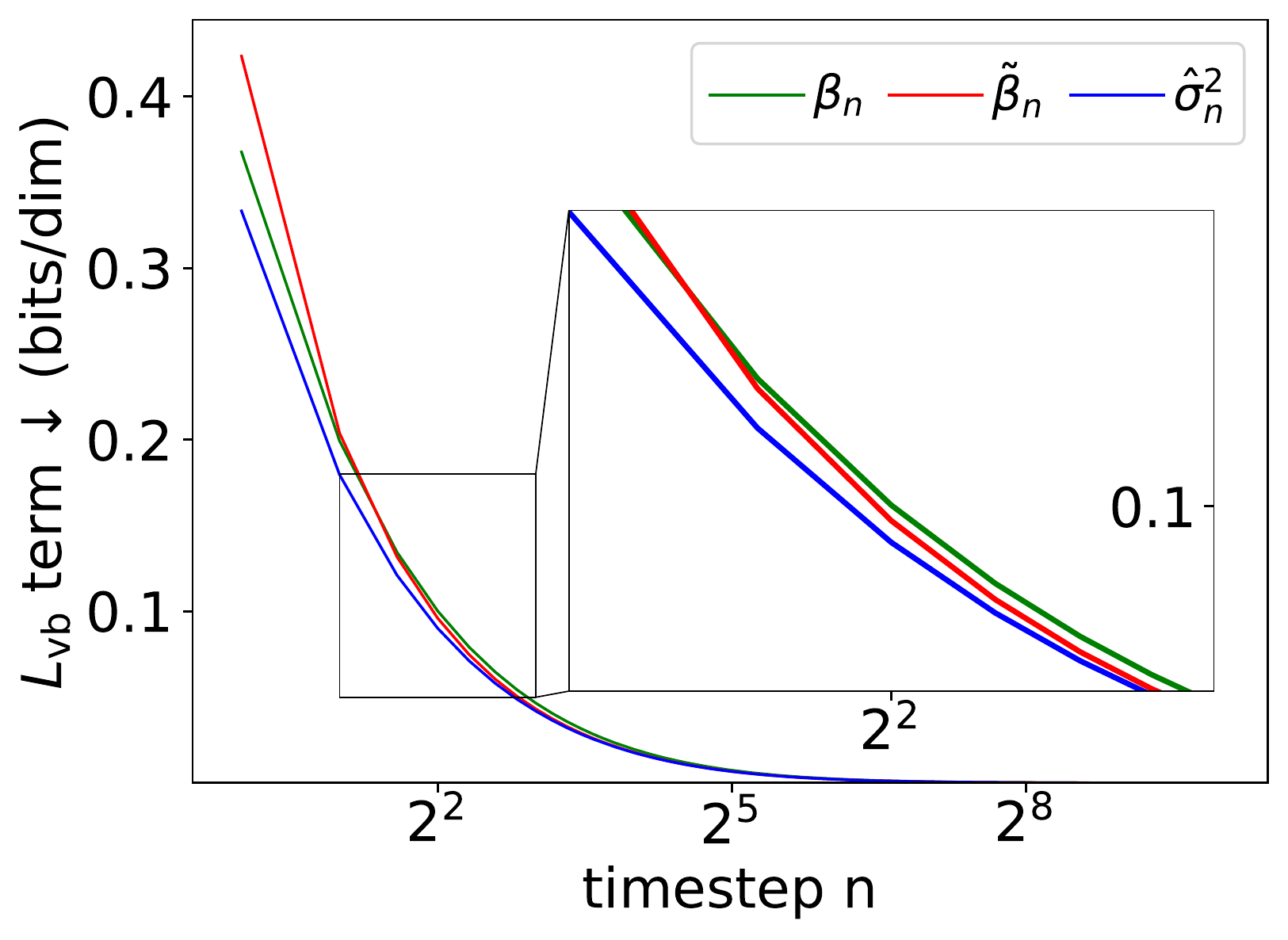}}
\vspace{-.2cm}
\caption{Comparing our analytic estimate $\hat{\sigma}_n^2$ and prior works with handcrafted variances $\beta_n$ and $\tilde{\beta}_n$. (a) compares the values of the variance for different timesteps. (b) compares the term in $\lvb$ corresponding to each timestep. The value of $\lvb$ is the area under the corresponding curve.}
\label{fig:validate_thm1}
\end{center}
\vspace{-.4cm}
\end{figure}

\subsection{Bounding the Optimal Reverse Variance to Reduce Bias}
\label{sec:bd}

According to Eq.~{(\ref{eq:opt_sigma})} and Eq.~{(\ref{eq:opt_var_est})}, the bias of the analytic estimate $\hat{\sigma}_n^2$ is
\begin{align}
\label{eq:error}
    |\sigma_n^{*2} - \hat{\sigma}_n^2| = \underbrace{\left( \sqrt{\frac{\overline{\beta}_n }{\alpha_n}} - \sqrt{\overline{\beta}_{n-1} - \lambda_n^2} \right)^2 \overline{\beta}_n}_{\textrm{Coefficient}} \underbrace{|\Gamma_n - \E_{q_n(\vx_n)} \frac{||\nabla_{\vx_n} \log q_n(\vx_n)||^2}{d}|}_{\textrm{Approximation error}}.
\end{align}
Our estimate of the variance employs a score-based model $\vs_n(\vx_n)$ to approximate the true score function $\nabla_{\vx_n} \log q_n(\vx_n)$. Thus, the approximation error in Eq.~(\ref{eq:error}) is irreducible given a pretrained model.  
Meanwhile, the coefficient in Eq.~(\ref{eq:error}) can be large if we use a shorter trajectory to sample (see details in Section~\ref{sec:opt_traj}), potentially resulting in a large bias. 

To reduce the bias, we derive bounds of the optimal reverse variance $\sigma_n^{*2}$ and clip our estimate based on the bounds. Importantly, these bounds are unrelated to the data distribution $q(\vx_0)$ and hence can be efficiently calculated. 
We firstly derive both upper and lower bounds of $\sigma_n^{*2}$ without any assumption about the data. Then we show another upper bound of $\sigma_n^{*2}$ if the data distribution is bounded. We formalize these bounds in Theorem~\ref{thm:bd}.

\begin{restatable}{thm}{bd}
\label{thm:bd}
(Bounds of the optimal reverse variance, proof in Appendix~\ref{sec:proof_bd}) 

$\sigma_n^{*2}$ has the following lower and upper bounds:
\begin{align}
\label{eq:upper_lower}
    \lambda_n^2 \leq \sigma_n^{*2} \leq \lambda_n^2 + \left( \sqrt{\frac{\overline{\beta}_n }{\alpha_n}} - \sqrt{\overline{\beta}_{n-1} - \lambda_n^2} \right)^2.
\end{align}
If we further assume $q(\vx_0)$ is a bounded distribution in $[a, b]^d$, where $d$ is the dimension of data, then $\sigma_n^{*2}$ can be further upper bounded by
\begin{align}
\label{eq:upper}
    \sigma_n^{*2} \leq  \lambda_n^2 + \left(\sqrt{\overline{\alpha}_{n-1}} - \sqrt{\overline{\beta}_{n-1} - \lambda_n^2} \cdot \sqrt{\frac{ \overline{\alpha}_n}{\overline{\beta}_n}}\right)^2 \left(\frac{b-a}{2}\right)^2.
\end{align}
\end{restatable}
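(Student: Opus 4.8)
The plan is to prove each inequality by controlling a single nonnegative scalar that appears in the analytic expression for $\sigma_n^{*2}$, and to do so I would exploit two equivalent representations of the optimal variance. The first is the score form of Theorem~\ref{thm:opt_mm}, namely $\sigma_n^{*2} = \lambda_n^2 + C_n\,(1 - S_n)$ with $C_n = \big(\sqrt{\overline{\beta}_n/\alpha_n} - \sqrt{\overline{\beta}_{n-1} - \lambda_n^2}\big)^2 \ge 0$ and $S_n = \overline{\beta}_n\,\E_{q_n(\vx_n)}\|\nabla_{\vx_n}\log q_n(\vx_n)\|^2/d$. The second, which is exactly the intermediate object produced by the law-of-total-variance step (Lemma~\ref{lem:opt_mm_cov}), is the covariance form $\sigma_n^{*2} = \lambda_n^2 + k_n^2\,\E_{q(\vx_n)}\tfrac1d\tr\!\big(\Cov_{q(\vx_0|\vx_n)}[\vx_0]\big)$, where $k_n = \sqrt{\overline{\alpha}_{n-1}} - \sqrt{\overline{\beta}_{n-1}-\lambda_n^2}\,\sqrt{\overline{\alpha}_n/\overline{\beta}_n}$ is precisely the coefficient appearing in Eq.~(\ref{eq:upper}). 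The two forms are compatible because $C_n = k_n^2\,\overline{\beta}_n/\overline{\alpha}_n$, an identity I would check once using $\overline{\alpha}_n = \overline{\alpha}_{n-1}\alpha_n$; this also tells me which form is cleanest for each bound.

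For the dimension-free bounds of Eq.~(\ref{eq:upper_lower}), I would show $0 \le S_n \le 1$, since then $\lambda_n^2 \le \lambda_n^2 + C_n(1-S_n) \le \lambda_n^2 + C_n$ follows immediately from $C_n \ge 0$. The lower bound $S_n \ge 0$ is trivial. For $S_n \le 1$, I would rewrite the score in terms of the injected noise: with $\vx_n = \sqrt{\overline{\alpha}_n}\vx_0 + \sqrt{\overline{\beta}_n}\,\vepsilon$ and $\vepsilon \sim \gN(\vzero,\mI)$, Tweedie's identity (equivalently Lemma~\ref{lem:cov}) gives $\nabla_{\vx_n}\log q_n(\vx_n) = -\tfrac{1}{\sqrt{\overline{\beta}_n}}\,\E[\vepsilon\mid\vx_n]$, so that $S_n = \tfrac1d\,\E_{q_n}\|\E[\vepsilon\mid\vx_n]\|^2$. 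Applying Jensen's inequality coordinatewise to the conditional expectation and then the tower rule, $\E_{q_n}\|\E[\vepsilon\mid\vx_n]\|^2 \le \E_{q_n}\E[\|\vepsilon\|^2\mid\vx_n] = \E\|\vepsilon\|^2 = d$, which yields $S_n \le 1$ and completes Eq.~(\ref{eq:upper_lower}).

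For the bounded-data upper bound of Eq.~(\ref{eq:upper}), I would switch to the covariance form and bound the conditional data variance directly. Because $q(\vx_0)$ is supported on $[a,b]^d$, each coordinate $(\vx_0)_i$ conditioned on $\vx_n$ is a random variable valued in $[a,b]$, so Popoviciu's inequality gives $\Var[(\vx_0)_i\mid\vx_n] \le (b-a)^2/4$. Summing over the $d$ coordinates and taking the expectation over $\vx_n$ gives $\E_{q(\vx_n)}\tfrac1d\tr\!\big(\Cov_{q(\vx_0|\vx_n)}[\vx_0]\big) \le \big(\tfrac{b-a}{2}\big)^2$, and multiplying through by $k_n^2$ reproduces Eq.~(\ref{eq:upper}) verbatim.

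The main obstacle is the estimate $S_n \le 1$; the rest of the argument is either a direct substitution or a one-line boundedness estimate. The subtlety is that $S_n$ is written through the score, an a priori unbounded quantity, and one might worry that a sharply concentrated $q(\vx_0)$ forces $\|\nabla_{\vx_n}\log q_n\|$ to blow up. The resolution is the exact noise representation of the score, which converts the score norm into the conditional-mean norm of a standard Gaussian and lets convexity finish the job; this is exactly the point where I would invoke the score-covariance identity of Lemma~\ref{lem:cov} to keep the proof self-contained rather than re-deriving Tweedie's formula from scratch.
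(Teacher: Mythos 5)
Your proof is correct, and for two of the three inequalities it coincides with the paper's own argument: the upper bound in Eq.~(\ref{eq:upper_lower}) comes, in both proofs, from the score form of Theorem~\ref{thm:opt_mm} together with $S_n \coloneqq \overline{\beta}_n \E_{q_n(\vx_n)}\|\nabla_{\vx_n}\log q_n(\vx_n)\|^2/d \ge 0$, and Eq.~(\ref{eq:upper}) comes, in both proofs, from the covariance form of Lemma~\ref{lem:opt_mm_cov} plus the bounded-support estimate (the paper's Lemma~\ref{lem:cov_bd} matches your coordinatewise Popoviciu bound, proved there by centering at $(a+b)/2$, and it likewise relies on your observation that conditioning on $\vx_n$ keeps $\vx_0$ in $[a,b]^d$). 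Where you genuinely diverge is the lower bound $\sigma_n^{*2} \ge \lambda_n^2$: the paper reads it off immediately from Lemma~\ref{lem:opt_mm_cov}, since $\sigma_n^{*2}-\lambda_n^2$ there is a nonnegative coefficient times $\E_{q(\vx_n)}\tr(\Cov_{q(\vx_0|\vx_n)}[\vx_0])/d$ and a covariance trace is nonnegative, whereas you stay entirely inside the score representation and prove $S_n \le 1$ as a standalone fact via $\nabla_{\vx_n}\log q_n(\vx_n) = -\E[\vepsilon\,|\,\vx_n]/\sqrt{\overline{\beta}_n}$, conditional Jensen, and the tower rule. The two arguments are the same fact in different clothes---your Jensen gap equals $\E\,\tr(\Cov(\vepsilon\,|\,\vx_n))$, which is exactly the (rescaled) conditional covariance that makes the paper's form manifestly nonnegative---so neither is more general, but each buys something: the paper's route is shorter because the nonnegativity is already packaged in Lemma~\ref{lem:opt_mm_cov}, while yours keeps Eq.~(\ref{eq:upper_lower}) self-contained in the score form and isolates the dimension-free statement $\overline{\beta}_n\,\E_{q_n}\|\nabla_{\vx_n}\log q_n(\vx_n)\|^2 \le d$, i.e., the population value of $\overline{\beta}_n\Gamma_n$ in Eq.~(\ref{eq:ms_score}) never exceeds $1$, which makes transparent why the factor $1-\overline{\beta}_n\Gamma_n$ in Eq.~(\ref{eq:opt_var_est}) can only go negative through model or Monte Carlo error and hence why clipping to these bounds is sound. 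Your bridging identity $C_n = k_n^2\,\overline{\beta}_n/\overline{\alpha}_n$ is also correct; it is the same algebra the paper performs inside the proof of Theorem~\ref{thm:opt_mm}.
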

Theorem~\ref{thm:bd} states that the handcrafted reverse variance $\lambda_n^2$ in prior works~\citep{ho2020denoising,song2020denoising} underestimates $\sigma_n^{*2}$. For instance, $\lambda_n^2 = \tilde{\beta}_n$ in DDPM. We compare it with our estimate in Figure~\ref{fig:validate_thm1} (a) and the results agree with Theorem~\ref{thm:bd}.
Besides, the boundedness assumption of $q(\vx_0)$ is satisfied in many scenarios including generative modelling of images, and which upper bound among Eq.~{(\ref{eq:upper_lower})} and Eq.~{(\ref{eq:upper})} is tighter depends on $n$. Therefore, we clip our estimate based on the minimum one. 
Further, we show theses bounds are tight numerically in Appendix~\ref{sec:tightness}.

\section{Analytic Estimation of the Optimal Trajectory}
\label{sec:opt_traj}

The number of full timesteps $N$ can be large, making the inference slow in practice. Thereby, we can construct a shorter forward process $q(\vx_{\tau_1}, \cdots, \vx_{\tau_K}|\vx_0)$ constrained on a trajectory $1 = \tau_1 < \cdots < \tau_K = N$ of $K$ timesteps~\citep{song2020denoising,nichol2021improved,watson2021learning}, and $K$ can be much smaller than $N$ to speed up the inference. Formally, the shorter process is defined as $q(\vx_{\tau_1}, \cdots, \vx_{\tau_K} | \vx_0)=q(\vx_{\tau_K}|\vx_0) \prod_{k=2}^K q(\vx_{\tau_{k-1}}|\vx_{\tau_k}, \vx_0)$, where
\begin{align}
\label{eq:short}
    & q(\vx_{\tau_{k-1}}|\vx_{\tau_k}, \vx_0) = \gN(\vx_{\tau_{k-1}}| \tilde{\vmu}_{\tau_{k-1}|\tau_k} (\vx_{\tau_k}, \vx_0), \lambda_{\tau_{k-1}|\tau_k}^2 \mI), \\
    & \tilde{\vmu}_{\tau_{k-1}|\tau_k} (\vx_{\tau_k}, \vx_0) = \sqrt{\overline{\alpha}_{\tau_{k-1}}} \vx_0 + \sqrt{\overline{\beta}_{\tau_{k-1}} - \lambda_{\tau_{k-1}|\tau_k}^2} \cdot \frac{\vx_{\tau_k} - \sqrt{\overline{\alpha}_{\tau_k}}\vx_0}{\sqrt{\overline{\beta}_{\tau_k}}}. \nonumber
\end{align}
The corresponding reverse process is $p(\vx_0, \vx_{\tau_1}, \cdots, \vx_{\tau_K}) = p(\vx_{\tau_K}) \prod_{k=1}^K p(\vx_{\tau_{k-1}}|\vx_{\tau_k})$, where
\begin{align*}
    & p(\vx_{\tau_{k-1}}|\vx_{\tau_k}) = \gN(\vx_{\tau_{k-1}} | \vmu_{\tau_{k-1} | \tau_k} (\vx_{\tau_k}), \sigma_{\tau_{k-1} | \tau_k}^2 \mI).
\end{align*}
According to Theorem~\ref{thm:opt_mm}, the mean and variance of the optimal $p^*(\vx_{\tau_{k-1}}|\vx_{\tau_k})$ in the sense of KL minimization is
\begin{align*}
    % & p^*(\vx_{\tau_{k-1}}|\vx_{\tau_k}) = \gN(\vx_{\tau_{k-1}} | \vmu_{\tau_{k-1} | \tau_k}^* (\vx_{\tau_k}), \sigma_{\tau_{k-1} | \tau_k}^{*2} \mI), \\
    & \vmu_{\tau_{k-1}|\tau_k}^*(\vx_{\tau_k}) = \tilde{\vmu}_{\tau_{k-1}|\tau_k} \left(\vx_{\tau_k}, \frac{1}{\sqrt{\overline{\alpha}_{\tau_k}}} (\vx_{\tau_k} + \overline{\beta}_{\tau_k} \nabla_{\vx_{\tau_k}} \log q(\vx_{\tau_k}))\right),\\
    & \sigma_{\tau_{k-1}|\tau_k}^{*2} \!=\! \lambda_{\tau_{k-1}|\tau_k}^2 \!+\! \left(\sqrt{\frac{\overline{\beta}_{\tau_k}}{\alpha_{\tau_k|\tau_{k-1}}}} \!-\! \sqrt{\overline{\beta}_{\tau_{k-1}} \!-\! \lambda_{\tau_{k-1}|\tau_{k}}^2}\right)^2
    (1 \!-\! \overline{\beta}_{\tau_{k}} \E_{q(\vx_{\tau_{k}})} \frac{||\nabla_{\vx_{\tau_{k}}} \log q(\vx_{\tau_{k}})||^2}{d}),
\end{align*}
where $\alpha_{\tau_k|\tau_{k-1}} \coloneqq \overline{\alpha}_{\tau_k} / \overline{\alpha}_{\tau_{k-1}}$. According to Theorem~\ref{thm:bd}, we can derive similar bounds for $\sigma_{\tau_{k-1}|\tau_k}^{*2}$ (see details in Appendix~\ref{sec:bd_traj}). Similarly to Eq.~{(\ref{eq:opt_var_est})}, the estimate of $\sigma_{\tau_{k-1}|\tau_k}^{*2}$ is
\begin{align*}
    \hat{\sigma}_{\tau_{k-1}|\tau_k}^2 \!=\! \lambda_{\tau_{k-1}|\tau_k}^2 \!+\! \left(\sqrt{\frac{\overline{\beta}_{\tau_k}}{\alpha_{\tau_k|\tau_{k-1}}}} \!-\! \sqrt{\overline{\beta}_{\tau_{k-1}} \!-\! \lambda_{\tau_{k-1}|\tau_{k}}^2}\right)^2
    (1 \!-\! \overline{\beta}_{\tau_{k}} \Gamma_{\tau_k}),
\end{align*}
where $\Gamma$ is defined in Eq.~{(\ref{eq:ms_score})} and can be shared across different selections of trajectories. Based on the optimal reverse process $p^*$ above, we further optimize the trajectory:
\begin{align}
\label{eq:kl_traj}
    \min\limits_{\tau_1, \cdots, \tau_K} \KL(q(\vx_0, \vx_{\tau_1}, \cdots, \vx_{\tau_K})||p^*(\vx_0, \vx_{\tau_1}, \cdots, \vx_{\tau_K})) = \frac{d}{2} \sum\limits_{k=2}^K J(\tau_{k-1}, \tau_k) + c,
\end{align}
where $J(\tau_{k-1}, \tau_k) = \log (\sigma_{\tau_{k-1}|\tau_k}^{*2} / \lambda_{\tau_{k-1}|\tau_k}^2)$ and $c$ is a constant unrelated to the trajectory $\tau$ (see proof in Appendix~\ref{sec:proof_kl_traj}). The KL in Eq.~{(\ref{eq:kl_traj})} can be decomposed into $K-1$ terms and each term has an analytic form w.r.t. the score function. We view each term as a cost function $J$ evaluated at $(\tau_{k-1}, \tau_k)$, and it can be efficiently estimated by $J(\tau_{k-1}, \tau_k) \approx \log (\hat{\sigma}_{\tau_{k-1}|\tau_k}^2 / \lambda_{\tau_{k-1}|\tau_k}^2)$, which doesn't require any neural network computation once $\Gamma$ is given. While the logarithmic function causes bias even when the correct score function is known, it can be reduced by increasing $M$.

As a result, Eq.~{(\ref{eq:kl_traj})} is reduced to a canonical least-cost-path problem~\citep{watson2021learning} on a directed graph, where the nodes are $\{1, 2, \cdots, N\}$ and the edge from $s$ to $t$ has cost $J(s, t)$. We want to find a least-cost path of $K$ nodes starting from $1$ and terminating at $N$. This problem can be solved by the dynamic programming (DP) algorithm introduced by~\cite{watson2021learning}. We present this algorithm in Appendix~\ref{sec:dp}. Besides, we can also extend Eq.~{(\ref{eq:kl_traj})} to DPMs with continuous timesteps~\citep{song2020score,kingma2021variational}, where their corresponding optimal KL divergences are also decomposed to terms determined by score functions. Thereby, the DP algorithm is also applicable. See Appendix~\ref{sec:cdp_opt_t} for the extension.

\section{Relationship between The Score Function and the Data Covariance Matrix}
\label{sec:relation_sf_data}

% As shown in Section~{\ref{sec:opt_var}} and Section~{\ref{sec:opt_traj}}, the score function plays an important role in estimating the optimal reverse variance and trajectory. 
In this part, we further reveal a relationship between the score function and the data covariance matrix. Indeed, the data covariance matrix can be decomposed to the sum of $\E_{q(\vx_n)} \Cov_{q(\vx_0|\vx_n)}[\vx_0]$ and $\Cov_{q(\vx_n)} \E_{q(\vx_0|\vx_n)}[\vx_0]$, where the first term can be represented by the score function. Further, the second term is negligible when $n$ is sufficiently large because $\vx_0$ and $\vx_n$ are almost independent. In such cases, the data covariance matrix is almost determined by the score function. Currently, the relationship is purely theoretical and its practical implication is unclear. See details in Appendix~\ref{sec:proof_cv}.

\section{Experiments}
We consider the DDPM forward process ($\lambda_n^2 = \tilde{\beta}_n$) and the DDIM forward process ($\lambda_n^2 = 0$), which are the two most commonly used special cases of Eq.~{(\ref{eq:ddim})}. We denote our method, which uses the analytic estimate $\sigma_n^2 = \hat{\sigma}_n^2$, as \textit{Analytic-DPM}, and explicitly call it \textit{Analytic-DDPM} or \textit{Analytic-DDIM} according to which forward process is used. We compare our Analytic-DPM with the original DDPM~\citep{ho2020denoising}, where the reverse variance is either $\sigma_n^2 = \tilde{\beta}_n$ or $\sigma_n^2 = \beta_n$, as well as the original DDIM~\citep{song2020denoising}, where the reverse variance is $\sigma_n^2 = \lambda_n^2 = 0$. 

We adopt two methods to get the trajectory for both the analytic-DPM and baselines. The first one is even trajectory (ET)~\citep{nichol2021improved}, where the timesteps are determined according to a fixed stride (see details in Appendix~\ref{sec:et}). The second one is optimal trajectory (OT)~\citep{watson2021learning}, where the timesteps are calculated via dynamic programming (see Section~\ref{sec:opt_traj}). Note that the baselines calculate the OT based on the $\lvb$ with their handcrafted variances~\citep{watson2021learning}.

We apply our Analytic-DPM to three pretrained score-based models provided by prior works~\citep{ho2020denoising,song2020denoising,nichol2021improved}, as well as two score-based models trained by ourselves. The pretrained score-based models are trained on CelebA 64x64~\citep{liu2015faceattributes}, ImageNet 64x64~\citep{deng2009imagenet} and LSUN Bedroom~\citep{yu15lsun} respectively. Our score-based models are trained on CIFAR10~\citep{krizhevsky2009learning} with two different forward noise schedules: the linear schedule (LS)~\citep{ho2020denoising} and the cosine schedule (CS)~\citep{nichol2021improved}. We denote them as CIFAR10 (LS) and CIFAR10 (CS) respectively. The number of the full timesteps $N$ is 4000 for ImageNet 64x64 and 1000 for other datasets. During sampling, we only display the mean of $p(\vx_0|\vx_1)$ and discard the noise following \cite{ho2020denoising}, and we additionally clip the noise scale $\sigma_2$ of $p(\vx_1|\vx_2)$ for all methods compared in Table~\ref{tab:fid} (see details in Appendix~\ref{sec:detail_ll_sample} and its ablation study in Appendix~\ref{sec:clip_ablation}). See more experimental details in Appendix~\ref{sec:exp_details}.

We conduct extensive experiments to demonstrate that analytic-DPM can consistently improve the inference efficiency of a pretrained DPM while achieving a comparable or even superior performance. Specifically, 
Section~\ref{sec:ll} and Section~\ref{sec:sample_quality} present the likelihood and sample quality results respectively. Additional experiments such as ablation studies can be found in Appendix~\ref{sec:add}.

\vspace{-.2cm}
\subsection{Likelihood Results}
\label{sec:ll}
\vspace{-.2cm}

Since $\lambda_n^2 = 0$ in the DDIM forward process, its variational bound $\lvb$ is infinite. Thereby, we only consider the likelihood results under the DDPM forward process. As shown in Table~\ref{tab:bpd}, on all three datasets, our Analytic-DPM consistently improves the likelihood results of the original DDPM using both ET and OT. Remarkably,
using a much shorter trajectory (i.e., a much less inference time), 
Analytic-DPM with OT can still outperform the baselines. In Table~\ref{tab:bpd}, we select the minimum $K$ such that analytic-DPM can outperform the baselines with full timesteps and underline the corresponding results. Specifically, analytic-DPM enjoys a $40\times$ speed up on CIFAR10 (LS) and ImageNet 64x64, and a $20\times$ speed up on CIFAR10 (CS) and CelebA 64x64.

Although we mainly focus on learning-free strategies of choosing the reverse variance, we also compare to another strong baseline that predicts the variance by a neural network
~\citep{nichol2021improved}. With full timesteps, Analytic-DPM achieves a NLL of 3.61 on ImageNet 64x64, which is very close to 3.57 reported in \cite{nichol2021improved}. 
Besides, while \cite{nichol2021improved} report that the ET drastically reduces the log-likelihood performance of their neural-network-parameterized variance, Analytic-DPM performs well with the ET. See details in Appendix~\ref{sec:additional_ll}.

\begin{table}[t]
    \centering
    \caption{Negative log-likelihood (bits/dim) $\downarrow$ under the DDPM forward process. We show results under trajectories of different number of timesteps $K$. We select the minimum $K$ such that analytic-DPM can outperform the baselines with full timesteps and underline the corresponding results.} \vspace{-.2cm}
    \label{tab:bpd}
    \begin{tabular}{llrrrrrrr}
    \toprule
       \multicolumn{2}{l}{Model $\backslash$ \# timesteps $K$} & 10 & 25 & 50 & 100 & 200 & 400 & 1000 \\
    \midrule
     \multicolumn{2}{l}{CIFAR10 (LS)}  & \\
     \arrayrulecolor{black!30}\midrule
      \multirow{3}{*}{ET} & DDPM, $\sigma_n^2 = \tilde{\beta}_n$ & 74.95 & 24.98 & 12.01 & 7.08 & 5.03 & 4.29 & 3.73  \\
       &  DDPM, $\sigma_n^2 = \beta_n$ & 6.99 & 6.11 & 5.44 & 4.86 & 4.39 & 4.07 & 3.75 \\
       &  Analytic-DDPM & \textbf{5.47} & \textbf{4.79} & \textbf{4.38} & \textbf{4.07} & \textbf{3.84} & \textbf{3.71} & \textbf{3.59} \\
       \arrayrulecolor{black!30}\midrule
       \multirow{2}{*}{OT} & DDPM, $\sigma_n^2 = \beta_n$ & 5.38 & 4.34 & 3.97 & 3.82 & 3.77 & 3.75 & 3.75 \\
       &  Analytic-DDPM & \textbf{4.11} & \underline{\textbf{3.68}} & \textbf{3.61} & \textbf{3.59} & \textbf{3.59} & \textbf{3.59} & \textbf{3.59} \\
     \arrayrulecolor{black}\midrule
      \multicolumn{2}{l}{CIFAR10 (CS)}  & \\
     \arrayrulecolor{black!30}\midrule
      \multirow{3}{*}{ET} &  DDPM, $\sigma_n^2 = \tilde{\beta}_n$ & 75.96 & 24.94 & 11.96 & 7.04 & 4.95 & 4.19 & 3.60  \\
       &  DDPM, $\sigma_n^2 = \beta_n$ & 6.51 & 5.55 & 4.92 & 4.41 & 4.03 & 3.78 & 3.54 \\
       &  Analytic-DDPM & \textbf{5.08} & \textbf{4.45} & \textbf{4.09} & \textbf{3.83} & \textbf{3.64} & \textbf{3.53} & \textbf{3.42} \\
      \arrayrulecolor{black!30}\midrule
      \multirow{2}{*}{OT} &  DDPM, $\sigma_n^2 = \beta_n$ & 5.51 & 4.30 & 3.86 & 3.65 & 3.57 & 3.54 & 3.54 \\
       &  Analytic-DDPM & \textbf{3.99} & \textbf{3.56} & \underline{\textbf{3.47}} & \textbf{3.44} & \textbf{3.43} & \textbf{3.42} &  \textbf{3.42} \\
       \arrayrulecolor{black}\midrule
        \multicolumn{2}{l}{CelebA 64x64} & \\
      \arrayrulecolor{black!30}\midrule
       \multirow{3}{*}{ET} &  DDPM, $\sigma_n^2 = \tilde{\beta}_n$ & 33.42 & 13.09 & 7.14 & 4.60 & 3.45 & 3.03 & 2.71  \\
       &  DDPM, $\sigma_n^2 = \beta_n$ & 6.67 & 5.72 & 4.98 & 4.31 & 3.74 & 3.34 & 2.93  \\
       &  Analytic-DDPM & \textbf{4.54} & \textbf{3.89} & \textbf{3.48} & \textbf{3.16} & \textbf{2.92} & \textbf{2.79} & \textbf{2.66} \\
      \arrayrulecolor{black!30}\midrule
     \multirow{2}{*}{OT} &  DDPM, $\sigma_n^2 = \beta_n$ & 4.76 & 3.58 & 3.16 & 2.99 & 2.94 & 2.93 & 2.93 \\
      &  Analytic-DDPM & \textbf{2.97} & \textbf{2.71} & \underline{\textbf{2.67}} & \textbf{2.66} & \textbf{2.66} & \textbf{2.66} & \textbf{2.66}\\
    \arrayrulecolor{black}\bottomrule
    \end{tabular} \\
    \vspace{0.15cm}
    \begin{tabular}{llrrrrrrrr}
    \toprule
      \multicolumn{2}{l}{Model $\backslash$ \# timesteps $K$} & 25 & 50 & 100 & 200 & 400 & 1000 & 4000 \\
    \midrule
       \multicolumn{2}{l}{ImageNet 64x64} & \\
       \arrayrulecolor{black!30}\midrule
       \multirow{3}{*}{ET} & DDPM, $\sigma_n^2 = \tilde{\beta}_n$ & 105.87 & 46.25 & 22.02 & 12.10 & 7.59 & 5.04 & 3.89 \\
       & DDPM, $\sigma_n^2 = \beta_n$ & 5.81 & 5.20 & 4.70 & 4.31 & 4.04 & 3.81 & 3.65 \\
       & Analytic-DDPM & \textbf{4.78} & \textbf{4.42} & \textbf{4.15} & \textbf{3.95} & \textbf{3.81} & \textbf{3.69} & \textbf{3.61} \\
       \arrayrulecolor{black!30}\midrule
       \multirow{2}{*}{OT} & DDPM, $\sigma_n^2 = \beta_n$ & 4.56 & 4.09 & 3.84 & 3.73 & 3.68 & 3.65 & 3.65 \\
       & Analytic-DDPM & \textbf{3.83} & \textbf{3.70} & \underline{\textbf{3.64}} & \textbf{3.62} & \textbf{3.62} & \textbf{3.61} & \textbf{3.61}\\
    \arrayrulecolor{black}\bottomrule
    \end{tabular}
    \vspace{-.4cm}
\end{table}

\begin{table}[t]
    \centering
    \caption{FID $\downarrow$ under the DDPM and DDIM forward processes. All are evaluated under the even trajectory (ET). The result with $^\dagger$ is slightly better than 3.17 reported by \cite{ho2020denoising}, because we use an improved model architecture following~\cite{nichol2021improved}.}\vspace{-.2cm}
    \label{tab:fid}
    \begin{tabular}{lrrrrrrr}
    \toprule
      Model $\backslash$ \# timesteps $K$ & 10 & 25 & 50 & 100 & 200 & 400 & 1000 \\
    \midrule
     CIFAR10 (LS)  & \\
     \arrayrulecolor{black!30}\midrule
      \quad DDPM, $\sigma_n^2 = \tilde{\beta}_n$ & 44.45 & 21.83 & 15.21 & 10.94 & 8.23 & 6.43 & 5.11  \\
      \quad DDPM, $\sigma_n^2 = \beta_n$ & 233.41 & 125.05 & 66.28 & 31.36 & 12.96 & 4.86 & $^\dagger$\textbf{3.04}  \\
      \quad Analytic-DDPM & \textbf{34.26} & \textbf{11.60} & \textbf{7.25} & \textbf{5.40} & \textbf{4.01} & \textbf{3.62} & 4.03 \\
     \arrayrulecolor{black!30}\midrule
      \quad DDIM & 21.31 & 10.70 & 7.74 & 6.08 & 5.07 & 4.61 & 4.13 \\
      \quad Analytic-DDIM & \textbf{14.00} & \textbf{5.81} & \textbf{4.04} & \textbf{3.55} & \textbf{3.39} & \textbf{3.50} & \textbf{3.74} \\
     \arrayrulecolor{black}\midrule
     CIFAR10 (CS)  & \\
     \arrayrulecolor{black!30}\midrule
      \quad DDPM, $\sigma_n^2 = \tilde{\beta}_n$ & 34.76 & 16.18 & 11.11 & 8.38 & 6.66 & 5.65 & 4.92 \\
      \quad DDPM, $\sigma_n^2 = \beta_n$ & 205.31 & 84.71 & 37.35 & 14.81 & 5.74 &  \textbf{3.40} & \textbf{3.34} \\
      \quad Analytic-DDPM & \textbf{22.94} & \textbf{8.50} & \textbf{5.50} & \textbf{4.45} & \textbf{4.04} & 3.96 & 4.31  \\
       \arrayrulecolor{black!30}\midrule
      \quad DDIM & 34.34 & 16.68 & 10.48 & 7.94 & 6.69 & 5.78 & 4.89 \\
      \quad Analytic-DDIM & \textbf{26.43} & \textbf{9.96} & \textbf{6.02} & \textbf{4.88} & \textbf{4.92} & \textbf{5.00} & \textbf{4.66} \\
       \arrayrulecolor{black}\midrule
      CelebA 64x64 & \\
      \arrayrulecolor{black!30}\midrule
      \quad DDPM, $\sigma_n^2 = \tilde{\beta}_n$ & 36.69 & 24.46 & 18.96 & 14.31 & 10.48 & 8.09 & 5.95 \\
      \quad DDPM, $\sigma_n^2 = \beta_n$ & 294.79 & 115.69 & 53.39 & 25.65 & 9.72 & \textbf{3.95} & \textbf{3.16} \\
      \quad Analytic-DDPM  & \textbf{28.99} & \textbf{16.01} & \textbf{11.23} & \textbf{8.08} & \textbf{6.51} & 5.87 & 5.21 \\
       \arrayrulecolor{black!30}\midrule
      \quad DDIM & 20.54 & 13.45 & 9.33 & 6.60 & 4.96 & 4.15 & 3.40 \\
      \quad Analytic-DDIM & \textbf{15.62} & \textbf{9.22} & \textbf{6.13} & \textbf{4.29} & \textbf{3.46} & \textbf{3.38} & \textbf{3.13} \\
    \arrayrulecolor{black}\bottomrule
    \end{tabular} \\
    \vspace{0.15cm}
    \begin{tabular}{lrrrrrrrr}
    \toprule
        Model $\backslash$ \# timesteps $K$ & 25 & 50 & 100 & 200 & 400 & 1000 & 4000 \\
    \midrule
       ImageNet 64x64 & \\
       \arrayrulecolor{black!30}\midrule
       \quad DDPM, $\sigma_n^2 = \tilde{\beta}_n$ & \textbf{29.21} & \textbf{21.71} & 19.12 & 17.81 & 17.48 & 16.84 & 16.55 \\
       \quad DDPM, $\sigma_n^2 = \beta_n$ & 170.28 & 83.86 & 45.04 & 28.39 & 21.38 & 17.58 & 16.38 \\
       \quad Analytic-DDPM & 32.56 & 22.45 & \textbf{18.80} & \textbf{17.16} & \textbf{16.40} & \textbf{16.14} & \textbf{16.34} \\
       \arrayrulecolor{black!30}\midrule
       \quad DDIM & 26.06 & 20.10 & 18.09 & 17.84 & 17.74 & 17.73 & 19.00 \\
       \quad Analytic-DDIM & \textbf{25.98} & \textbf{19.23} & \textbf{17.73} & \textbf{17.49} & \textbf{17.44} & \textbf{17.57} & \textbf{18.98} \\
    \arrayrulecolor{black}\bottomrule
    \end{tabular}
\vspace{-.4cm}
\end{table}

\vspace{-.2cm}
\subsection{Sample Quality}
\vspace{-.2cm}
\label{sec:sample_quality}

As for the sample quality, we consider the commonly used FID score~\citep{heusel2017gans}, where a lower value indicates a better sample quality. As shown in Table~\ref{tab:fid}, under trajectories of different $K$, our Analytic-DDIM consistently improves the sample quality of the original DDIM. This allows us to generate high-quality samples with less than 50 timesteps, which results in a $20\times$ to $80\times$ speed up compared to the full timesteps. Indeed, in most cases, Analytic-DDIM only requires up to 50 timesteps to get a similar performance to the baselines. 
Besides, Analytic-DDPM also improves the sample quality of the original DDPM in most cases.
For fairness, we use the ET implementation in \cite{nichol2021improved} for all results in Table~\ref{tab:fid}. We also report the results on CelebA 64x64 using a slightly different implementation of the ET following~\cite{song2020denoising} in Appendix~\ref{sec:fid_ddim_et}, and our Analytic-DPM is still effective. We show generated samples in Appendix~\ref{sec:samples}.

We observe that Analytic-DDPM does not always outperform the baseline under the FID metric, which is inconsistent with the likelihood results in Table~\ref{tab:bpd}. Such a behavior essentially roots in the different natures of the two metrics and has been investigated in extensive prior works~\citep{theis2015note,ho2020denoising,nichol2021improved,song2021maximum,vahdat2021score,watson2021learning,kingma2021variational}. Similarly, using more timesteps doesn't necessarily yield a better FID. For instance, see the Analytic-DDPM results on CIFAR10 (LS) and the DDIM results on ImageNet 64x64 in Table~\ref{tab:fid}. A similar phenomenon is observed in Figure 8 in~\cite{nichol2021improved}. Moreover, a DPM (including Analytic-DPM) with OT does not necessarily lead to a better FID score~\citep{watson2021learning} (see Appendix~\ref{sec:trajectory_ablation} for a comparison of ET and OT in Analytic-DPM).
% (see Appendix~\ref{sec:trajectory_ablation} for a comparison between Analytic-DPM with ET and Analytic-DPM with OT on the sample quality). 
    
We summarize the efficiency of different methods in Table~\ref{tab:timesteps}, where we consider the least number of timesteps required to achieve a FID around 6 as the metric for a more direct comparison.

\begin{table}[t]
    \centering
    \caption{Efficiency comparison, based on the least number of timesteps $\downarrow$ required to achieve a FID around 6 (with the corresponding FID).
    To get the strongest baseline, the results with $^\dagger$ are achieved by using the quadratic trajectory~\cite{song2020denoising} instead of the default even trajectory.} % See experimental details in Appendix~\ref{sec:timesteps_detail}.}
    \vspace{-.2cm}
    \label{tab:timesteps}
    \begin{tabular}{lrrr}
    \toprule
        Method & CIFAR10 & CelebA 64x64 & LSUN Bedroom \\
    \midrule
       DDPM~\citep{ho2020denoising} & $^\dagger$90 (6.12) & $>$ 200 & 130 (6.06)\\
       DDIM~\citep{song2020denoising} & $^\dagger$30 (5.85) & $>$ 100 & Best FID $>$ 6 \\
       Improved DDPM~\citep{nichol2021improved} & 45 (5.96) & Missing model & \textbf{90} (6.02)\\
       Analytic-DPM (ours) & \textbf{25} (5.81) & \textbf{55} (5.98) & 100 (6.05)\\
    \arrayrulecolor{black}\bottomrule
    \end{tabular}
\vspace{-.4cm}
\end{table}

\vspace{-.1cm}
\section{Related Work}
\label{sec:related}
\vspace{-.1cm}

\textbf{DPMs and their applications.} The diffusion probabilistic model (DPM) is initially introduced by \cite{sohl2015deep}, where the DPM is trained by optimizing the variational bound $\lvb$. \cite{ho2020denoising} propose the new parameterization of DPMs in Eq.~{(\ref{eq:mean_param})} and learn DPMs with the reweighted variant of $\lvb$ in Eq.~{(\ref{eq:re_elbo})}. \cite{song2020score} model the noise adding forward process as a stochastic differential equation (SDE) and introduce DPMs with continuous timesteps. With these important improvements, DPMs show great potential in various applications, including speech synthesis~\citep{chen2020wavegrad,kong2020diffwave,popov2021grad,lam2021bilateral}, controllable generation~\citep{choi2021ilvr,sinha2021d2c}, image super-resolution~\citep{saharia2021image,li2021srdiff}, image-to-image translation~\citep{sasaki2021unit}, shape generation~\citep{zhou20213d} and time series forecasting~\citep{rasul2021autoregressive}.

\textbf{Faster DPMs.}
Several works attempt to find short trajectories while maintaining the DPM performance. \cite{chen2020wavegrad} find an effective trajectory of only six timesteps by the grid search. However, the grid search is only applicable to very short trajectories due to its exponentially growing time complexity. \cite{watson2021learning} model the trajectory searching as a least-cost-path problem and introduce a dynamic programming (DP) algorithm to solve this problem. Our work uses this DP algorithm, where the cost is defined as a term of the optimal KL divergence. In addition to these trajectory searching techniques, \cite{luhman2021knowledge} compress the reverse denoising process into a single step model; \cite{san2021noise} dynamically adjust the trajectory during inference. Both of them need extra training after getting a pretrained DPM. As for DPMs with continuous timesteps~\citep{song2020score}, \cite{song2020score} introduce an ordinary differential equation (ODE), which improves sampling efficiency and enables exact likelihood computation. However, the likelihood computation involves a stochastic trace estimator, which requires a multiple number of runs for accurate computation. \cite{jolicoeur2021gotta} introduce an advanced SDE solver to simulate the reverse process in a more efficient way. However, the log-likelihood computation based on this solver is not specified.

\textbf{Variance Learning in DPMs.} In addition to the reverse variance, there are also works on learning the forward noise schedule (i.e., the forward variance). \cite{kingma2021variational} propose variational diffusion models (VDMs) on continuous timesteps, which use a signal-to-noise ratio function to parameterize the forward variance and directly optimize the variational bound objective for a better log-likelihood. While we primarily apply our method to DDPMs and DDIMs, estimating the optimal reverse variance can also be applied to VDMs (see Appendix~\ref{sec:cdp}).

\vspace{-.1cm}
\section{Conclusion}
\vspace{-.1cm}

We present that both the optimal reverse variance and the corresponding optimal KL divergence of a DPM have analytic forms w.r.t. its score function. 
Building upon it, we propose \textit{Analytic-DPM}, a training-free inference framework that estimates the analytic forms of the variance and KL divergence using the Monte Carlo method and a pretrained score-based model. We derive bounds of the optimal variance to correct potential bias and reveal a relationship between the score function and the data covariance matrix. Empirically, our analytic-DPM improves both the efficiency and performance of likelihood results, and generates high-quality samples efficiently in various DPMs.

% We also observe a mysterious phenomenon that using more timesteps doesn't necessarily lead to a better FID. A deeper analysis on it might inspire better sample algorithms for DPMs.

\section*{Acknowledgments}

This work was supported by NSF of China Projects (Nos. 62061136001, 61620106010, 62076145), Beijing NSF Project (No. JQ19016), Beijing Outstanding Young Scientist Program NO. BJJWZYJH012019100020098, Beijing Academy of Artificial Intelligence (BAAI), Tsinghua-Huawei Joint Research Program, a grant from Tsinghua Institute for Guo Qiang, and the NVIDIA NVAIL Program with GPU/DGX Acceleration, Major Innovation \& Planning Interdisciplinary Platform for the ``Double-First Class" Initiative, Renmin University of China.

\section*{Ethics Statement}
This work proposes an analytic estimate of the optimal variance in the reverse process of diffusion probabilistic models. 
As a fundamental research in machine learning, the negative consequences are not obvious. 
Though in theory any technique can be misused, it is not likely to happen at the current stage.

\section*{Reproducibility Statement}
We provide our codes and links to pretrained models in \url{https://github.com/baofff/Analytic-DPM}. We provide details of these pretrained models in Appendix~\ref{sec:detail_sbm}. We provide details of data processing, log-likelihood evaluation, sampling and FID computation in Appendix~\ref{sec:detail_ll_sample}. We provide complete proofs of all theoretical results in Appendix~\ref{sec:proofs}.

% \subsubsection*{Author Contributions}

\bibliography{iclr2022_conference}
\bibliographystyle{iclr2022_conference}

\appendix
\section{Proofs and Derivations}
\label{sec:proofs}

\subsection{Lemmas}

% \subsection{Optimal Gaussian Markov Approximation}

\begin{lem}
\label{lem:ce}
(Cross-entropy to Gaussian) Suppose $q(\vx)$ is a probability density function with mean $\vmu_q$ and covariance matrix $\mSigma_q$ and $p(\vx) = \gN(\vx|\vmu, \mSigma)$ is a Gaussian distribution, then the cross-entropy between $q$ and $p$ is equal to the cross-entropy between $\gN(\vx|\vmu_q, \mSigma_q)$ and $p$, i.e.,
\begin{align*}
    & H(q, p) = H(\gN(\vx|\vmu_q, \mSigma_q), p) \\
    = & \frac{1}{2} \log ( (2 \pi)^d |\mSigma| ) + \frac{1}{2} \tr( \mSigma_q \mSigma^{-1}) + \frac{1}{2} (\vmu_q - \vmu)^\top \mSigma^{-1} (\vmu_q - \vmu).
\end{align*}
\end{lem}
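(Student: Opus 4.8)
The plan is to start from the definition of the cross-entropy, $H(q,p) = -\E_{q}[\log p(\vx)]$, and exploit the fact that the integrand $\log p(\vx)$ is a \emph{quadratic} function of $\vx$, since $p$ is Gaussian. Explicitly, $\log p(\vx) = -\frac{1}{2}\log\big((2\pi)^d |\mSigma|\big) - \frac{1}{2}(\vx-\vmu)^\top \mSigma^{-1}(\vx-\vmu)$. The crucial observation is that the expectation of any quadratic form under $q$ depends on $q$ only through its first two moments $\vmu_q$ and $\mSigma_q$; hence $H(q,p)$ is automatically insensitive to all higher-order structure of $q$, which is exactly what the claimed identity $H(q,p)=H(\gN(\vx|\vmu_q,\mSigma_q),p)$ asserts.

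First I would carry out the expectation of the quadratic term. Writing $\vx-\vmu = (\vx-\vmu_q)+(\vmu_q-\vmu)$ and expanding produces three pieces. Taking $\E_q$, the cross term vanishes because $\E_q[\vx-\vmu_q]=\vzero$, the constant piece contributes $(\vmu_q-\vmu)^\top \mSigma^{-1}(\vmu_q-\vmu)$, and the remaining term is handled by the trace trick:
\begin{align*}
\E_q\big[(\vx-\vmu_q)^\top \mSigma^{-1}(\vx-\vmu_q)\big] = \tr\big(\mSigma^{-1}\,\E_q[(\vx-\vmu_q)(\vx-\vmu_q)^\top]\big) = \tr(\mSigma_q \mSigma^{-1}).
\end{align*}
Collecting these three contributions together with the constant $\frac{1}{2}\log\big((2\pi)^d|\mSigma|\big)$ coming from the normalizing factor yields precisely the stated closed form.

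Finally, the equality $H(q,p)=H(\gN(\vx|\vmu_q,\mSigma_q),p)$ follows with no further work: the closed form derived above references $q$ only through $\vmu_q$ and $\mSigma_q$, and the Gaussian $\gN(\vx|\vmu_q,\mSigma_q)$ has, by construction, mean $\vmu_q$ and covariance $\mSigma_q$. Substituting that Gaussian in place of $q$ therefore reproduces the identical expression.

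I do not expect any genuine obstacle here; the whole argument is a short computation. The only mild subtleties worth stating cleanly are the trace identity for the expectation of a quadratic form and the vanishing of the cross term, both routine. The conceptual content—rather than any difficulty—is the remark that approximating a density by a Gaussian under cross-entropy (and thus, since the entropy of $q$ is fixed, under the KL divergence) sees only the first two moments, which is the moment-matching principle invoked in the subsequent lemmas.
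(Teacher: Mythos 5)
Your proof is correct and follows essentially the same route as the paper's: both expand $-\E_q[\log p(\vx)]$ using the Gaussian form of $p$, handle the quadratic term via the decomposition $\vx-\vmu=(\vx-\vmu_q)+(\vmu_q-\vmu)$ together with the trace trick, and conclude that the result depends on $q$ only through its first two moments. The only difference is cosmetic — the paper applies the trace identity to the full quadratic form before splitting the second-moment matrix, while you split first and trace the centered term — so the two arguments are the same computation in a slightly different order.
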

\begin{proof}
\begin{align*}
& H(q, p) = -\E_{q(\vx)} \log p(\vx) = - \E_{q(\vx)} \log \frac{1}{\sqrt{(2 \pi)^d |\mSigma|}} \exp(- \frac{(\vx - \vmu)^\top \mSigma^{-1} (\vx - \vmu)}{2}) \\
= & \frac{1}{2} \log ( (2 \pi)^d |\mSigma| ) + \frac{1}{2}\E_{q(\vx)} (\vx - \vmu)^\top \mSigma^{-1} (\vx - \vmu) \\
= & \frac{1}{2} \log ( (2 \pi)^d |\mSigma| ) + \frac{1}{2}\E_{q(\vx)} \tr((\vx - \vmu)(\vx - \vmu)^\top \mSigma^{-1} ) \\
= & \frac{1}{2} \log ( (2 \pi)^d |\mSigma| ) + \frac{1}{2}\tr(\E_{q(\vx)} \left[ (\vx - \vmu)(\vx - \vmu)^\top \right] \mSigma^{-1} ) \\
= & \frac{1}{2} \log ( (2 \pi)^d |\mSigma| ) + \frac{1}{2}\tr(\E_{q(\vx)} \left[ (\vx - \vmu_q) (\vx - \vmu_q)^\top + (\vmu_q - \vmu)(\vmu_q - \vmu)^\top \right] \mSigma^{-1} ) \\
= & \frac{1}{2} \log ( (2 \pi)^d |\mSigma| ) + \frac{1}{2}\tr( \left[ \mSigma_q + (\vmu_q - \vmu)(\vmu_q - \vmu)^\top \right] \mSigma^{-1} ) \\
= & \frac{1}{2} \log ( (2 \pi)^d |\mSigma| ) + \frac{1}{2} \tr( \mSigma_q \mSigma^{-1}) + \frac{1}{2} \tr((\vmu_q - \vmu)(\vmu_q - \vmu)^\top \mSigma^{-1} ) \\
= & \frac{1}{2} \log ( (2 \pi)^d |\mSigma| ) + \frac{1}{2} \tr( \mSigma_q \mSigma^{-1}) + \frac{1}{2} (\vmu_q - \vmu)^\top \mSigma^{-1} (\vmu_q - \vmu) \\
= & H(\gN(\vx|\vmu_q, \mSigma_q), p).
\end{align*}
\end{proof}

\begin{lem}
\label{lem:kl}
(KL to Gaussian) Suppose $q(\vx)$ is a probability density function with mean $\vmu_q$ and covariance matrix $\mSigma_q$ and $p(\vx) = \gN(\vx|\vmu, \mSigma)$ is a Gaussian distribution, then 
\begin{align*}
    \KL(q || p) = \KL(\gN(\vx|\vmu_q, \mSigma_q) || p) + H(\gN(\vx|\vmu_q, \mSigma_q)) - H(q),
\end{align*}
where $H(\cdot)$ denotes the entropy of a distribution.
\end{lem}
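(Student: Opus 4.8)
The plan is to reduce the claim to Lemma~\ref{lem:ce} via the elementary decomposition of the KL divergence into cross-entropy and entropy. Recall that for any two densities, $\KL(q || p) = H(q,p) - H(q)$, where $H(q,p) = -\E_{q(\vx)} \log p(\vx)$ is the cross-entropy and $H(q) = -\E_{q(\vx)} \log q(\vx)$ is the (differential) entropy. I would first write down this identity for the left-hand side of the target equation.

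Next, I would apply the same decomposition to the Gaussian moment-matched density, obtaining $\KL(\gN(\vx|\vmu_q, \mSigma_q) || p) = H(\gN(\vx|\vmu_q, \mSigma_q), p) - H(\gN(\vx|\vmu_q, \mSigma_q))$. The key observation, and really the only nontrivial input, is that the two cross-entropy terms coincide: by Lemma~\ref{lem:ce} we have $H(q,p) = H(\gN(\vx|\vmu_q, \mSigma_q), p)$, since the cross-entropy against a Gaussian $p = \gN(\vx|\vmu,\mSigma)$ depends on $q$ only through its first two moments $\vmu_q$ and $\mSigma_q$.

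Finally, I would substitute $H(q,p) = H(\gN(\vx|\vmu_q, \mSigma_q), p)$ into the expression for $\KL(q || p)$, and replace the resulting cross-entropy term using the rearranged Gaussian identity $H(\gN(\vx|\vmu_q, \mSigma_q), p) = \KL(\gN(\vx|\vmu_q, \mSigma_q) || p) + H(\gN(\vx|\vmu_q, \mSigma_q))$. This yields
\begin{align*}
\KL(q || p) = \KL(\gN(\vx|\vmu_q, \mSigma_q) || p) + H(\gN(\vx|\vmu_q, \mSigma_q)) - H(q),
\end{align*}
which is exactly the stated claim. There is no real obstacle here: the argument is purely a bookkeeping rearrangement once Lemma~\ref{lem:ce} supplies the equality of cross-entropies, and no assumption on $q$ beyond the existence of its mean $\vmu_q$ and covariance $\mSigma_q$ is required.
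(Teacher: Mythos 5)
Your proposal is correct and follows exactly the same route as the paper's proof: decompose $\KL(q\|p) = H(q,p) - H(q)$, invoke Lemma~\ref{lem:ce} to replace $H(q,p)$ with $H(\gN(\vx|\vmu_q, \mSigma_q), p)$, and then rearrange by adding and subtracting $H(\gN(\vx|\vmu_q, \mSigma_q))$. There is no gap and nothing further to add.
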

\begin{proof}
According to Lemma~\ref{lem:ce}, we have $H(q, p) = H(\gN(\vx|\vmu_q, \mSigma_q), p)$. Thereby,
\begin{align*}
& \KL(q || p) = H(q, p) - H(q) = H(\gN(\vx|\vmu_q, \mSigma_q), p) - H(q) \\
= & H(\gN(\vx|\vmu_q, \mSigma_q), p) - H(\gN(\vx|\vmu_q, \mSigma_q)) + H(\gN(\vx|\vmu_q, \mSigma_q)) - H(q) \\
= & \KL(\gN(\vx|\vmu_q, \mSigma_q) || p) + H(\gN(\vx|\vmu_q, \mSigma_q)) - H(q).
\end{align*}
\end{proof}

\begin{lem}
\label{lem:rev_markov}
(Equivalence between the forward and reverse Markov property) Suppose $q(\vx_{0:N}) = q(\vx_0) \prod\limits_{n=1}^N q(\vx_n|\vx_{n-1})$ is a Markov chain, then $q$ is also a Markov chain in the reverse direction, i.e., $q(\vx_{0:N}) = q(\vx_N)\prod\limits_{n=1}^N q(\vx_{n-1}|\vx_n)$.
\end{lem}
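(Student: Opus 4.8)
The plan is to prove the equivalence by showing that the forward Markov factorization forces the \emph{reverse} conditional independence $q(\vx_{n-1}|\vx_n, \vx_{n+1}, \dots, \vx_N) = q(\vx_{n-1}|\vx_n)$ for every $n$, and then assembling the reverse factorization by the chain rule. It is worth stressing that the reverse factorization is not automatic: the ordinary chain rule only yields $q(\vx_{0:N}) = q(\vx_N)\prod_{n=1}^N q(\vx_{n-1}|\vx_n, \vx_{n+1:N})$, with the entire future in the conditioning set. So the whole content of the lemma is collapsing that conditioning down to $\vx_n$ alone.

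First I would record the ``future is independent of the past given the present'' consequence of the forward property. Integrating out $\vx_0,\dots,\vx_{n-2}$ in the forward factorization collapses the leading block to $q(\vx_{n-1})$ and leaves the future factors untouched, so that $q(\vx_{n+1:N}|\vx_{n-1},\vx_n) = \prod_{m>n} q(\vx_m|\vx_{m-1}) = q(\vx_{n+1:N}|\vx_n)$, the point being that these factors do not involve $\vx_{n-1}$. With this in hand I would compute the reverse conditional directly:
\begin{align*}
q(\vx_{n-1}|\vx_{n:N}) = \frac{q(\vx_{n-1},\vx_n)\, q(\vx_{n+1:N}|\vx_n)}{q(\vx_n)\, q(\vx_{n+1:N}|\vx_n)} = q(\vx_{n-1}|\vx_n),
\end{align*}
where the common factor $q(\vx_{n+1:N}|\vx_n)$ cancels. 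Substituting this back into the chain-rule expansion from the first paragraph gives the claimed reverse factorization.

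Alternatively I could argue by induction on $N$. The base case $N=1$ is just Bayes' rule, $q(\vx_0)q(\vx_1|\vx_0) = q(\vx_1)q(\vx_0|\vx_1)$. For the inductive step, marginalizing $\vx_0$ turns $q(\vx_{0:N})$ into a forward chain $q(\vx_{1:N}) = q(\vx_1)\prod_{n=2}^N q(\vx_n|\vx_{n-1})$ of length $N-1$, which reverses by the inductive hypothesis; combining this with $q(\vx_0|\vx_{1:N}) = q(\vx_0|\vx_1)$ (established exactly as in the crux step above) closes the induction.

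I expect the main obstacle to be purely bookkeeping in the crux step: verifying that the ``future'' factors genuinely carry no dependence on $\vx_{n-1}$, so that they cancel cleanly, i.e. confirming that the forward property is invoked in precisely the symmetric place required. There is no analytic difficulty here, only the care needed to manipulate the block-conditioning on $\vx_{n+1:N}$ correctly and to track which marginalizations leave which factors intact.
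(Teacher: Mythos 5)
Your proof is correct and takes essentially the same route as the paper's: both compute $q(\vx_{n-1}|\vx_{n:N})$ as a ratio of marginals obtained by integrating out $\vx_0,\dots,\vx_{n-2}$, observe that the future factors $\prod_{m>n} q(\vx_m|\vx_{m-1})$ carry no dependence on $\vx_{n-1}$ and cancel, and then assemble the reverse factorization via the chain rule. The inductive argument you sketch as an alternative is just a repackaging of this same cancellation, so there is nothing substantively different between your approach and the paper's.
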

\begin{proof}
\begin{align*}
    & q(\vx_{n-1}|\vx_{n}, \cdots, \vx_N) = \frac{q(\vx_{n-1}, \vx_n, \cdots, \vx_N)}{q(\vx_n, \cdots, \vx_N)} \\
    = & \frac{q(\vx_{n-1}, \vx_n) \prod\limits_{i={n+1}}^N q(\vx_{i}|\vx_{i-1})}{q(\vx_n) \prod\limits_{i={n+1}}^N q(\vx_{i}|\vx_{i-1})} = q(\vx_{n-1}|\vx_n).
\end{align*}
Thereby, $q(\vx_{0:N}) = q(\vx_N)\prod\limits_{n=1}^N q(\vx_{n-1}|\vx_n)$.
\end{proof}

\begin{lem}
\label{lem:entropy_markov}
(Entropy of a Markov chain)
Suppose $q(\vx_{0:N})$ is a Markov chain, then 
\begin{align*}
    H(q(\vx_{0:N})) = H(q(\vx_N)) + \sum\limits_{n=1}^N \E_q H(q(\vx_{n-1}|\vx_n)) = H(q(\vx_0)) + \sum\limits_{n=1}^N \E_q H(q(\vx_n|\vx_{n-1})).
\end{align*}
\end{lem}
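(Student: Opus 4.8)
The plan is to derive both decompositions from the chain rule for (differential) entropy, applied in the two opposite orderings of the variables, combined with the Markov property in each direction. First I would recall that for any joint density the chain rule gives
$H(q(\vx_{0:N})) = H(q(\vx_0)) + \sum_{n=1}^N \E_q H(q(\vx_n|\vx_{0:n-1}))$,
where each summand is the conditional entropy of $\vx_n$ given all its predecessors. To obtain the right-hand equality in the statement, I would invoke the forward Markov property $q(\vx_n|\vx_{0:n-1}) = q(\vx_n|\vx_{n-1})$, so that every conditioning set collapses to the single neighbor $\vx_{n-1}$ and each term becomes $\E_q H(q(\vx_n|\vx_{n-1}))$, yielding $H(q(\vx_{0:N})) = H(q(\vx_0)) + \sum_{n=1}^N \E_q H(q(\vx_n|\vx_{n-1}))$.

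For the left-hand equality I would exploit the reverse Markov structure. By Lemma~\ref{lem:rev_markov}, $q$ is also a Markov chain in the reverse direction, $q(\vx_{0:N}) = q(\vx_N)\prod_{n=1}^N q(\vx_{n-1}|\vx_n)$. Applying the chain rule with the variables ordered from $\vx_N$ down to $\vx_0$ then gives $H(q(\vx_{0:N})) = H(q(\vx_N)) + \sum_{n=1}^N \E_q H(q(\vx_{n-1}|\vx_{n:N}))$, and the reverse Markov property $q(\vx_{n-1}|\vx_{n:N}) = q(\vx_{n-1}|\vx_n)$ reduces each conditional entropy to $\E_q H(q(\vx_{n-1}|\vx_n))$. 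This produces exactly the first expression in the statement, completing the chain of equalities.

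The argument is essentially routine telescoping; the only point requiring care is to apply the chain rule in the ordering that matches the relevant Markov direction, so that each conditioning set genuinely shrinks to a single adjacent variable. The reverse decomposition carries no extra difficulty beyond citing Lemma~\ref{lem:rev_markov}, which does the real work of turning the forward factorization into a reverse one; once that equivalence is in hand, both equalities follow from the same computation read in opposite directions.
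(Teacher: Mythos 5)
Your proposal is correct and follows essentially the same route as the paper: both arguments hinge on Lemma~\ref{lem:rev_markov} for the reverse factorization and then expand $-\E_q \log$ of the product form of $q(\vx_{0:N})$ into a sum of expected conditional entropies. The only cosmetic difference is that you invoke the abstract chain rule for entropy and then collapse the conditioning sets via the Markov property, whereas the paper plugs the already-collapsed factorization in directly before taking $-\E_q\log$; the computations are identical.
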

\begin{proof}
According to Lemma~\ref{lem:rev_markov}, we have
\begin{align*}
    H(q(\vx_{0:N})) = & -\E_q \log q(\vx_N)\prod\limits_{n=1}^N q(\vx_{n-1}|\vx_n) = -\E_q \log q(\vx_N) -\sum\limits_{n=1}^N \E_q \log q(\vx_{n-1}|\vx_n) \\
    = & H(q(\vx_N)) + \sum\limits_{n=1}^{N} \E_q H(q(\vx_{n-1}|\vx_n)).
\end{align*}
Similarly, we also have $H(q(\vx_{0:N})) = H(q(\vx_0)) + \sum\limits_{n=1}^N \E_q H(q(\vx_n|\vx_{n-1}))$.
\end{proof}

\begin{lem}
\label{lem:entropy_diffusion}
(Entropy of a DDPM forward process)
Suppose $q(\vx_{0:N})$ is a Markov chain and $q(\vx_n|\vx_{n-1}) = \gN(\vx_n|\sqrt{\alpha_n}\vx_{n-1}, \beta_n \mI)$, then
\begin{align*}
    H(q(\vx_{0:N})) = H(q(\vx_0)) + \frac{d}{2} \sum\limits_{n=1}^N \log (2\pi e \beta_n).
\end{align*}
\end{lem}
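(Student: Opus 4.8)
The plan is to reduce everything to Lemma~\ref{lem:entropy_markov}, which already expresses the joint entropy of a Markov chain in terms of the entropies of its forward conditionals. Applying the second identity of that lemma directly gives
\begin{align*}
    H(q(\vx_{0:N})) = H(q(\vx_0)) + \sum\limits_{n=1}^N \E_q H(q(\vx_n|\vx_{n-1})),
\end{align*}
so the entire task collapses to evaluating each summand $\E_q H(q(\vx_n|\vx_{n-1}))$ under the specific Gaussian transition $q(\vx_n|\vx_{n-1}) = \gN(\vx_n|\sqrt{\alpha_n}\vx_{n-1}, \beta_n \mI)$.

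Next I would compute the differential entropy of this Gaussian conditional. For a $d$-dimensional normal density with covariance $\mSigma$, the entropy is $\frac{1}{2}\log((2\pi e)^d |\mSigma|)$; substituting $\mSigma = \beta_n \mI$ and using $|\beta_n \mI| = \beta_n^d$ yields $H(q(\vx_n|\vx_{n-1})) = \frac{d}{2}\log(2\pi e \beta_n)$. One could derive this from scratch in the style of Lemma~\ref{lem:ce} (writing $-\E\log p$, using the trace trick on the quadratic form, and adding the normalizing constant), but the closed form for a Gaussian entropy is standard and suffices.

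The one point worth emphasizing — and the only place where any care is needed — is that this entropy value depends solely on the covariance $\beta_n \mI$ and \emph{not} on the mean $\sqrt{\alpha_n}\vx_{n-1}$. Consequently $H(q(\vx_n|\vx_{n-1}))$ is a constant in $\vx_{n-1}$, so the outer expectation $\E_q$ acts on a constant and is trivial: $\E_q H(q(\vx_n|\vx_{n-1})) = \frac{d}{2}\log(2\pi e \beta_n)$. Substituting this back into the Markov-chain decomposition and summing over $n$ gives
\begin{align*}
    H(q(\vx_{0:N})) = H(q(\vx_0)) + \frac{d}{2}\sum\limits_{n=1}^N \log(2\pi e \beta_n),
\end{align*}
as claimed. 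There is no substantive obstacle here; the result is essentially a corollary of Lemma~\ref{lem:entropy_markov} together with the translation-invariance of Gaussian entropy, and the proof is a short direct computation.
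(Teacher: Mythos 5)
Your proof is correct and matches the paper's own argument exactly: both invoke Lemma~\ref{lem:entropy_markov} to decompose the joint entropy into $H(q(\vx_0)) + \sum_{n=1}^N \E_q H(q(\vx_n|\vx_{n-1}))$ and then evaluate each Gaussian conditional entropy as $\frac{d}{2}\log(2\pi e \beta_n)$. Your added remark that the entropy is independent of the mean $\sqrt{\alpha_n}\vx_{n-1}$ (making the outer expectation trivial) is a useful explication of a step the paper leaves implicit, but it is the same proof.
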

\begin{proof}
According to Lemma~\ref{lem:entropy_markov}, we have
\begin{align*}
    H(q(\vx_{0:N})) = H(q(\vx_0)) + \sum\limits_{n=1}^N \E_q H(q(\vx_n|\vx_{n-1})) = H(q(\vx_0)) + \sum\limits_{n=1}^N \frac{d}{2} \log (2\pi e \beta_n).
\end{align*}
\end{proof}

\begin{lem}
\label{lem:entropy_general}
(Entropy of a conditional Markov chain)
Suppose $q(\vx_{1:N}|\vx_0)$ is Markov, then 
\begin{align*}
    H(q(\vx_{0:N})) = H(q(\vx_0)) + \E_q H(q(\vx_N|\vx_0)) + \sum\limits_{n=2}^N \E_q H(q(\vx_{n-1}|\vx_n, \vx_0)).
\end{align*}
\end{lem}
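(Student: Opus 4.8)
The plan is to reduce the statement to the chain rule for entropy together with the conditional Markov structure of $q(\vx_{1:N}|\vx_0)$, essentially re-running the argument behind Lemma~\ref{lem:entropy_markov} but with $\vx_0$ held fixed and then averaged out.

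First I would peel off the marginal of $\vx_0$. Using $q(\vx_{0:N}) = q(\vx_0)\,q(\vx_{1:N}|\vx_0)$ and taking $-\E_q \log(\cdot)$, the logarithm factors and the total entropy splits as
\[ H(q(\vx_{0:N})) = H(q(\vx_0)) + \E_{q(\vx_0)} H(q(\vx_{1:N}|\vx_0)), \]
where the last term is the expected entropy of the conditional law $q(\cdot|\vx_0)$. Next I would fix $\vx_0$ and treat it as a constant parameter: by hypothesis $q(\vx_{1:N}|\vx_0)$ is Markov, so (invoking Lemma~\ref{lem:rev_markov} on the conditional law if the Markov property is stated in the forward direction) it admits the reverse factorization $q(\vx_{1:N}|\vx_0) = q(\vx_N|\vx_0)\prod_{n=2}^N q(\vx_{n-1}|\vx_n,\vx_0)$ appearing in Eq.~(\ref{eq:ddim}). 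Applying Lemma~\ref{lem:entropy_markov} to this conditional chain on the $N$ variables $\vx_1,\dots,\vx_N$ (whose endpoint is $\vx_N$, giving $N-1$ transition terms) yields
\[ H(q(\vx_{1:N}|\vx_0)) = H(q(\vx_N|\vx_0)) + \sum_{n=2}^N \E\, H(q(\vx_{n-1}|\vx_n,\vx_0)), \]
with every quantity still conditioned on the frozen $\vx_0$.

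Finally I would average this identity over $\vx_0 \sim q(\vx_0)$ and substitute into the split from the first step; since $\E_{q(\vx_0)}\E_{q(\cdot|\vx_0)}$ collapses to a single $\E_q$, the term $\E_{q(\vx_0)}H(q(\vx_N|\vx_0))$ becomes $\E_q H(q(\vx_N|\vx_0))$ and each conditional term becomes $\E_q H(q(\vx_{n-1}|\vx_n,\vx_0))$, which is exactly the claim. A cleaner, lemma-free alternative is to expand $-\E_q \log q(\vx_{0:N})$ directly against the factorization $q(\vx_0)\,q(\vx_N|\vx_0)\prod_{n=2}^N q(\vx_{n-1}|\vx_n,\vx_0)$ and read off each resulting $-\E_q\log(\cdot)$ term as the corresponding expected (conditional) entropy. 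The only point requiring care, and the likely obstacle, is the bookkeeping that legitimizes applying Lemma~\ref{lem:entropy_markov} with $\vx_0$ frozen and then confirming that the outer expectation over $\vx_0$ distributes correctly over the conditional entropies; the direct expansion sidesteps this reindexing entirely.
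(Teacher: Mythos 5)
Your proposal is correct and follows essentially the same route as the paper: split off $H(q(\vx_0))$ via the chain rule, apply the reverse-direction entropy decomposition of Lemma~\ref{lem:entropy_markov} to the conditional Markov chain $q(\vx_{1:N}|\vx_0)$ with $\vx_0$ frozen, and average over $\vx_0$. The paper's proof is just a terser version of this (it compresses the three steps into one displayed chain of equalities citing Lemma~\ref{lem:entropy_markov}), and your indexing bookkeeping (endpoint $\vx_N$, $N-1$ transition terms) is exactly right.
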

\begin{proof}
According to Lemma~\ref{lem:entropy_markov}, we have
\begin{align*}
    H(q(\vx_{0:N})) = & H(q(\vx_0)) + \E_q H(q(\vx_{1:N}|\vx_0)) \\
    = & H(q(\vx_0)) + \E_q H(q(\vx_N|\vx_0)) + \sum\limits_{n=2}^N \E_q H(q(\vx_{n-1}|\vx_n, \vx_0)).
\end{align*}
\end{proof}

\begin{lem}
\label{lem:entropy_ddim}
(Entropy of a generalized DDPM forward process)
Suppose $q(\vx_{1:N}|\vx_0)$ is Markov, $q(\vx_N|\vx_0)$ is Gaussian with covariance $\overline{\beta}_N \mI$ and $q(\vx_{n-1}|\vx_n, \vx_0)$ is Gaussian with covariance $\lambda_n^2 \mI$, then
\begin{align*}
    H(q(\vx_{0:N})) = H(q(\vx_0)) + \frac{d}{2} \log (2 \pi e \overline{\beta}_N) + \frac{d}{2} \sum\limits_{n=2}^N \log (2 \pi e \lambda_n^2).
\end{align*}
\end{lem}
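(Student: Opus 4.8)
The plan is to follow the same template as Lemma~\ref{lem:entropy_diffusion}, now applied to the generalized forward process of Eq.~(\ref{eq:ddim}). First I would invoke Lemma~\ref{lem:entropy_general}, which already decomposes the joint entropy as
\[
H(q(\vx_{0:N})) = H(q(\vx_0)) + \E_q H(q(\vx_N|\vx_0)) + \sum_{n=2}^N \E_q H(q(\vx_{n-1}|\vx_n, \vx_0)).
\]
This reduces the task to evaluating the two families of conditional-entropy terms on the right-hand side.

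The key fact I would use is that the differential entropy of a Gaussian $\gN(\vx|\vmu, \mSigma)$ on $\sR^d$ equals $\frac{1}{2}\log((2\pi e)^d |\mSigma|)$, which depends only on the covariance $\mSigma$ and is independent of the mean $\vmu$. Since $q(\vx_N|\vx_0)$ is Gaussian with covariance $\overline{\beta}_N \mI$, its determinant is $\overline{\beta}_N^d$ and hence $H(q(\vx_N|\vx_0)) = \frac{d}{2}\log(2\pi e \overline{\beta}_N)$. Crucially this value does not depend on $\vx_0$, so the outer expectation $\E_q$ leaves it unchanged. The same reasoning applies to each $q(\vx_{n-1}|\vx_n, \vx_0)$, whose covariance $\lambda_n^2 \mI$ is a fixed scalar multiple of the identity, giving $\E_q H(q(\vx_{n-1}|\vx_n, \vx_0)) = \frac{d}{2}\log(2\pi e \lambda_n^2)$ regardless of the conditioning variables.

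Substituting these closed forms back into the decomposition and collecting the summation yields precisely the claimed identity. I do not anticipate any real obstacle here: the only point worth emphasizing is that each conditional entropy is constant in the conditioning variables, which is exactly what makes the expectations $\E_q$ collapse to their integrands. This mirrors Lemma~\ref{lem:entropy_diffusion}, with the per-transition covariance $\beta_n \mI$ there replaced here by $\overline{\beta}_N \mI$ for the terminal marginal and $\lambda_n^2 \mI$ for the intermediate reverse conditionals.
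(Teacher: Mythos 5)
Your proposal is correct and matches the paper's proof, which simply states that the result is directly derived from Lemma~\ref{lem:entropy_general}; the Gaussian entropy formula $\frac{1}{2}\log((2\pi e)^d|\mSigma|)$ and the observation that each conditional entropy is constant in the conditioning variables are exactly the implicit steps being invoked there. You have just made those routine details explicit.
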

\begin{proof}
Directly derived from Lemma~\ref{lem:entropy_general}.
\end{proof}

\begin{lem}
\label{lem:kl_mc}
(KL to a Markov chain) Suppose $q(\vx_{0:N})$ is a probability distribution and $p(\vx_{0:N}) = p(\vx_N) \prod\limits_{n=1}^N p(\vx_{n-1}|\vx_n)$ is a Markov chain, then we have
\begin{align*}
    \E_q\KL(q(\vx_{0:N-1}|\vx_N)||p(\vx_{0:N-1}|\vx_N)) = \sum\limits_{n=1}^{N} \E_q \KL(q(\vx_{n-1}|\vx_n) || p(\vx_{n-1}|\vx_n)) + c,
\end{align*}
where $c = \sum\limits_{n=1}^{N} \E_q H(q(\vx_{n-1}|\vx_n)) - \E_q H(q(\vx_{0:N-1}|\vx_N))$ is only related to $q$. Particularly, if $q(\vx_{0:N})$ is also a Markov chain, then $c=0$.
\end{lem}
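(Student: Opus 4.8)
The plan is to expand the left-hand KL divergence into a cross-entropy term and an entropy term, and then exploit the Markov factorization of $p$ to split the cross-entropy into a sum over timesteps. First I would write
\[
\E_q \KL(q(\vx_{0:N-1}|\vx_N)\|p(\vx_{0:N-1}|\vx_N)) = -\E_q H(q(\vx_{0:N-1}|\vx_N)) - \E_q \log p(\vx_{0:N-1}|\vx_N),
\]
where the outer $\E_q$ over $\vx_N$ and the inner conditional expectation over $\vx_{0:N-1}\mid\vx_N$ have been merged into a single expectation over the joint $q(\vx_{0:N})$.

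Next, since $p$ is a Markov chain, dividing $p(\vx_{0:N}) = p(\vx_N)\prod_{n=1}^N p(\vx_{n-1}|\vx_n)$ by $p(\vx_N)$ gives the clean factorization $p(\vx_{0:N-1}|\vx_N) = \prod_{n=1}^N p(\vx_{n-1}|\vx_n)$. Substituting this in, the cross-entropy term becomes $-\sum_{n=1}^N \E_q \log p(\vx_{n-1}|\vx_n) = \sum_{n=1}^N \E_q H(q(\vx_{n-1}|\vx_n), p(\vx_{n-1}|\vx_n))$. I would then rewrite each cross-entropy as a KL plus an entropy, namely $H(q(\vx_{n-1}|\vx_n), p(\vx_{n-1}|\vx_n)) = \KL(q(\vx_{n-1}|\vx_n)\|p(\vx_{n-1}|\vx_n)) + H(q(\vx_{n-1}|\vx_n))$, and collect terms to read off exactly the claimed identity with $c = \sum_{n=1}^N \E_q H(q(\vx_{n-1}|\vx_n)) - \E_q H(q(\vx_{0:N-1}|\vx_N))$.

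For the special case when $q$ is itself Markov, I would show $c=0$ by combining the chain rule for conditional entropy, $\E_q H(q(\vx_{0:N-1}|\vx_N)) = H(q(\vx_{0:N})) - H(q(\vx_N))$, with Lemma~\ref{lem:entropy_markov} applied to $q$, which gives $\sum_{n=1}^N \E_q H(q(\vx_{n-1}|\vx_n)) = H(q(\vx_{0:N})) - H(q(\vx_N))$. The two expressions coincide, so they cancel in $c$ and yield $c=0$.

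This argument involves no real obstacle — it is essentially bookkeeping of nested expectations. The one point requiring care is keeping the outer expectation over $\vx_N$ and the inner conditional expectation aligned when collapsing them into a single joint expectation, and noting that while the denominator $p(\vx_{0:N-1}|\vx_N)$ factorizes across timesteps, the numerator $q(\vx_{0:N-1}|\vx_N)$ need not. Its contribution must therefore remain a single joint conditional entropy $\E_q H(q(\vx_{0:N-1}|\vx_N))$ — precisely the term that does not simplify in general and hence survives inside $c$.
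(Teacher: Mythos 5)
Your proposal is correct and follows essentially the same route as the paper's proof: expand the conditional KL into a cross-entropy term and an entropy term, use the Markov factorization $p(\vx_{0:N-1}|\vx_N)=\prod_{n=1}^N p(\vx_{n-1}|\vx_n)$ to split the cross-entropy across timesteps, convert each piece back into a per-step KL plus entropy, and invoke Lemma~\ref{lem:entropy_markov} (together with the entropy chain rule) to get $c=0$ when $q$ is Markov. Your treatment of the special case is in fact slightly more explicit than the paper's one-line appeal to Lemma~\ref{lem:entropy_markov}, but the argument is the same.
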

\begin{proof}
\begin{align*}
    & \E_q\KL(q(\vx_{0:N-1}|\vx_N)||p(\vx_{0:N-1}|\vx_N)) = -\E_q \log p(\vx_{0:N-1}|\vx_N) - \E_q H(q(\vx_{0:N-1}|\vx_N)) \\
    = & -\sum\limits_{n=1}^N \E_q \log p(\vx_{n-1}|\vx_n) - \E_q H(q(\vx_{0:N-1}|\vx_N)) \\
    = & \sum\limits_{n=1}^N \E_q \KL( q(\vx_{n-1}|\vx_n) || p(\vx_{n-1}|\vx_n)) + \sum\limits_{n=1}^N \E_q H(q(\vx_{n-1}|\vx_n)) - \E_q H(q(\vx_{0:N-1}|\vx_N)).
\end{align*}

Let $c=\sum\limits_{n=1}^N \E_q H(q(\vx_{n-1}|\vx_n)) - \E_q H(q(\vx_{0:N-1}|\vx_N))$, then
\begin{align*}
    \E_q\KL(q(\vx_{0:N-1}|\vx_N)||p(\vx_{0:N-1}|\vx_N)) = \sum\limits_{n=1}^N \E_q \KL( q(\vx_{n-1}|\vx_n) || p(\vx_{n-1}|\vx_n)) + c.
\end{align*}

If $q(\vx_{0:N})$ is also a Markov chain, according to Lemma~\ref{lem:entropy_markov}, we have $c=0$.
\end{proof}

\begin{lem}
\label{lem:opt_gauss_markov}
(The optimal Markov reverse process with Gaussian transitions is equivalent to moment matching) Suppose $q(\vx_{0:N})$ is probability density function and $p(\vx_{0:N}) = \prod\limits_{n=1}^N p(\vx_{n-1}|\vx_n) p(\vx_N)$ is a Gaussian Markov chain with $p(\vx_{n-1}|\vx_n) = \gN(\vx_{n-1}|\vmu_n(\vx_n), \sigma_n^2 \mI)$, then the joint KL optimization
\begin{align*}
    \min\limits_{\{\vmu_n, \sigma_n^2\}_{n=1}^N} \KL(q(\vx_{0:N})||p(\vx_{0:N}))
\end{align*}
has an optimal solution
\begin{align*}
    & \vmu_n^*(\vx_n) = \E_{q(\vx_{n-1}|\vx_n)} [\vx_{n-1}], \quad \sigma_n^{*2} = \E_{q_n(\vx_n)} \frac{\tr(\Cov_{q(\vx_{n-1}|\vx_n)}[\vx_{n-1}])}{d},
\end{align*}
which match the first two moments of $q(\vx_{n-1}|\vx_n)$. 
The corresponding optimal KL is
\begin{align*}
    \KL(q(\vx_{0:N})||p^*(\vx_{0:N})) = H(q(\vx_N), p(\vx_N)) + \frac{d}{2} \sum\limits_{n=1}^N \log (2\pi e \sigma_n^{*2})  - H(q(\vx_{0:N})).
\end{align*}
\end{lem}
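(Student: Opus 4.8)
The plan is to use the identity $\KL(q\|p) = H(q,p) - H(q)$ and note that the entropy $H(q(\vx_{0:N}))$ is independent of the optimization variables $\{\vmu_n, \sigma_n^2\}$, so it suffices to minimize the cross-entropy $H(q(\vx_{0:N}), p(\vx_{0:N})) = -\E_q \log p(\vx_{0:N})$. Because $p$ factorizes as a Markov chain, I would first expand this into $-\E_q \log p(\vx_N) - \sum_{n=1}^N \E_q \log p(\vx_{n-1}|\vx_n)$, which splits into the fixed term $H(q(\vx_N), p(\vx_N))$ plus a sum of per-timestep contributions $\E_{q(\vx_n)} H(q(\vx_{n-1}|\vx_n), p(\vx_{n-1}|\vx_n))$. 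The crucial observation is that the $n$-th summand depends only on the pair $(\vmu_n, \sigma_n^2)$, so the joint minimization decouples into $N$ independent subproblems.

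For each fixed $n$ I would apply Lemma~\ref{lem:ce} with $p(\vx_{n-1}|\vx_n) = \gN(\vx_{n-1}|\vmu_n(\vx_n), \sigma_n^2 \mI)$, writing $\vmu_q(\vx_n) = \E_{q(\vx_{n-1}|\vx_n)}[\vx_{n-1}]$ and $\mSigma_q(\vx_n) = \Cov_{q(\vx_{n-1}|\vx_n)}[\vx_{n-1}]$ for the conditional moments of $q$. The lemma yields $H(q(\vx_{n-1}|\vx_n), p(\vx_{n-1}|\vx_n)) = \frac{d}{2}\log(2\pi\sigma_n^2) + \frac{\tr \mSigma_q}{2\sigma_n^2} + \frac{\|\vmu_q - \vmu_n\|^2}{2\sigma_n^2}$, reducing everything to the first two conditional moments of $q$. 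Minimizing pointwise over $\vmu_n(\vx_n)$ kills the last term and forces $\vmu_n^*(\vx_n) = \E_{q(\vx_{n-1}|\vx_n)}[\vx_{n-1}]$. Substituting this and taking $\E_{q(\vx_n)}$, the remaining objective in $\sigma_n^2$ is $\frac{d}{2}\log(2\pi\sigma_n^2) + \frac{1}{2\sigma_n^2}\E_{q(\vx_n)}\tr \mSigma_q$; setting its derivative to zero gives $\sigma_n^{*2} = \E_{q_n(\vx_n)} \tr(\Cov_{q(\vx_{n-1}|\vx_n)}[\vx_{n-1}])/d$, and I would confirm this is a global minimum by checking the second derivative (or the blow-up as $\sigma_n^2 \to 0^+$ and $\sigma_n^2 \to \infty$).

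To read off the optimal KL value I would then substitute the optimizers back into each per-timestep cross-entropy. Using $\E_{q(\vx_n)}\tr \mSigma_q = d\,\sigma_n^{*2}$ by the very definition of $\sigma_n^{*2}$, each summand collapses to $\frac{d}{2}\log(2\pi\sigma_n^{*2}) + \frac{d}{2} = \frac{d}{2}\log(2\pi e \sigma_n^{*2})$, giving $H(q, p^*) = H(q(\vx_N), p(\vx_N)) + \frac{d}{2}\sum_{n=1}^N \log(2\pi e \sigma_n^{*2})$; subtracting $H(q(\vx_{0:N}))$ produces the stated formula.

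I do not expect a serious obstacle, but the step requiring most care is justifying the decoupling, namely that minimizing the full joint KL genuinely reduces to minimizing each per-timestep term separately. This is where the Markov factorization of $p$, the independence of $H(q)$ and $p(\vx_N)$ from the optimization variables, and the fact that each transition is parameterized by its own $(\vmu_n,\sigma_n^2)$ must all be combined cleanly. An alternative would be to route the same decomposition through Lemma~\ref{lem:kl_mc} instead of the cross-entropy, but the cross-entropy bookkeeping is the most direct for simultaneously extracting the optimizers and the optimal KL value.
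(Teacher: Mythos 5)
Your proposal is correct and follows the same essential route as the paper's proof: decompose the objective across timesteps using the Markov factorization of $p$, reduce each per-timestep term to the first two conditional moments of $q(\vx_{n-1}|\vx_n)$ via the Gaussian cross-entropy lemma (Lemma~\ref{lem:ce}), minimize each $(\vmu_n,\sigma_n^2)$ independently, and recombine. The only real difference is bookkeeping: the paper decomposes the KL itself (Lemma~\ref{lem:kl_mc}), which introduces conditional-entropy terms $\E_q H(q(\vx_{n-1}|\vx_n))$ that must be carried along (via Lemma~\ref{lem:kl}) and cancelled at the very end, whereas you decompose the cross-entropy $H(q,p)$ directly; since cross-entropy is linear in $\log p$, it splits over the factorization of $p$ with no correction constants. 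This makes your extraction of the optimal value slightly more streamlined---each optimized summand collapses to $\tfrac{d}{2}\log(2\pi e \sigma_n^{*2})$ and the stated formula drops out after subtracting $H(q(\vx_{0:N}))$---while the mathematical content (per-timestep moment matching, identical optimizers, identical optimal KL) is the same. Your attention to verifying the stationary point of the $\sigma_n^2$-objective is a global minimum is, if anything, slightly more careful than the paper, which only computes the gradient.
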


\begin{remark}
Lemma~\ref{lem:opt_gauss_markov} doesn't assume the form of $q(\vx_{0:N})$, thereby it can be applied to more general Gaussian models, such as multi-layer VAEs with Gaussian decoders~\citep{rezende2014stochastic,burda2015importance}. In this case, $q(\vx_{1:N}|\vx_0)$ is the hierarchical encoders of multi-layer VAEs.
\end{remark}

\begin{proof}
According to Lemma~\ref{lem:kl_mc}, we have
\begin{align*}
    \KL(q(\vx_{0:N})||p(\vx_{0:N})) = \KL(q(\vx_N)||p(\vx_N)) + \sum\limits_{n=1}^{N} \E_q \KL(q(\vx_{n-1}|\vx_n) || p(\vx_{n-1}|\vx_n)) + c,
\end{align*}
where $c = \sum\limits_{n=1}^{N} \E_q H(q(\vx_{n-1}|\vx_n)) - \E_q H(q(\vx_{0:{N-1}}|\vx_N))$.

Since $\E_q \KL(q(\vx_{n-1}|\vx_n) || p(\vx_{n-1}|\vx_n))$ is only related to $\vmu_n(\cdot)$ and $\sigma_n^2$, the joint KL optimization is decomposed into $n$ independent optimization sub-problems:
\begin{align*}
    \min\limits_{\vmu_n, \sigma_n^2} \E_q \KL(q(\vx_{n-1}|\vx_n) || p(\vx_{n-1}|\vx_n)), \quad 1 \leq n \leq N.
\end{align*}

According to Lemma~\ref{lem:kl}, we have
\begin{align*}
    & \E_q \KL(q(\vx_{n-1}|\vx_n) || p(\vx_{n-1}|\vx_n)) \\
    = & \E_q \KL(\gN(\vx_{n-1}|\E_{q(\vx_{n-1}|\vx_n)}[\vx_{n-1}], \Cov_{q(\vx_{n-1}|\vx_n)}[\vx_{n-1}]) || p(\vx_{n-1}|\vx_n)) \\
    & + \E_q H(\gN(\vx_{n-1}|\E_{q(\vx_{n-1}|\vx_n)}[\vx_{n-1}], \Cov_{q(\vx_{n-1}|\vx_n)}[\vx_{n-1}])) - \E_q H(q(\vx_{n-1}|\vx_n)) \\
    = & \gF(\sigma_n^2) + \gG(\sigma_n^2, \vmu_n) + c^{\prime}
\end{align*}
where
\begin{align*}
& \gF(\sigma_n^2) = \frac{1}{2} \left(\sigma_n^{-2} \E_q \tr(\Cov_{q(\vx_{n-1}|\vx_n)}[\vx_{n-1}]) + d \log \sigma_n^2\right), \\
& \gG(\sigma_n^2, \vmu_n) = \frac{1}{2} \sigma_n^{-2} \E_q ||\E_{q(\vx_{n-1}|\vx_n)}[\vx_{n-1}] - \vmu_n(\vx_n) ||^2,
\end{align*}
and $c^{\prime} = \frac{d}{2} \log (2\pi) - \E_q H(q(\vx_{n-1}|\vx_n))$. The optimal $\vmu_n^*(\vx_n)$ is achieved when
\begin{align*}
    ||\E_{q(\vx_{n-1}|\vx_n)}[\vx_{n-1}] - \vmu_n(\vx_n) ||^2 = 0.
\end{align*}
Thereby, $\vmu_n^*(\vx_n) = \E_{q(\vx_{n-1}|\vx_n)}[\vx_{n-1}]$. In this case, $\gG(\sigma_n^2, \vmu_n^*) = 0$ and we only need to consider $\gF(\sigma_n^2)$ for the optimal $\sigma_n^{*2}$. By calculating the gradient of $\gF$, we know that $\gF$ gets its minimum at 
\begin{align*}
\sigma_n^{*2} = \E_q \frac{ \tr(\Cov_{q(\vx_{n-1}|\vx_n)}[\vx_{n-1}])}{d}.
\end{align*}

In the optimal case, $\gF(\sigma_n^{*2}) = \frac{d}{2}(1+\log \sigma_n^{*2})$ and
\begin{align*}
    E_q \KL(q(\vx_{n-1}|\vx_n) || p^*(\vx_{n-1}|\vx_n)) = \frac{d}{2} \log (2\pi e \sigma_n^{*2}) - \E_q H(q(\vx_{n-1}|\vx_n)).
\end{align*}

As a result,
\begin{align*}
    & \KL(q(\vx_{0:N}) || p^*(\vx_{0:N})) \\
    = & \KL(q(\vx_N) || p(\vx_N)) + \sum\limits_{n=1}^N \frac{d}{2} \log (2\pi e \sigma_n^{*2}) - \sum\limits_{n=1}^N \E_q H(q(\vx_{n-1}|\vx_n)) \\
    & + \sum\limits_{n=1}^{N} \E_q H(q(\vx_{n-1}|\vx_n)) - (H(q(\vx_{0:N})) - H(q(\vx_N))) \\
    = & H(q(\vx_N), p(\vx_N)) + \sum\limits_{n=1}^N \frac{d}{2} \log (2\pi e \sigma_n^{*2})  - H(q(\vx_{0:N})).
\end{align*}
% Furthermore, if $q(\vx_{0:N})$ is a diffusion process with $q(\vx_n|\vx_{n-1}) = \gN(\vx_n|\sqrt{\alpha_n}\vx_{n-1}, \beta_n \mI)$, according to Lemma~\ref{lem:entropy_diffusion}, we have
% \begin{align*}
%     & \KL(q(\vx_{0:N})||p^*(\vx_{0:N})) \\
%     = & H(q(\vx_N), p(\vx_N)) + \frac{d}{2} \sum\limits_{n=1}^N \log (2\pi e \sigma_n^{*2})  - H(q(\vx_0)) - \frac{d}{2} \sum\limits_{n=1}^N \log (2\pi e \beta_n) \\
%     = & H(q(\vx_N), p(\vx_N)) - H(q(\vx_0)) + \frac{d}{2} \sum\limits_{n=1}^N \log \frac{\sigma_n^{*2}}{\beta_n}.
% \end{align*}
\end{proof}

\begin{lem}
\label{lem:score}
(Marginal score function) Suppose $q(\vv, \vw)$ is a probability distribution, then
\begin{align*}
    \nabla_\vw \log q(\vw) = & \E_{q(\vv|\vw)} \nabla_\vw \log q(\vw|\vv)
    % \nabla_\vw^2 \log q(\vw) = & \E_{q(\vv|\vw)} \nabla_\vw^2 \log q(\vw|\vv) + \E_{q(\vv|\vw)} \nabla_\vw \log q(\vw|\vv) \nabla_\vw \log q(\vw|\vv)^\top \\
    % & - \nabla_\vw \log q(\vw) \nabla_\vw \log q(\vw)^\top.
\end{align*}
\end{lem}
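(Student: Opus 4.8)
The plan is to reduce the claim to the elementary identity $\nabla_\vw \log f = (\nabla_\vw f)/f$ combined with marginalization and Bayes' rule. First I would rewrite the left-hand side as $\nabla_\vw \log q(\vw) = \nabla_\vw q(\vw) / q(\vw)$, and then expand the marginal in the numerator through $q(\vw) = \int q(\vw, \vv)\, d\vv$, so that the task becomes computing $\nabla_\vw \int q(\vw, \vv)\, d\vv$.

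The crux is to push the gradient inside the integral and then apply the log-derivative trick to the joint density. The key observation is that the marginal $q(\vv)$ carries no dependence on $\vw$, so $\nabla_\vw \log q(\vw, \vv) = \nabla_\vw \log q(\vw|\vv) + \nabla_\vw \log q(\vv) = \nabla_\vw \log q(\vw|\vv)$, and hence $\nabla_\vw q(\vw, \vv) = q(\vw, \vv)\, \nabla_\vw \log q(\vw|\vv)$. Substituting this gives $\nabla_\vw q(\vw) = \int q(\vw, \vv)\, \nabla_\vw \log q(\vw|\vv)\, d\vv$. Dividing through by $q(\vw)$ and recognizing $q(\vw, \vv)/q(\vw) = q(\vv|\vw)$ turns the integral into a conditional expectation, yielding $\nabla_\vw \log q(\vw) = \E_{q(\vv|\vw)} \nabla_\vw \log q(\vw|\vv)$, which is exactly the statement.

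The only genuine obstacle is justifying the interchange of differentiation and integration when moving $\nabla_\vw$ inside $\int \cdot\, d\vv$; this requires a mild regularity assumption (for instance, $q$ continuously differentiable in $\vw$ together with a locally integrable dominating function for $\nabla_\vw q(\vw, \vv)$, so that the dominated-convergence form of Leibniz's rule applies). Such conditions are automatically met by the smooth Gaussian-convolution marginals $q_n$ used throughout the paper, so I would state this regularity as a standing assumption and treat the remaining manipulations as routine applications of the chain rule and Bayes' rule.
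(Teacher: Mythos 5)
Your proof is correct. It takes a different (though equally elementary) route from the paper's: you differentiate the marginalization identity $q(\vw) = \int q(\vw, \vv)\, \mathrm{d}\vv$ directly, apply the log-derivative trick $\nabla_\vw q(\vw,\vv) = q(\vw,\vv)\,\nabla_\vw \log q(\vw|\vv)$ (using $\nabla_\vw \log q(\vv) = \vzero$), and then divide by $q(\vw)$ to recognize the conditional expectation. The paper instead first establishes the zero-mean score identity $\E_{q(\vv|\vw)} \nabla_\vw \log q(\vv|\vw) = \int \nabla_\vw q(\vv|\vw)\, \mathrm{d}\vv = \nabla_\vw \int q(\vv|\vw)\, \mathrm{d}\vv = \vzero$, adds this zero term to $\nabla_\vw \log q(\vw)$, and passes between the two factorizations $q(\vw)q(\vv|\vw) = q(\vv,\vw) = q(\vv)q(\vw|\vv)$ to arrive at the result. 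The two arguments rest on exactly the same ingredients—an interchange of $\nabla_\vw$ with an integral over $\vv$ (you apply it to the marginalization of the joint, the paper to the normalization of the conditional) and the fact that $q(\vv)$ does not depend on $\vw$—so neither is more general, but yours is the more direct computation, while the paper's ``add a well-chosen zero'' step is slicker at the cost of pulling the key identity seemingly out of thin air. A genuine merit of your write-up is that you explicitly flag the regularity (dominated-convergence form of Leibniz's rule) needed to justify the interchange, a hypothesis the paper's proof uses silently; as you note, it holds for the smooth Gaussian-convolution marginals to which the lemma is applied.
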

\begin{proof}
According to $\E_{q(\vv|\vw)} \nabla_\vw \log q(\vv|\vw) = \int \nabla_\vw q(\vv|\vw) \mathrm{d} \vv = \nabla_\vw \int q(\vv|\vw) \mathrm{d} \vv = \vzero$, we have
\begin{align*}
    \nabla_\vw \log q(\vw) = & \nabla_\vw \log q(\vw) + \E_{q(\vv|\vw)} \nabla_\vw \log q(\vv|\vw) \\
    = & \E_{q(\vv|\vw)} \nabla_\vw \log q(\vv, \vw) = \E_{q(\vv|\vw)} \nabla_\vw \log q(\vw | \vv).
\end{align*}
% This is a result commonly used in works on score matching~\citep{vincent2011connection,vertes2016learning,bao2021variational}.
% Taking gradient w.r.t. $\vw$ again, we have
% \begin{align*}
%     & \nabla_\vw^2 \log q(\vw) \\
%     = & \nabla_\vw \E_{q(\vv|\vw)} \nabla_\vw \log q(\vw | \vv) = \nabla_\vw \int q(\vv|\vw) \nabla_\vw \log q(\vw | \vv) \mathrm{d} \vv \\
%     = & \int q(\vv|\vw) \nabla_\vw^2 \log q(\vw | \vv) \mathrm{d} \vv + \int \nabla_\vw \log q(\vw | \vv) \nabla_\vw q(\vv|\vw)^\top \mathrm{d} \vv \\
%     = & \E_{q(\vv|\vw)} \nabla_\vw^2 \log q(\vw | \vv) + \int q(\vv|\vw) \nabla_\vw \log q(\vw | \vv) \nabla_\vw \log q(\vv|\vw)^\top \mathrm{d} \vv \\
%     = & \E_{q(\vv|\vw)} \nabla_\vw^2 \log q(\vw | \vv) +  \E_{q(\vv|\vw)} \nabla_\vw \log q(\vw | \vv) \nabla_\vw \log q(\vv|\vw)^\top \\
%     = & \E_{q(\vv|\vw)} \nabla_\vw^2 \log q(\vw | \vv) +  \E_{q(\vv|\vw)} \nabla_\vw \log q(\vw | \vv) (\nabla_\vw \log q(\vw|\vv) - \nabla_\vw \log q(\vw))^\top \\
%     = & \E_{q(\vv|\vw)} \nabla_\vw^2 \log q(\vw | \vv) +  \E_{q(\vv|\vw)} \nabla_\vw \log q(\vw | \vv) \nabla_\vw \log q(\vw|\vv)^\top \\
%     & -  \E_{q(\vv|\vw)} \nabla_\vw \log q(\vw | \vv) \nabla_\vw \log q(\vw)^\top \\
%     = & \E_{q(\vv|\vw)} \nabla_\vw^2 \log q(\vw | \vv) +  \E_{q(\vv|\vw)} \nabla_\vw \log q(\vw | \vv) \nabla_\vw \log q(\vw|\vv)^\top \\
%     & -  \nabla_\vw \log q(\vw) \nabla_\vw \log q(\vw)^\top.
% \end{align*}
\end{proof}

\begin{lem}
\label{lem:cov}
(Score representation of conditional expectation and covariance) Suppose $q(\vv, \vw) = q(\vv) q(\vw|\vv)$, where $q(\vw|\vv) = \gN(\vw | \sqrt{\alpha} \vv, \beta \mI)$, then
\begin{align*}
    & \E_{q(\vv|\vw)} [\vv] = \frac{1}{\sqrt{\alpha}}(\vw + \beta \nabla_\vw \log q(\vw) ), \\
    & \E_{q(\vw)}\Cov_{q(\vv|\vw)} [\vv] = \frac{\beta }{\alpha} \left(\mI - \beta \E_{q(\vw)} \left[ \nabla_\vw \log q(\vw) \nabla_\vw \log q(\vw)^\top \right]\right), \\
    & \E_{q(\vw)}\frac{\tr(\Cov_{q(\vv|\vw)} [\vv])}{d} = \frac{\beta }{\alpha} \left(1 - \beta \E_{q(\vw)} \frac{||\nabla_\vw \log q(\vw)||^2}{d} \right).
\end{align*}
\end{lem}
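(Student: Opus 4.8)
The plan is to establish the three identities in sequence: the first from Lemma~\ref{lem:score}, the second from a second-order (Tweedie-type) relation obtained by differentiating the posterior mean, and the third by taking a trace. Throughout I write $m(\vw) \coloneqq \E_{q(\vv|\vw)}[\vv]$ for the posterior mean.

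For the conditional mean, note that since $q(\vw|\vv) = \gN(\vw|\sqrt{\alpha}\vv, \beta \mI)$ the conditional score is explicit, $\nabla_\vw \log q(\vw|\vv) = \frac{1}{\beta}(\sqrt{\alpha}\vv - \vw)$. Taking $\E_{q(\vv|\vw)}[\cdot]$ on both sides and invoking Lemma~\ref{lem:score}, which gives $\nabla_\vw \log q(\vw) = \E_{q(\vv|\vw)} \nabla_\vw \log q(\vw|\vv)$, yields $\nabla_\vw \log q(\vw) = \frac{1}{\beta}(\sqrt{\alpha}\, m(\vw) - \vw)$; solving for $m(\vw)$ produces the first identity $m(\vw) = \frac{1}{\sqrt{\alpha}}(\vw + \beta \nabla_\vw \log q(\vw))$.

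For the covariance, I would differentiate $m(\vw)$ with respect to $\vw$ in two different ways. Writing $m(\vw) = \int \vv\, q(\vv) q(\vw|\vv)\,\mathrm{d}\vv \,/\, q(\vw)$ and differentiating under the integral sign, using $\nabla_\vw q(\vw|\vv) = \frac{1}{\beta}(\sqrt{\alpha}\vv - \vw) q(\vw|\vv)$, gives the Jacobian--covariance relation $\nabla_\vw m(\vw) = \frac{\sqrt{\alpha}}{\beta} \Cov_{q(\vv|\vw)}[\vv]$. On the other hand, differentiating the closed form just established gives $\nabla_\vw m(\vw) = \frac{1}{\sqrt{\alpha}}(\mI + \beta \nabla_\vw^2 \log q(\vw))$. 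Equating the two expressions yields the pointwise relation $\Cov_{q(\vv|\vw)}[\vv] = \frac{\beta}{\alpha}(\mI + \beta \nabla_\vw^2 \log q(\vw))$. Finally, taking $\E_{q(\vw)}$ and applying the integration-by-parts (Stein) identity $\E_{q(\vw)}[\nabla_\vw^2 \log q(\vw)] = -\E_{q(\vw)}[\nabla_\vw \log q(\vw) \nabla_\vw \log q(\vw)^\top]$, which follows from the algebraic fact $q \nabla_\vw^2 \log q = \nabla_\vw^2 q - q\, \nabla_\vw \log q\, \nabla_\vw \log q^\top$ together with $\int \nabla_\vw^2 q\, \mathrm{d}\vw = \vzero$, converts the Hessian term into the expected outer product of scores and gives the second identity. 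The third identity is then immediate: take the trace of the matrix identity, divide by $d$, and use $\tr(\nabla_\vw \log q\, \nabla_\vw \log q^\top) = ||\nabla_\vw \log q||^2$.

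The hard part will be the second step, and specifically the two analytic manipulations rather than the algebra. Both the Jacobian--covariance relation and the Stein identity rely on interchanging differentiation and integration and on the boundary terms of $q$ (and of $\nabla_\vw q$) vanishing at infinity; so the main obstacle is justifying these regularity conditions. Granting them, the remaining computation is routine.
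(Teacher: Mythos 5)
Your proof is correct, but it takes a genuinely different route from the paper's. For the covariance identity, the paper argues purely probabilistically: it writes $\Cov_{q(\vv|\vw)}[\vv]$ as the conditional covariance of the scaled residual $\frac{\vw - \sqrt{\alpha}\vv}{\beta}$, expands this as (conditional second moment) minus (conditional mean outer product), then swaps the order of expectation (Fubini) so that the second-moment term becomes the known forward-noise covariance $\E_{q(\vv)}\Cov_{q(\vw|\vv)}[\vw] = \beta \mI$, while the mean term is exactly $\nabla_\vw \log q(\vw)$ by the first identity. No Hessians, no Stein identity, and no differentiation under the integral sign beyond what Lemma~\ref{lem:score} already uses. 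Your route instead establishes the stronger \emph{pointwise} second-order Tweedie formula $\Cov_{q(\vv|\vw)}[\vv] = \frac{\beta}{\alpha}\bigl(\mI + \beta \nabla_\vw^2 \log q(\vw)\bigr)$ by differentiating the posterior mean in two ways, and only then averages over $\vw$, using integration by parts to trade the expected Hessian for minus the expected score outer product. What your approach buys is this per-$\vw$ identity linking the conditional covariance to the Hessian of $\log q(\vw)$, which the paper's averaged statement does not give (and indeed the averaged identity with the score outer product does \emph{not} hold pointwise); what the paper's approach buys is economy of hypotheses — it needs no interchange of differentiation and integration and no boundary-term arguments. The regularity conditions you flag as the main obstacle are unproblematic in this setting: $q(\vw)$ is the convolution of a probability measure with a Gaussian kernel, hence smooth, strictly positive, with derivatives decaying fast enough that both the Jacobian--covariance relation and the Stein identity are justified; a one-sentence remark to that effect would close your proof completely.
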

\begin{proof}
According to Lemma~\ref{lem:score}, we have
\begin{align*}
    \nabla_\vw \log q(\vw) = \E_{q(\vv|\vw)} \nabla_\vw \log q(\vw|\vv) = - \E_{q(\vv|\vw)}  \frac{\vw - \sqrt{\alpha} \vv}{ \beta}.
\end{align*}
Thereby, $\E_{q(\vv|\vw)} [\vv] =\frac{1}{\sqrt{\alpha}} (\vw + \beta \nabla_\vw \log q(\vw))$. Furthermore, we have
\begin{align*}
    & \E_{q(\vw)}\Cov_{q(\vv|\vw)} [\vv] = \frac{\beta^2}{\alpha} \E_{q(\vw)}\Cov_{q(\vv|\vw)} [\frac{\vw - \sqrt{\alpha} \vv}{ \beta}] \\
    = & \frac{\beta^2}{\alpha} \E_{q(\vw)} \left(\E_{q(\vv|\vw)} (\frac{\vw - \sqrt{\alpha} \vv}{ \beta}) (\frac{\vw - \sqrt{\alpha} \vv}{ \beta})^\top - \E_{q(\vv|\vw)}[\frac{\vw - \sqrt{\alpha} \vv}{ \beta}] \E_{q(\vv|\vw)}[\frac{\vw - \sqrt{\alpha} \vv}{ \beta}]^\top \right) \\
    %= & \frac{\beta^2}{\alpha}  \left( \E_{q(\vv,\vw)} (\frac{\vw - \sqrt{\alpha} \vv}{ \beta}) (\frac{\vw - \sqrt{\alpha} \vv}{ \beta})^\top - \E_{q(\vw)}\nabla_\vw \log q(\vw) \nabla_\vw \log q(\vw)^\top \right) \\
    = & \frac{\beta^2}{\alpha} \left( \frac{1}{\beta^2}\E_{q(\vv)}\E_{q(\vw|\vv)} (\vw - \sqrt{\alpha} \vv) (\vw - \sqrt{\alpha} \vv)^\top - \E_{q(\vw)}\nabla_\vw \log q(\vw) \nabla_\vw \log q(\vw)^\top \right) \\
    = & \frac{\beta^2}{\alpha} \left( \frac{1}{\beta^2}\E_{q(\vv)}\Cov_{q(\vw|\vv)}\vw - \E_{q(\vw)}\nabla_\vw \log q(\vw) \nabla_\vw \log q(\vw)^\top \right) \\
    = & \frac{\beta^2}{\alpha} \left( \frac{1}{\beta^2}\E_{q(\vv)}\beta \mI - \E_{q(\vw)} \nabla_\vw \log q(\vw) \nabla_\vw \log q(\vw)^\top \right) \\
    = & \frac{\beta^2}{\alpha} \left( \frac{1}{\beta}  \mI - \E_{q(\vw)}\nabla_\vw \log q(\vw) \nabla_\vw \log q(\vw)^\top \right) = \frac{\beta}{\alpha}(\mI - \beta  \E_{q(\vw)} \nabla_\vw \log q(\vw) \nabla_\vw \log q(\vw)^\top ).
\end{align*}

Taking the trace, we have
\begin{align*}
    \E_{q(\vw)}\frac{\tr(\Cov_{q(\vv|\vw)} [\vv])}{d} = \frac{\beta}{\alpha}(1 - \beta  \E_{q(\vw)} \frac{||\nabla_\vw \log q(\vw)||^2}{d}).
\end{align*}
\end{proof}

\begin{lem}
\label{lem:cov_bd}
(Bounded covariance of a bounded distribution) Suppose $q(\vx)$ is a bounded distribution in $[a, b]^d$, then $\frac{\tr(\Cov_{q(\vx)}[\vx])}{d} \leq (\frac{b-a}{2})^2$.
\end{lem}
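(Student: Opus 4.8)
The plan is to reduce the multidimensional statement to a one-dimensional variance bound applied coordinatewise. First I would note that the trace of a covariance matrix is the sum of its diagonal entries, i.e. the sum of the marginal variances, so that
\[
    \frac{\tr(\Cov_{q(\vx)}[\vx])}{d} = \frac{1}{d} \sum_{i=1}^d \Var_{q}[\vx_i],
\]
where $\vx_i$ denotes the $i$-th coordinate of $\vx$. Since the hypothesis that $q(\vx)$ is supported in $[a,b]^d$ forces each marginal to be supported in $[a,b]$, it suffices to show that every marginal variance is at most $\left(\frac{b-a}{2}\right)^2$; the average of quantities each bounded by this value is then bounded by the same value.

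The key step is the scalar bound (Popoviciu's inequality): for a real random variable $\vx_i$ taking values in $[a,b]$, one has $\Var_{q}[\vx_i] \le \left(\frac{b-a}{2}\right)^2$. I would prove this using the identity $\E_{q}[(\vx_i - c)^2] = \Var_{q}[\vx_i] + (\E_{q}[\vx_i] - c)^2$, valid for every constant $c$, which immediately gives $\Var_{q}[\vx_i] \le \E_{q}[(\vx_i - c)^2]$. Choosing $c$ to be the midpoint $\frac{a+b}{2}$ and using that $\vx_i \in [a,b]$ implies $\left|\vx_i - \frac{a+b}{2}\right| \le \frac{b-a}{2}$ pointwise, I obtain
\[
    \Var_{q}[\vx_i] \le \E_{q}\left[\left(\vx_i - \tfrac{a+b}{2}\right)^2\right] \le \left(\frac{b-a}{2}\right)^2,
\]
where the last step is monotonicity of the expectation applied to the pointwise bound.

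Combining the two steps yields the claim. I do not anticipate any genuine obstacle: the whole argument rests on the decomposition of the trace into marginal variances and on the elementary fact that the second central moment is dominated by the second moment about the midpoint of the support. The only points requiring a little care are the variance identity $\E_{q}[(\vx_i - c)^2] = \Var_{q}[\vx_i] + (\E_{q}[\vx_i] - c)^2$, which is immediate by expanding the square, and the observation that the scalar bound is uniform in $i$, so that averaging over coordinates preserves it.
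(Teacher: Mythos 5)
Your proposal is correct and is essentially the paper's own argument: both bound the variance by the second moment about the midpoint $\frac{a+b}{2}$ and then use the pointwise bound $\left|\vx_i - \frac{a+b}{2}\right| \le \frac{b-a}{2}$. The only cosmetic difference is that you argue coordinatewise and average the marginal variances, while the paper performs the identical computation in vectorized form via $\tr(\Cov_{q(\vx)}[\vx]) = \E_{q(\vx)}\left\|\vx - \frac{a+b}{2}\right\|^2 - \left\|\E\vx - \frac{a+b}{2}\right\|^2$.
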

\begin{proof}
\begin{align*}
    & \frac{\tr(\Cov_{q(\vx)}[\vx])}{d} = \frac{\tr(\Cov_{q(\vx)}[\vx - \frac{a+b}{2}])}{d} = \frac{\E_{q(\vx)}||\vx - \frac{a+b}{2}||^2- ||\E \vx - \frac{a+b}{2}||^2}{d} \\
    \leq & \frac{\E_{q(\vx)}||\vx - \frac{a+b}{2}||^2}{d} \leq (\frac{b - a}{2})^2.
\end{align*}
\end{proof}

\begin{lem}
\label{lem:opt_mm_cov}
(Convert the moments of $q(\vx_{n-1}|\vx_n)$ to moments of $q(\vx_0|\vx_n)$) The optimal solution $\vmu_n^*(\vx_n)$ and $\sigma_n^{*2}$ to Eq.~{(\ref{eq:elbo})} can be represented by the first two moments of $q(\vx_0|\vx_n)$
\begin{align*}
    & \vmu_n^*(\vx_n) = \tilde{\vmu}_n(\vx_n, \E_{q(\vx_0|\vx_n)} \vx_0) \\
    & \sigma_n^{*2} = \lambda_n^2 + \left(\sqrt{\overline{\alpha}_{n-1}} - \sqrt{\overline{\beta}_{n-1} - \lambda_n^2} \cdot \sqrt{\frac{ \overline{\alpha}_n}{\overline{\beta}_n}}\right)^2 \E_{q(\vx_n)} \frac{\tr(\Cov_{q(\vx_0|\vx_n)} [\vx_0])}{d}
\end{align*}
where $q_n(\vx_n)$ is the marginal distribution of the forward process at timestep $n$ and $d$ is the dimension of the data.
\end{lem}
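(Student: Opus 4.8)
The plan is to start from Lemma~\ref{lem:opt_gauss_markov}, which already identifies the optimal solution as the moment-matched Gaussian, namely $\vmu_n^*(\vx_n) = \E_{q(\vx_{n-1}|\vx_n)}[\vx_{n-1}]$ and $\sigma_n^{*2} = \E_{q_n(\vx_n)} \tr(\Cov_{q(\vx_{n-1}|\vx_n)}[\vx_{n-1}])/d$. The entire remaining task is to re-express the first two moments of $q(\vx_{n-1}|\vx_n)$ in terms of the first two moments of $q(\vx_0|\vx_n)$. The structural fact I would exploit is the mixture decomposition
\begin{align*}
    q(\vx_{n-1}|\vx_n) = \E_{q(\vx_0|\vx_n)}\left[ q(\vx_{n-1}|\vx_n, \vx_0) \right],
\end{align*}
where $q(\vx_{n-1}|\vx_n, \vx_0) = \gN(\vx_{n-1}|\tilde{\vmu}_n(\vx_n, \vx_0), \lambda_n^2 \mI)$ is exactly the Gaussian specified in Eq.~{(\ref{eq:ddim})}. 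This is what licenses applying the laws of total expectation and total variance with $\vx_0$ as the conditioning variable.

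For the mean I would use the tower rule to write $\E_{q(\vx_{n-1}|\vx_n)}[\vx_{n-1}] = \E_{q(\vx_0|\vx_n)}\tilde{\vmu}_n(\vx_n, \vx_0)$. Since $\tilde{\vmu}_n(\vx_n, \vx_0)$ is affine in $\vx_0$, the expectation commutes with it, yielding $\vmu_n^*(\vx_n) = \tilde{\vmu}_n(\vx_n, \E_{q(\vx_0|\vx_n)}\vx_0)$ immediately.

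For the variance I would invoke the law of total variance conditioned on $\vx_0$:
\begin{align*}
    \Cov_{q(\vx_{n-1}|\vx_n)}[\vx_{n-1}] = \E_{q(\vx_0|\vx_n)} \Cov_{q(\vx_{n-1}|\vx_n, \vx_0)}[\vx_{n-1}] + \Cov_{q(\vx_0|\vx_n)} \E_{q(\vx_{n-1}|\vx_n, \vx_0)}[\vx_{n-1}].
\end{align*}
The first term equals $\lambda_n^2 \mI$ because the inner covariance is a constant independent of $\vx_0$. For the second term I would regroup $\tilde{\vmu}_n(\vx_n, \vx_0)$ as an affine function of $\vx_0$ whose $\vx_0$-coefficient is $\sqrt{\overline{\alpha}_{n-1}} - \sqrt{\overline{\beta}_{n-1} - \lambda_n^2}\cdot\sqrt{\overline{\alpha}_n/\overline{\beta}_n}$, so that its covariance over $q(\vx_0|\vx_n)$ is this coefficient squared times $\Cov_{q(\vx_0|\vx_n)}[\vx_0]$. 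Taking the trace, dividing by $d$, and averaging over $q(\vx_n)$ then produces the claimed expression for $\sigma_n^{*2}$.

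The only genuinely non-routine step is the decision to condition on $\vx_0$ together with recognizing the mixture decomposition above (the paper's ``second step''); once that is in place, everything reduces to bookkeeping. The sole place the algebra can slip is in isolating the $\vx_0$-coefficient of $\tilde{\vmu}_n$, since the $\vx_n$-dependent summand must be treated as a constant (given $\vx_n$) that drops out of $\Cov_{q(\vx_0|\vx_n)}[\cdot]$; I would double-check that this coefficient matches the one appearing in the stated formula before concluding.
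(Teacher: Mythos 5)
Your proposal is correct and follows the paper's proof essentially verbatim: it starts from Lemma~\ref{lem:opt_gauss_markov}, uses the tower rule plus the affinity of $\tilde{\vmu}_n(\vx_n,\cdot)$ for the mean, and the law of total variance conditioned on $\vx_0$ (with the $\vx_0$-coefficient $\sqrt{\overline{\alpha}_{n-1}} - \sqrt{\overline{\beta}_{n-1}-\lambda_n^2}\cdot\sqrt{\overline{\alpha}_n/\overline{\beta}_n}$ extracted exactly as the paper does) for the variance. The coefficient you identify matches the stated formula, so no gap remains.
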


\begin{proof}
According to Lemma~\ref{lem:opt_gauss_markov}, the optimal $\vmu_n^*$ and $\sigma_n^{*2}$ under KL minimization is
\begin{align*}
    & \vmu_n^*(\vx_n) = \E_{q(\vx_{n-1}|\vx_n)} [\vx_{n-1}], \quad \sigma_n^{*2} = \E_{q_n(\vx_n)} \frac{\tr(\Cov_{q(\vx_{n-1}|\vx_n)}[\vx_{n-1}])}{d}.
\end{align*}

We further derive $\vmu_n^*$. Since $\tilde{\vmu}_n(\vx_n, \vx_0)$ is linear w.r.t. $\vx_0$, we have
\begin{align*}
    & \vmu_n^*(\vx_n) = \E_{q(\vx_{n-1}|\vx_n)} [\vx_{n-1}] = \E_{q(\vx_0|\vx_n)} \E_{q(\vx_{n-1}|\vx_n, \vx_0)}[\vx_{n-1}] \\
    = & \E_{q(\vx_0|\vx_n)} \tilde{\vmu}_n(\vx_n, \vx_0) = \tilde{\vmu}_n(\vx_n, \E_{q(\vx_0|\vx_n)} \vx_0).
\end{align*}

Then we consider $\sigma_n^{*2}$. According to the law of total variance, we have
\begin{align*}
    & \Cov_{q(\vx_{n-1}|\vx_n)}[\vx_{n-1}] = \E_{q(\vx_0|\vx_n)}\Cov_{q(\vx_{n-1}|\vx_n, \vx_0)}[\vx_{n-1}] + \Cov_{q(\vx_0|\vx_n)} \E_{q(\vx_{n-1}|\vx_n, \vx_0)} [\vx_{n-1}] \\
    = & \lambda_n^2 \mI + \Cov_{q(\vx_0|\vx_n)} \tilde{\vmu}_n(\vx_n, \vx_0) = \lambda_n^2 \mI + (\sqrt{\overline{\alpha}_{n-1}} - \sqrt{\overline{\beta}_{n-1} - \lambda_n^2} \cdot \sqrt{\frac{\overline{\alpha}_n}{\overline{\beta}_n}})^2 \Cov_{q(\vx_0|\vx_n)} [\vx_0].
\end{align*}

Thereby,
\begin{align*}
    & \sigma_n^{*2} = \E_{q_n(\vx_n)} \frac{\tr(\Cov_{q(\vx_{n-1}|\vx_n)}[\vx_{n-1}])}{d} \\
    = & \lambda_n^2 + (\sqrt{\overline{\alpha}_{n-1}} - \sqrt{\overline{\beta}_{n-1} - \lambda_n^2} \cdot \sqrt{\frac{ \overline{\alpha}_n}{\overline{\beta}_n}})^2 \E_{q(\vx_n)} \frac{\tr(\Cov_{q(\vx_0|\vx_n)} [\vx_0])}{d}.
\end{align*}
\end{proof}

\subsection{Proof of Theorem~\ref{thm:opt_mm}}
\label{sec:proof_opt}

\optmm*

\begin{proof}

According to Lemma~\ref{lem:cov} and Lemma~\ref{lem:opt_mm_cov}, we have
\begin{align*}
    \vmu_n^*(\vx_n) = \tilde{\vmu}_n(\vx_n, \E_{q(\vx_0|\vx_n)} \vx_0) = \tilde{\vmu}_n(\vx_n, \frac{1}{\sqrt{\overline{\alpha}_n}} (\vx_n + \overline{\beta}_n \nabla_{\vx_n} \log q(\vx_n))),
\end{align*}
and
\begin{align*}
    & \sigma_n^{*2} = \lambda_n^2 + (\sqrt{\overline{\alpha}_{n-1}} - \sqrt{\overline{\beta}_{n-1} - \lambda_n^2} \cdot \sqrt{\frac{ \overline{\alpha}_n}{\overline{\beta}_n}})^2 \E_{q(\vx_n)} \frac{\tr(\Cov_{q(\vx_0|\vx_n)} [\vx_0])}{d} \\
    = & \lambda_n^2 + (\sqrt{\overline{\alpha}_{n-1}} - \sqrt{\overline{\beta}_{n-1} - \lambda_n^2} \cdot \sqrt{\frac{ \overline{\alpha}_n}{\overline{\beta}_n}})^2
    \frac{\overline{\beta}_n }{\overline{\alpha}_n} (1 - \overline{\beta}_n \E_{q(\vx_n)} \frac{||\nabla_{\vx_n} \log q(\vx_n)||^2}{d}) \\
    = & \lambda_n^2 + ( \sqrt{\frac{\overline{\beta}_n }{\alpha_n}} - \sqrt{\overline{\beta}_{n-1} - \lambda_n^2} )^2
    (1 - \overline{\beta}_n \E_{q(\vx_n)} \frac{||\nabla_{\vx_n} \log q(\vx_n)||^2}{d}).
\end{align*}
\end{proof}

\subsection{Proof of Theorem~\ref{thm:bd}}
\label{sec:proof_bd}
\bd*
\begin{proof}
According to Lemma~\ref{lem:opt_mm_cov} and Theorem~\ref{thm:opt_mm}, we have
\begin{align*}
    \lambda_n^2 \leq \sigma_n^{*2} \leq \lambda_n^2 + (\sqrt{\frac{\overline{\beta}_n}{\alpha_n}} - \sqrt{\overline{\beta}_{n-1} - \lambda_n^2})^2.
\end{align*}

If we further $q(\vx_0)$ assume is a bounded distribution in $[a, b]^d$, then $q(\vx_0|\vx_n)$ is also a bounded distribution in $[a, b]^d$. According to Lemma~\ref{lem:cov_bd}, we have
\begin{align*}
    \E_{q(\vx_n)} \frac{\tr(\Cov_{q(\vx_0|\vx_n)}[\vx_0])}{d} \leq (\frac{b-a}{2})^2.
\end{align*}
Combining with Lemma~\ref{lem:opt_mm_cov}, we have
\begin{align*}
    \sigma_n^{*2} = & \lambda_n^2 + (\sqrt{\overline{\alpha}_{n-1}} - \sqrt{\overline{\beta}_{n-1} - \lambda_n^2} \cdot \sqrt{\frac{ \overline{\alpha}_n}{\overline{\beta}_n}})^2 \E_{q(\vx_n)} \frac{\tr(\Cov_{q(\vx_0|\vx_n)} [\vx_0])}{d} \\
    \leq & \lambda_n^2 + (\sqrt{\overline{\alpha}_{n-1}} - \sqrt{\overline{\beta}_{n-1} - \lambda_n^2} \cdot \sqrt{\frac{ \overline{\alpha}_n}{\overline{\beta}_n}})^2 (\frac{b-a}{2})^2.
\end{align*}
\end{proof}

\subsection{Proof of the Decomposed Optimal KL}
\label{sec:proof_kl_traj}

\begin{restatable}{thm}{kltraj}
\label{thm:kl_traj}
(Decomposed optimal KL, proof in Appendix~\ref{sec:proof_kl_traj}) 

The KL divergence between the shorter forward process and its optimal reverse process is
\begin{align*}
    \KL(q(\vx_0, \vx_{\tau_1}, \cdots, \vx_{\tau_K})||p^*(\vx_0, \vx_{\tau_1}, \cdots, \vx_{\tau_K})) = \frac{d}{2} \sum\limits_{k=2}^K J(\tau_{k-1}, \tau_k) + c,
\end{align*}
where $J(\tau_{k-1}, \tau_k) = \log \frac{\sigma_{\tau_{k-1}|\tau_k}^{*2}}{\lambda_{\tau_{k-1}|\tau_k}^2}$ and $c$ is a constant unrelated to the trajectory $\tau$.
\end{restatable}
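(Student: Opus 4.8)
The plan is to apply Lemma~\ref{lem:opt_gauss_markov} directly to the shorter chain $q(\vx_0, \vx_{\tau_1}, \cdots, \vx_{\tau_K})$, viewing $\vx_{\tau_K}$ as the prior end (matched against $p(\vx_{\tau_K})$) and $\vx_0$ as the data end, with the $K$ Gaussian reverse transitions $p^*(\vx_{\tau_{k-1}}|\vx_{\tau_k})$ (writing $\tau_0 \coloneqq 0$). Since that lemma makes no Markov assumption on $q$ and expresses the optimal KL purely through a cross-entropy, a sum of log optimal variances, and the joint entropy of $q$, it yields immediately
\begin{align*}
\KL(q||p^*) = H(q(\vx_{\tau_K}), p(\vx_{\tau_K})) + \frac{d}{2}\sum_{k=1}^K \log(2\pi e \sigma_{\tau_{k-1}|\tau_k}^{*2}) - H(q(\vx_0, \vx_{\tau_1}, \cdots, \vx_{\tau_K})).
\end{align*}

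Next I would compute the joint entropy. The shorter forward process in Eq.~(\ref{eq:short}) is, conditioned on $\vx_0$, a Gaussian Markov chain whose top marginal $q(\vx_{\tau_K}|\vx_0)$ has covariance $\overline{\beta}_{\tau_K}\mI$ and whose transitions $q(\vx_{\tau_{k-1}}|\vx_{\tau_k}, \vx_0)$ have covariances $\lambda_{\tau_{k-1}|\tau_k}^2 \mI$. Applying Lemma~\ref{lem:entropy_ddim} (equivalently Lemma~\ref{lem:entropy_general}) to this chain gives
\begin{align*}
H(q(\vx_0, \vx_{\tau_1}, \cdots, \vx_{\tau_K})) = H(q(\vx_0)) + \frac{d}{2}\log(2\pi e \overline{\beta}_{\tau_K}) + \frac{d}{2}\sum_{k=2}^K \log(2\pi e \lambda_{\tau_{k-1}|\tau_k}^2).
\end{align*}

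Substituting this into the KL expression, the key step is to pair each $\log\sigma^{*2}$ term against the corresponding $\log\lambda^2$ term. For every $k = 2, \ldots, K$ the difference $\frac{d}{2}[\log(2\pi e \sigma_{\tau_{k-1}|\tau_k}^{*2}) - \log(2\pi e \lambda_{\tau_{k-1}|\tau_k}^2)]$ collapses to exactly $\frac{d}{2}\log(\sigma^{*2}_{\tau_{k-1}|\tau_k}/\lambda^2_{\tau_{k-1}|\tau_k}) = \frac{d}{2}J(\tau_{k-1}, \tau_k)$, producing the claimed sum $\frac{d}{2}\sum_{k=2}^K J(\tau_{k-1}, \tau_k)$.

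Everything left over I would collect into $c$ and argue term-by-term that it is independent of the variable trajectory. The boundary terms $H(q(\vx_{\tau_K}), p(\vx_{\tau_K}))$ and $\frac{d}{2}\log(2\pi e \overline{\beta}_{\tau_K})$ depend only on $\tau_K = N$, and the data entropy $H(q(\vx_0))$ is fixed. The genuinely delicate piece is the $k=1$ transition term $\frac{d}{2}\log(2\pi e \sigma_{0|\tau_1}^{*2})$, which has no matching $\lambda^2$ to cancel against, since the forward $\lambda^2$ sum starts at $k=2$; arguing that it still belongs in $c$ will be the main obstacle. The resolution is that the endpoints $\tau_1 = 1$ and $\tau_K = N$ are fixed across all admissible trajectories, so by the moment-matching of Lemma~\ref{lem:opt_gauss_markov} we have $\sigma_{0|1}^{*2} = \E_{q(\vx_1)}\tr(\Cov_{q(\vx_0|\vx_1)}[\vx_0])/d$, which depends only on $q(\vx_0)$ and the fixed first step, hence is a trajectory-independent constant. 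Collecting these four pieces defines $c$ and completes the identity.
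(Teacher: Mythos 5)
Your proof is correct and follows essentially the same route as the paper: both rest on Lemma~\ref{lem:opt_gauss_markov} for the optimal KL and Lemma~\ref{lem:entropy_general}/Lemma~\ref{lem:entropy_ddim} for the entropy of the shorter forward process, then pair each $\log\sigma^{*2}$ term with the corresponding $\log\lambda^2$ term and absorb the fixed-endpoint pieces into $c$. The only cosmetic difference is that you apply Lemma~\ref{lem:opt_gauss_markov} to the full chain including $\vx_0$, so the $k=1$ term appears as $\tfrac{d}{2}\log\left(2\pi e\,\sigma^{*2}_{0|\tau_1}\right)$ (constant because $\tau_1=1$ is fixed), whereas the paper peels off the $\vx_0$ transition first and carries it as $-\E_q\log p^*(\vx_0|\vx_1)$; the two bookkeepings agree since the cross-entropy to a moment-matched Gaussian equals its entropy (Lemma~\ref{lem:ce}).
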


\begin{proof}
According to Lemma~\ref{lem:entropy_ddim}  and Lemma~\ref{lem:opt_gauss_markov}, we have
\begin{align*}
     & \KL(q(\vx_0, \vx_{\tau_1},\cdots, \vx_{\tau_K}) || p^*(\vx_0,\vx_{\tau_1},\cdots,\vx_{\tau_K})) \\
     = & \E_q \KL(q(\vx_0|\vx_{\tau_1}, \cdots, \vx_{\tau_K})||p^*(\vx_0|\vx_1)) + \KL(q(\vx_{\tau_1},\cdots, \vx_{\tau_K}) || p^*(\vx_{\tau_1},\cdots,\vx_{\tau_K})) \\
     = & \E_q \KL(q(\vx_0|\vx_{\tau_1}, \cdots, \vx_{\tau_K})||p^*(\vx_0|\vx_1)) + H(q(\vx_N), p(\vx_N)) \\
     & + \frac{d}{2} \sum\limits_{k=2}^K \log (2 \pi e \sigma_{\tau_{k-1}|\tau_k}^{*2}) - H(q(\vx_{\tau_1}, \cdots, \vx_{\tau_N})) \\
     = & -\E_q \log p^*(\vx_0|\vx_1) + H(q(\vx_N), p(\vx_N)) + \frac{d}{2} \sum\limits_{k=2}^K \log (2 \pi e \sigma_{\tau_{k-1}|\tau_k}^{*2}) - H(q(\vx_0, \vx_{\tau_1}, \cdots, \vx_{\tau_K})) \\
     = & -\E_q \log p^*(\vx_0|\vx_1) + H(q(\vx_N), p(\vx_N)) + \frac{d}{2} \sum\limits_{k=2}^K \log (2 \pi e \sigma_{\tau_{k-1}|\tau_k}^{*2}) \\
     & - H(q(\vx_0)) - \frac{d}{2} \log (2\pi e \overline{\beta}_N) - \frac{d}{2} \sum\limits_{k=2}^K \log (2 \pi e \lambda_{\tau_{k-1}|\tau_k}^2) \\
     = & -\E_q \log p^*(\vx_0|\vx_1) + H(q(\vx_N), p(\vx_N)) + \frac{d}{2} \sum\limits_{k=2}^K \log  \frac{\sigma_{\tau_{k-1}|\tau_k}^{*2}}{\lambda_{\tau_{k-1}|\tau_k}^2}  - H(q(\vx_0)) - \frac{d}{2} \log (2\pi e \overline{\beta}_N).
\end{align*}

Let $J(\tau_{k-1}, \tau_k) = \log \frac{\sigma_{\tau_{k-1}|\tau_k}^{*2}}{\lambda_{\tau_{k-1}|\tau_k}^2}$ and $c = -\E_q \log p^*(\vx_0|\vx_1) + H(q(\vx_N), p(\vx_N)) - H(q(\vx_0)) - \frac{d}{2} \log (2\pi e \overline{\beta}_N)$, then $c$ is a constant unrelated to the trajectory $\tau$ and
\begin{align*}
    \KL(q(\vx_0, \vx_{\tau_1}, \cdots, \vx_{\tau_K})||p^*(\vx_0, \vx_{\tau_1}, \cdots, \vx_{\tau_K})) = \frac{d}{2} \sum\limits_{k=2}^K J(\tau_{k-1}, \tau_k) + c.
\end{align*}
\end{proof}

\subsection{The Formal Result for Section~\ref{sec:relation_sf_data} and its Proof}
\label{sec:proof_cv}

Here we present the formal result of the relationship between the score function and the data covariance matrix mentioned in Section~\ref{sec:relation_sf_data}.

\begin{restatable}{pro}{cv}
\label{thm:cv}
(Proof in Appendix~\ref{sec:proof_cv}) The expected conditional covariance matrix of the data distribution is determined by the score function $\nabla_{\vx_n} \log q_n(\vx_n)$ as follows:
\begin{align}
\label{eq:cv}
    \E_{q(\vx_n)} \Cov_{q(\vx_0|\vx_n)}[\vx_0] = \frac{\overline{\beta}_n}{\overline{\alpha}_n} \left(\mI - \overline{\beta}_n \E_{q_n(\vx_n)} \left[\nabla_{\vx_n} \log q_n(\vx_n) \nabla_{\vx_n} \log q_n(\vx_n)^\top \right] \right),
\end{align}
which contributes to the data covariance matrix according to the law of total variance
\begin{align}
   \Cov_{q(\vx_0)}[\vx_0] = \E_{q(\vx_n)} \Cov_{q(\vx_0|\vx_n)}[\vx_0] + \Cov_{q(\vx_n)} \E_{q(\vx_0|\vx_n)}[\vx_0].\label{eq:ltv}
\end{align}
\end{restatable}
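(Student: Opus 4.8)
The plan is to recognize that Equation~(\ref{eq:cv}) is an immediate specialization of Lemma~\ref{lem:cov}, and that Equation~(\ref{eq:ltv}) is nothing more than the law of total variance applied to the pair $(\vx_0, \vx_n)$. Neither part requires genuinely new work; the proof is a matter of identifying the correct substitution and invoking a standard identity.

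First I would note that the forward process yields the marginal $q(\vx_n|\vx_0) = \gN(\vx_n|\sqrt{\overline{\alpha}_n}\vx_0, \overline{\beta}_n \mI)$, as established in Section~\ref{sec:ddm}. This has exactly the Gaussian form $q(\vw|\vv) = \gN(\vw|\sqrt{\alpha}\vv, \beta \mI)$ required in the hypothesis of Lemma~\ref{lem:cov}, under the identification $\vv \mapsto \vx_0$, $\vw \mapsto \vx_n$, $\alpha \mapsto \overline{\alpha}_n$, and $\beta \mapsto \overline{\beta}_n$. Substituting these into the second identity of Lemma~\ref{lem:cov} produces Equation~(\ref{eq:cv}) directly, with no further computation. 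It is worth stressing that this step uses the \emph{marginal} transition $q(\vx_n|\vx_0)$, not any single-step kernel, which is precisely what allows the expected conditional covariance of $\vx_0$ given $\vx_n$ to be expressed through the score $\nabla_{\vx_n} \log q_n(\vx_n)$ of the marginal at timestep $n$.

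Next, for Equation~(\ref{eq:ltv}), I would invoke the law of total variance with the variable of interest being $\vx_0$ and the conditioning variable being $\vx_n$. Since the marginal of $\vx_0$ under the joint $q(\vx_0, \vx_n)$ is the data distribution $q(\vx_0)$, the standard decomposition $\Cov[\vx_0] = \E[\Cov[\vx_0 | \vx_n]] + \Cov[\E[\vx_0 | \vx_n]]$ reads verbatim as Equation~(\ref{eq:ltv}), completing the proof. Combining this with Equation~(\ref{eq:cv}) then expresses the first summand via the score function, which is the conceptual content of the proposition.

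The main obstacle is essentially nonexistent: both statements follow mechanically. The only point that merits care is confirming that the object matched against Lemma~\ref{lem:cov} is the marginal $q(\vx_n|\vx_0)$ rather than an intermediate one-step conditional; once this identification is made, the algebra of Lemma~\ref{lem:cov} and the law of total variance carry the argument to completion.
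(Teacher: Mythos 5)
Your proposal is correct and follows exactly the paper's own route: apply Lemma~\ref{lem:cov} to the marginal Gaussian $q(\vx_n|\vx_0) = \gN(\vx_n|\sqrt{\overline{\alpha}_n}\vx_0, \overline{\beta}_n \mI)$ with the substitutions $\vv \mapsto \vx_0$, $\vw \mapsto \vx_n$, $\alpha \mapsto \overline{\alpha}_n$, $\beta \mapsto \overline{\beta}_n$ to get Eq.~{(\ref{eq:cv})}, then invoke the law of total variance for Eq.~{(\ref{eq:ltv})}. The only cosmetic difference is that the paper spells out a short proof of the law of total variance for completeness, whereas you cite it as a standard identity, which is equally acceptable.
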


\begin{proof}
Since $q(\vx_n|\vx_0) = \gN(\vx_n|\sqrt{\overline{\alpha}_n}\vx_0, \overline{\beta}_n \mI)$, according to Lemma~\ref{lem:cov}, we have
\begin{align*}
    \E_{q(\vx_n)} \Cov_{q(\vx_0|\vx_n)}[\vx_0] = \frac{\overline{\beta}_n}{\overline{\alpha}_n}(\mI - \overline{\beta}_n \E_{q_n(\vx_n)}  \nabla_{\vx_n} \log q_n(\vx_n) \nabla_{\vx_n} \log q_n(\vx_n)^\top).
\end{align*}

The law of total variance is a classical result in statistics. Here we prove it for completeness:
\begin{align*}
    & \E_{q(\vx_n)} \Cov_{q(\vx_0|\vx_n)}[\vx_0] + \Cov_{q(\vx_n)} \E_{q(\vx_0|\vx_n)}[\vx_0] \\
    = & \E_{q(\vx_n)} \left( \E_{q(\vx_0|\vx_n)} \vx_0 \vx_0^\top - \E_{q(\vx_0|\vx_n)} [\vx_0] \E_{q(\vx_0|\vx_n)} [\vx_0]^\top \right) \\
    & + \E_{q(\vx_n)} \left( \E_{q(\vx_0|\vx_n)}[\vx_0] \E_{q(\vx_0|\vx_n)}[\vx_0]^\top \right) - \left(\E_{q(\vx_n)} \E_{q(\vx_0|\vx_n)}[\vx_0]\right) \left(\E_{q(\vx_n)} \E_{q(\vx_0|\vx_n)}[\vx_0]\right)^\top \\
    = & \E_{q(\vx_0)} \vx_0 \vx_0^\top - \E_{q(\vx_0)}[\vx_0] \E_{q(\vx_0)}[\vx_0]^\top = \Cov_{q(\vx_0)} [\vx_0].
\end{align*}
\end{proof}

\section{The DP Algorithm for the Least-Cost-Path Problem}
\label{sec:dp}
Given a cost function $J(s, t)$ with $1 \leq s < t$ and $k, n \geq 1$, we want to find a trajectory $1 = \tau_1 < \cdots < \tau_k = n$ of $k$ nodes starting from $1$ and terminating at $n$, s.t., the total cost $J(\tau_1, \tau_2) + J(\tau_2, \tau_3) + \cdots + J(\tau_{k-1}, \tau_k)$ is minimized. Such a problem can be solved by the DP algorithm proposed by \cite{watson2021learning}. Let $C[k, n]$ be the minimized cost of the optimal trajectory, and $D[k, n]$ be the $\tau_{k-1}$ of the optimal trajectory. For simplicity, we also let $J(s, t) = \infty$ for $s \geq t \geq 1$. Then for $k = 1$, we have
$C[1, n] = \left\{ \begin{array}{ll}
    0 & n=1  \\
    \infty & N \geq n>1
\end{array} \right.$ and $D[1, n]=-1$ (here $\infty$ and $-1$ represent undefined values for simplicity). For $N \geq k \geq 2$, we have
\begin{align*}
    & C[k, n] = \left\{ \begin{array}{ll}
        \infty & 1 \leq n < k \\
         \min\limits_{k - 1 \leq s \leq n - 1} C[k-1, s] + J(s, n) = \min\limits_{1 \leq s \leq N} C[k-1, s] + J(s, n) & N \geq n \geq k,
    \end{array}
    \right. \\
    & D[k, n] = \left\{ \begin{array}{ll}
        -1 & 1 \leq n < k \\
         \argmin\limits_{k - 1 \leq s \leq n - 1} C[k-1, s] + J(s, n) = \argmin\limits_{1 \leq s \leq N} C[k-1, s] + J(s, n) & N \geq n \geq k.
    \end{array}
    \right.
\end{align*}

As long as $D$ is calculated, we can get the optimal trajectory recursively by $\tau_K = N$ and $\tau_{k-1} = D[k, \tau_k]$. We summarize the algorithm in Algorithm~\ref{algo:dp}.

\begin{algorithm}[t]
  \begin{algorithmic}[1]
  \caption{The DP algorithm for the least-cost-path problem~\citep{watson2021learning}}
  \label{algo:dp}
    \State {\bfseries Input:} Cost function $J(s, t)$ and integers $K, N$ ($1 \leq K \leq N$)
    \State {\bfseries Output:} The least-cost-trajectory $1 = \tau_1 < \cdots < \tau_K = N$
    \State $C \gets \{\infty\}_{1\leq k,n \leq N}$, $D \gets \{-1\}_{1\leq k,n \leq N}$
    \State $C[1, 1] \gets 0$
    \For{$k=2$ {\bfseries to} $K$} \Comment{Calculate $C$ and $D$}
    \State $CJ \gets \{C[k-1, s] + J(s, n)\}_{1 \leq s \leq N, k \leq n \leq N}$
    \State $C[k, k:] \gets (\min(CJ[:, k]), \min(CJ[:, k+1]), \cdots, \min(CJ[:, N]))$
    \State $D[k, k:] \gets (\argmin(CJ[:, k]), \argmin(CJ[:, k+1]), \cdots, \argmin(CJ[:, N]))$
    \EndFor
    \State $\tau_K = N$
    \For{$k=K$ {\bfseries to} $2$} \Comment{Calculate $\tau$}
    \State $\tau_{k-1} \gets D[k, \tau_k]$
    \EndFor
    \State {\bf return} $\tau$
  \end{algorithmic}
\end{algorithm}

% Generally, $\lambda_{\tau_{k-1}|\tau_k}^2$ is chosen as $\eta^2 \frac{\overline{\beta}_{\tau_{k-1}}}{\overline{\beta}_{\tau_k}} \beta_{\tau_k | \tau_{k-1}}$, where $\eta \in [0, 1]$.

\section{The Bounds of the Optimal Reverse Variance Constrained on a Trajectory}
\label{sec:bd_traj}
In Section~\ref{sec:opt_traj}, we derive the optimal reverse variance constrained on a trajectory. Indeed, the optimal reverse variance can also be bounded similar to Theorem~\ref{thm:bd}. We formalize it in Corollary~\ref{cor:bd_traj}.

\begin{corollary}
\label{cor:bd_traj}
(Bounds of the optimal reverse variance constrained on a trajectory) 

$\sigma_{\tau_{k-1}|\tau_k}^{*2}$ has the following lower and upper bounds:
\begin{align*}
    \lambda_{\tau_{k-1}|\tau_k}^2 \leq \sigma_{\tau_{k-1}|\tau_k}^{*2} \leq \lambda_{\tau_{k-1}|\tau_k}^2 + \left( \sqrt{\frac{\overline{\beta}_{\tau_k} }{\alpha_{\tau_k|\tau_{k-1}}}} - \sqrt{\overline{\beta}_{\tau_{k-1}} - \lambda_{\tau_{k-1}|\tau_k}^2} \right)^2.
\end{align*}

If we further assume $q(\vx_0)$ is a bounded distribution in $[a, b]^d$, where $d$ is the dimension of data, then $\sigma_n^{*2}$ can be further upper bounded by
\begin{align*}
    \sigma_{\tau_{k-1}|\tau_k}^{*2} \leq \lambda_{\tau_{k-1}|\tau_k}^2 + \left(\sqrt{\overline{\alpha}_{\tau_{k-1}}} - \sqrt{\overline{\beta}_{\tau_{k-1}} - \lambda_{\tau_{k-1}|\tau_k}^2} \cdot \sqrt{\frac{ \overline{\alpha}_{\tau_k}}{\overline{\beta}_{\tau_k}}}\right)^2 (\frac{b-a}{2})^2.
\end{align*}
\end{corollary}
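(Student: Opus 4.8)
The plan is to exploit the fact that the trajectory-constrained forward process $q(\vx_{\tau_1},\cdots,\vx_{\tau_K}|\vx_0)$ in Eq.~(\ref{eq:short}) has exactly the same structure as the full forward process in Eq.~(\ref{eq:ddim}), so that Corollary~\ref{cor:bd_traj} is Theorem~\ref{thm:bd} transcribed under the index substitution $n \mapsto \tau_k$, $n-1 \mapsto \tau_{k-1}$, $\alpha_n \mapsto \alpha_{\tau_k|\tau_{k-1}}$, and $\lambda_n^2 \mapsto \lambda_{\tau_{k-1}|\tau_k}^2$. First I would verify the two structural facts that drive the earlier lemmas: the marginal $q(\vx_{\tau_k}|\vx_0) = \gN(\vx_{\tau_k}|\sqrt{\overline{\alpha}_{\tau_k}}\vx_0, \overline{\beta}_{\tau_k}\mI)$ survives subsampling (the marginals are independent of $\lambda$), and $q(\vx_{\tau_{k-1}}|\vx_{\tau_k},\vx_0)$ is Gaussian with covariance $\lambda_{\tau_{k-1}|\tau_k}^2\mI$ and mean linear in $\vx_0$. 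These are the only properties used in Lemma~\ref{lem:opt_mm_cov}, Lemma~\ref{lem:cov}, and Theorem~\ref{thm:opt_mm}, so those results carry over verbatim and produce the analytic form of $\sigma_{\tau_{k-1}|\tau_k}^{*2}$ already stated in Section~\ref{sec:opt_traj}.

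For the two distribution-free bounds I would start from the trajectory analogue of Lemma~\ref{lem:opt_mm_cov},
\[
\sigma_{\tau_{k-1}|\tau_k}^{*2} = \lambda_{\tau_{k-1}|\tau_k}^2 + \left(\sqrt{\overline{\alpha}_{\tau_{k-1}}} - \sqrt{\overline{\beta}_{\tau_{k-1}}-\lambda_{\tau_{k-1}|\tau_k}^2}\cdot\sqrt{\tfrac{\overline{\alpha}_{\tau_k}}{\overline{\beta}_{\tau_k}}}\right)^2 \E_{q(\vx_{\tau_k})}\tfrac{\tr(\Cov_{q(\vx_0|\vx_{\tau_k})}[\vx_0])}{d}.
\]
The lower bound $\lambda_{\tau_{k-1}|\tau_k}^2 \le \sigma_{\tau_{k-1}|\tau_k}^{*2}$ is then immediate, since the squared coefficient and the expected conditional covariance trace are both nonnegative. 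For the distribution-free upper bound I would switch to the score representation (Theorem~\ref{thm:opt_mm} in trajectory form) and use Lemma~\ref{lem:cov} to identify the factor $1 - \overline{\beta}_{\tau_k}\E_{q(\vx_{\tau_k})}\tfrac{\|\nabla_{\vx_{\tau_k}}\log q(\vx_{\tau_k})\|^2}{d}$ with $\tfrac{\overline{\alpha}_{\tau_k}}{\overline{\beta}_{\tau_k}}\E_{q(\vx_{\tau_k})}\tfrac{\tr(\Cov_{q(\vx_0|\vx_{\tau_k})}[\vx_0])}{d}$; this factor is nonnegative (covariance trace) and at most $1$ (squared norm), so dropping it yields the claimed bound.

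For the bounded upper bound I would reuse the covariance representation together with the observation that if $q(\vx_0)$ is supported in $[a,b]^d$, then so is the conditional $q(\vx_0|\vx_{\tau_k})$ for every $\vx_{\tau_k}$; Lemma~\ref{lem:cov_bd} then gives $\E_{q(\vx_{\tau_k})}\tr(\Cov_{q(\vx_0|\vx_{\tau_k})}[\vx_0])/d \le \left(\tfrac{b-a}{2}\right)^2$, and substituting this bound into the display above produces the third inequality. The main obstacle — indeed essentially the only point requiring care — is confirming that the subsampled process genuinely inherits the two structural properties above, so that the moment-matching and score-representation lemmas apply without modification; once the index bookkeeping between $\alpha_{\tau_k|\tau_{k-1}} = \overline{\alpha}_{\tau_k}/\overline{\alpha}_{\tau_{k-1}}$ and the plain $\alpha_n$ is pinned down, each inequality is a direct transcription of the corresponding step in the proof of Theorem~\ref{thm:bd}.
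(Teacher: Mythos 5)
Your proposal is correct and takes essentially the same route as the paper: the paper states Corollary~\ref{cor:bd_traj} as a direct transcription of Theorem~\ref{thm:bd} to the subsampled process, and your check that the trajectory-constrained forward process inherits the Gaussian marginal $q(\vx_{\tau_k}|\vx_0)$ and the Gaussian conditional $q(\vx_{\tau_{k-1}}|\vx_{\tau_k},\vx_0)$ with mean linear in $\vx_0$ is exactly what licenses reusing Lemma~\ref{lem:opt_mm_cov}, Lemma~\ref{lem:cov}, and Lemma~\ref{lem:cov_bd} under the substitution $\alpha_n \mapsto \alpha_{\tau_k|\tau_{k-1}}$, $\lambda_n^2 \mapsto \lambda_{\tau_{k-1}|\tau_k}^2$. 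Your three steps (lower bound from the nonnegative covariance trace, distribution-free upper bound from the score-form factor lying in $[0,1]$, and bounded-support upper bound from Lemma~\ref{lem:cov_bd} applied to $q(\vx_0|\vx_{\tau_k})$) mirror the paper's proof of Theorem~\ref{thm:bd} exactly, with more explicit bookkeeping than the paper itself provides.
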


\section{Simplified Results for the DDPM Forward Process}
\label{sec:ddpm}
The optimal solution $\vmu_n^*(\vx_n)$ and $\sigma_n^{*2}$ in Theorem~\ref{thm:opt_mm} and the bounds of $\sigma_n^{*2}$ in Theorem~\ref{thm:bd} can be directly simplified for the DDPM forward process by letting $\lambda_n^2 = \tilde{\beta}_n$. We list the simplified results in Corollary~\ref{cor:opt_mm_ddpm} and Corollary~\ref{cor:bd_ddpm}.

\begin{corollary}
\label{cor:opt_mm_ddpm}
(Simplified score representation of the optimal solution)  % OK

When $\lambda_n^2 = \tilde{\beta}_n$, the optimal solution $\vmu_n^*(\vx_n)$ and $\sigma_n^{*2}$ to Eq.~{(\ref{eq:elbo})} are
\begin{align*}
    & \vmu_n^*(\vx_n) = \frac{1}{\sqrt{\alpha_n}}(\vx_n + \beta_n \nabla_{\vx_n} \log q_n(\vx_n)), \\
    & \sigma_n^{*2} = \frac{\beta_n}{\alpha_n} (1 - \beta_n \E_{q_n(\vx_n)} \frac{|| \nabla_{\vx_n} \log q_n (\vx_n) ||^2}{d}).
\end{align*}
\end{corollary}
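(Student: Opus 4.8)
The plan is to obtain Corollary~\ref{cor:opt_mm_ddpm} purely by substituting $\lambda_n^2 = \tilde{\beta}_n$ into the two general formulas of Theorem~\ref{thm:opt_mm} (Eq.~(\ref{eq:opt_mu}) and Eq.~(\ref{eq:opt_sigma})) and reducing the result with the definitional relations $\alpha_n = 1 - \beta_n$, $\overline{\alpha}_n = \prod_{i=1}^n \alpha_i$, $\overline{\beta}_n = 1 - \overline{\alpha}_n$, and $\tilde{\beta}_n = \frac{\overline{\beta}_{n-1}}{\overline{\beta}_n}\beta_n$. Everything is organized by a single elementary identity, which I would record first:
\begin{align*}
\alpha_n \overline{\beta}_{n-1} = \alpha_n(1 - \overline{\alpha}_{n-1}) = \alpha_n - \overline{\alpha}_n = (1 - \overline{\alpha}_n) - (1 - \alpha_n) = \overline{\beta}_n - \beta_n.
\end{align*}
From this I would immediately derive the radical that recurs throughout: since $\overline{\beta}_{n-1} - \tilde{\beta}_n = \overline{\beta}_{n-1}\bigl(1 - \tfrac{\beta_n}{\overline{\beta}_n}\bigr) = \overline{\beta}_{n-1}\cdot\frac{\overline{\beta}_n - \beta_n}{\overline{\beta}_n} = \frac{\alpha_n\overline{\beta}_{n-1}^2}{\overline{\beta}_n}$, we get $\sqrt{\overline{\beta}_{n-1} - \tilde{\beta}_n} = \sqrt{\alpha_n}\,\overline{\beta}_{n-1}/\sqrt{\overline{\beta}_n}$.

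For the mean, writing $\vg \coloneqq \nabla_{\vx_n}\log q_n(\vx_n)$ and $\vx_0^* \coloneqq \frac{1}{\sqrt{\overline{\alpha}_n}}(\vx_n + \overline{\beta}_n \vg)$ for the argument appearing in Eq.~(\ref{eq:opt_mu}), I would first exploit the cancellation $\vx_n - \sqrt{\overline{\alpha}_n}\,\vx_0^* = -\overline{\beta}_n \vg$. Substituting $\vx_0^*$ into the definition of $\tilde{\vmu}_n$ and using $\sqrt{\overline{\alpha}_{n-1}/\overline{\alpha}_n} = 1/\sqrt{\alpha_n}$ turns the two summands of $\tilde{\vmu}_n$ into $\frac{1}{\sqrt{\alpha_n}}(\vx_n + \overline{\beta}_n \vg)$ and, via the radical above, $-\sqrt{\alpha_n}\,\overline{\beta}_{n-1}\vg$. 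Collecting the score terms and applying $\overline{\beta}_n - \alpha_n\overline{\beta}_{n-1} = \beta_n$ then yields $\vmu_n^*(\vx_n) = \frac{1}{\sqrt{\alpha_n}}(\vx_n + \beta_n \vg)$, as claimed.

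For the variance, I would evaluate the coefficient $\sqrt{\overline{\beta}_n/\alpha_n} - \sqrt{\overline{\beta}_{n-1} - \tilde{\beta}_n}$ in Eq.~(\ref{eq:opt_sigma}) using the same radical simplification; placing both terms over the common denominator $\sqrt{\alpha_n\overline{\beta}_n}$ and again invoking $\overline{\beta}_n - \alpha_n\overline{\beta}_{n-1} = \beta_n$ collapses it to $\beta_n/\sqrt{\alpha_n\overline{\beta}_n}$, so its square is $\beta_n^2/(\alpha_n\overline{\beta}_n)$. Plugging this into Eq.~(\ref{eq:opt_sigma}) and expanding, the score-independent part is $\tilde{\beta}_n + \beta_n^2/(\alpha_n\overline{\beta}_n)$, which I would factor as $\frac{\beta_n}{\overline{\beta}_n}\bigl(\overline{\beta}_{n-1} + \tfrac{\beta_n}{\alpha_n}\bigr) = \frac{\beta_n}{\overline{\beta}_n}\cdot\frac{\alpha_n\overline{\beta}_{n-1} + \beta_n}{\alpha_n} = \frac{\beta_n}{\alpha_n}$, where the last step uses $\alpha_n\overline{\beta}_{n-1} + \beta_n = \overline{\beta}_n$. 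The score-dependent part is $-\frac{\beta_n^2}{\alpha_n}\E_{q_n(\vx_n)}\frac{||\vg||^2}{d}$, so factoring $\beta_n/\alpha_n$ out of both pieces gives exactly $\sigma_n^{*2} = \frac{\beta_n}{\alpha_n}\bigl(1 - \beta_n\E_{q_n(\vx_n)}\frac{||\vg||^2}{d}\bigr)$.

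There is no conceptual obstacle here: the statement is a specialization of Theorem~\ref{thm:opt_mm}, so the entire argument is mechanical substitution driven by the identity $\alpha_n\overline{\beta}_{n-1} = \overline{\beta}_n - \beta_n$. The only real risk is bookkeeping — keeping the nested square roots and the $\overline{\beta}_{n-1}$ versus $\overline{\beta}_n$ indices straight, and in particular getting the sign right in the cancellation $\vx_n - \sqrt{\overline{\alpha}_n}\,\vx_0^* = -\overline{\beta}_n\vg$, which is precisely what makes both the mean's score coefficient and the variance's coefficient reduce so cleanly to expressions in $\beta_n$ and $\alpha_n$ alone.
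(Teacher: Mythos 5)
Your proposal is correct and takes exactly the approach the paper intends: the paper presents Corollary~\ref{cor:opt_mm_ddpm} as a direct simplification of Theorem~\ref{thm:opt_mm} under $\lambda_n^2 = \tilde{\beta}_n$ with the algebra left to the reader, and you have carried out that substitution explicitly and without error. Your organizing identity $\alpha_n\overline{\beta}_{n-1} = \overline{\beta}_n - \beta_n$, the resulting radical $\sqrt{\overline{\beta}_{n-1} - \tilde{\beta}_n} = \sqrt{\alpha_n}\,\overline{\beta}_{n-1}/\sqrt{\overline{\beta}_n}$, and both reductions (mean and variance) all check out.
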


\begin{corollary}
\label{cor:bd_ddpm}
(Simplified bounds of the optimal reverse variance)  % OK

When $\lambda_n^2 = \tilde{\beta}_n$, $\sigma_n^{*2}$ has the following lower and upper bounds:
\begin{align*}
    \tilde{\beta}_n \leq \sigma_n^{*2} \leq \frac{\beta_n}{\alpha_n}.
\end{align*}

If we further assume $q(\vx_0)$ is a bounded distribution in $[a, b]^d$, where $d$ is the dimension of data, then $\sigma_n^{*2}$ can be further upper bounded by
\begin{align*}
    \sigma_n^{*2} \leq  \tilde{\beta}_n + \frac{\overline{\alpha}_{n-1}\beta_n^2}{\overline{\beta}_n^2} \left(\frac{b-a}{2}\right)^2 .
\end{align*}
\end{corollary}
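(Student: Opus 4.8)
The plan is to obtain Corollary~\ref{cor:bd_ddpm} as a direct specialization of Theorem~\ref{thm:bd}: I substitute $\lambda_n^2 = \tilde{\beta}_n = \frac{\overline{\beta}_{n-1}}{\overline{\beta}_n}\beta_n$ into the two bounds of that theorem and simplify the resulting coefficients using only the elementary relations $\alpha_n = 1 - \beta_n$, $\overline{\alpha}_n = \alpha_n \overline{\alpha}_{n-1}$, and $\overline{\beta}_n = 1 - \overline{\alpha}_n$. No new probabilistic content is needed; the entire task is algebraic bookkeeping on the square-root expressions, and the lower bound is immediate since $\lambda_n^2 = \tilde{\beta}_n$.

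For the upper bounds, the single auxiliary computation I would carry out first is to rewrite $\overline{\beta}_{n-1} - \tilde{\beta}_n$. Using $\overline{\beta}_n - \beta_n = \alpha_n - \overline{\alpha}_n = \alpha_n \overline{\beta}_{n-1}$, one gets
\begin{align*}
    \overline{\beta}_{n-1} - \tilde{\beta}_n = \overline{\beta}_{n-1}\left(1 - \frac{\beta_n}{\overline{\beta}_n}\right) = \frac{\alpha_n \overline{\beta}_{n-1}^2}{\overline{\beta}_n}, \quad\text{so}\quad \sqrt{\overline{\beta}_{n-1} - \tilde{\beta}_n} = \overline{\beta}_{n-1}\sqrt{\frac{\alpha_n}{\overline{\beta}_n}}.
\end{align*}
I would also record the identity $\overline{\beta}_n - \alpha_n \overline{\beta}_{n-1} = \beta_n$ (again immediate from $\overline{\alpha}_n = \alpha_n \overline{\alpha}_{n-1}$), which is what makes the numerators collapse below.

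For the first (distribution-free) upper bound, substituting the square-root identity above into $\left(\sqrt{\overline{\beta}_n/\alpha_n} - \sqrt{\overline{\beta}_{n-1} - \lambda_n^2}\right)^2$ and putting the two terms over the common denominator $\sqrt{\alpha_n \overline{\beta}_n}$ produces a numerator $\overline{\beta}_n - \alpha_n \overline{\beta}_{n-1} = \beta_n$, so the coefficient equals $\beta_n^2/(\alpha_n \overline{\beta}_n)$; adding $\tilde{\beta}_n$ and factoring out $\beta_n/\overline{\beta}_n$ gives $\frac{\beta_n}{\overline{\beta}_n}\left(\overline{\beta}_{n-1} + \frac{\beta_n}{\alpha_n}\right) = \frac{\beta_n}{\alpha_n}$, using $\alpha_n \overline{\beta}_{n-1} + \beta_n = \overline{\beta}_n$. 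For the bounded-data upper bound, the same substitution turns the factor $\sqrt{\overline{\beta}_{n-1} - \lambda_n^2}\cdot\sqrt{\overline{\alpha}_n/\overline{\beta}_n}$ into $\overline{\beta}_{n-1}\alpha_n\sqrt{\overline{\alpha}_{n-1}}/\overline{\beta}_n$ (via $\sqrt{\alpha_n\overline{\alpha}_n} = \alpha_n\sqrt{\overline{\alpha}_{n-1}}$), so the bracket becomes $\sqrt{\overline{\alpha}_{n-1}}\left(1 - \frac{\alpha_n \overline{\beta}_{n-1}}{\overline{\beta}_n}\right) = \sqrt{\overline{\alpha}_{n-1}}\cdot\frac{\beta_n}{\overline{\beta}_n}$; squaring yields the coefficient $\overline{\alpha}_{n-1}\beta_n^2/\overline{\beta}_n^2$, matching the claim.

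I expect no genuine obstacle: the argument is a chain of substitutions, and the only place to slip is mishandling $\overline{\alpha}_n = \alpha_n \overline{\alpha}_{n-1}$ inside the nested radicals (for instance in $\sqrt{\alpha_n\overline{\alpha}_n}$). Keeping every quantity expressed through $\alpha_n, \overline{\alpha}_{n-1}, \beta_n$ before taking square roots, and repeatedly invoking $\overline{\beta}_n - \alpha_n\overline{\beta}_{n-1} = \beta_n$, makes each simplification transparent.
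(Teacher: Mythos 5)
Your proposal is correct and is exactly the paper's route: the paper obtains Corollary~\ref{cor:bd_ddpm} by substituting $\lambda_n^2 = \tilde{\beta}_n$ into Theorem~\ref{thm:bd} and simplifying, which is precisely what you do (the paper merely omits the algebra you spell out). Your key identities $\overline{\beta}_{n-1} - \tilde{\beta}_n = \alpha_n\overline{\beta}_{n-1}^2/\overline{\beta}_n$ and $\overline{\beta}_n - \alpha_n\overline{\beta}_{n-1} = \beta_n$ are both valid, and the resulting coefficients $\beta_n/\alpha_n$ and $\overline{\alpha}_{n-1}\beta_n^2/\overline{\beta}_n^2$ check out.
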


As for the shorter forward process defined in Eq.~{(\ref{eq:short})}, it also includes the DDPM as a special case when $\lambda_{\tau_{k-1}|\tau_k}^2 = \tilde{\beta}_{\tau_{k-1}|\tau_k}$, where $\tilde{\beta}_{\tau_{k-1}|\tau_k} \coloneqq \frac{\overline{\beta}_{\tau_{k-1}}}{\overline{\beta}_{\tau_k}} \beta_{\tau_k | \tau_{k-1}}$. Similar to Corollary~\ref{cor:opt_mm_ddpm}, the optimal mean and variance of its reverse process can also be simplified for DDPMs by letting $\lambda_{\tau_{k-1}|\tau_k}^2 = \tilde{\beta}_{\tau_{k-1}|\tau_k}$. Formally, the simplified optimal mean and variance are
\begin{align*}
    & \vmu_{\tau_{k-1} | \tau_k}^* (\vx_{\tau_k}) = \frac{1}{\sqrt{\alpha_{\tau_k|\tau_{k-1}}}}(\vx_{\tau_k} + \beta_{\tau_k|\tau_{k-1}} \nabla_{\vx_{\tau_k}} \log q_{\tau_k}(\vx_{\tau_k})), \\
    & \sigma_{\tau_{k-1} | \tau_k}^{*2} = \frac{\beta_{\tau_k|\tau_{k-1}}}{\alpha_{\tau_k|\tau_{k-1}}}(1-\beta_{\tau_k|\tau_{k-1}} \E_{q_{\tau_k}(\vx_{\tau_k})}\frac{||\nabla_{\vx_{\tau_k}} \log q_{\tau_k}(\vx_{\tau_k})||^2}{d}).
\end{align*}

Besides, Theorem~\ref{thm:kl_traj} can also be simplified for DDPMs. We list the simplified result in Corollary~\ref{cor:kl_traj_ddpm}.
\begin{corollary}
\label{cor:kl_traj_ddpm}
(Simplified decomposed optimal KL)

When $\lambda_n^2 = \tilde{\beta}_n$, the KL divergence between the subprocess and its optimal reverse process is
\begin{align*}
    & \KL(q(\vx_0, \vx_{\tau_1}, \cdots, \vx_{\tau_K})||p^*(\vx_0, \vx_{\tau_1}, \cdots, \vx_{\tau_K})) = \frac{d}{2} \sum\limits_{k=2}^K J(\tau_{k-1}, \tau_k) + c, \\
    & \mbox{where } J(\tau_{k-1}, \tau_k) = \log (1-\beta_{\tau_k | \tau_{k-1}} \E_{q_{\tau_k}(\vx_{\tau_k})}\frac{||\nabla_{\vx_{\tau_k}} \log q_{\tau_k}(\vx_{\tau_k})||^2}{d}),
\end{align*}
and $c$ is a constant unrelated to the trajectory $\tau$.
\end{corollary}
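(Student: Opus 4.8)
The plan is to obtain Corollary~\ref{cor:kl_traj_ddpm} by specializing Theorem~\ref{thm:kl_traj} to the DDPM case $\lambda_{\tau_{k-1}|\tau_k}^2 = \tilde{\beta}_{\tau_{k-1}|\tau_k}$ and then simplifying the per-edge cost. Theorem~\ref{thm:kl_traj} already states that the KL equals $\frac{d}{2}\sum_{k=2}^K \log\frac{\sigma_{\tau_{k-1}|\tau_k}^{*2}}{\lambda_{\tau_{k-1}|\tau_k}^2}$ plus a trajectory-independent constant, so the entire task reduces to rewriting $\log\frac{\sigma_{\tau_{k-1}|\tau_k}^{*2}}{\tilde{\beta}_{\tau_{k-1}|\tau_k}}$ in terms of the simplified cost $J(\tau_{k-1},\tau_k) = \log(1 - \beta_{\tau_k|\tau_{k-1}}\E_{q_{\tau_k}(\vx_{\tau_k})}\frac{||\nabla_{\vx_{\tau_k}}\log q_{\tau_k}(\vx_{\tau_k})||^2}{d})$ claimed in the corollary.

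First I would substitute the simplified DDPM optimal variance $\sigma_{\tau_{k-1}|\tau_k}^{*2} = \frac{\beta_{\tau_k|\tau_{k-1}}}{\alpha_{\tau_k|\tau_{k-1}}}(1 - \beta_{\tau_k|\tau_{k-1}}\E_{q_{\tau_k}(\vx_{\tau_k})}\frac{||\nabla_{\vx_{\tau_k}}\log q_{\tau_k}(\vx_{\tau_k})||^2}{d})$, already displayed just above the corollary, together with $\tilde{\beta}_{\tau_{k-1}|\tau_k} = \frac{\overline{\beta}_{\tau_{k-1}}}{\overline{\beta}_{\tau_k}}\beta_{\tau_k|\tau_{k-1}}$, into the ratio. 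The common factor $\beta_{\tau_k|\tau_{k-1}}$ cancels, and using $\alpha_{\tau_k|\tau_{k-1}} = \overline{\alpha}_{\tau_k}/\overline{\alpha}_{\tau_{k-1}}$ I expect the ratio to collapse to $\frac{\overline{\alpha}_{\tau_{k-1}}\overline{\beta}_{\tau_k}}{\overline{\alpha}_{\tau_k}\overline{\beta}_{\tau_{k-1}}}(1 - \beta_{\tau_k|\tau_{k-1}}\E_{q_{\tau_k}(\vx_{\tau_k})}\frac{||\nabla_{\vx_{\tau_k}}\log q_{\tau_k}(\vx_{\tau_k})||^2}{d})$. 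Taking logarithms and introducing the potential $h(m) \coloneqq \log(\overline{\alpha}_m/\overline{\beta}_m)$, this yields the clean identity $\log\frac{\sigma_{\tau_{k-1}|\tau_k}^{*2}}{\tilde{\beta}_{\tau_{k-1}|\tau_k}} = J(\tau_{k-1},\tau_k) + h(\tau_{k-1}) - h(\tau_k)$.

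The key step is then to sum this identity over $k = 2,\ldots,K$: the difference terms $h(\tau_{k-1}) - h(\tau_k)$ telescope to $h(\tau_1) - h(\tau_K) = h(1) - h(N)$, which is independent of the \emph{interior} timesteps because every admissible trajectory has the fixed endpoints $\tau_1 = 1$ and $\tau_K = N$. Multiplying by $\frac{d}{2}$ and absorbing $\frac{d}{2}(h(1) - h(N))$ into the constant $c$ supplied by Theorem~\ref{thm:kl_traj} produces a new trajectory-independent constant, while the remaining sum is exactly $\frac{d}{2}\sum_{k=2}^K J(\tau_{k-1},\tau_k)$, as claimed.

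The main obstacle, and the only genuinely nontrivial point, is recognizing that the simplified cost $J$ in the corollary is \emph{not} literally equal to the exact edge cost $\log(\sigma_{\tau_{k-1}|\tau_k}^{*2}/\tilde{\beta}_{\tau_{k-1}|\tau_k})$ of Theorem~\ref{thm:kl_traj}; the two differ edge by edge. What rescues the statement is that the discrepancy has the special telescoping form $h(\tau_{k-1}) - h(\tau_k)$, so over a fixed-endpoint path it contributes only a boundary constant and leaves the per-edge costs relevant to the dynamic-programming search effectively unchanged. Guessing the right potential $h(m) = \log(\overline{\alpha}_m/\overline{\beta}_m)$ and verifying the cancellation of $\beta_{\tau_k|\tau_{k-1}}$ in the ratio are where the modest algebra lies; everything else is bookkeeping.
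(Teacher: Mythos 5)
Your proposal is correct and takes essentially the same route as the paper, which presents Corollary~\ref{cor:kl_traj_ddpm} as a direct specialization of Theorem~\ref{thm:kl_traj} to $\lambda_{\tau_{k-1}|\tau_k}^2 = \tilde{\beta}_{\tau_{k-1}|\tau_k}$ without spelling out the algebra. Your identity $\log\bigl(\sigma_{\tau_{k-1}|\tau_k}^{*2}/\tilde{\beta}_{\tau_{k-1}|\tau_k}\bigr) = J(\tau_{k-1},\tau_k) + h(\tau_{k-1}) - h(\tau_k)$ with $h(m) = \log(\overline{\alpha}_m/\overline{\beta}_m)$ correctly supplies the one step the paper leaves implicit: the corollary's simplified per-edge cost differs from Theorem~\ref{thm:kl_traj}'s cost edge by edge, but since every admissible trajectory has fixed endpoints $\tau_1 = 1$ and $\tau_K = N$, the discrepancy telescopes to the constant $\frac{d}{2}\bigl(h(1)-h(N)\bigr)$, which is absorbed into $c$.
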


\section{Extension to Diffusion Process with Continuous Timesteps}
\label{sec:cdp}
\cite{song2020score} generalizes the diffusion process to continuous timesteps by introducing a stochastic differential equation (SDE) $d\vz = f(t)\vz \mathrm{d}t + g(t) \mathrm{d} \vw$. Without loss of generality, we consider the parameterization of $f(t)$ and $g(t)$ introduced by \cite{kingma2021variational}
\begin{align*}
    f(t) = \frac{1}{2} \frac{\mathrm{d} \log \overline{\alpha}_t}{ \mathrm{d}t}, \quad g(t)^2 = \frac{\mathrm{d} \overline{\beta}_t}{\mathrm{d}t} - \frac{\mathrm{d} \log \overline{\alpha}_t}{ \mathrm{d}t} \overline{\beta}_t,
\end{align*}
where $\overline{\alpha}_t$ and $\overline{\beta}_t$ are scalar-valued functions satisfying some regular conditions~\citep{kingma2021variational} with domain $t \in [0, 1]$. Such a parameterization induces a diffusion process on continuous timesteps $q(\vx_0, \vz_{[0, 1]})$, s.t., 
\begin{align*}
    & q(\vz_t|\vx_0) = \gN(\vz_t|\sqrt{\overline{\alpha}_t} \vx_0, \overline{\beta}_t \mI), \quad \forall t \in [0, 1], \\
    & q(\vz_t|\vz_s) = \gN(\vz_t| \sqrt{\alpha_{t|s}}\vz_s, \beta_{t|s} \mI), \quad \forall 0 \leq s < t \leq 1,
\end{align*}
where $\alpha_{t|s} \coloneqq \overline{\alpha}_t / \overline{\alpha}_s$ and $\beta_{t|s} \coloneqq \overline{\beta}_t - \alpha_{t|s} \overline{\beta}_s$.

\subsection{Analytic Estimate of the Optimal Reverse Variance}
\label{sec:cdp_opt_v}
\cite{kingma2021variational} introduce $p(\vz_s|\vz_t) = \gN(\vz_s|\vmu_{s|t}(\vz_t), \sigma_{s|t}^2)$ ($s<t$) to reverse from timestep $t$ to timestep $s$, where $\sigma_{s|t}^2$ is fixed to $\frac{\overline{\beta}_s}{\overline{\beta}_t}\beta_{s|t}$. In contrast, we show that $\sigma_{s|t}^2$ also has an optimal solution in an analytic form of the score function under the sense of KL minimization. According to Lemma~\ref{lem:opt_gauss_markov} and Lemma~\ref{lem:cov}, we have
\begin{align*}
    & \vmu_{s|t}^*(\vz_t) = \E_{q(\vz_s|\vz_t)}[\vz_s] = \frac{1}{\sqrt{\alpha_{t|s}}}(\vz_t + \beta_{t|s} \nabla_{\vz_t} \log q(\vz_t)),\\
    & \sigma_{s|t}^{*2} = \E_q \frac{ \tr(\Cov_{q(\vz_s|\vz_t)}[\vz_s])}{d} = \frac{\beta_{t|s}}{\alpha_{t|s}} (1 - \beta_{t|s} \E_{q(\vz_t)} \frac{||\nabla_{\vz_t} \log q(\vz_t)||^2}{d}).
\end{align*}

Thereby, both the optimal mean and variance have a closed form expression w.r.t. the score function. 
In this case, we first estimate the expected mean squared norm of the score function by $\Gamma_t$ for $t \in [0, 1]$, where
\begin{align*}
    \Gamma_t = \E_{q(\vz_t)} \frac{||\vs_t(\vz_t)||^2}{d}.
\end{align*}
Notice that there are infinite timesteps in $[0, 1]$. In practice, we can only choose a finite number of timesteps $0 = t_1 < \cdots < t_N = 1$ and calculate $\Gamma_{t_n}$. For a timestep $t$ between $t_{n-1}$ and $t_n$, we can use a linear interpolation between $\Gamma_{t_{n-1}}$ and $\Gamma_{t_n}$. Then, we can estimate $\sigma_{s|t}^{*2}$ by
\begin{align*}
    \hat{\sigma}_{s|t}^2 = \frac{\beta_{t|s}}{\alpha_{t|s}} (1 - \beta_{t|s} \Gamma_t).
\end{align*}

\subsection{Analytic Estimation of the Optimal Reverse Trajectory}
\label{sec:cdp_opt_t}

Now we consider optimize the trajectory $0 = \tau_1 < \cdots < \tau_K = 1$ in the sense of KL minimization
\begin{align*}
    \min\limits_{\tau_1, \cdots, \tau_K} \KL(q(\vx_0, \vz_{\tau_1}, \cdots, \vz_{\tau_K})||p^*(\vx_0, \vz_{\tau_1}, \cdots, \vz_{\tau_K})).
\end{align*}

Similar to Theorem~\ref{thm:kl_traj}, the optimal KL is
\begin{align*}
    \KL(q(\vx_0, \vz_{\tau_1}, \cdots, \vz_{\tau_K})||p^*(\vx_0, \vz_{\tau_1}, \cdots, \vz_{\tau_K})) = \frac{d}{2} \sum\limits_{k=2}^K J(\tau_{k-1}, \tau_k) + c,
\end{align*}
where $J(\tau_{k-1}, \tau_k) = \log (1 - \beta_{\tau_k|\tau_{k-1}} \E_q \frac{||\nabla_{\vz_{\tau_k}} \log q(\vz_{\tau_k})||^2}{d})$ and $c$ is unrelated to $\tau$. The difference is that $J(s, t)$ is defined on a continuous range $0 \leq s < t \leq 1$ and the DP algorithm is not directly applicable. However, we can restrict $J(s, t)$ on a finite number of timesteps $0 = t_1 < \cdots < t_N = 1$. Then we can apply the DP algorithm (see Algorithm~\ref{algo:dp}) to the restricted $J(s, t)$.

\clearpage
\section{Experimental Details}
\label{sec:exp_details}

\subsection{Details of Score-Based Models}
\label{sec:detail_sbm}

The CelebA 64x64 pretrained score-based model is provided in the official code (\url{https://github.com/ermongroup/ddim}) of \cite{song2020denoising}. The LSUN Bedroom pretrained score-based model is provided in the official code (\url{https://github.com/hojonathanho/diffusion}) of \cite{ho2020denoising}. Both of them have a total of $N=1000$ timesteps and use the linear schedule~\citep{ho2020denoising} as the forward noise schedule.

The ImageNet 64x64 pretrained score-based model is the unconditional $L_{\mathrm{hybrid}}$ model provided in the official code (\url{https://github.com/openai/improved-diffusion}) of \cite{nichol2021improved}. The model includes both the mean and variance networks, where the mean network is trained with Eq.~{(\ref{eq:re_elbo})} as the standard DDPM~\citep{ho2020denoising} and the variance network is trained with the $\lvb$ objective. We only use the mean network. %\footnote{As mentioned in Section~\ref{sec:ll}, our analytic-DPM achieves a 3.61 negative log-likelihood, which is very close to 3.57 achieved by the variance network. Besides, our analytic-DPM also performs well under the even trajectory, which contrasts to the log-likelihood performance decreasing under the even trajectory of the variance network reported by~\cite{nichol2021improved}.}. 
The model has a total of $N=4000$ timesteps and its forward noise schedule is the cosine schedule~\citep{nichol2021improved}.

The CIFAR10 score-based models are trained by ourselves. They have a total of $N=1000$ timesteps and are trained with the linear forward noise schedule and the cosine forward noise schedule respectively. We use the same U-Net model architecture to \cite{nichol2021improved}. Following~\cite{nichol2021improved}, we train 500K iterations with a batch size of 128, use a learning rate of 0.0001 with the AdamW optimizer~\citep{loshchilov2017decoupled} and use an exponential moving average (EMA) with a rate of 0.9999. We save a checkpoint every 10K iterations and select the checkpoint according to the FID results on 1000 samples generated under the reverse variance $\sigma_n^2 = \beta_n$ and full timesteps.

\subsection{Log-likelihood and Sampling}
\label{sec:detail_ll_sample}
Following \cite{ho2020denoising}, we linearly scale the image data consisting of integers in $\{0, 1, \cdots, 255\}$ to $[-1, 1]$, and discretize the last reverse Markov transition $p(\vx_0|\vx_1)$ to obtain discrete log-likelihoods for image data.

Following \cite{ho2020denoising}, at the end of sampling, we only display the mean of $p(\vx_0|\vx_1)$ and discard the noise. This is equivalent to setting a clipping threshold of zero for the noise scale $\sigma_1$. Inspired by this, when sampling, we also clip the noise scale $\sigma_2$ of $p(\vx_1|\vx_2)$, such that $\E|\sigma_2 \epsilon| \leq \frac{2}{255} y$, where $\epsilon$ is the standard Gaussian noise and $y$ is the maximum tolerated perturbation of a channel. It improves the sample quality, especially for our analytic estimate (see Appendix~\ref{sec:clip_ablation}). We clip $\sigma_2$ for all methods compared in Table~\ref{tab:fid}, and choose $y \in \{1, 2\}$ according to the FID score. We find $y=2$ works better on CIFAR10 (LS) and CelebA 64x64 with Analytic-DDPM. For other cases, we find $y=1$ works better.

% On CIFAR10, we calculate $\Gamma$ using all training samples. On ImageNet 64x64 and CelebA 64x64, we calculate $\Gamma$ using 10K training samples. On LSUN Bedroom, we calculate $\Gamma$ using 1K training samples.

We use the official implementation of FID to pytorch (\url{https://github.com/mseitzer/pytorch-fid}). We calculate the FID score on 50K generated samples on all datasets. Following~\cite{nichol2021improved}, the reference distribution statistics are computed on the full training set for CIFAR10 and ImageNet 64x64. For CelebA 64x64 and LSUN Bedroom, the reference distribution statistics is computed on 50K training samples.

\subsection{Choice of the Number of Monte Carlo Samples and Calculation of $\Gamma$}
We use a maximal $M$ without introducing too much computation. Specifically, we set $M=50000$ on CIFAR10, $M=10000$ on CelebA 64x64 and ImageNet 64x64 and $M=1000$ on LSUN Bedroom by default without a sweep. All of the samples are from the training dataset. We use the default settings of $M$ for all results in Table~\ref{tab:bpd}, Table~\ref{tab:fid} and Table~\ref{tab:timesteps}.

We only calculate $\Gamma$ in Eq.~{(\ref{eq:ms_score})} \textbf{once} for a pretrained model, and $\Gamma$ is reused during inference under different settings (e.g., trajectories of smaller $K$) in Table~\ref{tab:bpd}, Table~\ref{tab:fid} and Table~\ref{tab:timesteps}.

\subsection{Implementation of the Even Trajectory}
\label{sec:et}
We follow \cite{nichol2021improved} for the implementation of the even trajectory. Given the number of timesteps $K$ of a trajectory, we firstly determine the stride $a = \frac{N - 1}{K - 1}$. Then the $k$th timestep is determined as $\mathrm{round}(1 + a (k-1) )$.

\subsection{Experimental Details of Table~\ref{tab:timesteps}}
\label{sec:timesteps_detail}
In Table~\ref{tab:timesteps}, the results of DDPM, DDIM and Analytic-DPM are based on the same score-based models (i.e., those listed in Section~\ref{sec:detail_sbm}). We get the results of Improved DDPM by running its official code and unconditional $L_{\mathrm{hybrid}}$ models (\url{https://github.com/openai/improved-diffusion}). As shown in Table~\ref{tab:model_cmp}, on the same dataset, the sizes as well as the averaged time of a single function evaluation of these models are almost the same.

\begin{table}[H]
    \centering
    \caption{Model size and the averaged time to run a model function evaluation with a batch size of 10 on one GeForce RTX 2080 Ti.}
    \label{tab:model_cmp}
    \setlength{\tabcolsep}{4.3pt}{\begin{tabular}{cccc}
    \toprule
         & CIFAR10 & CelebA 64x64 & LSUN Bedroom \\
        \midrule
        DDPM, DDIM, Analytic-DPM & 200.44 MB / 29 ms & 300.22 MB / 50 ms & 433.63 MB / 438 ms \\
        Improved DDPM & 200.45 MB / 30 ms & Missing model & 433.64 MB / 439 ms\\
    \bottomrule
    \end{tabular}}
\end{table}

The DDPM and DDIM results on CIFAR10 are based on the quadratic trajectory following \cite{song2020denoising}, which gets better FID than the even trajectory.
The Analytic-DPM result is based on the DDPM forward process on LSUN Bedroom, and based on the DDIM forward process on other datasets. These choices achieve better efficiency than their alternatives.

\clearpage
\section{Additional Results}
\label{sec:add}

\subsection{Visualization of Reverse Variances and Variational Bound Terms}
\label{sec:vis}

Figure~\ref{fig:validate_thm1} visualizes the reverse variances and $\lvb$ terms on CIFAR10 with the linear forward noise schedule (LS). In Figure~\ref{fig:terms_bpd}, we show more DDPM results on CIFAR10 with the cosine forward noise schedule (CS), CelebA 64x64 and ImageNet 64x64.

\begin{figure}[H]
\begin{center}
\subfloat[CIFAR10 (CS)]{\includegraphics[width=0.33\columnwidth]{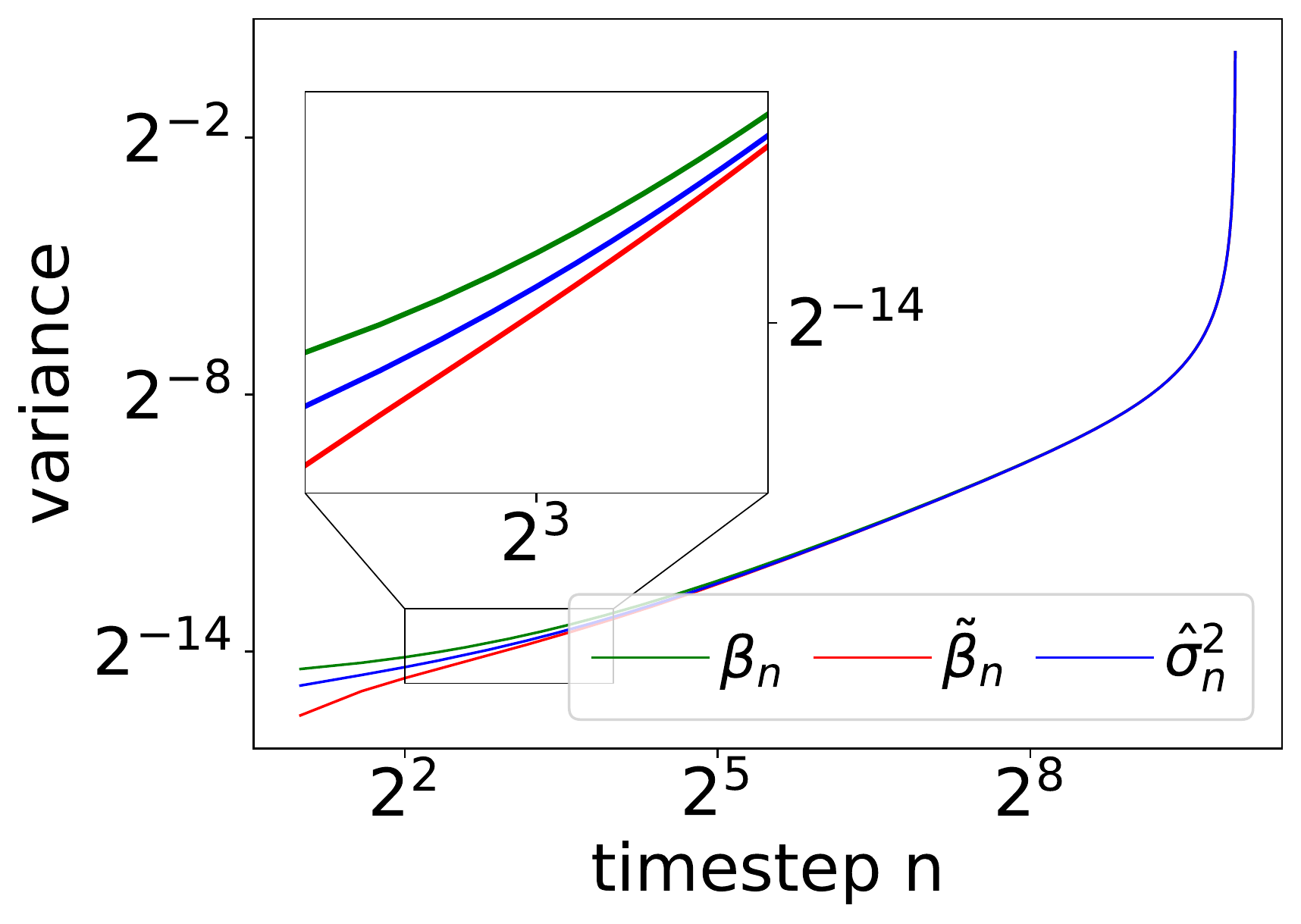}}
\subfloat[CelebA 64x64]{\includegraphics[width=0.33\columnwidth]{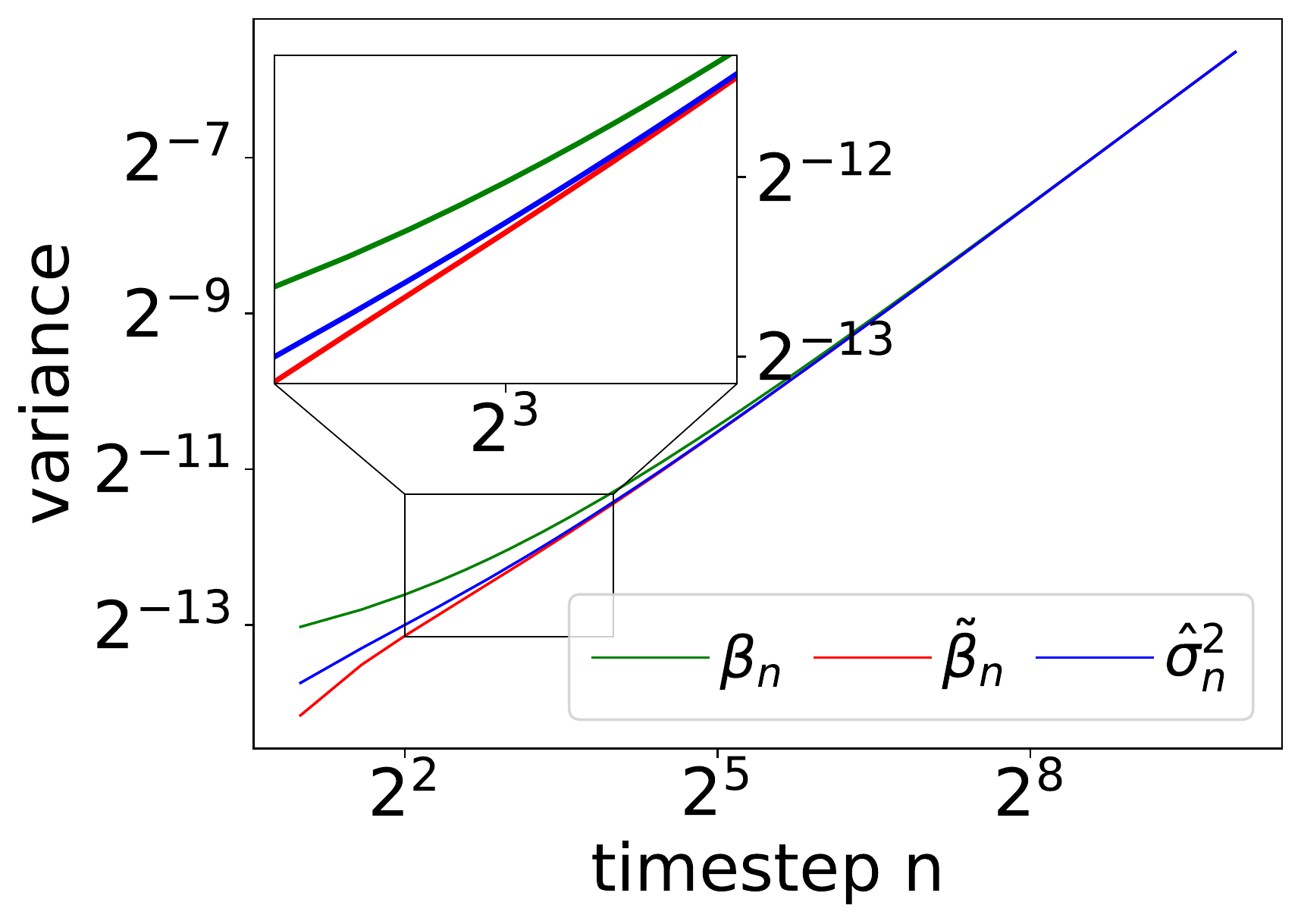}}
\subfloat[ImageNet 64x64]{\includegraphics[width=0.33\columnwidth]{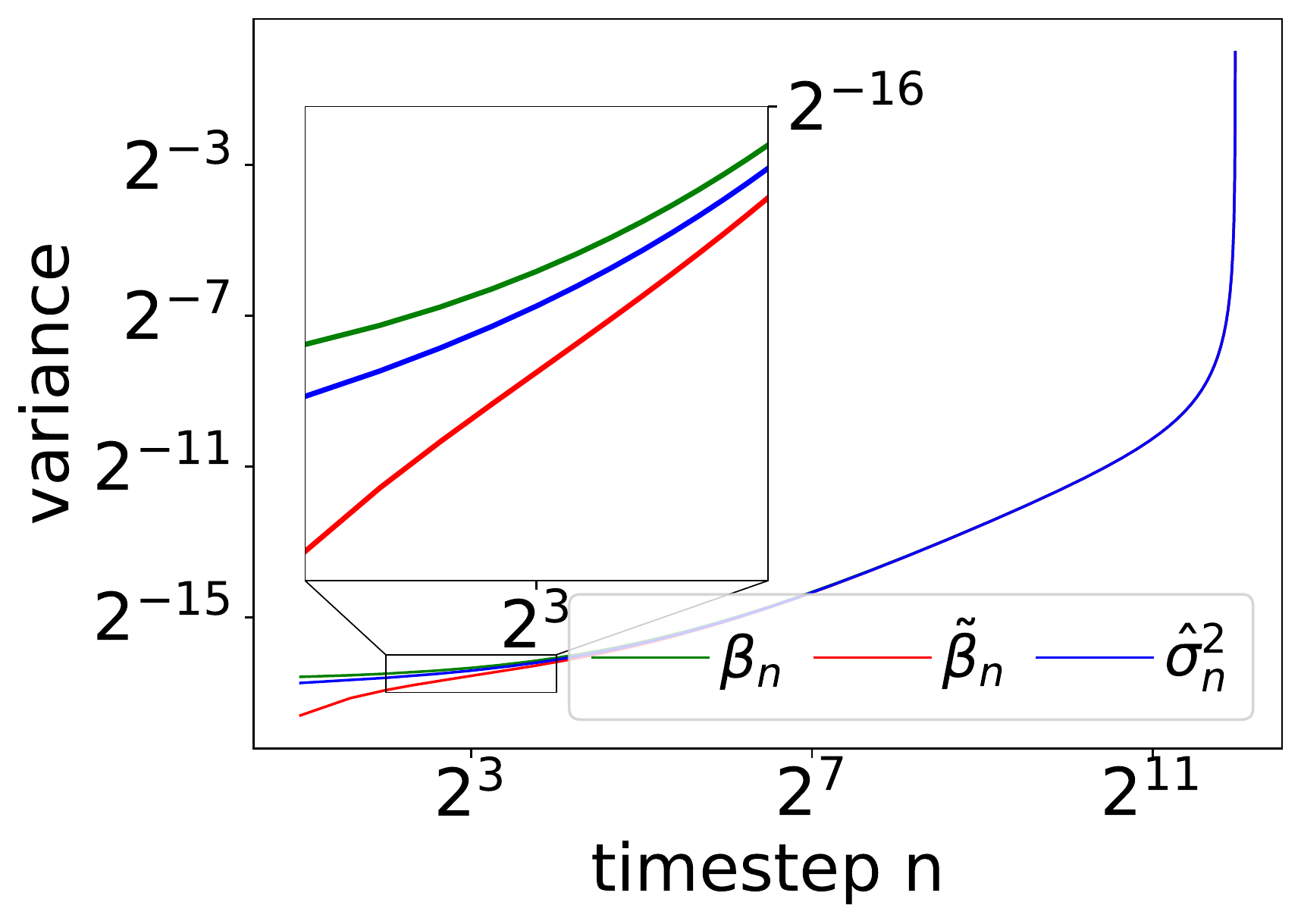}} \\ \vspace{-.1cm}
\subfloat[CIFAR10 (CS)]{\includegraphics[width=0.33\columnwidth]{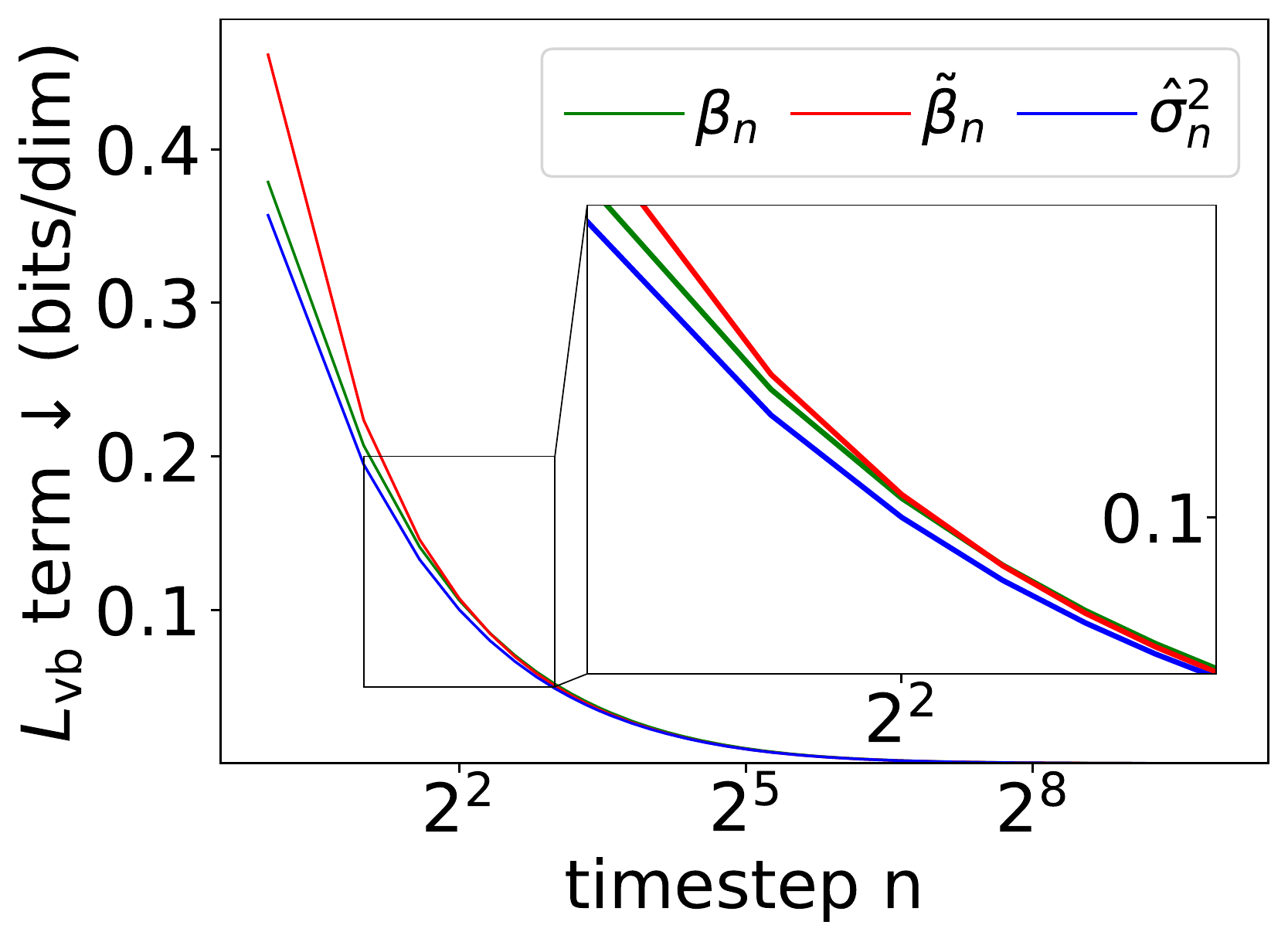}}
\subfloat[CelebA 64x64]{\includegraphics[width=0.33\columnwidth]{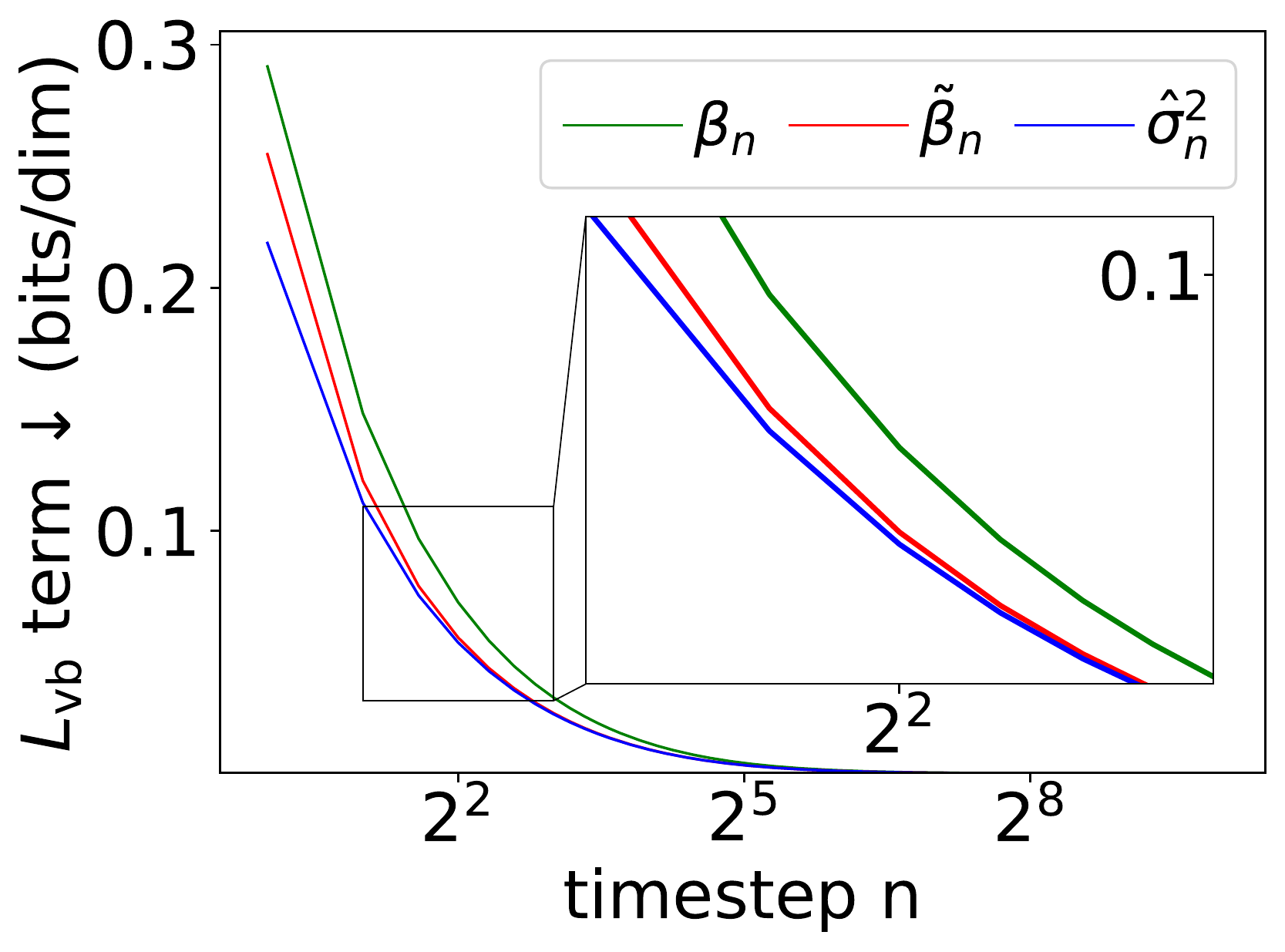}}
\subfloat[ImageNet 64x64]{\includegraphics[width=0.33\columnwidth]{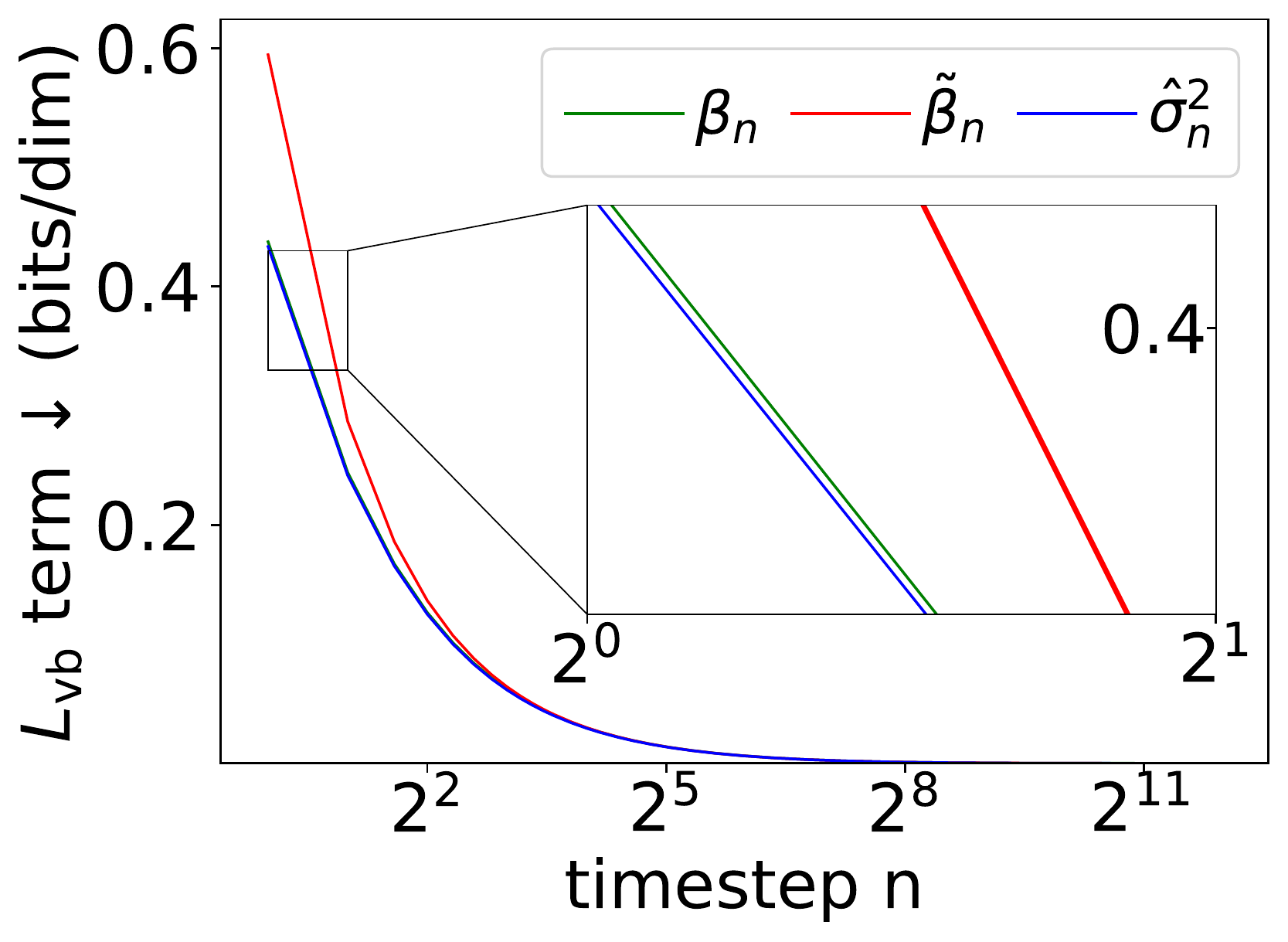}}
\caption{Comparing our analytic estimate $\hat{\sigma}_n^2$ and prior works with handcrafted variances $\beta_n$ and $\tilde{\beta}_n$. (a-c) compare the values of the variance of different timesteps. (d-e) compare the term in $\lvb$ corresponding to each timestep. The value of $\lvb$ is the area under the corresponding curve. }
\label{fig:terms_bpd}
\end{center}
\end{figure}

\subsection{Ablation Study on the Number of Monte Carlo Samples}
\label{sec:ablation_mc}

We show that only a small number of Monte Carlo (MC) samples $M$ in Eq.~{(\ref{eq:ms_score})} is enough for a small MC variance. As shown in Figure~\ref{fig:mc_ablation_ms_score}, the values of $\Gamma_n$ with $M=100$ and $M=50000$ Monte Carlo samples are almost the same in a single trial. To explicitly see the variance, in Figure~\ref{fig:var_mc} and Figure~\ref{fig:var_gamma}, we plot the mean, the standard deviation and the relative standard deviation (RSD) (i.e., the ratio of the standard deviation to the mean) of a single Monte Carlo sample $\frac{||\vs_n(\vx_{n})||^2}{d}$, $\vx_n \sim q_n(\vx_n)$ and $\Gamma_n$ with different $M$ respectively on CIFAR10 (LS). In all cases, the RSD decays fast as $n$ increases. When $n$ is small (e.g., $n=1$), using $M=10$ Monte Carlo samples can ensure that the RSD of $\Gamma_n$ is below 0.1, and using $M=100$ Monte Carlo samples can ensure that the RSD of $\Gamma_n$ is about 0.025. When $n > 100$, the RSD of a single Monte Carlo sample is below 0.05, and using only $M=1$ Monte Carlo sample can ensure the RSD of $\Gamma_n$ is below 0.05. Overall, a small $M$ like 10 and 100 is sufficient for a small Monte Carlo variance. 

Furthermore, we show that Analytic-DPM with a small $M$ like 10 and 100 has a similar performance to that with a large $M$. As shown in Figure~\ref{fig:mc_ablation_nll} (a), using $M=100$ or $M=50000$ almost does not affect the likelihood results on CIFAR10 (LS). In Table~\ref{tab:ablation_smaller_m} (a), we show results with even smaller $M$ (e.g., $M=1, 3, 10$). Under both the NLL and FID metrics, $M=10$ achieves a similar result to that of $M=50000$. The results are similar on ImageNet 64x64, as shown in Figure~{\ref{fig:mc_ablation_nll} (b)} and Table~{\ref{tab:ablation_smaller_m} (b)}. Notably, the expected performance of FID is almost not influenced by the choice of $M$.

As a result, Analytic-DPM consistently improves the baselines using a much smaller $M$ (e.g., $M=10$), as shown in Table~\ref{tab:adpm_small_m_dpm}.

\begin{figure}[H]
\begin{center}
 \subfloat[CIFAR10 (LS)]{\includegraphics[height=0.3\columnwidth]{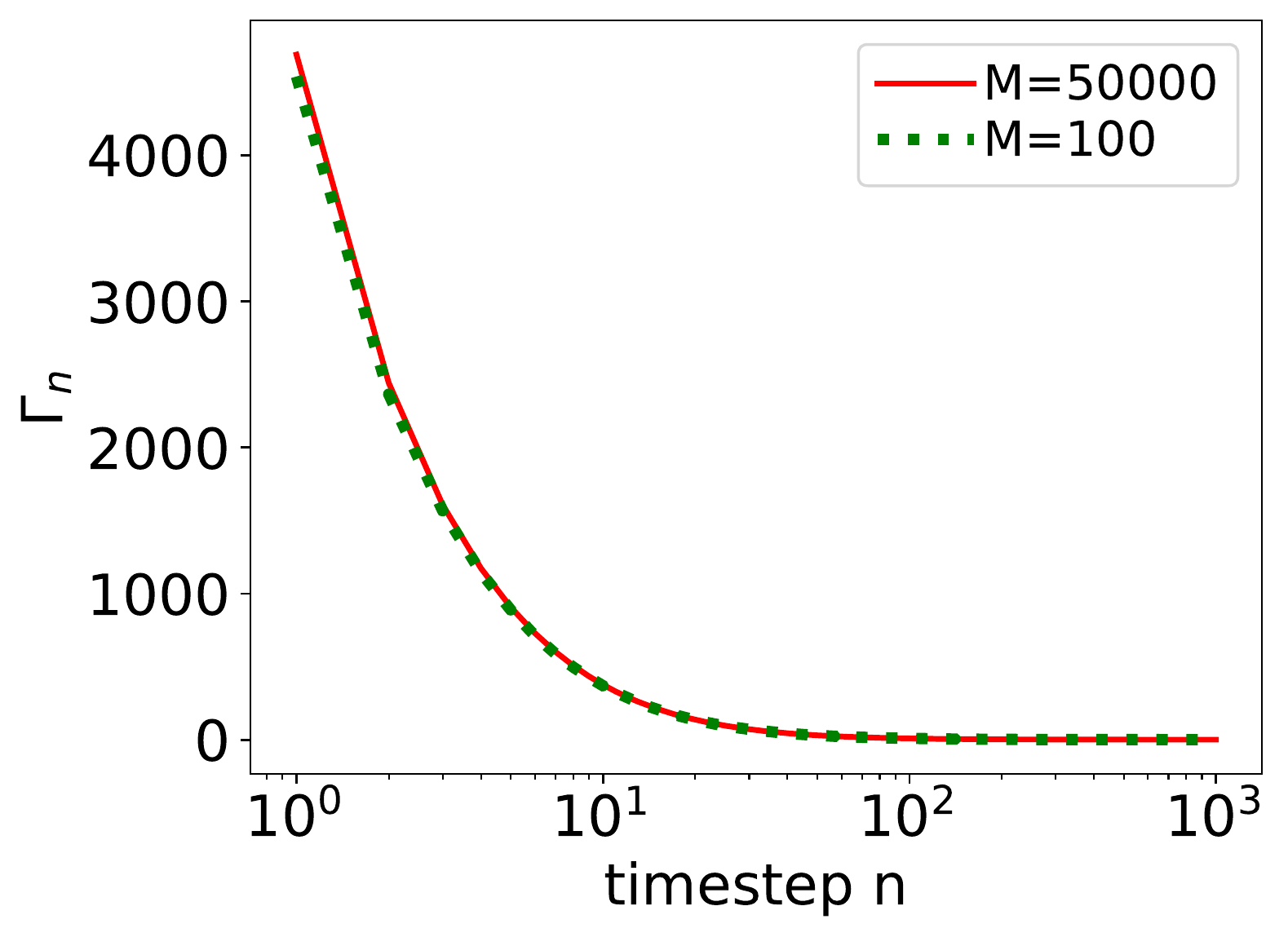}}\quad
\subfloat[ImageNet 64x64]{\includegraphics[height=0.3\columnwidth]{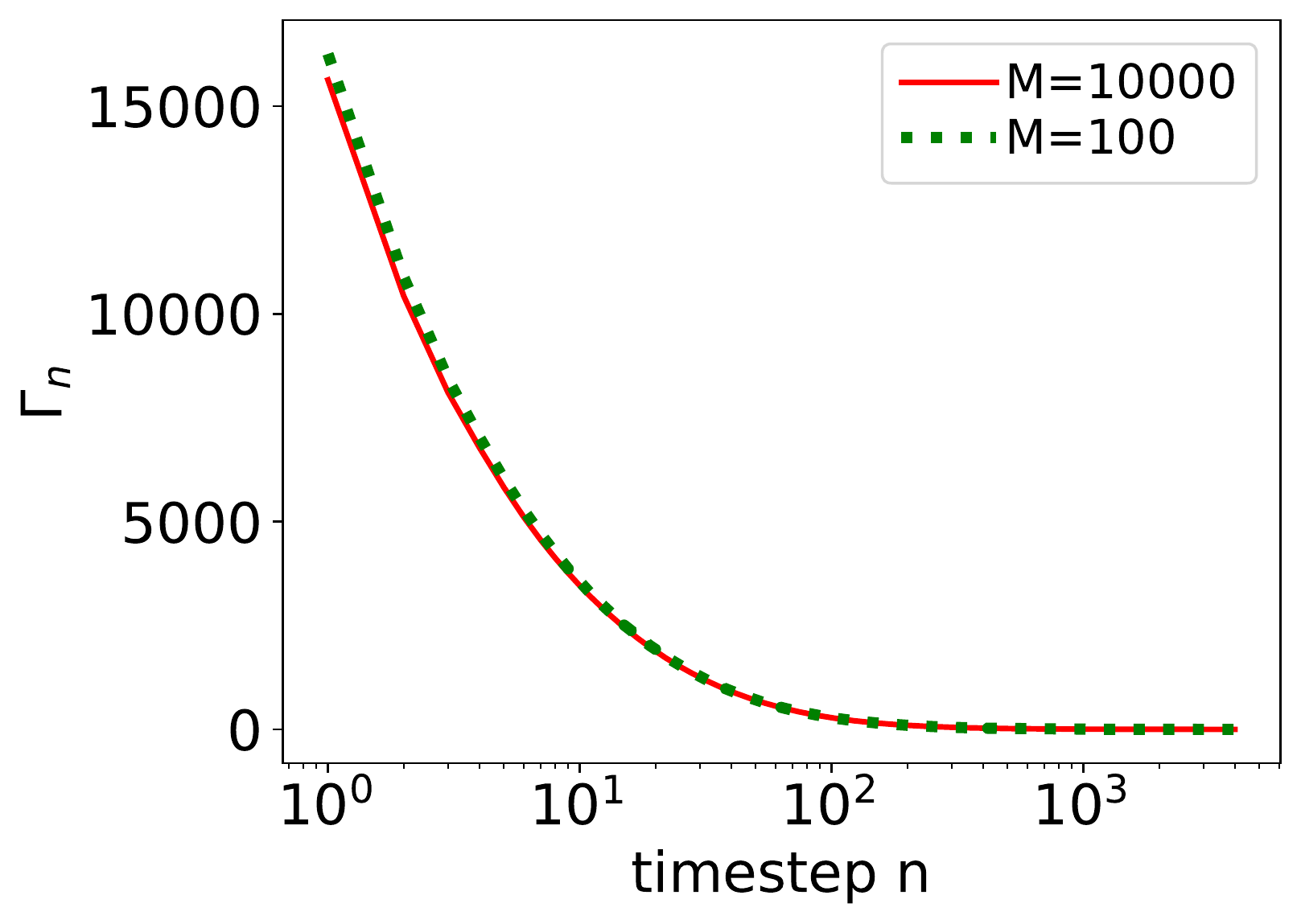}}
\caption{The value of $\Gamma_n$ in a single trial with different number of Monte Carlo samples $M$.}
\label{fig:mc_ablation_ms_score}
\end{center}
\end{figure}

\begin{figure}[H]
\begin{center}
\subfloat[Mean]{\includegraphics[height=0.24\columnwidth]{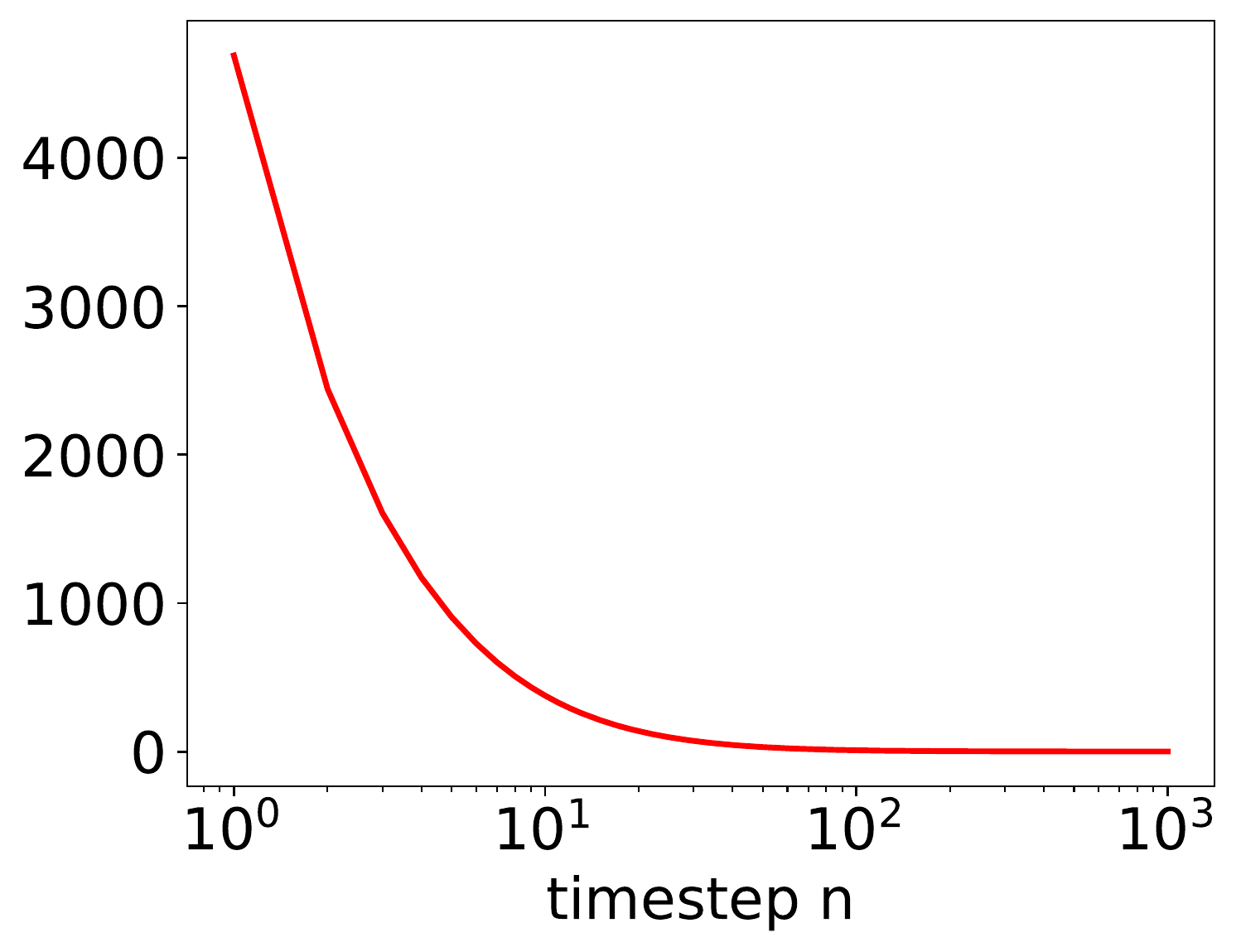}}\hspace{0.1cm}
\subfloat[Standard deviation]{\includegraphics[height=0.24\columnwidth]{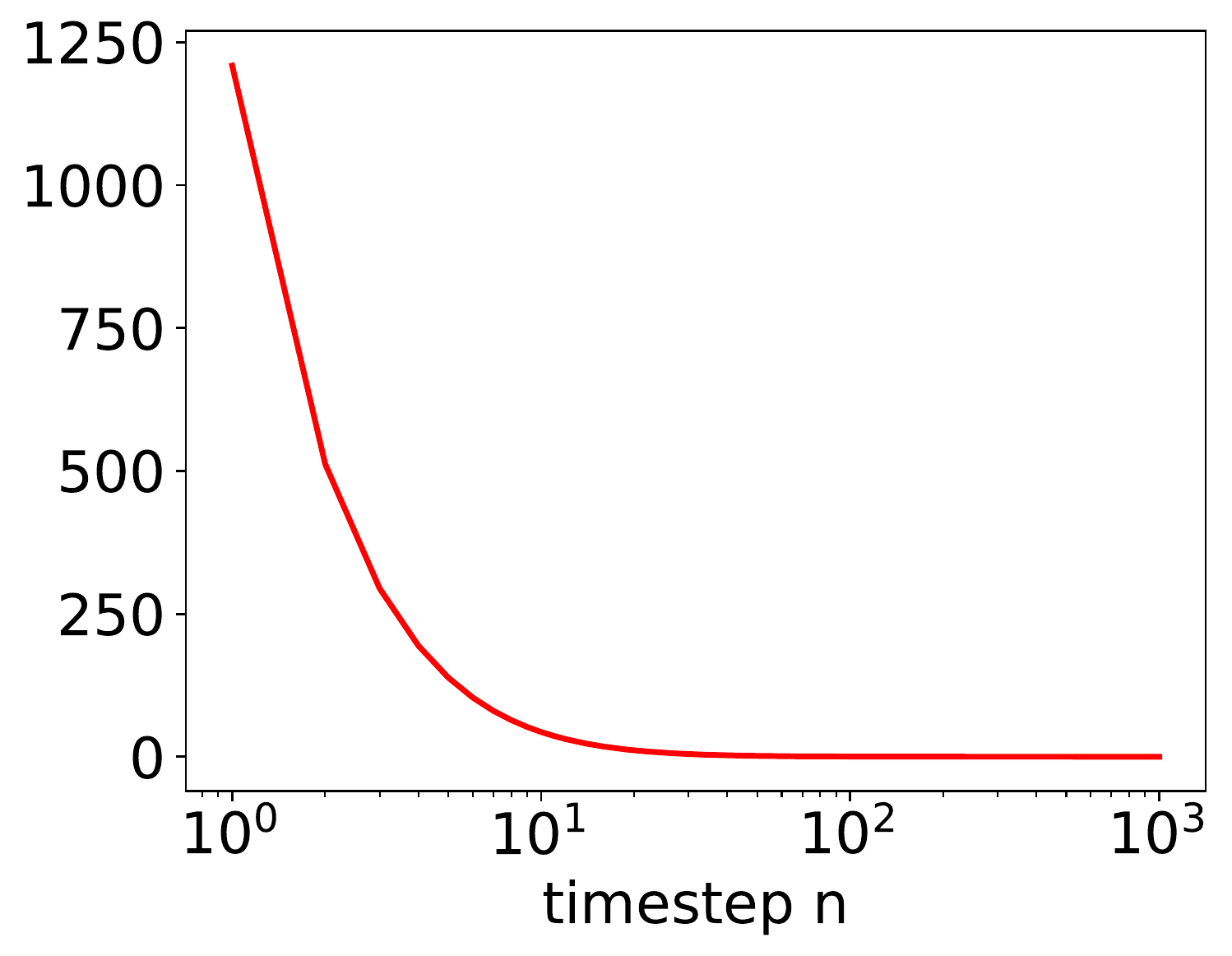}}\hspace{0.1cm}
\subfloat[Relative standard deviation]{\includegraphics[height=0.24\columnwidth]{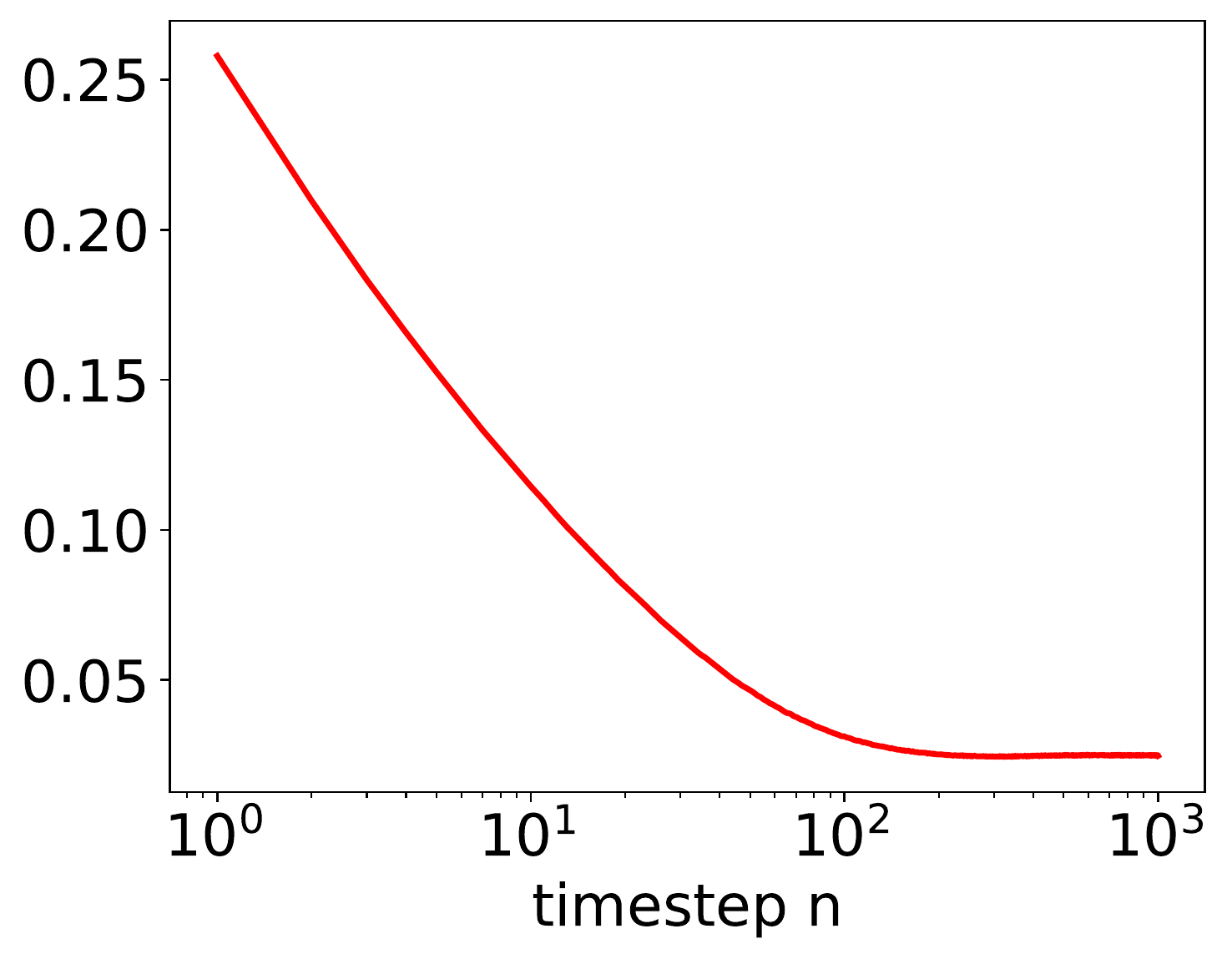}}
\caption{The mean, the standard deviation and the relative standard deviation (RSD) (i.e., the ratio of the standard deviation to the mean) of a single Monte Carlo sample $\frac{||\vs_n(\vx_{n})||^2}{d}$, $\vx_n \sim q_n(\vx_n)$ at different $n$ on CIFAR10 (LS). These values are estimated by 50000 samples.}
\label{fig:var_mc}
\end{center}
\end{figure}

\begin{figure}[H]
\begin{center}
\subfloat[Mean]{\includegraphics[height=0.24\columnwidth]{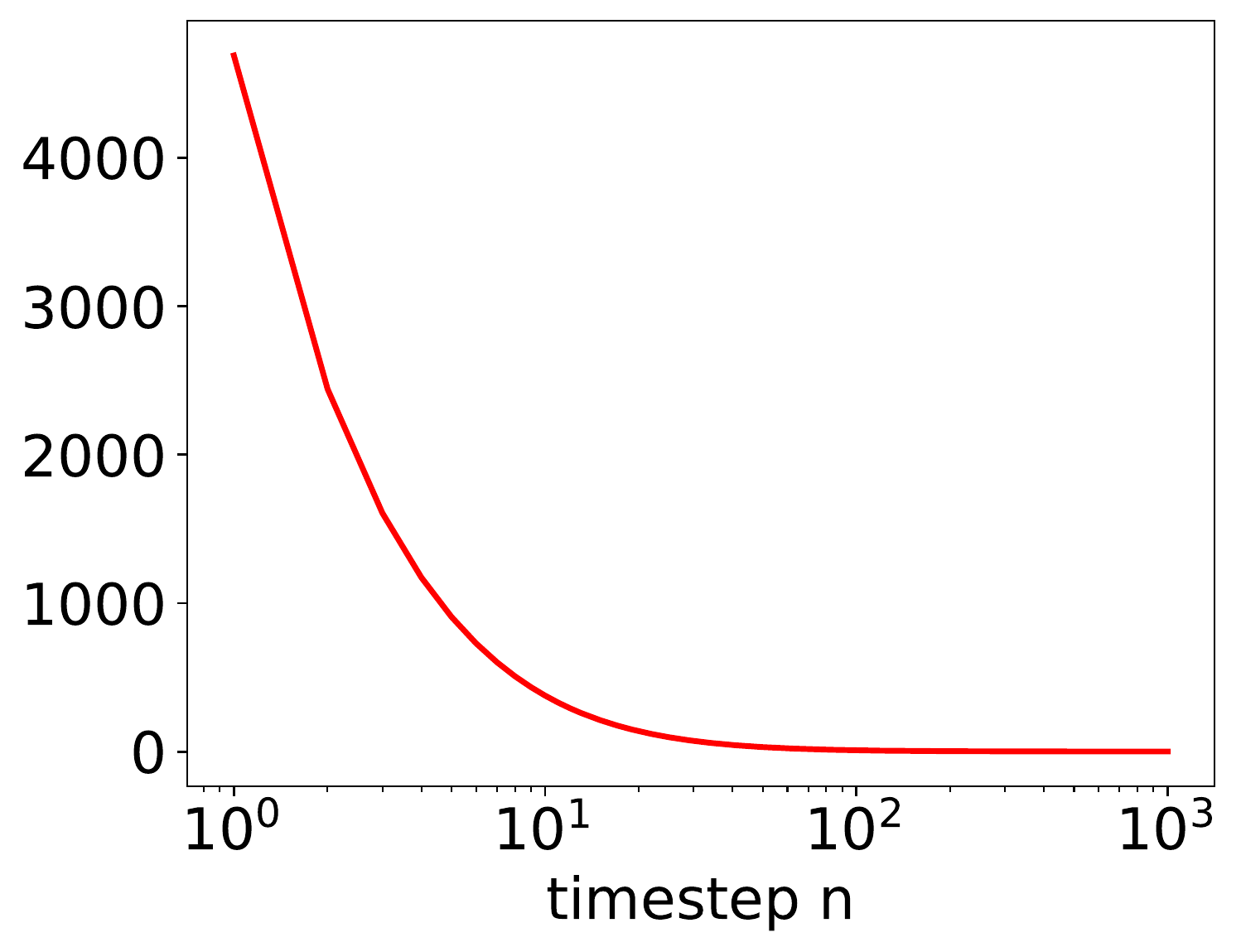}}\hspace{0.1cm}
\subfloat[Standard deviation]{\includegraphics[height=0.24\columnwidth]{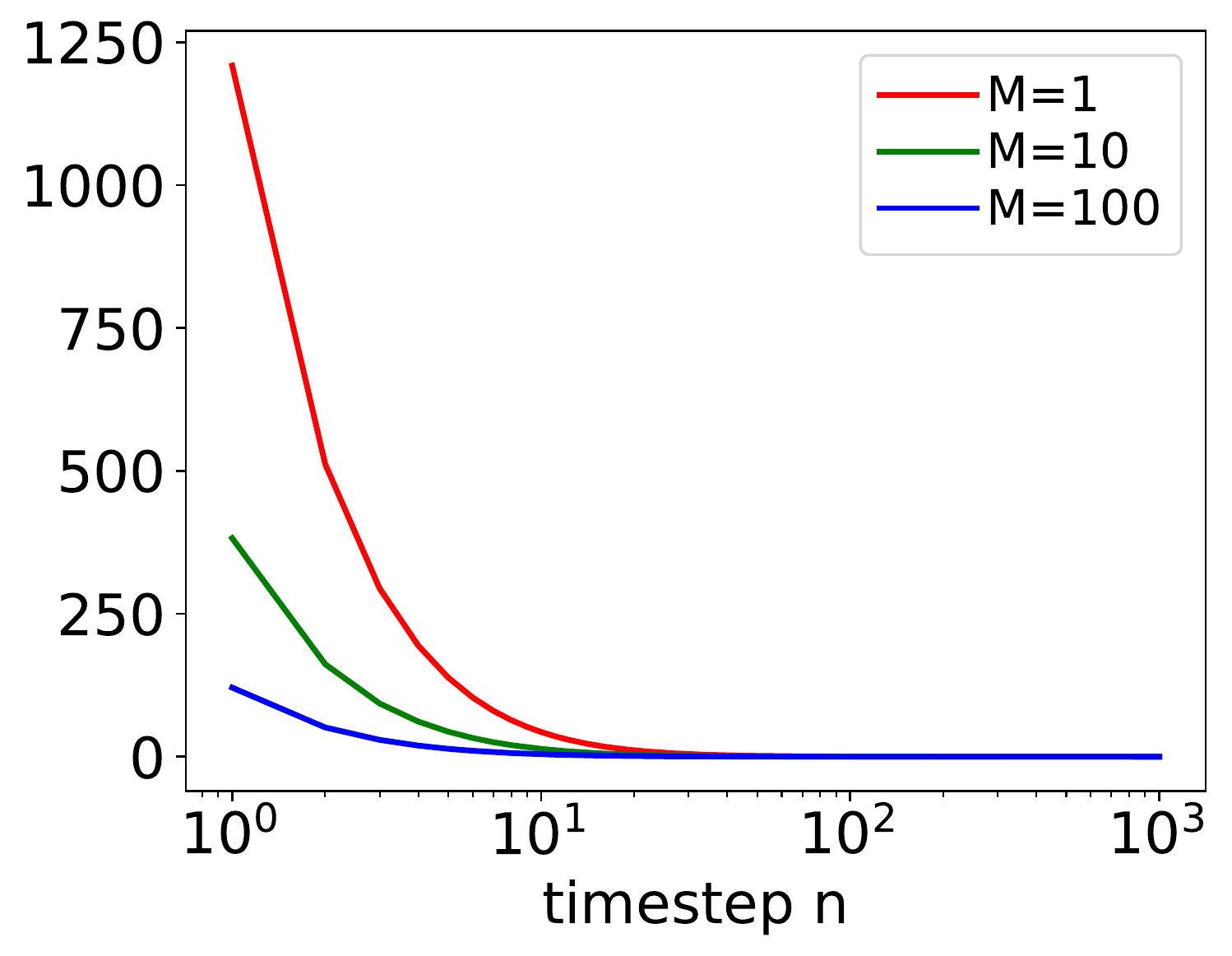}}\hspace{0.1cm}
\subfloat[Relative standard deviation]{\includegraphics[height=0.24\columnwidth]{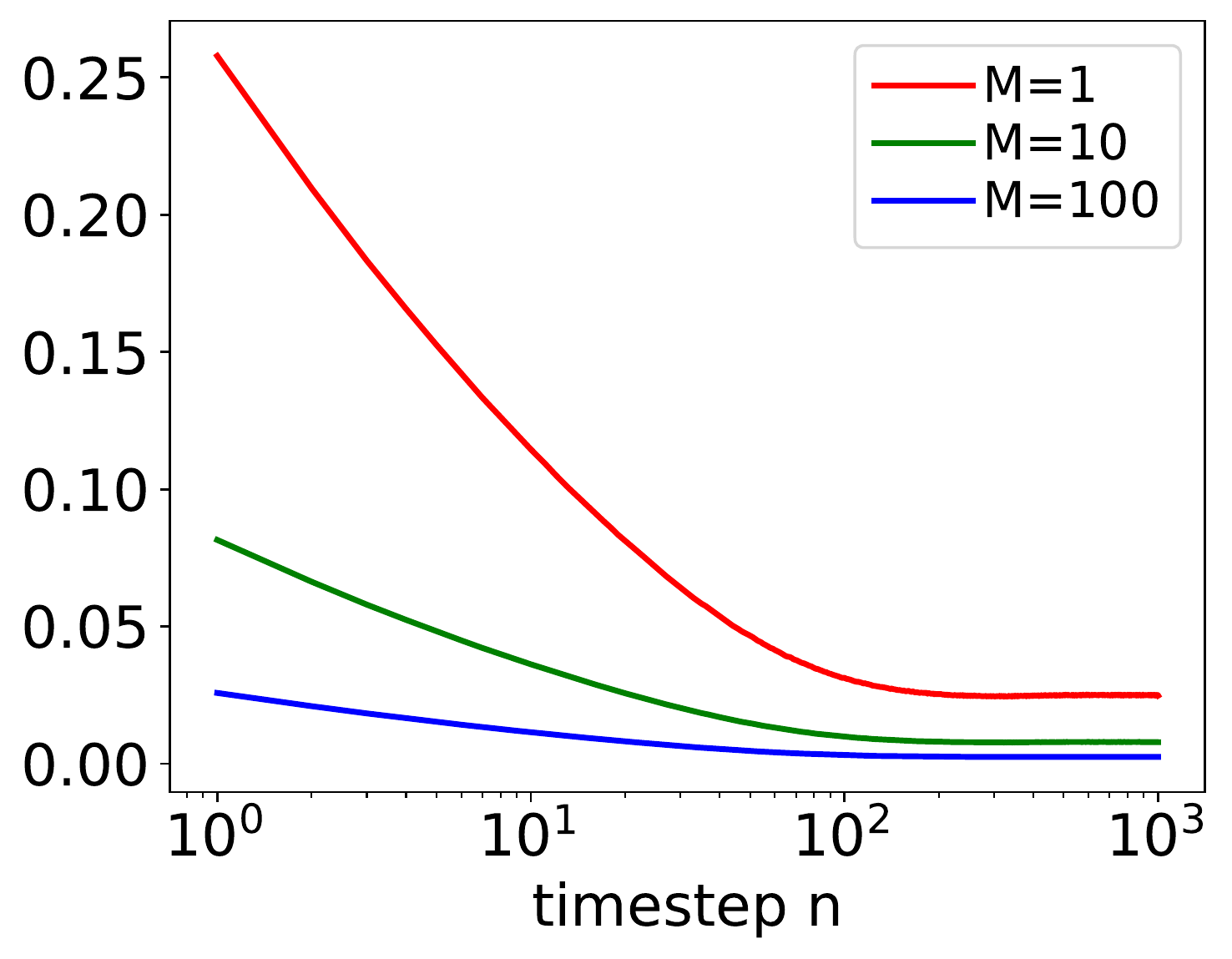}}
\caption{The mean, the standard deviation and the relative standard deviation (RSD) (i.e., the ratio of the standard deviation to the mean) of $\Gamma_n$ with different number of Monte Carlo samples $M$ at different $n$ on CIFAR10 (LS). These values are directly calculated from the mean, the standard deviation and the RSD of $\frac{||\vs_n(\vx_{n})||^2}{d}$, $\vx_n \sim q_n(\vx_n)$ presented in Figure~\ref{fig:var_mc}.}
\label{fig:var_gamma}
\end{center}
\end{figure}

\begin{figure}[H]
\begin{center}
\subfloat[CIFAR10 (LS)]{\includegraphics[width=0.4\columnwidth]{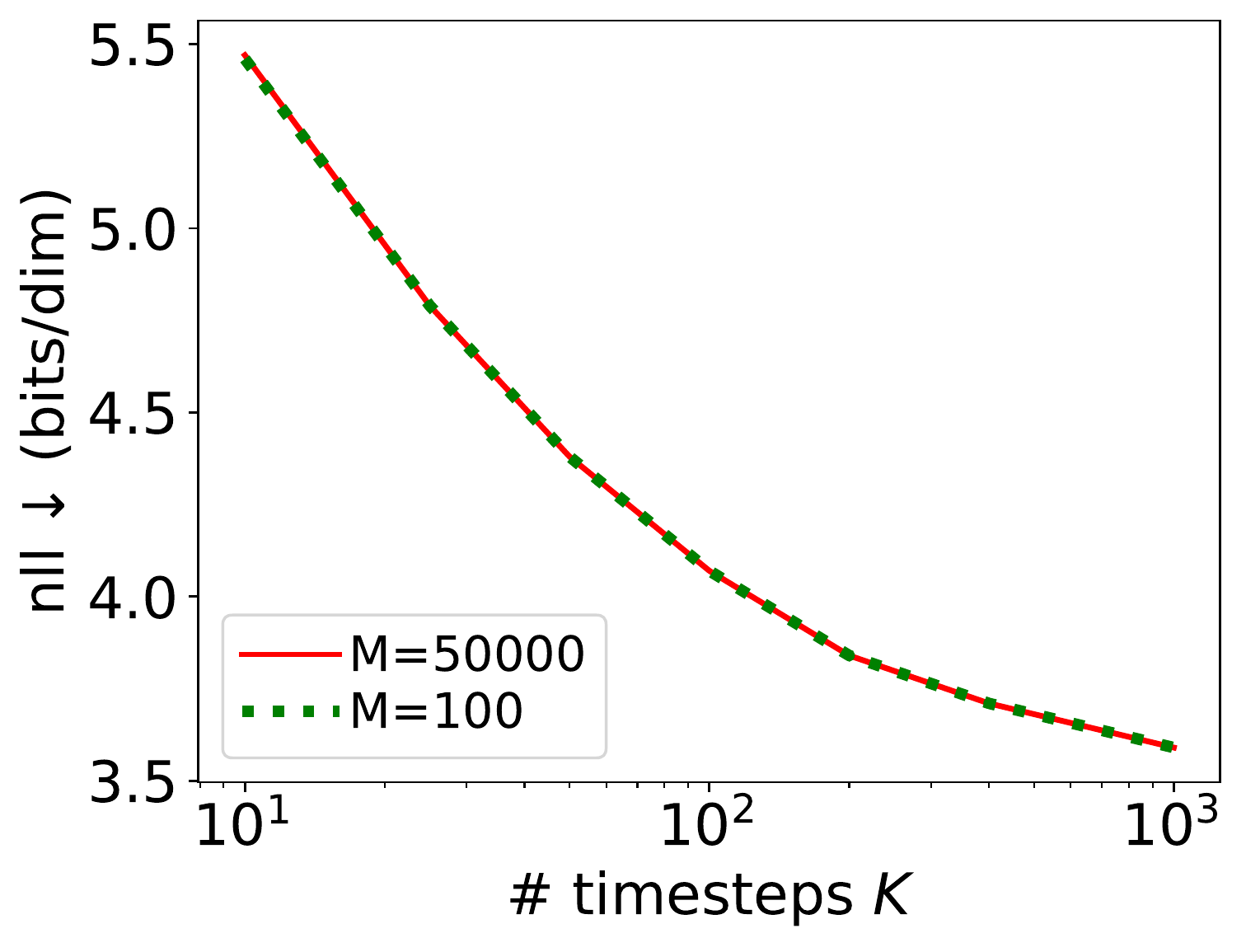}}\quad
\subfloat[ImageNet 64x64]{\includegraphics[width=0.4\columnwidth]{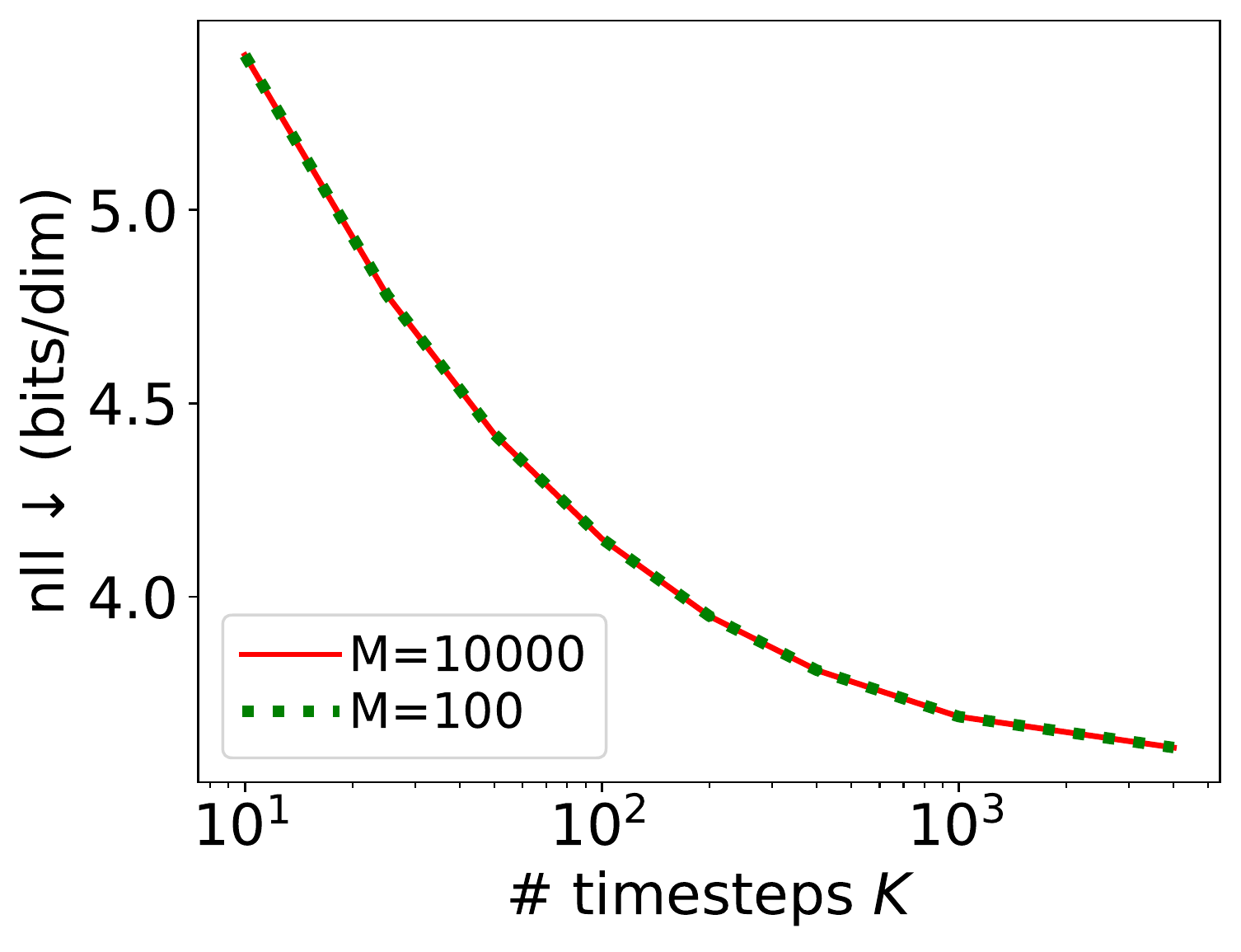}}
\caption{The curves of NLL v.s. the number of timesteps $K$ in a trajectory with different number of Monte Carlo samples $M$, evaluated under $\sigma_n^2 = \hat{\sigma}_n^2$ and the even trajectory.}
\label{fig:mc_ablation_nll}
\end{center}
\end{figure}

\begin{table}[H]
    \centering
    \caption{The negative log-likelihood (NLL) and the FID results of Analytic-DPM with different number of Monte Carlo samples $M$. The results with $M=1,3,10,100$ are averaged by 5 runs. All results are evaluated under the DDPM forward process and the even trajectory. We use $K=10$ for CIFAR10 (LS) and $K=25$ for ImageNet 64x64.}
    \label{tab:ablation_smaller_m}
    \subfloat[CIFAR10 (LS)]{
    \begin{tabular}{lll}
    \toprule
         & NLL $\downarrow$ & FID $\downarrow$ \\
        \midrule
        $M=1$ & 6.220±1.126 & 34.05±4.97\\
        $M=3$ & 5.689±0.424 & 34.29±2.88\\
        $M=10$ & 5.469±0.005 & 33.69±2.10\\
        $M=100$ & 5.468±0.004 & 34.63±0.68\\
        $M=50000$ & 5.471 & 34.26 \\
        \bottomrule
    \end{tabular}}\quad
    \subfloat[ImageNet 64x64]{
    \begin{tabular}{lll}
    \toprule
         & NLL $\downarrow$ & FID $\downarrow$\\
        \midrule
        $M=1$ & 4.943±0.162 & 31.59±5.11\\
        $M=3$ & 4.821±0.055 & 31.98±1.19\\
        $M=10$ & 4.791±0.017 & 31.93±1.02\\
        $M=100$ & 4.785±0.003 & 31.93±0.69\\
        $M=10000$ & 4.783 & 32.56\\
        \bottomrule
    \end{tabular}}
\end{table}

\begin{table}[H]
    \centering
    \caption{The NLL and FID comparison between Analytic-DDPM with $M=10$ Monte Carlo samples and DDPM. Results are evaluated under the even trajectory on CIFAR10 (LS).}
    \label{tab:adpm_small_m_dpm}
    \begin{tabular}{lrrrrrr}
    \toprule
      \# timesteps $K$ & 10 & 25 & 50 & 100 & 200 & 400 \\
    \midrule
      NLL $\downarrow$ & \\
    \arrayrulecolor{black!30}\midrule
       \quad Analytic-DDPM ($M=10$) & 5.47 & 4.80 & 4.38 & 4.07 & 3.85 & 3.71 \\
       \quad DDPM & 6.99 & 6.11 & 5.44 & 4.86 & 4.39 & 4.07 \\
    \arrayrulecolor{black} \midrule
     FID $\downarrow$ & \\
    \arrayrulecolor{black!30}\midrule
       \quad Analytic-DDPM ($M=10$) & 33.69 & 11.99 & 7.24 & 5.39 & 4.19 & 3.58 \\
       \quad DDPM & 44.45 & 21.83 & 15.21 & 10.94 & 8.23 & 4.86 \\
     \arrayrulecolor{black} \bottomrule
    \end{tabular}
\end{table}

\subsection{Tightness of the Bounds}
\label{sec:tightness}

In Section~\ref{sec:bd} and Appendix~\ref{sec:bd_traj}, we derive upper and lower bounds of the optimal reverse variance. In this section, we show these bounds are tight numerically in practice. In Figure~\ref{fig:bd_cifar10_cs}, we plot the combined upper bound (i.e., the minimum of the upper bounds in Eq.~{(\ref{eq:upper_lower})} and Eq.~{(\ref{eq:upper})}) and the lower bound on CIFAR10. As shown in Figure~{\ref{fig:bd_cifar10_cs} (a,c)}, the two bounds almost overlap under the full-timesteps ($K$=$N$) trajectory. When the trajectory has a smaller number of timesteps (e.g., $K$=100), the two bounds also overlap when the timestep $\tau_k$ is large. These results empirically validate that our bounds are tight, especially when the timestep is large.

\begin{figure}[H]
\begin{center}
\subfloat[LS, $K$=$N$]{\includegraphics[height=2.6cm]{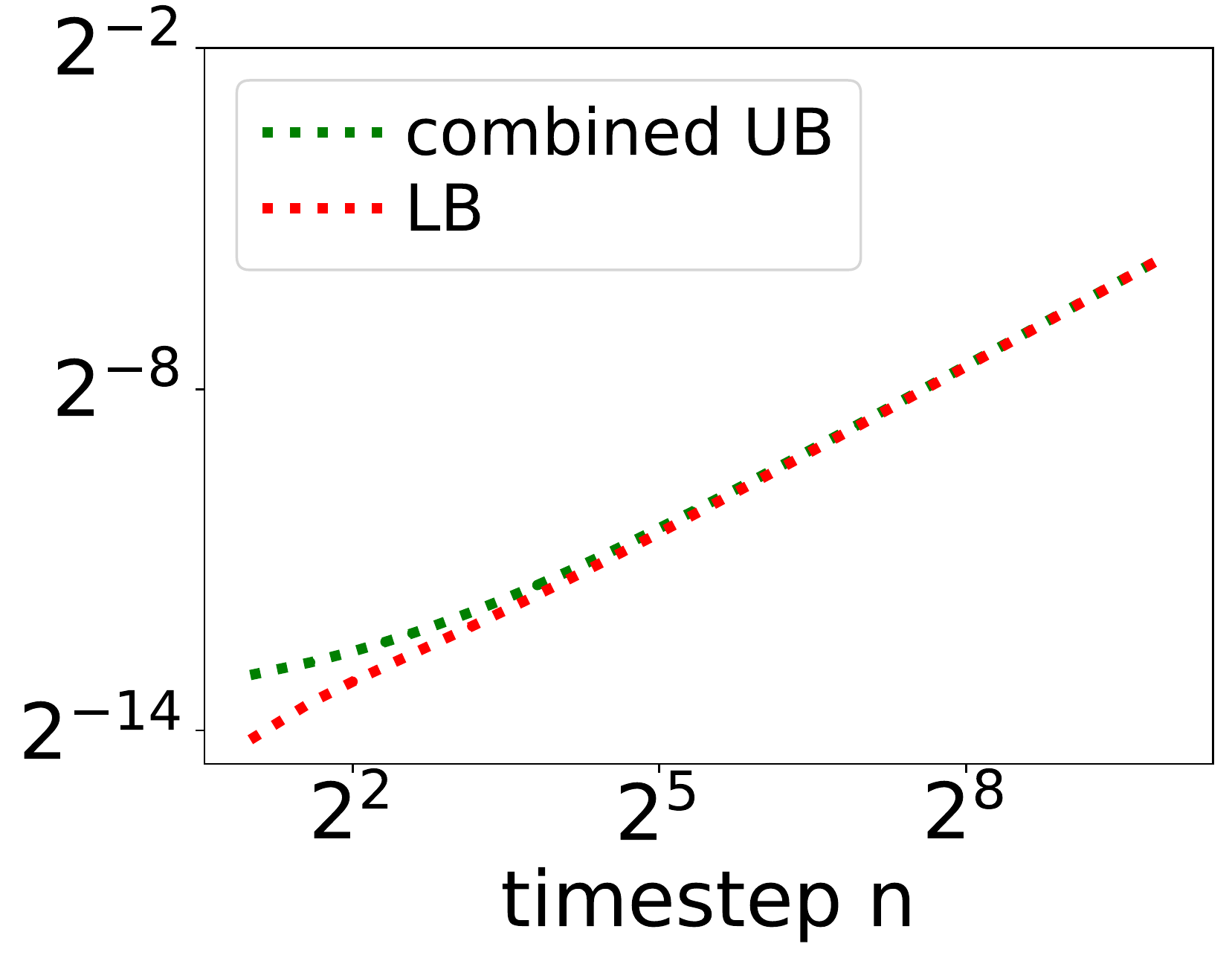}}
\subfloat[LS, $K$=100]{\includegraphics[height=2.6cm]{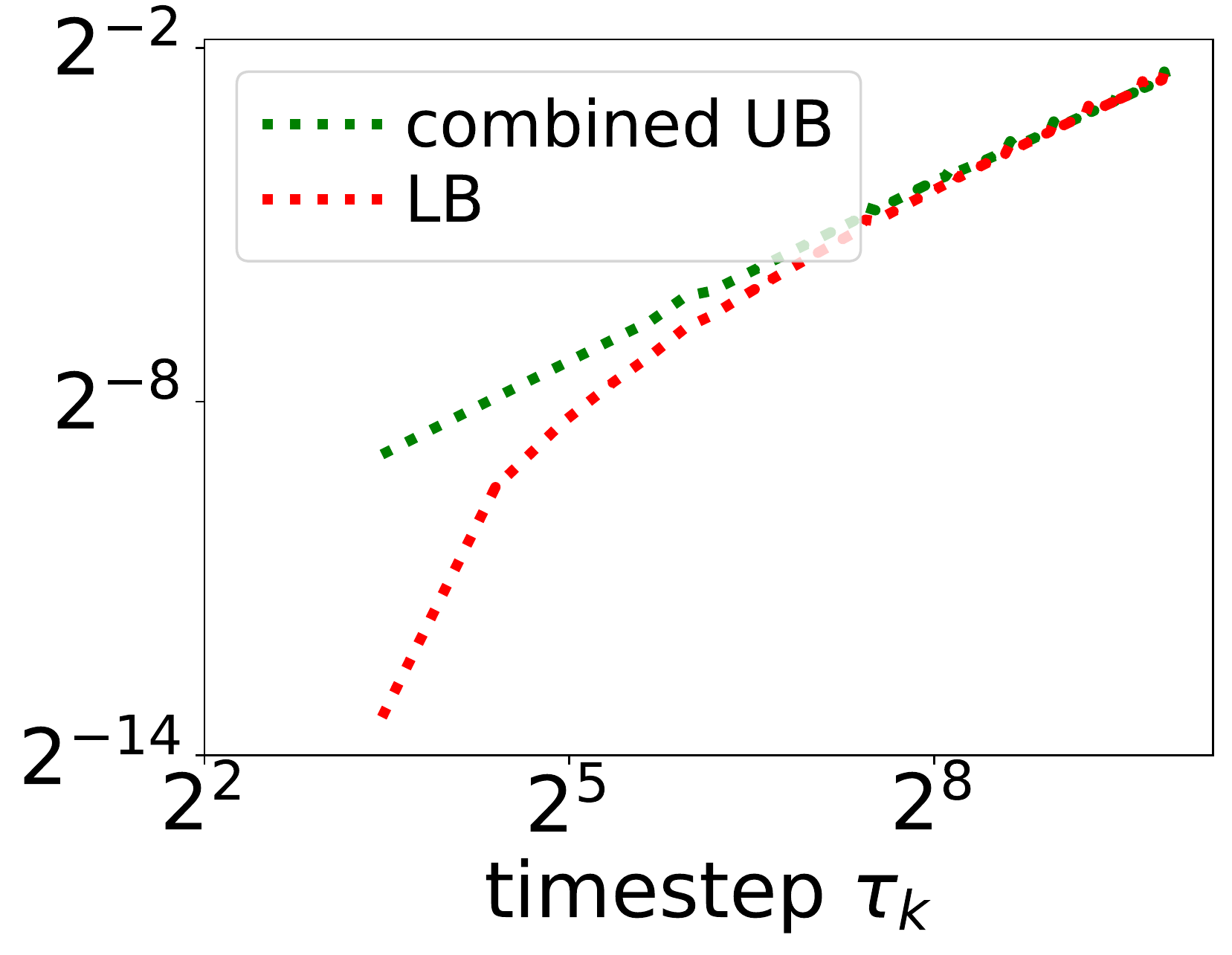}}
\subfloat[CS, $K$=$N$]{\includegraphics[height=2.6cm]{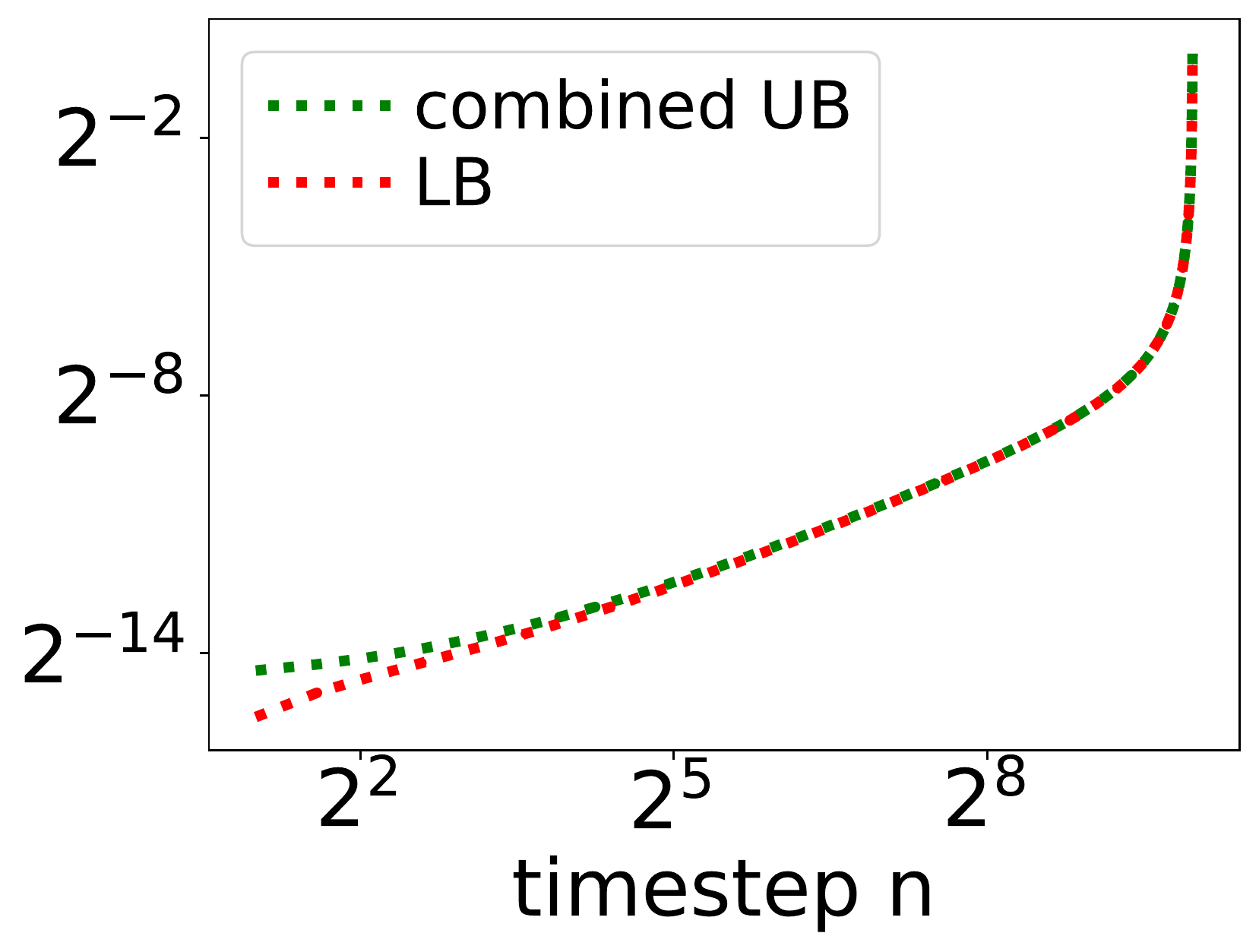}}
\subfloat[CS, $K$=100]{\includegraphics[height=2.6cm]{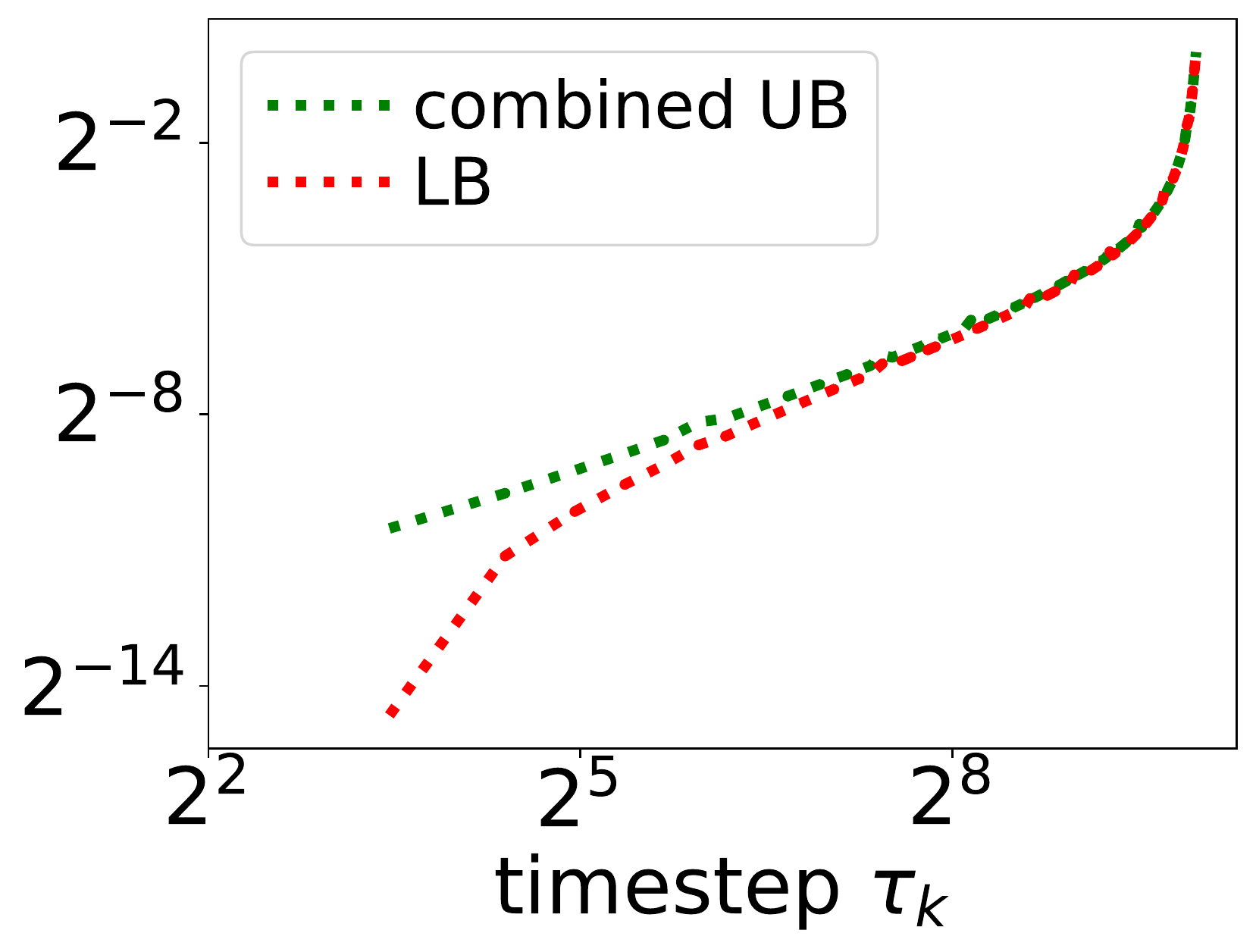}}
\caption{The combined upper bound (UB) and the lower bound (LB) under full-timesteps ($K$=$N$) and 100-timesteps ($K$=100) trajectories on CIFAR10 (LS) and CIFAR10 (CS).}
\label{fig:bd_cifar10_cs}
\end{center}
\end{figure}

In Figure~\ref{fig:ubd_cifar10_cs}, we also plot the two upper bounds in Eq.~{(\ref{eq:upper_lower})} and Eq.~{(\ref{eq:upper})} individually. The upper bound in Eq.~{(\ref{eq:upper_lower})} is tighter when the timestep is small and the other one is tighter when the timestep is large. Thereby, both upper bounds contribute to the combined upper bound.

\begin{figure}[H]
\begin{center}
\subfloat[LS, $K$=$N$]{\includegraphics[height=2.6cm]{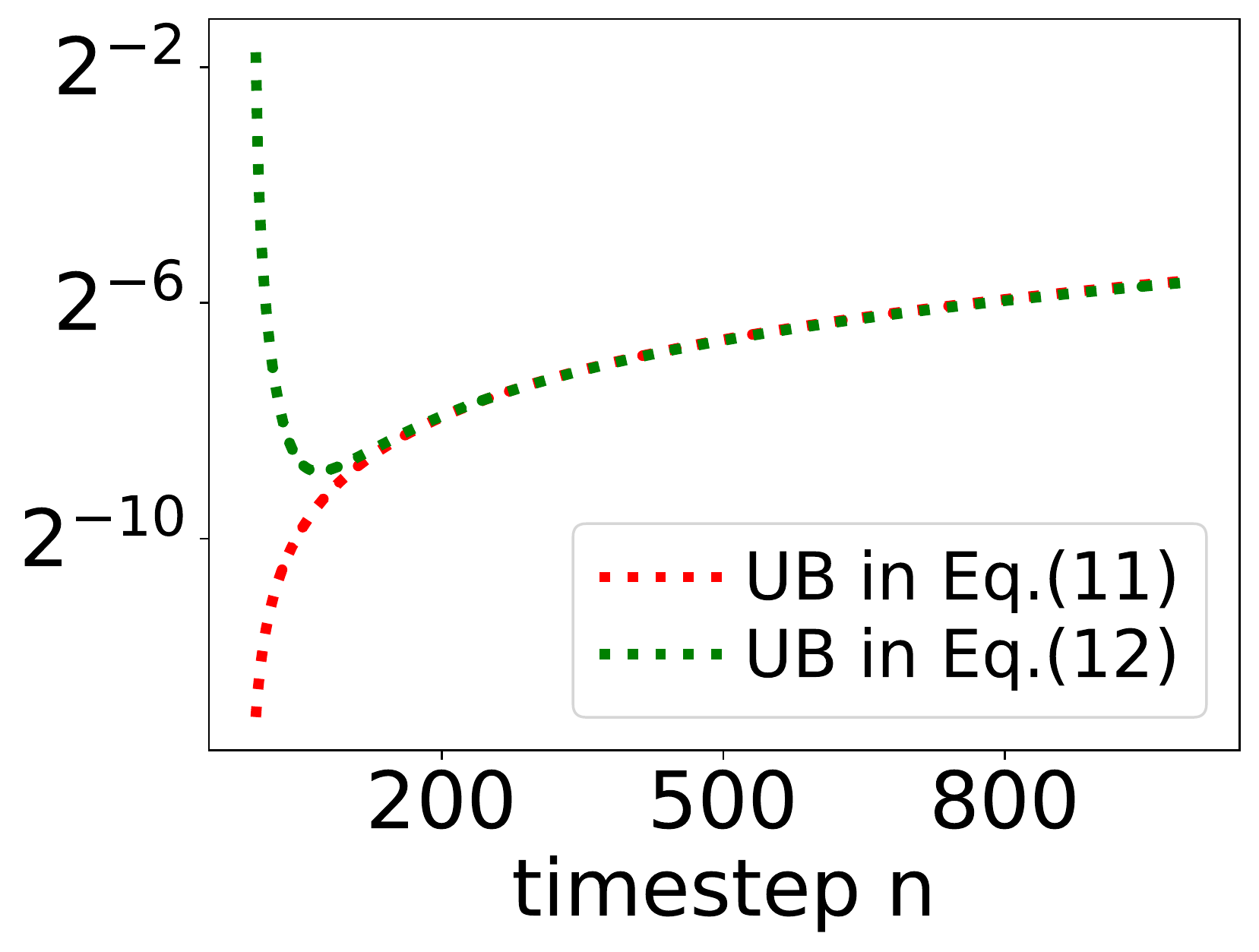}}
\subfloat[LS, $K$=100]{\includegraphics[height=2.6cm]{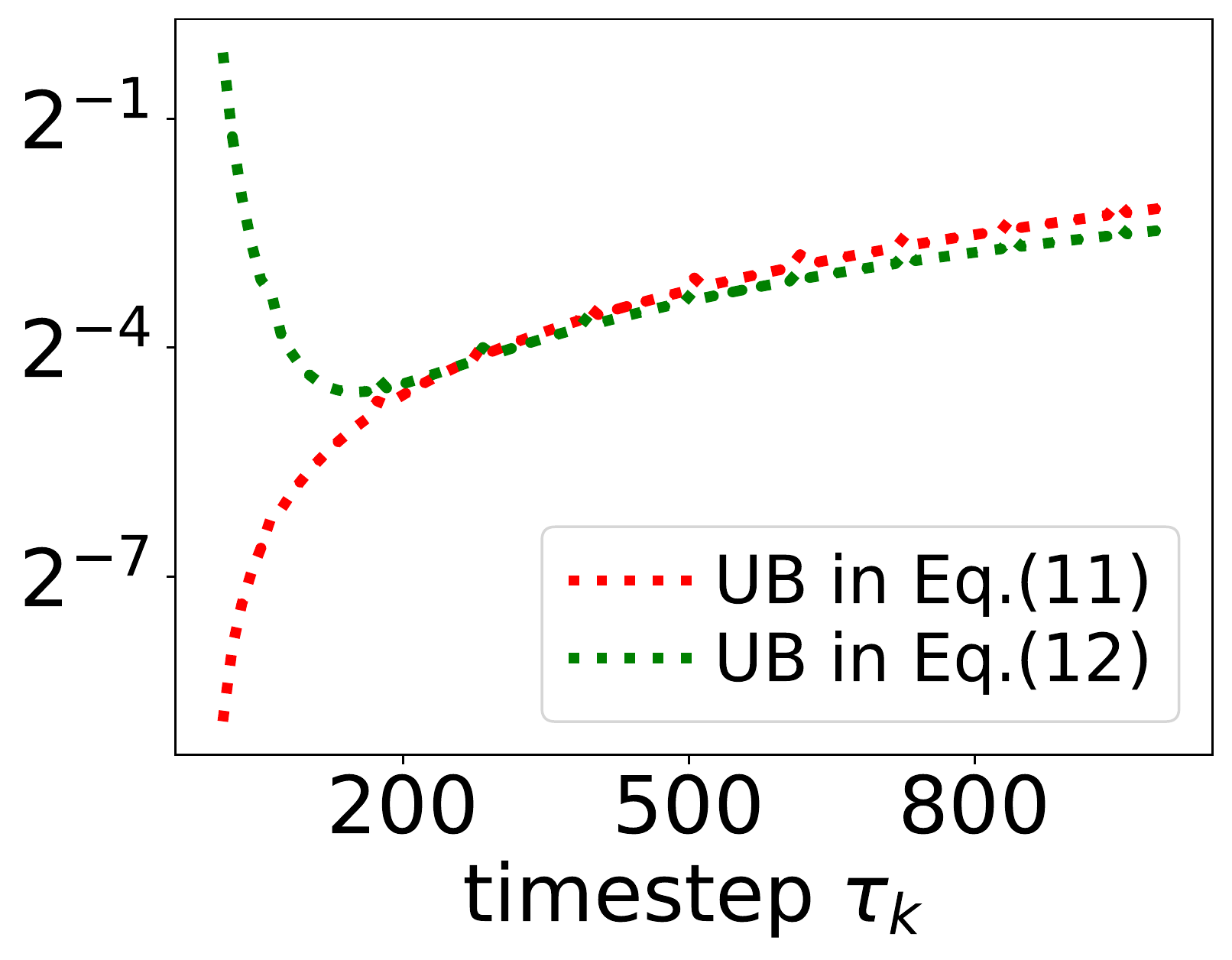}}
\subfloat[CS, $K$=$N$]{\includegraphics[height=2.6cm]{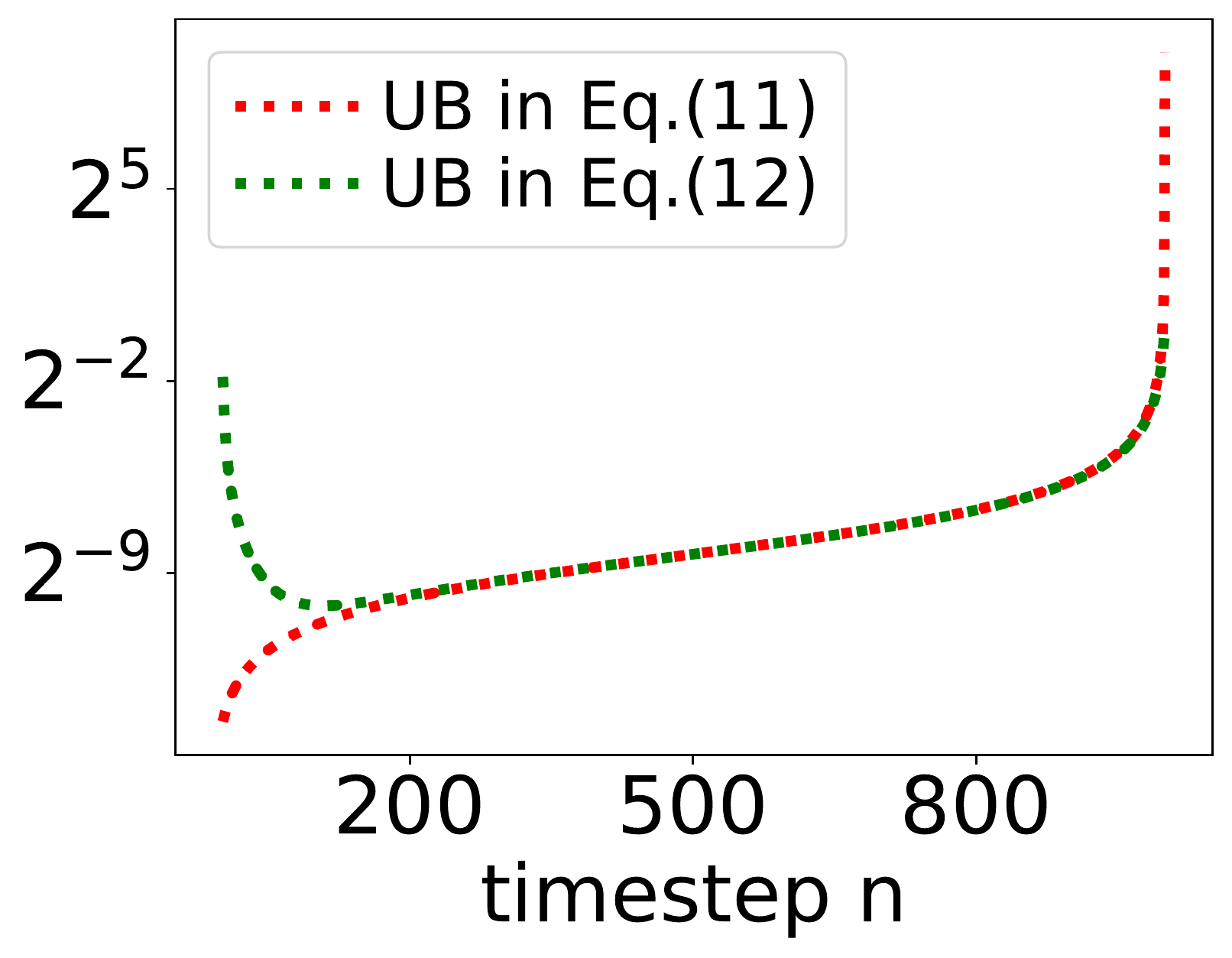}}
\subfloat[CS, $K$=100]{\includegraphics[height=2.6cm]{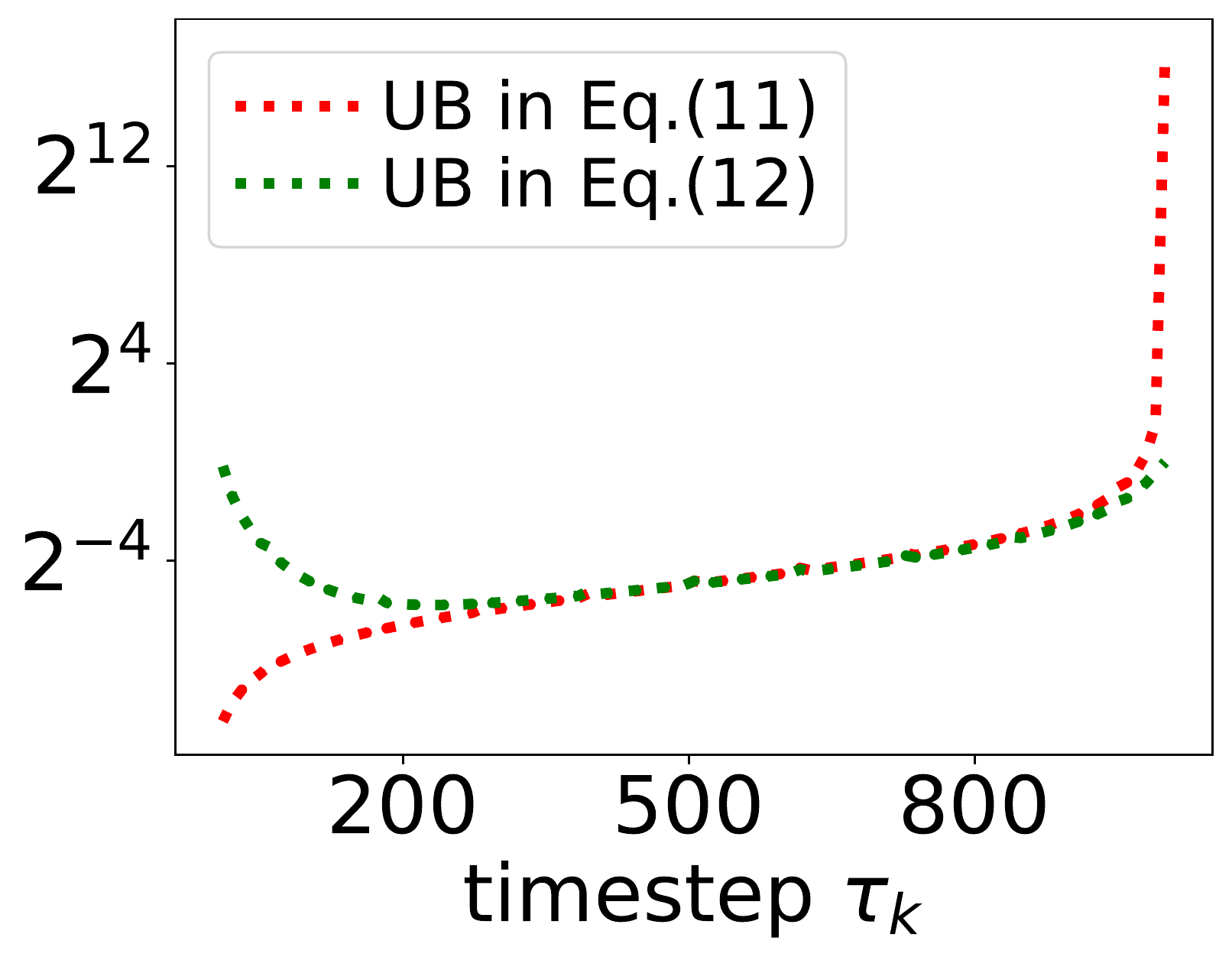}}
\caption{The upper bounds (UB) in Eq.~{(\ref{eq:upper_lower})} and Eq.~{(\ref{eq:upper})} under full-timesteps ($K$=$N$) and 100-timesteps ($K$=100) trajectories on CIFAR10 (LS) and CIFAR10 (CS).}
\label{fig:ubd_cifar10_cs}
\end{center}
\end{figure}

To see how these bounds work in practice, in Figure~\ref{fig:bd_clip_ratio}, we plot the probability that $\hat{\sigma}_n^2$ is clipped by the bounds in Theorem~\ref{thm:bd} with different number of Monte Carlo samples $M$ on CIFAR10 (LS). 
For all $M$, the curves of ratio v.s. $n$ are similar and the estimate is clipped more frequently when $n$ is large. This is as expected because when $n$ is large, the gap between the upper bound in Eq.~{(\ref{eq:upper})} and the lower bound in Eq.~{(\ref{eq:upper_lower})} tends to zero. The results also agree with the plot of the bounds in Figure~\ref{fig:bd_cifar10_cs}. Besides, the similarity of results between different $M$ implies that the clipping by bounds occurs mainly due to the error of the score-based model $\vs_n(\vx_n)$, instead of the randomness in Monte Carlo methods.

\begin{figure}[H]
    \centering
    \includegraphics[width=0.45\linewidth]{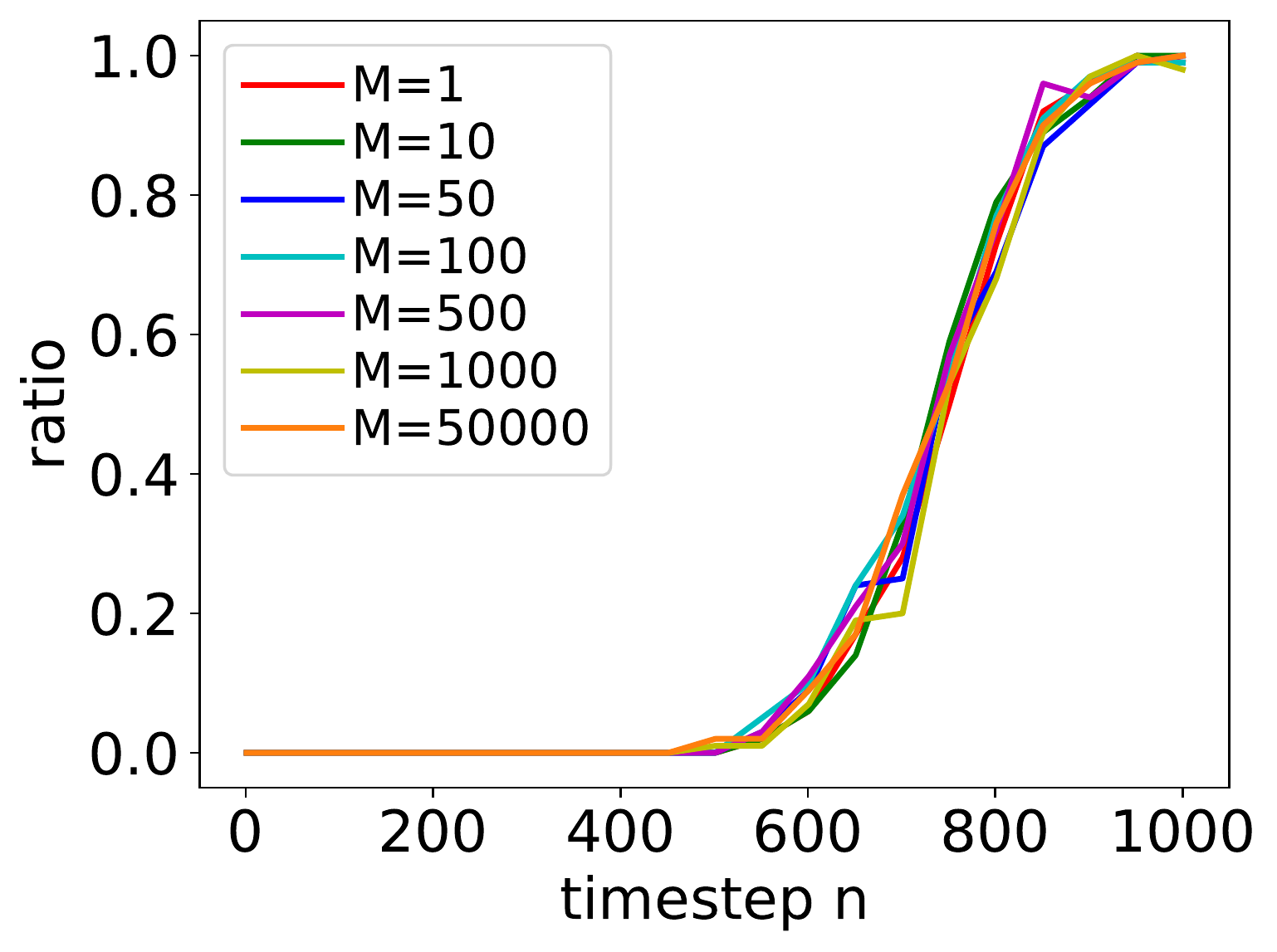}
    \caption{The probability that $\hat{\sigma}_n^2$ is clipped by the bounds in Theorem~\ref{thm:bd} with different number of Monte Carlo samples $M$ on CIFAR10 (LS). The probability is estimated by the ratio of $\hat{\sigma}_n^2$ being clipped in 100 independent trials. The results are evaluated with full timesteps $K=N$.}
    \label{fig:bd_clip_ratio}
\end{figure}

\subsection{Ablation Study on the Clipping of $\sigma_2$ Designed for Sampling}
\label{sec:clip_ablation}
This section validates the argument in Appendix~\ref{sec:detail_ll_sample} that properly clipping the noise scale $\sigma_2$ in $p(\vx_1|\vx_2)$ leads to a better sample quality. As shown in Figure~\ref{fig:clip_ablation_ddpm_opt_sigma} and Figure~\ref{fig:clip_ablation_ddim_opt_sigma}, it greatly improves the sample quality of our analytic estimate. The curves of clipping and no clipping overlap as $K$ increases, since $\sigma_2$ is below the threshold for a large $K$.

Indeed, as shown in Table~\ref{tab:threshold}, the clipping threshold designed for sampling in Appendix~\ref{sec:detail_ll_sample} is 1 to 3 orders of magnitude smaller than the combined upper bound in Theorem~\ref{thm:bd} (i.e., the minimum of the upper bounds in Eq.~{(\ref{eq:upper_lower})} and Eq.~{(\ref{eq:upper})}) when $K$ is small.

As shown in Figure~\ref{fig:clip_ablation_ddpm_big_sigma}, clipping $\sigma_2$ also slightly improves the sample quality of the handcrafted reverse variance $\sigma_n^2 = \beta_n$ used in the original DDPM~\citep{ho2020denoising}. As for the other two variances, i.e., $\sigma_n^2 = \tilde{\beta}_n$ in the original DDPM and $\sigma_n^2 = \lambda_n^2 = 0$ in the original DDIM~\citep{song2020denoising}, their $\sigma_2$ generally don't exceed the threshold and thereby clipping doesn't affect the result.

\begin{figure}[H]
\begin{center}
\subfloat[CIFAR10 (LS)]{\includegraphics[height=2.6cm]{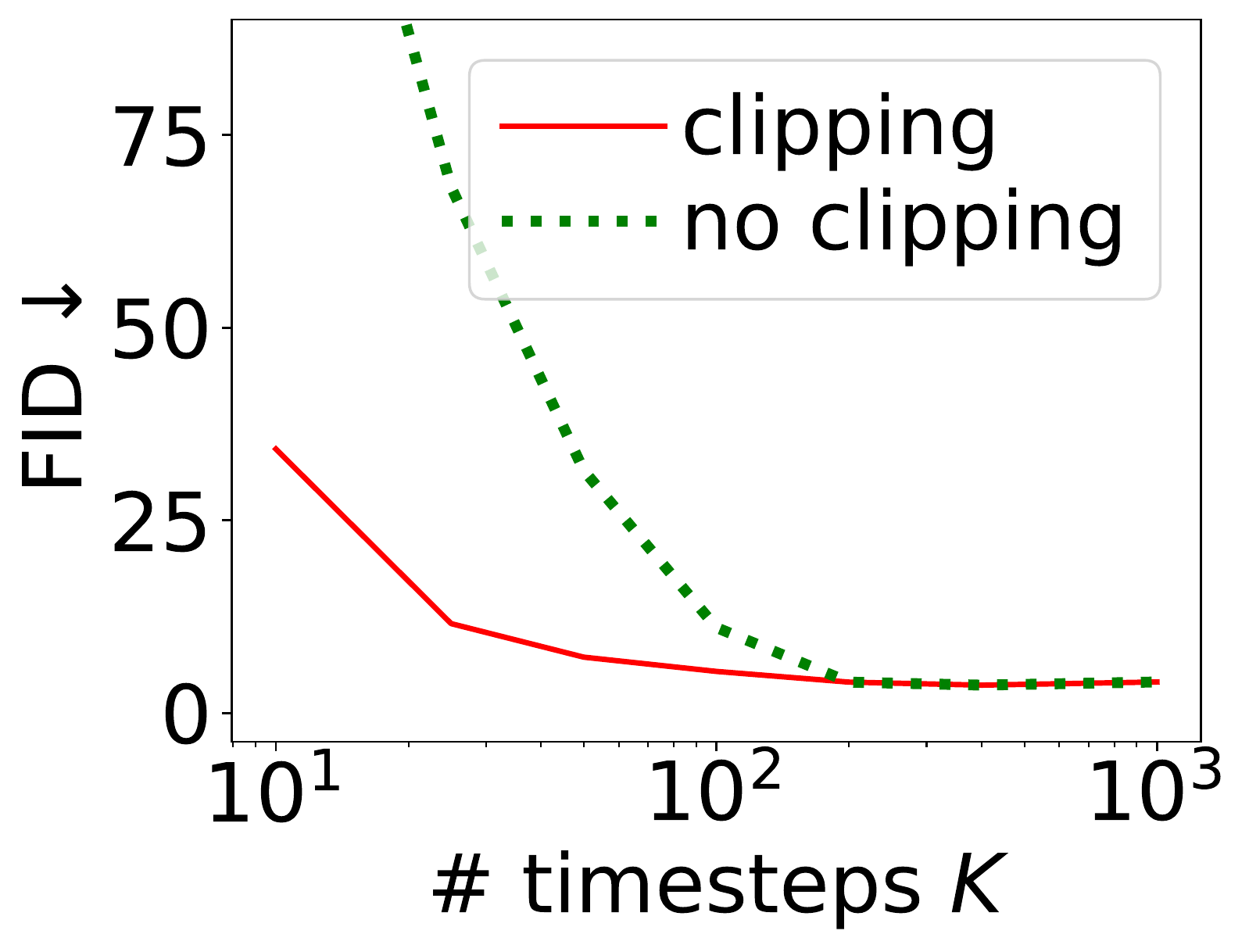}}
\subfloat[CIFAR10 (CS)]{\includegraphics[height=2.6cm]{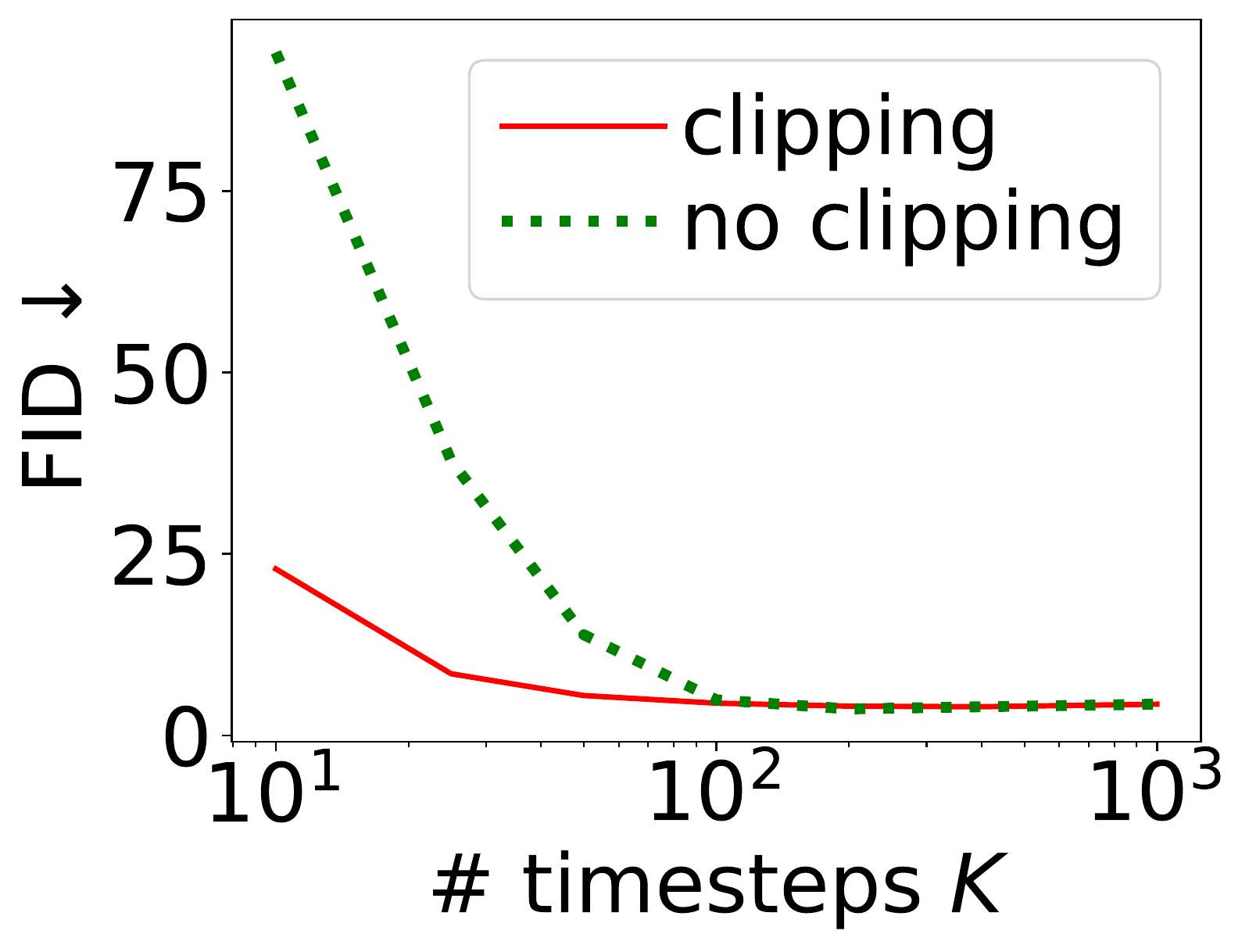}}
\subfloat[CelebA 64x64]{\includegraphics[height=2.6cm]{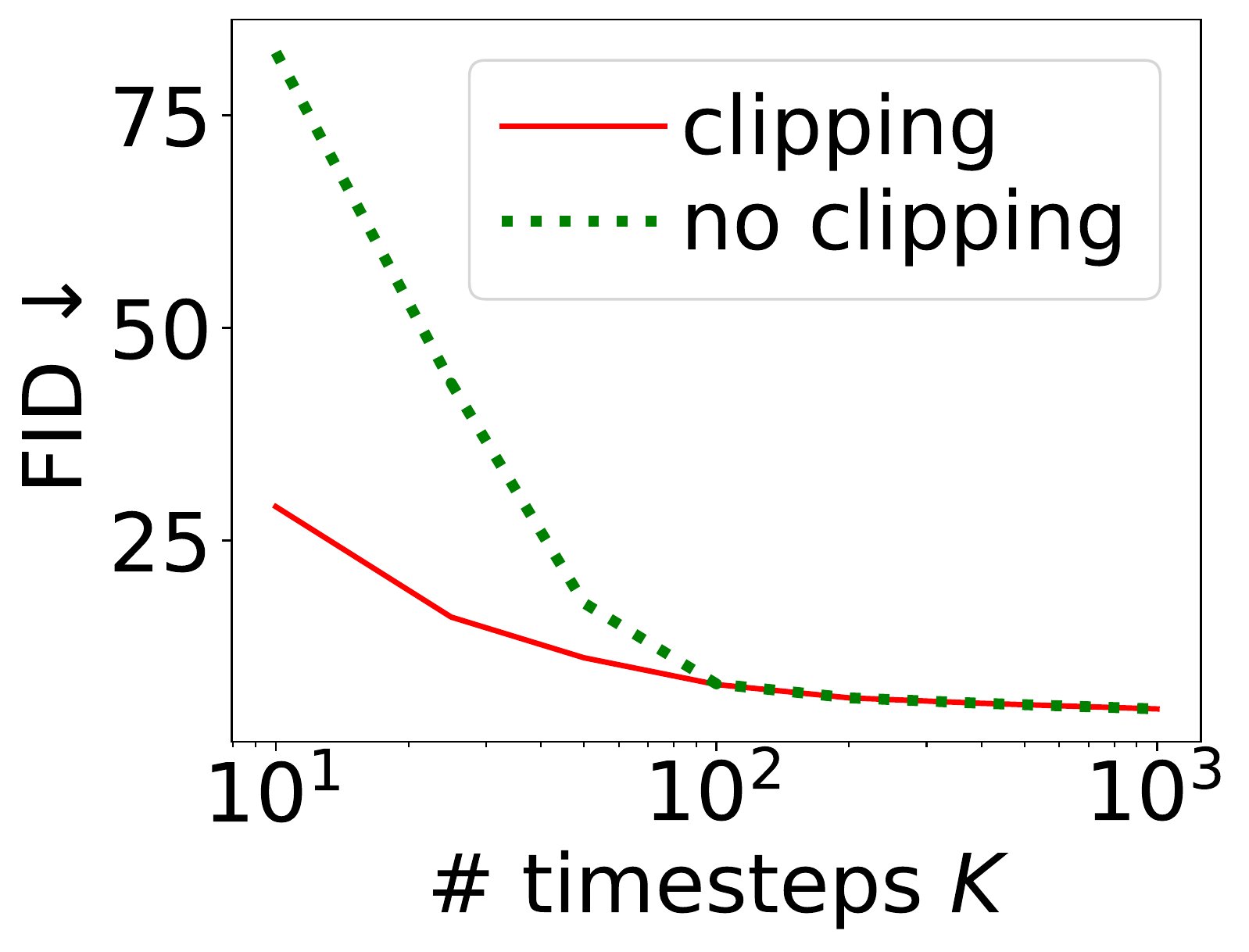}}
\subfloat[ImageNet 64x64]{\includegraphics[height=2.6cm]{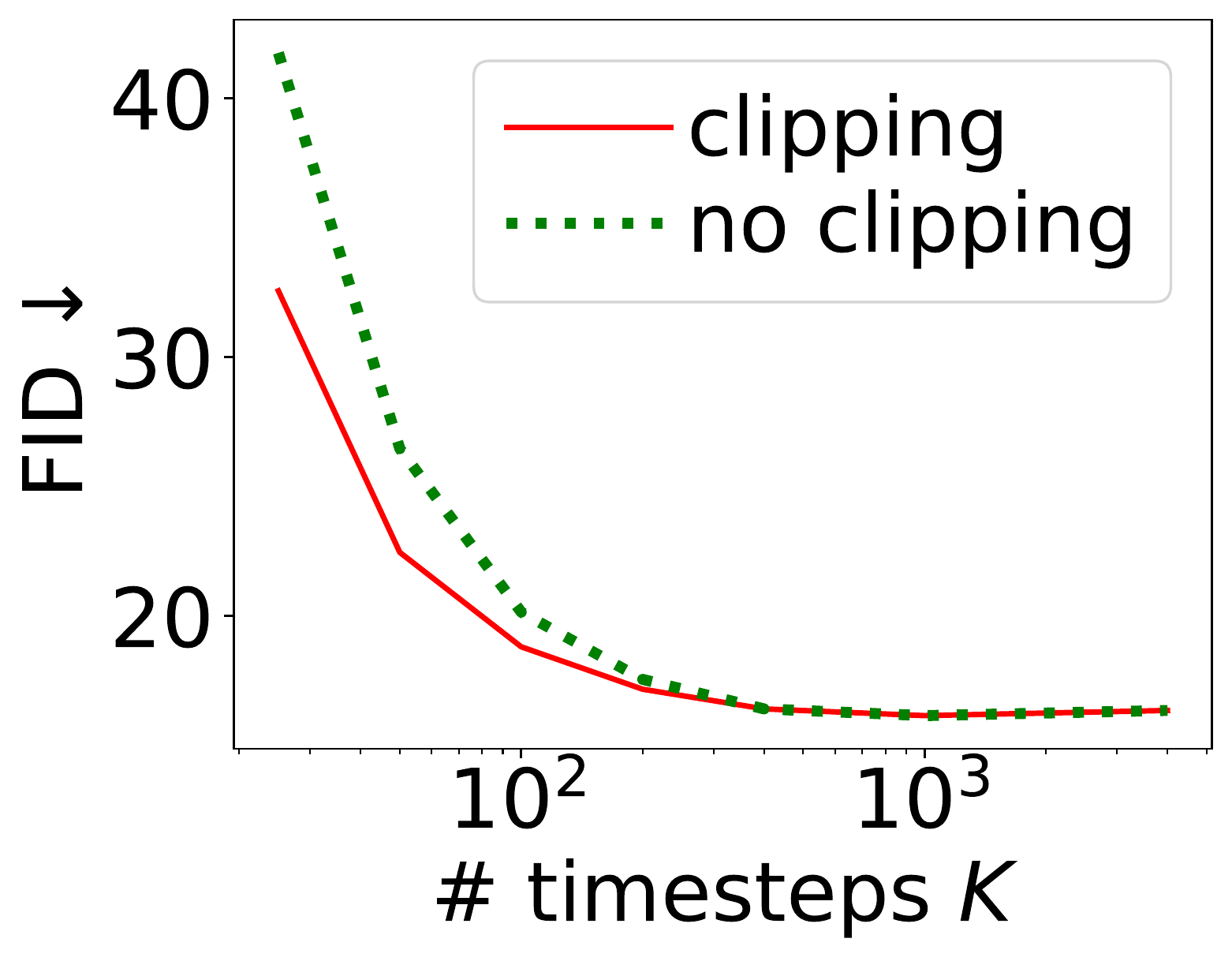}}
\caption{Ablation study on clipping $\sigma_2$, evaluated under Analytic-DDPM.}
\label{fig:clip_ablation_ddpm_opt_sigma}
\end{center}
\end{figure}

\begin{figure}[H]
\begin{center}
\subfloat[CIFAR10 (LS)]{\includegraphics[height=2.6cm]{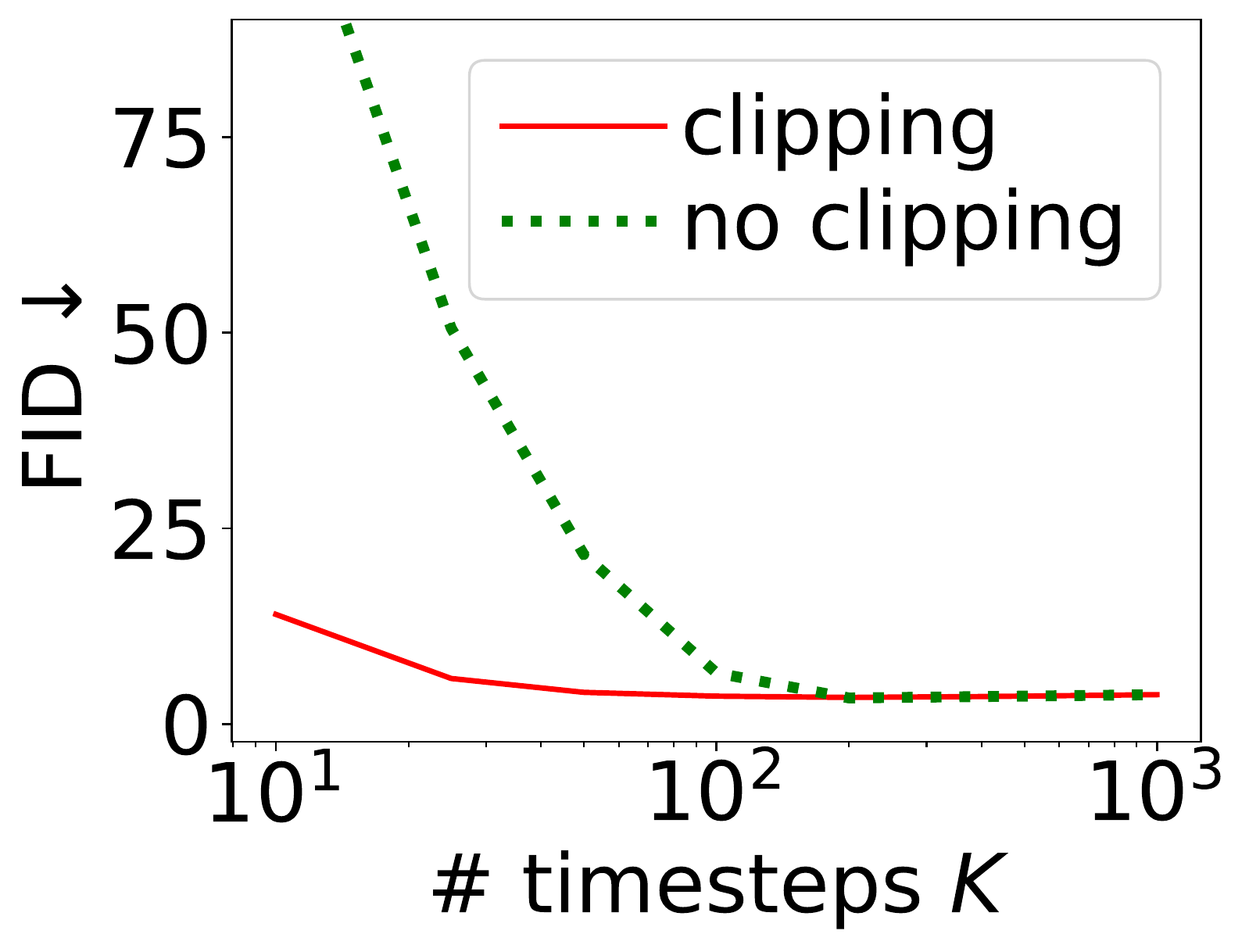}}
\subfloat[CIFAR10 (CS)]{\includegraphics[height=2.6cm]{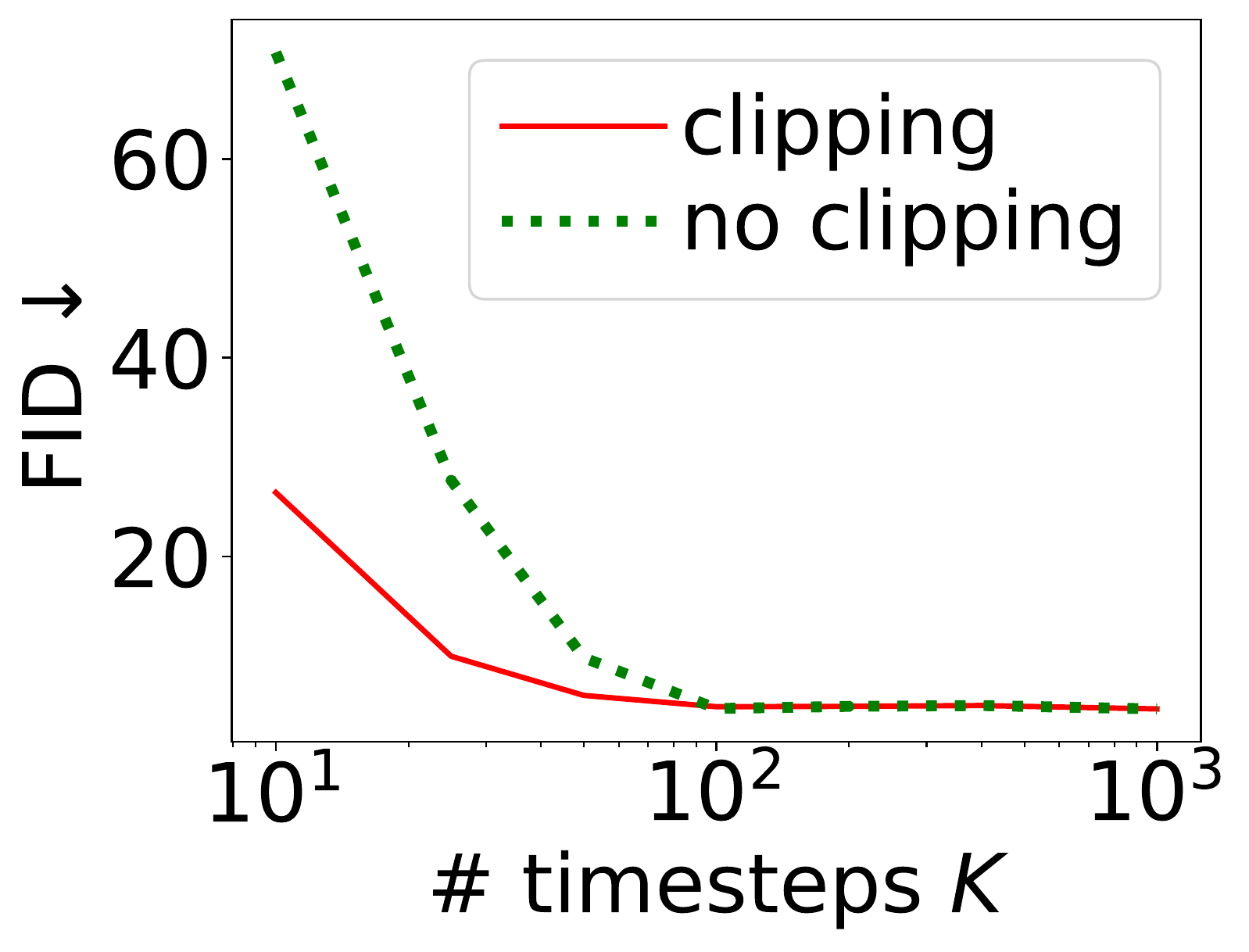}}
\subfloat[CelebA 64x64]{\includegraphics[height=2.6cm]{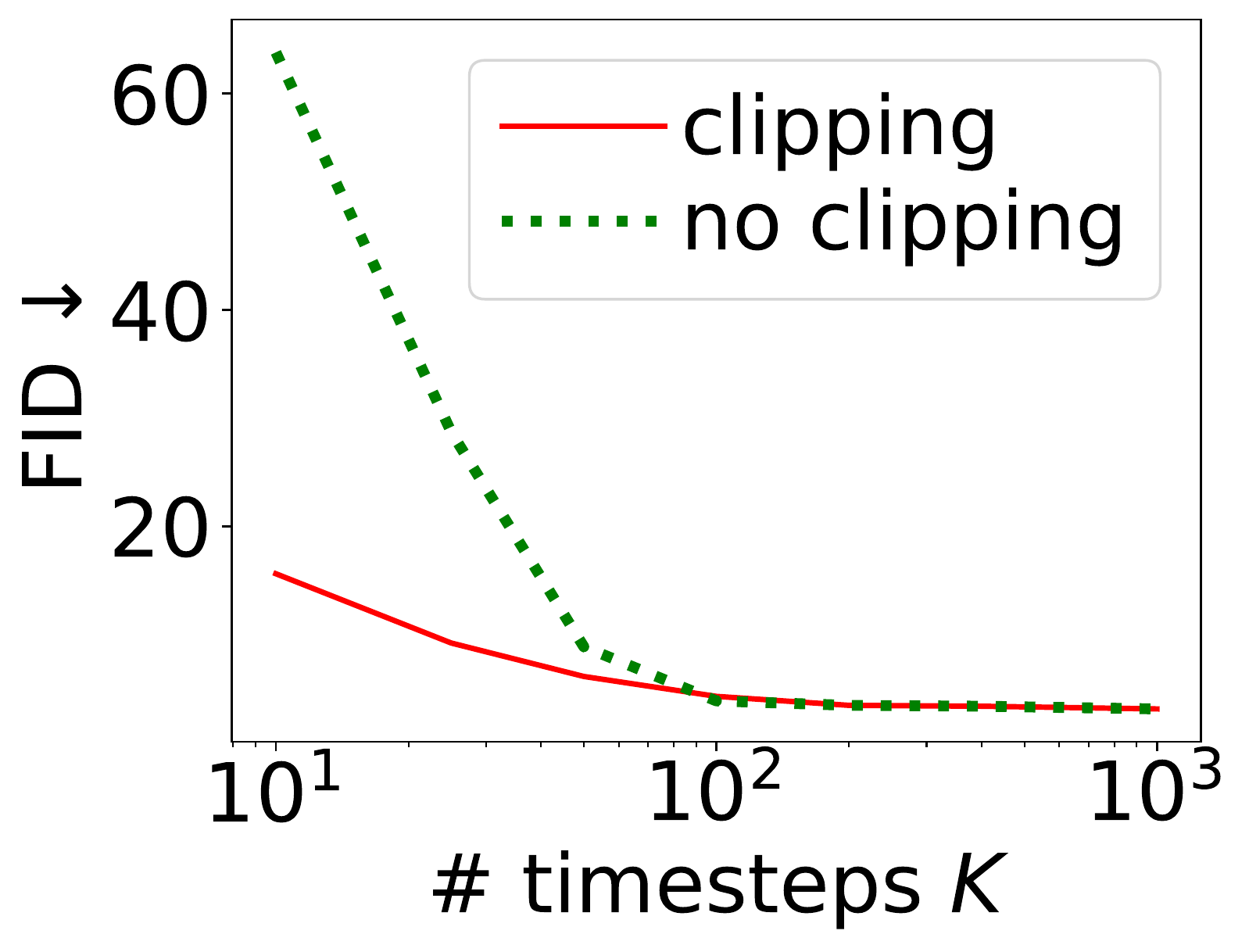}}
\subfloat[ImageNet 64x64]{\includegraphics[height=2.6cm]{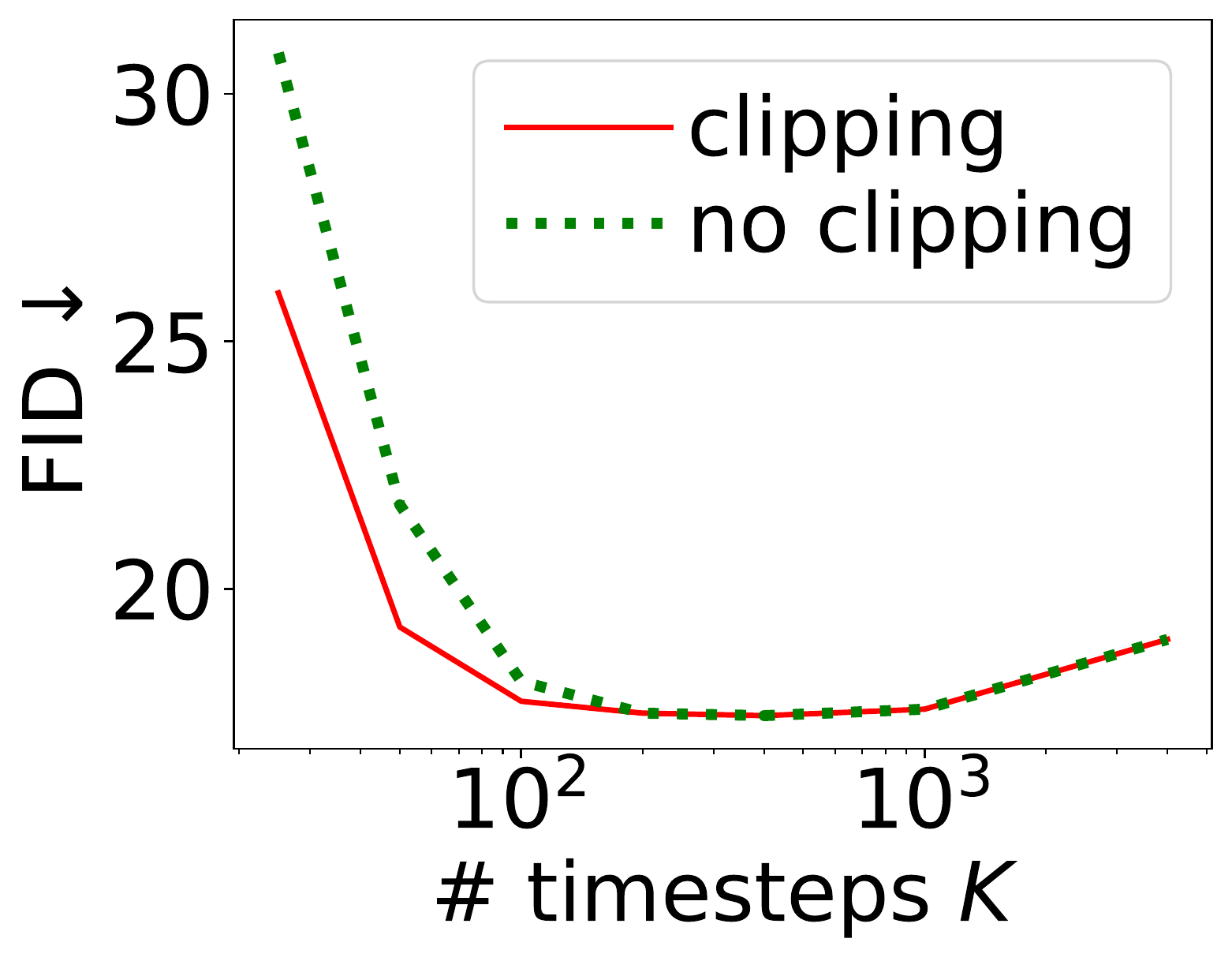}}
\caption{Ablation study on clipping $\sigma_2$, evaluated under Analytic-DDIM.}
\label{fig:clip_ablation_ddim_opt_sigma}
\end{center}
\end{figure}

\begin{figure}[H]
\begin{center}
\subfloat[CIFAR10 (LS)]{\includegraphics[height=2.5cm]{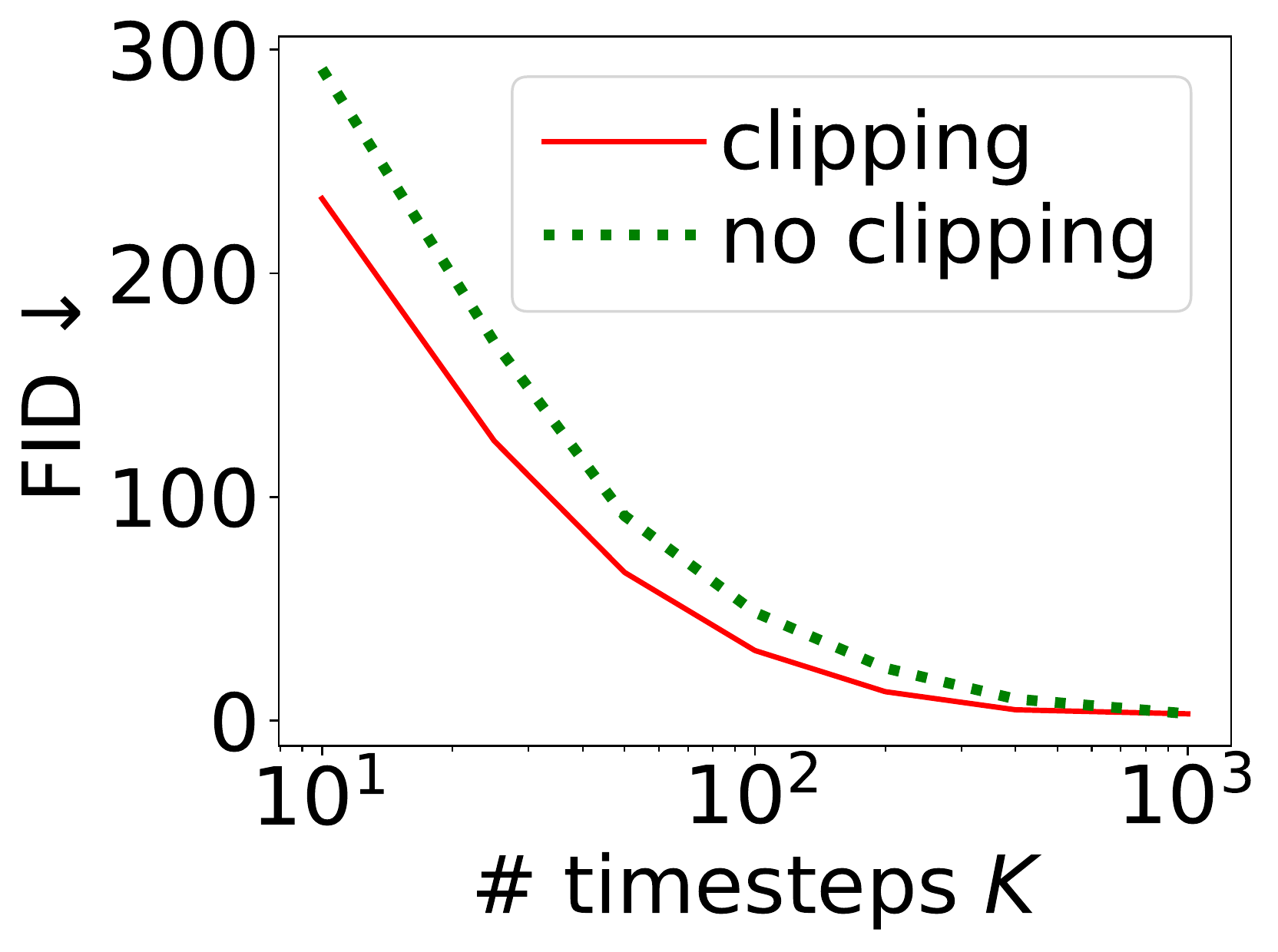}}
\subfloat[CIFAR10 (CS)]{\includegraphics[height=2.5cm]{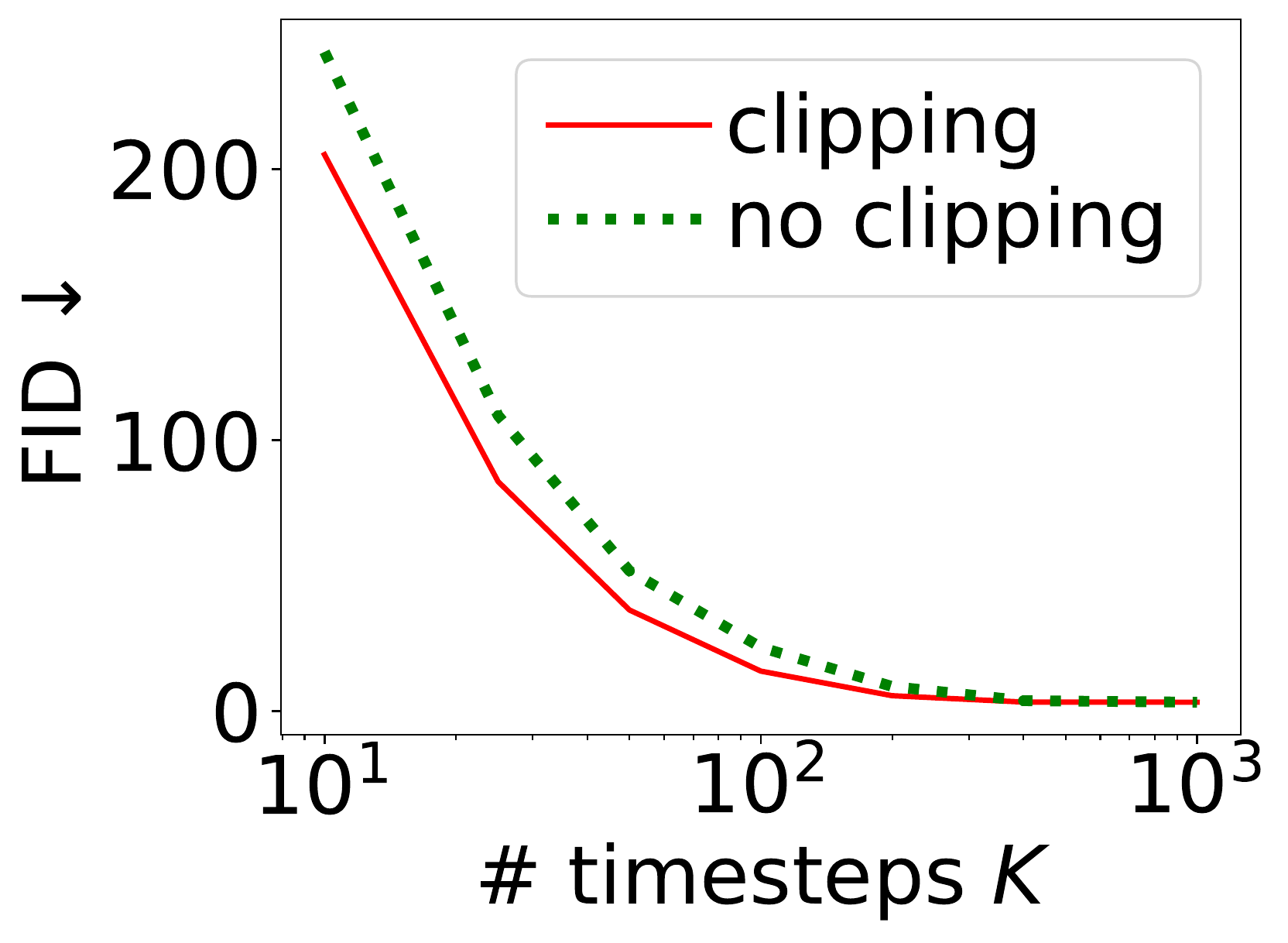}}
\subfloat[CelebA 64x64]{\includegraphics[height=2.5cm]{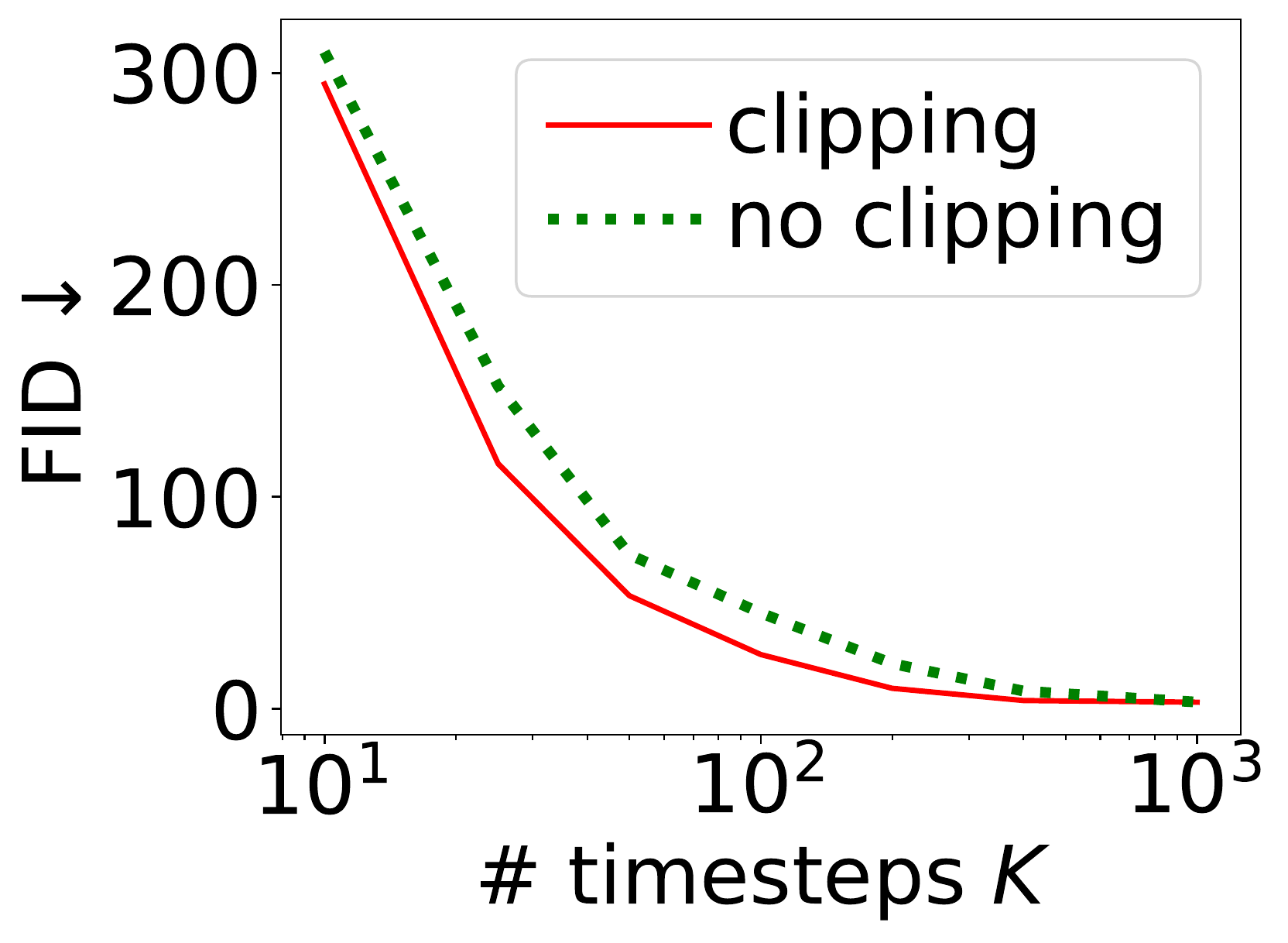}}
\subfloat[ImageNet 64x64]{\includegraphics[height=2.5cm]{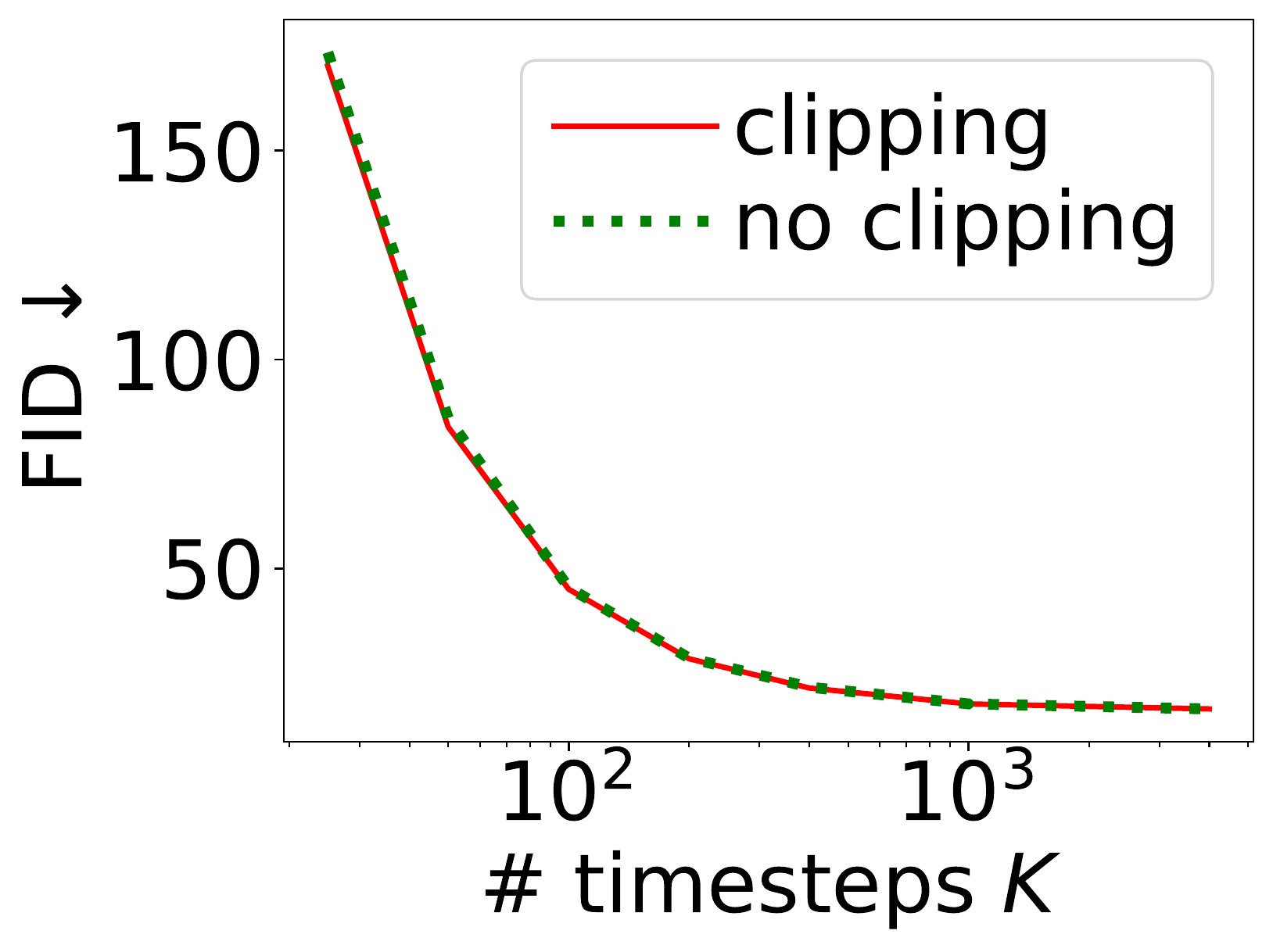}}
\caption{Ablation study on clipping $\sigma_2$, evaluated under DDPM with $\sigma_n^2 = \beta_n$.}
\label{fig:clip_ablation_ddpm_big_sigma}
\end{center}
\end{figure}

\begin{table}[ht]
    \centering
    \caption{Comparing the values of (\romannumeral1) the threshold in Appendix~\ref{sec:detail_ll_sample} used to clip $\sigma_2^2$ designed for sampling, (\romannumeral2) the combined upper bound in Theorem~\ref{thm:bd} when $n=2$, (\romannumeral3) the lower bound in Theorem~\ref{thm:bd} when $n=2$ and (\romannumeral4) our analytic estimate $\hat{\sigma}_2^2$. We show comparison results on different datasets and different forward processes when $K$ is small.}
    \label{tab:threshold}
    \begin{tabular}{llrrrrrrrrrr}
    \toprule
       \multicolumn{2}{l}{Model $\backslash$ \# timesteps $K$} & 10 & 25 & 50 & 100 \\
    \midrule
     \multicolumn{2}{l}{CIFAR10 (LS)}  & \\
     \arrayrulecolor{black!30}\midrule
      \multirow{4}{*}{DDPM} 
       & Threshold ($y$=2) & $3.87 \times 10^{-4}$  & $3.87 \times 10^{-4}$ & $3.87 \times 10^{-4}$ & $3.87 \times 10^{-4}$\\
       & Upper bound & $1.45 \times 10^{-1}$ & $2.24 \times 10^{-2}$ & $6.20 \times 10^{-3}$ & $2.10 \times 10^{-3}$ \\
       & Lower bound & $9.99 \times 10^{-5}$ & $9.96 \times 10^{-5}$ & $9.84 \times 10^{-5}$ & $9.55 \times 10^{-5}$ \\
       & $\hat{\sigma}_2^2$ & $8.70 \times 10^{-3} $ & $2.99 \times 10^{-3}$ & $1.32 \times 10^{-3}$ & $6.54 \times 10^{-4}$ \\
       \arrayrulecolor{black!30}\midrule
       \multirow{4}{*}{DDIM} 
       & Threshold ($y$=1) & $9.66 \times 10^{-5}$ & $9.66 \times 10^{-5}$ & $9.66 \times 10^{-5}$ & $9.66 \times 10^{-5}$ \\
       & Upper bound & $1.37 \times 10^{-1}$ & $1.96 \times 10^{-2}$ & $4.82 \times 10^{-3}$ & $1.36 \times 10^{-3}$ \\
       & Lower bound & $0$ & $0$ & $0$ & $0$ \\
      & $\hat{\sigma}_2^2$ & $8.17 \times 10^{-3}$ & $2.54 \times 10^{-3}$ & $9.66 \times 10^{-4}$ & $3.73 \times 10^{-4}$ \\
     \arrayrulecolor{black}\midrule
      \multicolumn{2}{l}{CIFAR10 (CS)} & \\
     \arrayrulecolor{black!30}\midrule
      \multirow{4}{*}{DDPM} 
       &  Threshold ($y$=1) & $9.66 \times 10^{-5}$ & $9.66 \times 10^{-5}$ & $9.66 \times 10^{-5}$ & $9.66 \times 10^{-5}$ \\
      & Upper bound & $3.56 \times 10^{-2}$ & $6.15 \times 10^{-3}$ & $1.85 \times 10^{-3}$ & $6.80 \times 10^{-4}$ \\
       & Lower bound & $4.12 \times 10^{-5}$ & $4.10 \times 10^{-5}$ & $4.04 \times 10^{-5}$ & $3.89 \times 10^{-5}$ \\
      & $\hat{\sigma}_2^2$ & $3.90 \times 10^{-3}$ & $1.28 \times 10^{-3}$ & $5.61 \times 10^{-4}$ & $2.75 \times 10^{-4}$ \\
       \arrayrulecolor{black!30}\midrule
       \multirow{4}{*}{DDIM} 
       &  Threshold ($y$=1) & $9.66 \times 10^{-5}$ & $9.66 \times 10^{-5}$ & $9.66 \times 10^{-5}$ & $9.66 \times 10^{-5}$ \\
      & Upper bound & $3.33 \times 10^{-2}$ & $5.22 \times 10^{-3}$ & $1.37 \times 10^{-3}$ & $4.18 \times 10^{-4}$ \\
       & Lower bound & $0$ & $0$ & $0$ & $0$  \\
       & $\hat{\sigma}_2^2$ &  $3.61 \times 10^{-3}$ & $1.06 \times 10^{-3}$ & $3.95 \times 10^{-4}$ & $1.53 \times 10^{-4}$ \\
     \arrayrulecolor{black}\midrule
        \multicolumn{2}{l}{CelebA 64x64} & \\
      \arrayrulecolor{black!30}\midrule
      \multirow{4}{*}{DDPM} 
       &  Threshold ($y$=2) & $3.87 \times 10^{-4}$  & $3.87 \times 10^{-4}$ & $3.87 \times 10^{-4}$ & $3.87 \times 10^{-4}$ \\
      & Upper bound & $1.45 \times 10^{-1}$ & $2.24 \times 10^{-2}$ & $6.20 \times 10^{-3}$ & $2.10 \times 10^{-3}$ \\
       & Lower bound & $9.99 \times 10^{-5}$ & $9.96 \times 10^{-5}$ & $9.84 \times 10^{-5}$ & $9.55 \times 10^{-5}$ \\
      & $\hat{\sigma}_2^2$ & $4.04 \times 10^{-3}$ & $1.54 \times 10^{-3}$ & $7.54 \times 10^{-4}$ & $4.06 \times 10^{-4}$ \\
       \arrayrulecolor{black!30}\midrule
       \multirow{4}{*}{DDIM} 
       &  Threshold ($y$=1) & $9.66 \times 10^{-5}$ & $9.66 \times 10^{-5}$ & $9.66 \times 10^{-5}$ & $9.66 \times 10^{-5}$ \\
      & Upper bound & $1.37 \times 10^{-1}$ & $1.96 \times 10^{-2}$ & $4.82 \times 10^{-3}$ & $1.36 \times 10^{-3}$ \\
       & Lower bound & $0$ & $0$ & $0$ & $0$  \\
       & $\hat{\sigma}_2^2$ & $3.74 \times 10^{-3}$ & $1.26 \times 10^{-3}$ & $5.17 \times 10^{-4}$ & $2.11 \times 10^{-4}$ \\
     \arrayrulecolor{black}\midrule
    \end{tabular} \\
    \vspace{0.15cm}
    \begin{tabular}{llrrrrrrrr}
    \toprule
      \multicolumn{2}{l}{Model $\backslash$ \# timesteps $K$} & 25 & 50 & 100 & 200 \\
    \midrule
      \multicolumn{2}{l}{ImageNet 64x64} & \\
      \arrayrulecolor{black!30}\midrule
      \multirow{4}{*}{DDPM} 
       &  Threshold ($y$=1) & $9.66 \times 10^{-5}$ & $9.66 \times 10^{-5}$ & $9.66 \times 10^{-5}$ & $9.66 \times 10^{-5}$ \\
      & Upper bound & $5.93 \times 10^{-3}$ & $1.84 \times 10^{-3}$ & $6.44 \times 10^{-4}$ & $2.61 \times 10^{-4}$ \\
       & Lower bound & $9.85 \times 10^{-6}$ & $9.81 \times 10^{-6}$ & $9.72 \times 10^{-6}$ & $9.51 \times 10^{-6}$ \\
      & $\hat{\sigma}_2^2$ & $1.40 \times 10^{-3}$ & $6.05 \times 10^{-4}$ & $2.77 \times 10^{-4}$ & $1.39 \times 10^{-4}$ \\
       \arrayrulecolor{black!30}\midrule
       \multirow{4}{*}{DDIM} 
       &  Threshold ($y$=1) & $9.66 \times 10^{-5}$ & $9.66 \times 10^{-5}$ & $9.66 \times 10^{-5}$ & $9.66 \times 10^{-5}$ \\
      & Upper bound & $5.46 \times 10^{-3}$ & $1.59 \times 10^{-3}$ & $5.03 \times 10^{-4}$ & $1.77 \times 10^{-4}$ \\
       & Lower bound & $0$ & $0$ & $0$ & $0$  \\
      & $\hat{\sigma}_2^2$ & $1.28 \times 10^{-3}$ & $5.17 \times 10^{-4}$ & $2.12 \times 10^{-4}$ & $9.11 \times 10^{-5}$\\
     \arrayrulecolor{black}\midrule
    \end{tabular}
\end{table}

\clearpage
\subsection{Sample Quality Comparison between Different Trajectories}
\label{sec:trajectory_ablation}

While the optimal trajectory (OT) significantly improves the likelihood results, it doesn't lead to better FID results. As shown in Figure~\ref{fig:trajectory_ablation}, the even trajectory (ET) has better FID results. Such a behavior essentially roots in the different natures of the two metrics and has been investigated in extensive prior works~\citep{ho2020denoising,nichol2021improved,song2021maximum,vahdat2021score,watson2021learning,kingma2021variational}.

\begin{figure}[H]
\begin{center}
\subfloat[CIFAR10 (LS)]{\includegraphics[height=2.6cm]{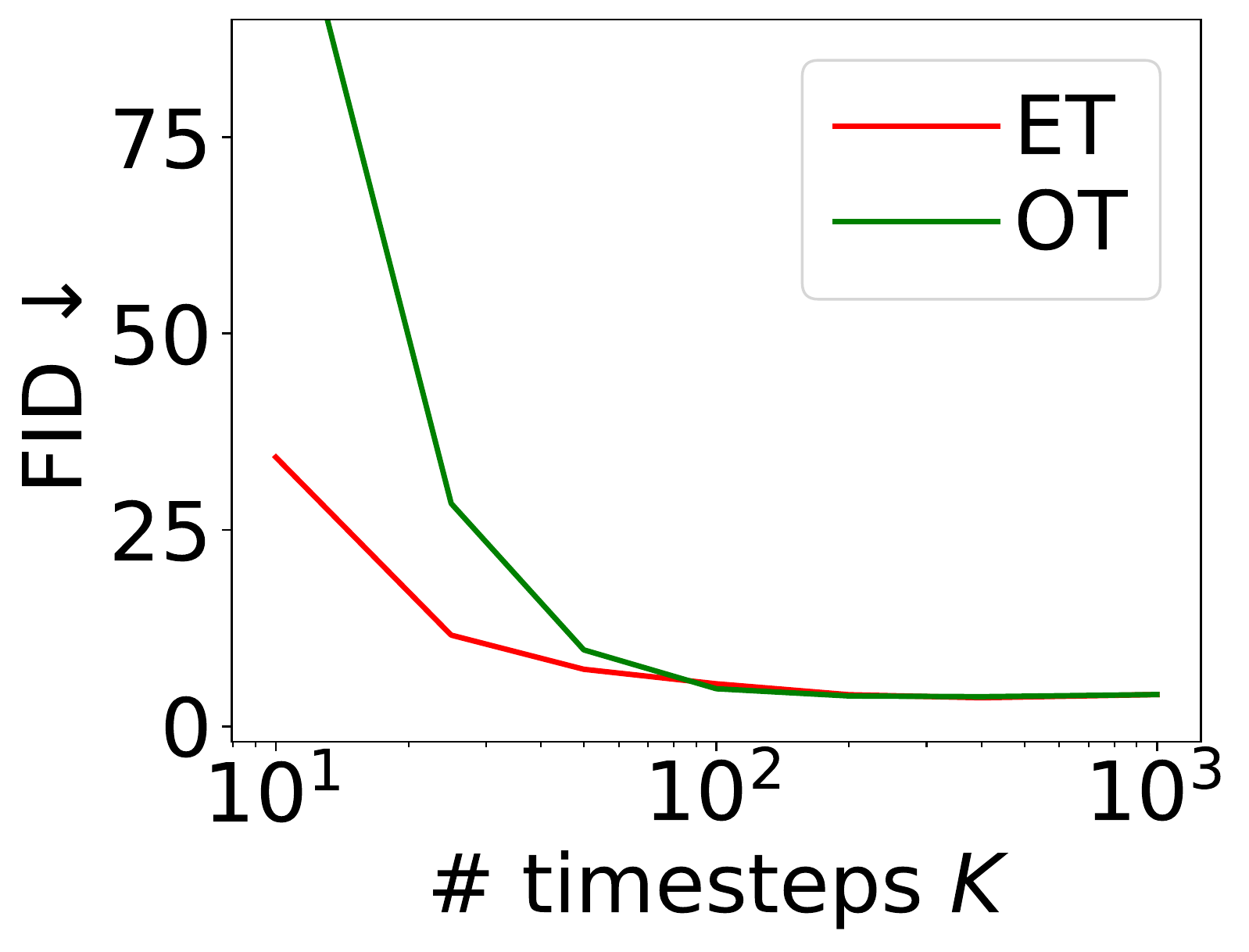}}
\subfloat[CIFAR10 (CS)]{\includegraphics[height=2.6cm]{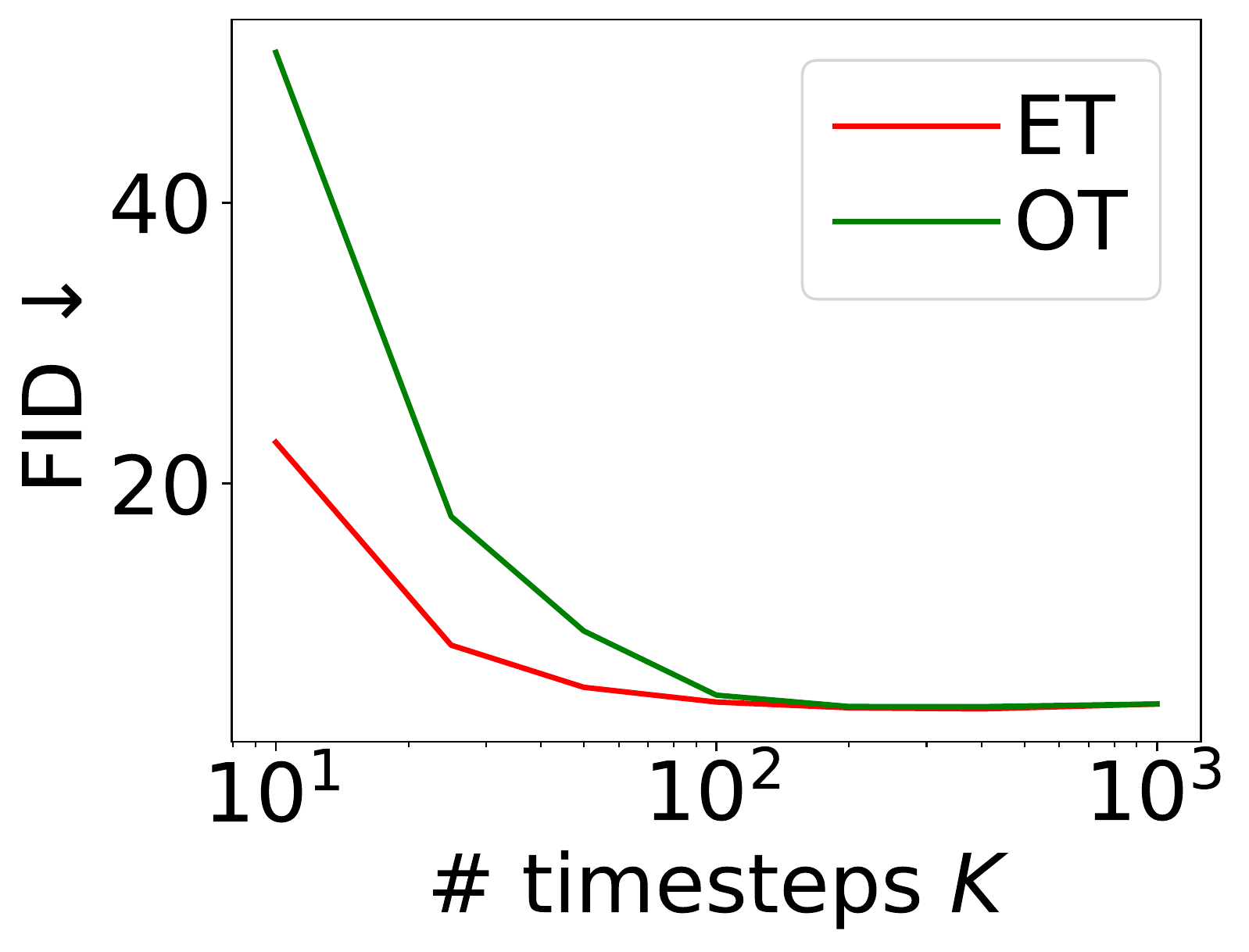}}
\subfloat[CelebA 64x64]{\includegraphics[height=2.6cm]{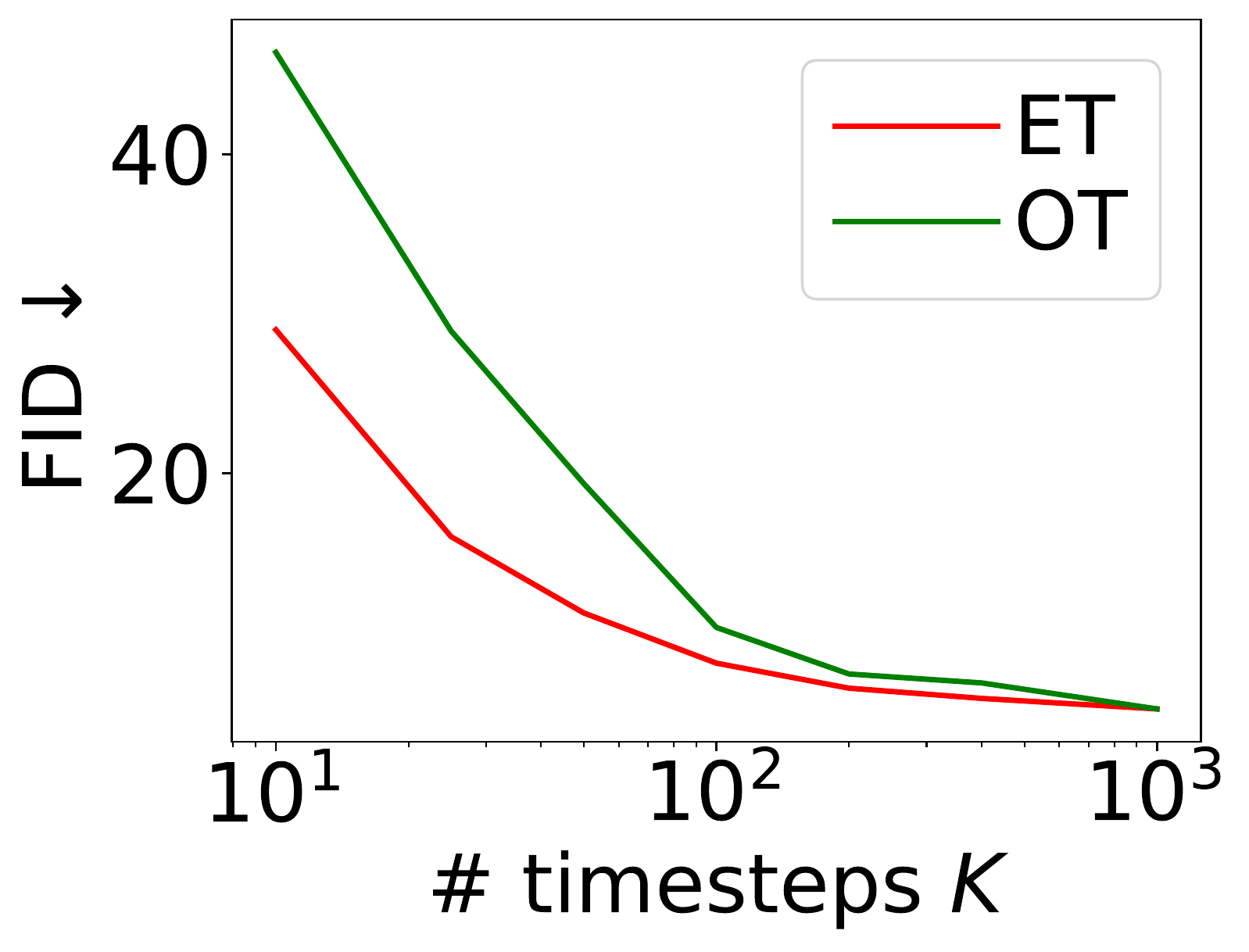}}
\subfloat[ImageNet 64x64]{\includegraphics[height=2.6cm]{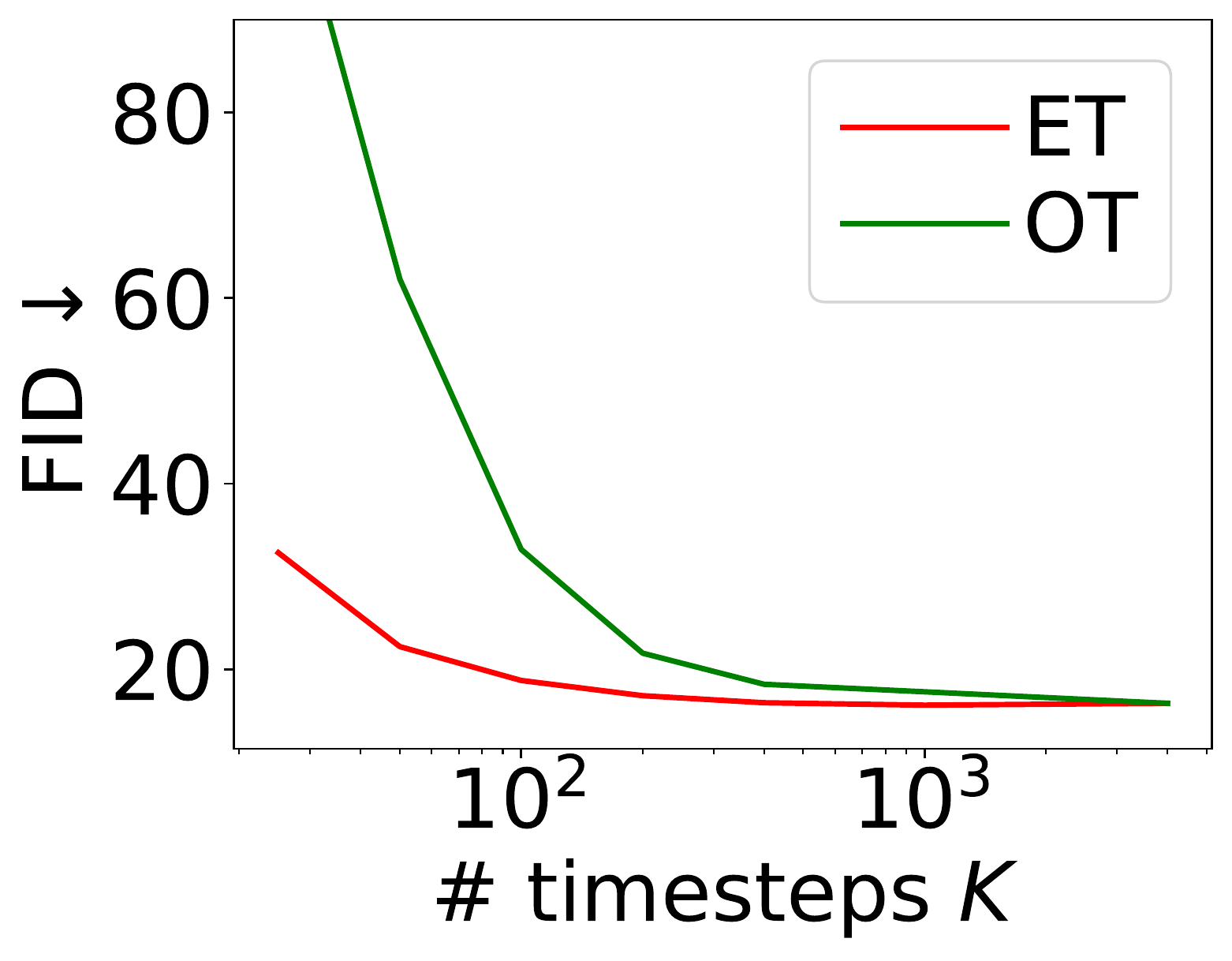}}
\caption{FID results with ET and OT, evaluated under Analytic-DDPM.}
\label{fig:trajectory_ablation}
\end{center}
\end{figure}

\subsection{Additional Likelihood Comparison}
\label{sec:additional_ll}
We compare our Analytic-DPM to Improved DDPM~\citep{nichol2021improved} that predicts the reverse variance by a neural network. The comparison is based on the ImageNet 64x64 model described in Appendix~\ref{sec:detail_sbm}. As shown in Table~\ref{tab:improved_ddpm}, with full timesteps, Analytic-DPM achieves a NLL of 3.61, which is very close to 3.57 achieved by predicting the reverse variance in Improved DDPM. Besides, we also notice that the ET reduces the log-likelihood performance of Improved DDPM when $K$ is small, and this is consistent with what \cite{nichol2021improved} report. In contrast, our Analytic-DPM performs well with the ET.

\begin{table}[H]
    \centering
    \caption{Negative log-likelihood (bits/dim) $\downarrow$ under the DDPM forward process on ImageNet 64x64. All are evaluated under the even trajectory (ET).
    }
    \label{tab:improved_ddpm}
    % {\setlength{\tabcolsep}{5pt}
    \begin{tabular}{lrrrrrrrr}
    \toprule
      Model $\backslash$ \# timesteps $K$ & 25 & 50 & 100 & 200 & 400 & 1000 & 4000 \\
    \midrule
       Improved DDPM & 18.91 & 8.46 & 5.27 & 4.24 & 3.86 & \textbf{3.68} & \textbf{3.57} \\
       Analytic-DDPM & \textbf{4.78} & \textbf{4.42} & \textbf{4.15} & \textbf{3.95} & \textbf{3.81} & 3.69 & 3.61 \\
    \arrayrulecolor{black}\bottomrule
    \end{tabular}%}
\end{table}

\subsection{CelebA 64x64 Results with a Slightly Different Implementation of the Even Trajectory}
\label{sec:fid_ddim_et}
\cite{song2020denoising} use a slightly different implementation of the even trajectory on CelebA 64x64. They choose a different stride $a= \mathrm{int}(\frac{N}{K})$, and the $k$th timestep is determined as $1 + a(k-1)$. As shown in Table~\ref{tab:fid_ddim_et}, under the setting of \cite{song2020denoising} on CelebA 64x64, our Analytic-DPM still improves the original DDIM consistently and improves the original DDPM in most cases.

\begin{table}[ht]
    \centering
    \caption{FID $\downarrow$ on CelebA 64x64, following the even trajectory implementation of \cite{song2020denoising}. $^\dagger$Original results in \cite{song2020denoising}. $^\ddagger$Our reproduced results.}
    \label{tab:fid_ddim_et}
    \begin{tabular}{lrrrrr}
    \toprule
      Model $\backslash$ \# timesteps $K$ & 10 & 20 & 50 & 100 & 1000 \\
    \midrule
      CelebA 64x64 & \\
      \arrayrulecolor{black!30}\midrule
      \quad DDPM, $\sigma_n^2 = \tilde{\beta}_n$$^\dagger$ &  33.12 & 26.03 & 18.48 & 13.93 & 5.98\\
      \quad DDPM, $\sigma_n^2 = \tilde{\beta}_n$$^\ddagger$ & 33.13 & 25.95 & 18.61 & 13.92 & 5.95 \\
      \quad DDPM, $\sigma_n^2 = \beta_n$$^\dagger$ & 299.71 & 183.83 & 71.71 & 45.20 & \textbf{3.26} \\
      \quad DDPM, $\sigma_n^2 = \beta_n$$^\ddagger$ & 299.88 & 185.21 & 71.86 & 45.15 & \textbf{3.21} \\
      \quad Analytic-DDPM & \textbf{25.88} & \textbf{17.40} & \textbf{10.98} & \textbf{7.95} & 5.21 \\
       \arrayrulecolor{black!30}\midrule
      \quad DDIM, $\sigma_n^2 = \lambda_n^2 = 0$$^\dagger$ & 17.33 & 13.73 & 9.17 & 6.53 & 3.51 \\
      \quad DDIM, $\sigma_n^2 = \lambda_n^2 = 0$$^\ddagger$ & 17.38 & 13.72 & 9.17 & 6.51 & 3.40 \\
      \quad Analytic-DDIM & \textbf{12.74} & \textbf{9.50} & \textbf{5.96} & \textbf{4.14} & \textbf{3.13} \\
    \arrayrulecolor{black}\bottomrule
    \end{tabular}
\end{table}

\subsection{Comparison to other Classes of Generative Models}

While DPMs and their variants serve as the most direct baselines to validate the effectiveness of our method, we also compare with other classes of generative models in Table~\ref{tab:cmp_other_dgm}. Analytic-DPM achieves competitive sample quality results among various generative models, and meanwhile significantly reduces the efficiency gap between DPMs and other models.

\begin{table}[H]
    \centering
    \caption{Comparison to other classes of generative models on CIFAR10. We show the FID results, the number of model function evaluations (NFE) to generate a single sample and the time to generate 10 samples with a batch size of 10 on one GeForce RTX 2080 Ti.}
    \label{tab:cmp_other_dgm}
    \begin{tabular}{lrrrr}
    \toprule
       Method & FID$\downarrow$ & NFE $\downarrow$ & Time (s) $\downarrow$ \\
     \midrule
        Analytic-DPM, $K=25$ (ours) & 5.81 & 25 & 0.73\\
        DDPM, $K=90$~\citep{ho2020denoising} & 6.12 & 90 & 2.64\\
        DDIM, $K=30$~\citep{song2020denoising} & 5.85 & 30 & 0.88\\
        Improved DDPM, $K=45$~\citep{nichol2021improved} & 5.96 & 45 & 1.37\\
    \midrule
        SNGAN~\citep{miyato2018spectral} & 21.7 & 1 & - \\
        BigGAN (cond.)~\citep{brock2018large} & 14.73 & 1 & - \\
        StyleGAN2~\citep{karras2020training} & 8.32 & 1 & - \\
        StyleGAN2 + ADA~\citep{karras2020training} & 2.92 & 1 & - \\
    \midrule
        NVAE~\citep{vahdat2020nvae} & 23.5 & 1 & - \\
        Glow~\citep{kingma2018glow} & 48.9 & 1 & - \\
        EBM~\citep{du2019implicit} & 38.2 & 60 & - \\
        VAEBM~\citep{xiao2020vaebm} & 12.2 & 16 & - \\
    \bottomrule
    \end{tabular}
\end{table}

\clearpage
\subsection{Samples}
\label{sec:samples}

In Figure~{\ref{fig:cmp_cifar10_ls}-\ref{fig:cmp_imagenet}}, we show Analytic-DDIM constrained on a short trajectory of $K=50$ timesteps can generate samples comparable to these under the best FID setting.

In Figure~{\ref{fig:K_cifar10_ls}-\ref{fig:K_imagenet}}, we also show samples of both Analytic-DDPM and Analytic-DDIM constrained on trajectories of different number of timesteps $K$.

\begin{figure}[H]
\begin{center}
\subfloat[Best FID samples]{\includegraphics[width=0.48\linewidth]{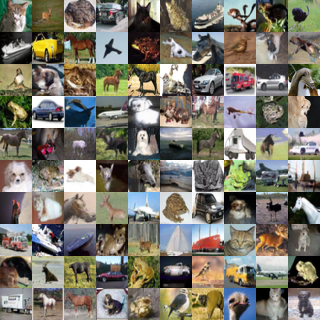}}\ 
\subfloat[Analytic-DDIM, $K=50$]{\includegraphics[width=0.48\linewidth]{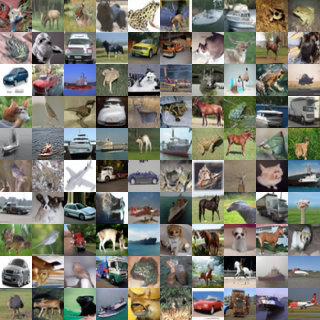}}
\caption{Generated samples on CIFAR10 (LS).}
\label{fig:cmp_cifar10_ls}
\end{center}
\end{figure}

\begin{figure}[H]
\begin{center}
\subfloat[Best FID samples]{\includegraphics[width=0.48\linewidth]{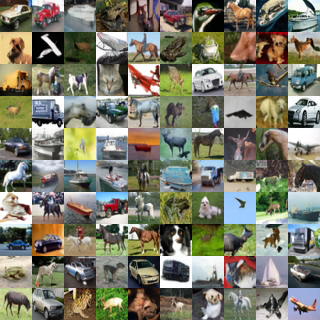}}\ 
\subfloat[Analytic-DDIM, $K=50$]{\includegraphics[width=0.48\linewidth]{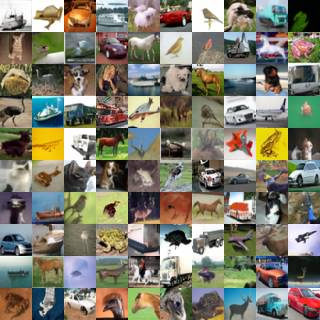}}
\caption{Generated samples on CIFAR10 (CS).}
\label{fig:cmp_cifar10_cs}
\end{center}
\end{figure}

\clearpage
\begin{figure}[H]
\begin{center}
\subfloat[Best FID samples]{\includegraphics[width=0.48\linewidth]{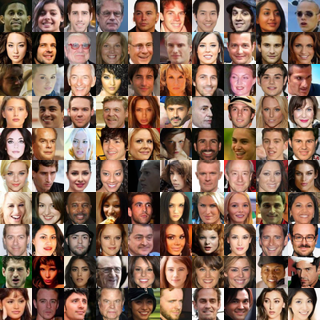}}\ 
\subfloat[Analytic-DDIM, $K=50$]{\includegraphics[width=0.48\linewidth]{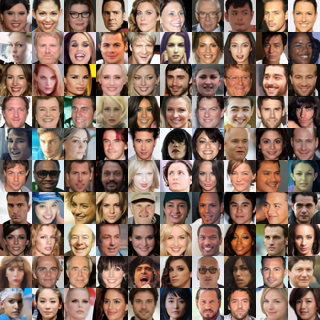}}
\caption{Generated samples on CelebA 64x64.}
\label{fig:cmp_celeba}
\end{center}
\end{figure}

\begin{figure}[H]
\begin{center}
\subfloat[Best FID samples]{\includegraphics[width=0.48\linewidth]{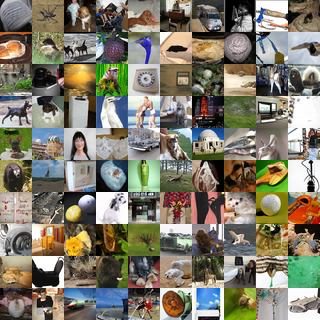}}\ 
\subfloat[Analytic-DDIM, $K=50$]{\includegraphics[width=0.48\linewidth]{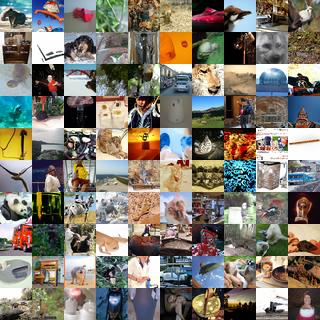}}
\caption{Generated samples on ImageNet 64x64.}
\label{fig:cmp_imagenet}
\end{center}
\end{figure}

\clearpage
\begin{figure}[H]
\centering
\begin{minipage}{\textwidth}
\begin{center}
\subfloat[Analytic-DDPM, $K=10$]{\includegraphics[width=0.32\linewidth]{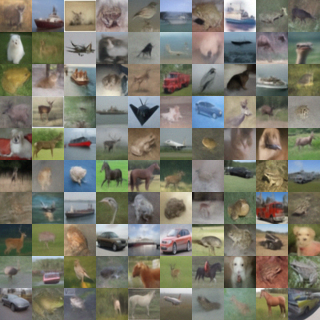}}\ 
\subfloat[Analytic-DDPM, $K=100$]{\includegraphics[width=0.32\linewidth]{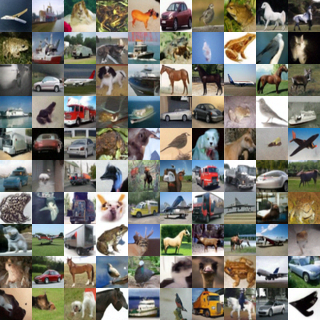}}\ 
\subfloat[Analytic-DDPM, $K=1000$]{\includegraphics[width=0.32\linewidth]{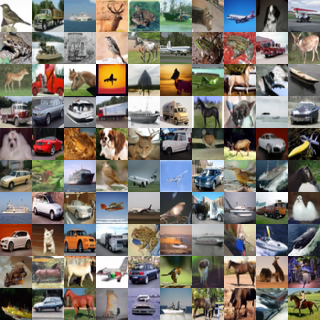}}\\
\vspace{-.2cm}
\subfloat[Analytic-DDIM, $K=10$]{\includegraphics[width=0.32\linewidth]{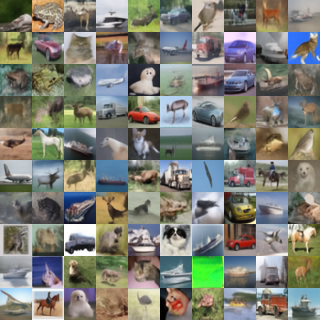}}\ 
\subfloat[Analytic-DDIM, $K=100$]{\includegraphics[width=0.32\linewidth]{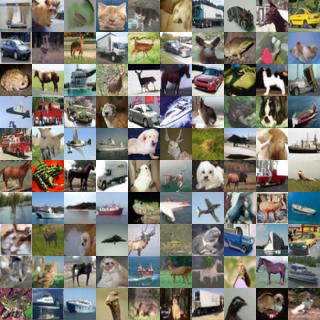}}\ 
\subfloat[Analytic-DDIM, $K=1000$]{\includegraphics[width=0.32\linewidth]{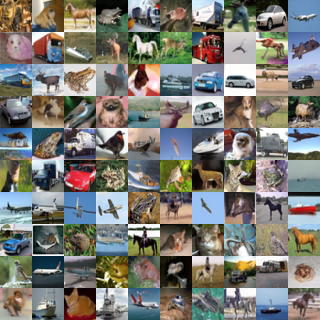}}
\vspace{-.1cm}
\caption{Generated samples on CIFAR10 (LS).}
\label{fig:K_cifar10_ls}
\end{center}
\end{minipage}\\
\begin{minipage}{\textwidth}
\begin{center}
\subfloat[Analytic-DDPM, $K=10$]{\includegraphics[width=0.32\linewidth]{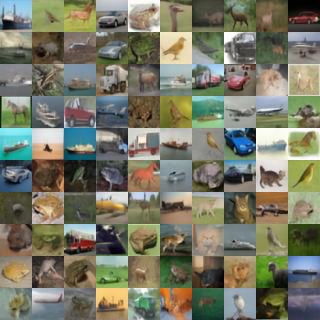}}\ 
\subfloat[Analytic-DDPM, $K=100$]{\includegraphics[width=0.32\linewidth]{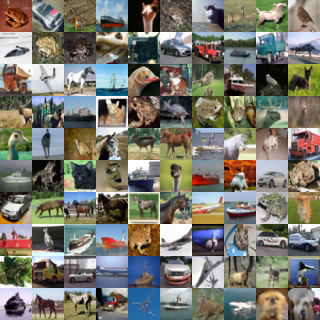}}\ 
\subfloat[Analytic-DDPM, $K=1000$]{\includegraphics[width=0.32\linewidth]{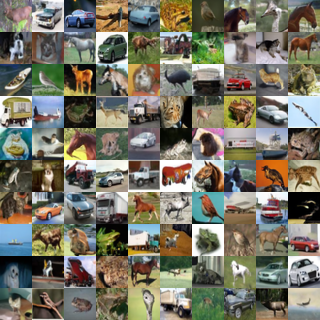}}\\
\vspace{-.2cm}
\subfloat[Analytic-DDIM, $K=10$]{\includegraphics[width=0.32\linewidth]{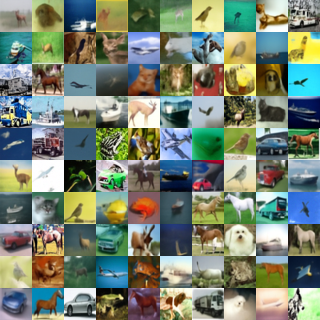}}\ 
\subfloat[Analytic-DDIM, $K=100$]{\includegraphics[width=0.32\linewidth]{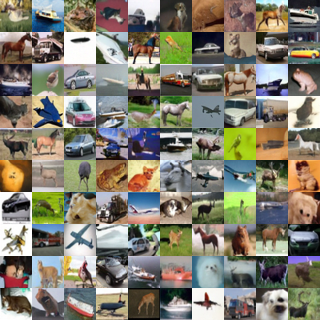}}\ 
\subfloat[Analytic-DDIM, $K=1000$]{\includegraphics[width=0.32\linewidth]{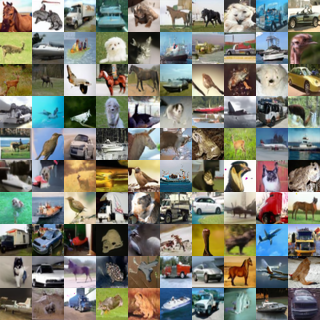}}
\vspace{-.1cm}
\caption{Generated samples on CIFAR10 (CS).}
\label{fig:K_cifar10_cs}
\end{center}
\end{minipage}
\end{figure}

\clearpage
\begin{figure}[H]
\centering
\begin{minipage}{\textwidth}
\begin{center}
\subfloat[Analytic-DDPM, $K=10$]{\includegraphics[width=0.32\linewidth]{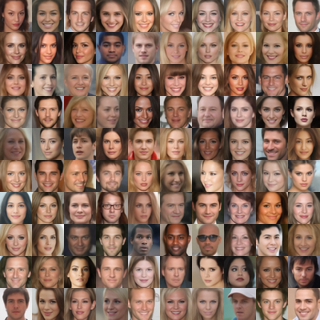}}\ 
\subfloat[Analytic-DDPM, $K=100$]{\includegraphics[width=0.32\linewidth]{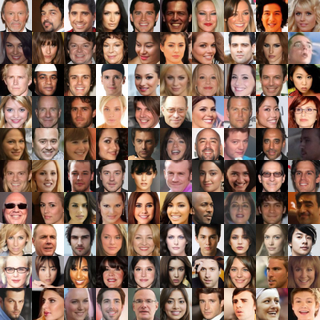}}\ 
\subfloat[Analytic-DDPM, $K=1000$]{\includegraphics[width=0.32\linewidth]{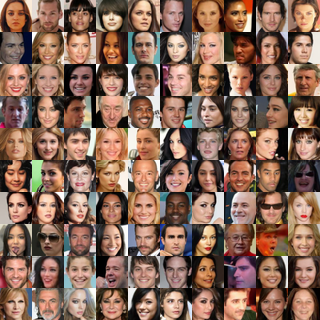}}\\
\vspace{-.2cm}
\subfloat[Analytic-DDIM, $K=10$]{\includegraphics[width=0.32\linewidth]{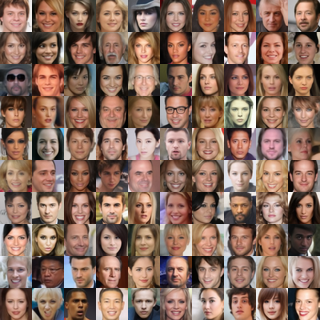}}\ 
\subfloat[Analytic-DDIM, $K=100$]{\includegraphics[width=0.32\linewidth]{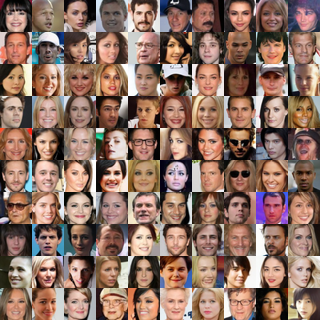}}\ 
\subfloat[Analytic-DDIM, $K=1000$]{\includegraphics[width=0.32\linewidth]{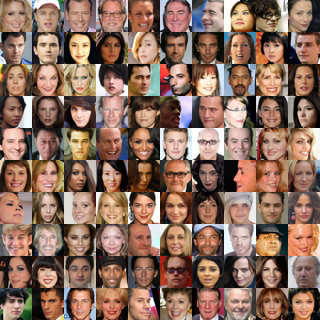}}
\vspace{-.1cm}
\caption{Generated samples on CelebA 64x64.}
\label{fig:K_celeba}
\end{center}
\end{minipage}\\
\begin{minipage}{\textwidth}
\begin{center}
\subfloat[Analytic-DDPM, $K=25$]{\includegraphics[width=0.32\linewidth]{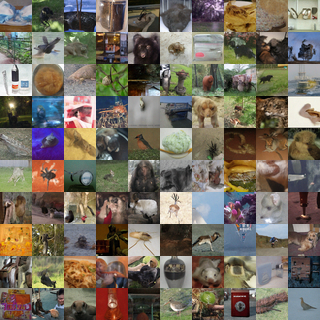}}\ 
\subfloat[Analytic-DDPM, $K=200$]{\includegraphics[width=0.32\linewidth]{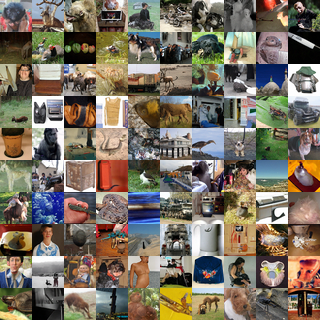}}\ 
\subfloat[Analytic-DDPM, $K=4000$]{\includegraphics[width=0.32\linewidth]{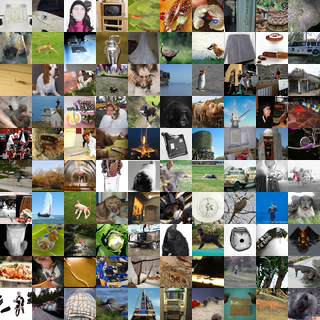}}\\
\vspace{-.2cm}
\subfloat[Analytic-DDIM, $K=25$]{\includegraphics[width=0.32\linewidth]{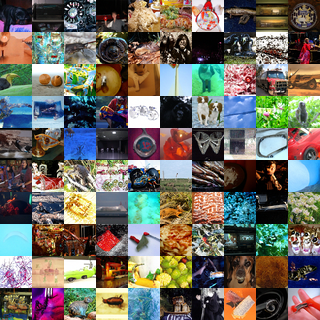}}\ 
\subfloat[Analytic-DDIM, $K=200$]{\includegraphics[width=0.32\linewidth]{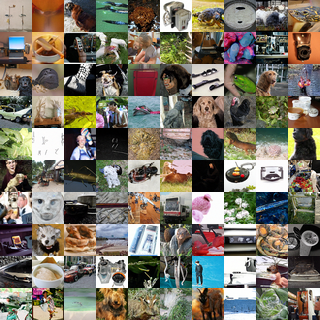}}\ 
\subfloat[Analytic-DDIM, $K=4000$]{\includegraphics[width=0.32\linewidth]{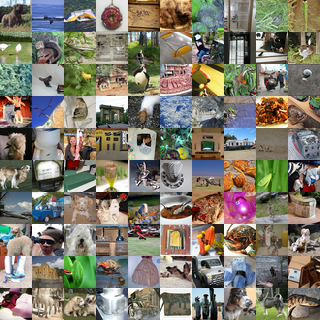}}
\vspace{-.1cm}
\caption{Generated samples on ImageNet 64x64.}
\label{fig:K_imagenet}
\end{center}
\end{minipage}
\end{figure}

\section{Additional Discussion}

\subsection{The Extra Cost of the Monte Carlo Estimate}
\label{sec:extra_cost}
The extra cost of the Monte Carlo estimate $\Gamma$ is small compared to the whole inference cost. In fact, the Monte Carlo estimate requires $M N$ additional model function evaluations. During inference, suppose we generate $M_1$ samples or calculate the log-likelihood of $M_1$ samples with $K$ timesteps. Both DPMs and Analytic-DPMs need $M_1 K$ model function evaluations. Employing the same score-based models, the relative additional cost of Analytic-DPM is $\frac{M N}{M_1 K}$. As shown in Appendix~\ref{sec:ablation_mc}, a very small $M$ (e.g., $M=10, 100$) is sufficient for Analytic-DPM, making the relative additional cost small if not negligible. For instance, on CIFAR10, let $M=10$, $N=1000$, $M_1=50000$ and $K\ge 10$, we obtain $\frac{M N}{M_1 K}\le 0.02$ and  Analytic-DPM still consistently improves the baselines as presented in Table~\ref{tab:adpm_small_m_dpm}.

Further, the additional calculation of the Monte Carlo estimate occurs only \textbf{once} given a pretrained model and training dataset, since we can save the results of $\Gamma = (\Gamma_1, \cdots, \Gamma_N)$ in Eq.{(\ref{eq:ms_score})} and reuse it among different inference settings (e.g., trajectories of various $K$). The reuse is valid, because the marginal distribution of a shorter forward process $q(\vx_0, \vx_{\tau_1}, \cdots, \vx_{\tau_K})$ at timestep $\tau_k$ is the same as that of the full-timesteps forward process $q(\vx_{0:N})$ at timestep $n=\tau_k$. Indeed, in our experiments (e.g., Table~{\ref{tab:bpd},\ref{tab:fid}}), $\Gamma$ is shared across different selections of $K$, trajectories and forward processes. Moreover, in practice, $\Gamma$ can be calculated offline and deployed together with the pretrained model and the online inference cost of Analytic-DPM is exactly the same as DPM. 

\subsection{The Stochasticity of the Variational Bound after Plugging the Analytic Estimate}
\label{sec:sto_lvb}

In this part, we write $\lvb$ as $\lvb(\sigma_n^2)$ to emphasize its dependence on the reverse variance $\sigma_n^2$.

When calculating the variational bound $\lvb(\sigma_n^2)$ (i.e., the negative ELBO) of Analytic-DPM, we will plug $\hat{\sigma}_n^2$ into the variational bound and get $\lvb(\hat{\sigma}_n^2)$. Since $\hat{\sigma}_n^2$ is calculated by the Monte Carlo method, $\lvb(\hat{\sigma}_n^2)$ is a stochastic variable. A natural question is that whether $\lvb(\hat{\sigma}_n^2)$ is a stochastic bound of $\lvb(\E [\hat{\sigma}_n^2])$, which can be judged by the Jensen's inequality if $\lvb$ is convex or concave. However, this is generally not guaranteed, as stated in Proposition~\ref{thm:unconvex}.

\begin{pro}
\label{thm:unconvex}
$\lvb(\sigma_n^2)$ is neither convex nor concave w.r.t. $\sigma_n^2$.
\end{pro}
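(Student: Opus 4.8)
The plan is to exploit the fact that, among the terms making up $\lvb$, only one depends on a given $\sigma_n^2$, so the convexity of $\lvb$ in $\sigma_n^2$ (with the remaining variances held fixed) is governed entirely by that single term. For $n \geq 2$ this is the term $\E_q \KL(q(\vx_{n-1}|\vx_0,\vx_n)\,\|\,p(\vx_{n-1}|\vx_n))$ with $p(\vx_{n-1}|\vx_n) = \gN(\vx_{n-1}|\vmu_n(\vx_n), \sigma_n^2 \mI)$; for $n=1$ it is the reconstruction term $-\E_q \log p(\vx_0|\vx_1)$, which is the Gaussian cross-entropy $H(q(\vx_0|\vx_1), p(\vx_0|\vx_1))$. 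First I would put this term in closed form. Since $q(\vx_{n-1}|\vx_0,\vx_n) = \gN(\vx_{n-1}|\tilde{\vmu}_n(\vx_n,\vx_0), \lambda_n^2\mI)$ is itself isotropic Gaussian, the Gaussian-to-Gaussian KL (equivalently Lemma~\ref{lem:ce}) shows that, after discarding everything independent of $\sigma_n^2$, the bound agrees up to an additive constant with
\begin{align*}
    g(\sigma_n^2) \coloneqq \frac{d}{2}\log\sigma_n^2 + \frac{A_n}{\sigma_n^2}, \qquad A_n = \frac{1}{2}\left(d\lambda_n^2 + \E_q\|\vmu_n(\vx_n) - \tilde{\vmu}_n(\vx_n,\vx_0)\|^2\right).
\end{align*}
The crucial structural point is that $\vmu_n(\vx_n)$ is a free parameter not tied to $\sigma_n^2$, so $A_n$ is a genuine constant in the variable of interest.

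Second, I would simply differentiate twice and read off a sign change. A direct computation gives
\begin{align*}
    g''(\sigma_n^2) = -\frac{d}{2(\sigma_n^2)^2} + \frac{2A_n}{(\sigma_n^2)^3} = \frac{1}{(\sigma_n^2)^3}\left(2A_n - \frac{d}{2}\sigma_n^2\right).
\end{align*}
Provided $A_n > 0$, this is strictly positive for $\sigma_n^2 < 4A_n/d$ and strictly negative for $\sigma_n^2 > 4A_n/d$. Hence on $(0,\infty)$ the function $g$, and therefore $\lvb$ viewed as a function of $\sigma_n^2$, is strictly convex on one subinterval and strictly concave on another, which rules out both global convexity and global concavity. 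The identical computation handles the $n=1$ reconstruction term, with $A_1 = \tfrac{1}{2}\E_q\tr(\Cov_{q(\vx_0|\vx_1)}[\vx_0]) + \tfrac{1}{2}\E_q\|\vmu_1(\vx_1) - \E_{q(\vx_0|\vx_1)}[\vx_0]\|^2$. Since non-convexity/non-concavity of the scalar restriction immediately implies the same for $\lvb$ as a function of the full variance vector (restrict to a coordinate line), this proves the proposition.

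The one point requiring care, and the main obstacle, is confirming $A_n > 0$: if the coefficient of $1/\sigma_n^2$ vanished, $g$ would reduce to a purely logarithmic (hence concave) profile and the argument would collapse. For the DDPM forward process $\lambda_n^2 = \tilde{\beta}_n > 0$, so $A_n \geq \tfrac{d}{2}\tilde{\beta}_n > 0$ unconditionally; more generally $A_n > 0$ whenever $\lambda_n^2 > 0$ or the reverse mean does not coincide almost surely with $\tilde{\vmu}_n$, which holds in every nondegenerate DPM. I would therefore state the claim for the regime $A_n > 0$, note that it covers all cases of practical interest, and emphasize that it is precisely the sign change of $g''$ at $\sigma_n^2 = 4A_n/d$ that prevents appealing to Jensen's inequality to control the Monte Carlo bias of the analytic estimate, which is the motivation for the clipping scheme in Section~\ref{sec:bd}.
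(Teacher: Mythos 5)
Your proof is correct and follows essentially the same route as the paper's: isolate the single term of $\lvb$ that depends on $\sigma_n^2$, reduce it (up to additive constants) to the profile $\frac{d}{2}\log \sigma_n^2 + A_n/\sigma_n^2$, and read off the curvature sign change at $\sigma_n^2 = 4A_n/d$, which is exactly the paper's threshold $2A$. Your extra care about the degenerate case $A_n = 0$ (where the term would be purely concave) and your explicit treatment of the $n=1$ reconstruction term are points the paper leaves implicit, but they do not alter the argument.
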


\begin{proof}
Since $\sigma_n^2$ only influences the $n$-th term $L_n$ in the variational bound $\lvb$, where
\begin{align*}
    L_n = \left\{\begin{array}{cc}
        \E_q \KL(q(\vx_{n-1}|\vx_n, \vx_0)||p(\vx_{n-1}|\vx_n)) & 2 \leq n \leq N \\
         - \E_q \log p(\vx_0|\vx_1) & n = 1
    \end{array} \right. ,
\end{align*}
we only need to study the convexity of $L_n$ w.r.t. $\sigma_n^2$.

When $2 \leq n \leq N$, 
\begin{align*}
    L_n = \frac{d}{2} \left( \frac{\lambda_n^2}{\sigma_n^2} - 1 + \log \frac{\sigma_n^2}{\lambda_n^2} + \frac{1}{\sigma_n^2} \E_q \frac{||\tilde{\vmu}(\vx_n, \vx_0) - \vmu_n(\vx_n)||^2}{d} \right).
\end{align*}

Let $A = \lambda_n^2 + \E_q \frac{||\tilde{\vmu}(\vx_n, \vx_0) - \vmu_n(\vx_n)||^2}{d}$, then $L_n$ as a function of $\sigma_n^2$ is convex when $0 < \sigma_n^2 < 2 A$ and concave when $2A < \sigma_n^2$. Thereby, $\lvb(\sigma_n^2)$ is neither convex nor concave w.r.t. $\sigma_n^2$.
\end{proof}

Nevertheless, in this paper, $\lvb(\hat{\sigma}_n^2)$ is a stochastic upper bound of $\lvb(\sigma_n^{*2})$ because $\lvb(\sigma_n^{*2})$ is the optimal. 
The bias of $\lvb(\hat{\sigma}_n^2)$ w.r.t. $\lvb(\sigma_n^{*2})$ is due to the Monte Carlo method as well as the error of the score-based model.
The former can be reduced by increasing the number of Monte Carlo samples. The latter is irreducible if the pretrained model is fixed, which motivates us to clip the estimate, as discussed in Section~\ref{sec:bd}.

\subsection{Comparison to other Gaussian models and their results}

The reverse process of DPMs is a Markov process with Gaussian transitions. Thereby, it is interesting to compare it with other Gaussian models, e.g., the expectation propagation (EP) with the Gaussian process (GP)~\citep{kim2006bayesian}.

Both EP and Analytic-DPM use moment matching as a key step to find analytic solutions of $\KL(p_{target}||p_{opt})$ terms. However, to our knowledge, the relation between moment matching and DPMs has not been revealed in prior literature. 
Further, compared to EP, we emphasize that it is highly nontrivial to calculate the second moment of $p_{target}$ in DPMs because $p_{target}$ involves an unknown and potentially complicated data distribution.

In EP with GP~\citep{kim2006bayesian}, $p_{target}$
is the product of a single likelihood factor and all other approximate factors for tractability. In fact, the form of the likelihood factor is chosen such that the first two moments of $p_{target}$ can be easily computed or approximated.
For instance, the original EP~\citep{minka2001family} considers Gaussian mixture likelihood (or Bernoulli likelihood for classification) and the moments can be directed obtained by the properties of Gaussian (or integration by parts). 
Besides, at the cost of the tractability, there is no converge guarantee of EP in general. 

In contrast, $p_{target}$ in this paper is the conditional distribution $q(\vx_{n-1}|\vx_n)$ of the corresponding joint distribution $q(\vx_{0:N})$ defined by the forward process.
Note that the moments of $q(\vx_{n-1}|\vx_n)$ are nontrivial to calculate because it involves an unknown and potentially complicated data distribution.
Technically, in Lemma~\ref{lem:opt_mm_cov}, we carefully use the law of total variance conditioned on $\vx_0$ and convert the second moment of $q(\vx_{n-1}|\vx_n)$ to that of $q(\vx_0|\vx_n)$,
which surprisingly can be expressed as the score function as proven in Lemma~\ref{lem:cov}.

\subsection{Future Works}

In our work, we mainly focus on image data. It would be interesting to apply Analytic-DPM to other data modalities, e.g. speech data~\citep{chen2020wavegrad}. As presented in Appendix~\ref{sec:cdp}, our method can be applied to continuous DPMs, e.g., variational diffusion models~\citep{kingma2021variational} that learn the forward noise schedule. It is appealing to see how Analytic-DPM works on these continuous DPMs. Finally, it is also interesting to incorporate the optimal reverse variance in the training process of DPMs.

\end{document}